\newtheorem{assumption}{Assumption}
\newcommand{\Var}{\operatorname{\textrm{Var}}}
\renewcommand\d{\operatorname{\mathrm{d}}}
\let\leq=\leqslant 
\let\geq=\geqslant %
\newcommand{\EE}{\operatorname{\mathbb{E}}}
\newcommand{\RR}{\mathbb{R}}
\renewcommand{\SS}{\mathbb{S}}
\newcommand{\fbf}[1]{\mathbf{#1}}
\newcommand{\bA}{\fbf{A}}
\newcommand{\bB}{\fbf{B}}
\newcommand{\bD}{\fbf{D}}
\newcommand{\bH}{\fbf{H}}
\newcommand{\bI}{\fbf{I}}
\newcommand{\bP}{\fbf{P}}
\newcommand{\bR}{\fbf{R}}
\newcommand{\bT}{\fbf{T}}
\newcommand{\bU}{\fbf{U}}
\newcommand{\bV}{\fbf{V}}
\newcommand{\bW}{\fbf{W}}
\newcommand{\bX}{\fbf{X}}
\newcommand{\bY}{\fbf{Y}}
\newcommand{\bZ}{\fbf{Z}}
\newcommand{\ba}{\fbf{a}}
\newcommand{\bb}{\fbf{b}}
\newcommand{\be}{\mathrm{\fbf{e}}}
\newcommand{\bp}{\fbf{p}}
\newcommand{\bu}{\fbf{u}}
\newcommand{\bv}{\fbf{v}}
\newcommand{\bw}{\fbf{w}}
\newcommand{\bx}{\fbf{x}}
\newcommand{\by}{\fbf{y}}
\newcommand{\bz}{\fbf{z}}
\DeclareMathOperator*{\argmin}{argmin}
\renewcommand{\leq}{\leqslant}
\renewcommand{\geq}{\geqslant}
\newcommand{\cD}{\mathcal{D}}
\newcommand{\cF}{\mathcal{F}}
\newcommand{\cG}{\mathcal{G}}
\newcommand{\cJ}{\mathcal{J}}
\newcommand{\cL}{\mathcal{L}}
\newcommand{\cN}{\mathcal{N}}
\newcommand{\cO}{\mathcal{O}}
\newcommand{\cP}{\mathcal{P}}
\newcommand{\cX}{\mathcal{X}}
\newcommand{\bh}{\fbf{h}}
\newcommand{\<}{\left\langle}
\renewcommand{\>}{\right\rangle}
\renewcommand{\ge}{\geq}
\newcommand{\tr}{{\rm tr}}
\newcommand{\diag}[1]{{\rm diag}\left(#1\right)}
\newcommand{\abs}[1]{\left\vert#1\right\vert}
\newcommand{\norm}[2]{\left\Vert#1\right\Vert_{#2}}
\newtheorem{theorem}{Theorem}
\newtheorem{corollary}{Corollary}
\newtheorem{lemma}{Lemma}
\newtheorem{proposition}{Proposition}
\theoremstyle{definition}
\newtheorem{definition}{Definition}
\newtheorem{example}{Example}
\newtheorem{remark}{Remark}
\newcommand{\set}[1]{{\left\{ #1 \right\}}}
\newcommand{\paren}[1]{{\left( #1 \right)}}
\newcommand{\brac}[1]{{\left[ #1 \right] }}
\newcommand{\bzero}{{\mathbf 0}}
\newcommand{\E}{\mathbb{E}}
\newcommand{\normA}{\kappa_1}
\newcommand{\RF}{{\tt RF}}
\newcommand{\normP}{\kappa_2}
\DeclareMathOperator{\poly}{poly}
\let\vec\relax\DeclareMathOperator{\vec}{Vec}
\DeclareMathOperator{\Mat}{Mat}
\DeclareMathOperator{\sym}{Sym}
\title{Learning Hierarchical Polynomials with\\ Three-Layer Neural Networks}
\author{
  Zihao Wang\\
  Peking University\\
  \texttt{zihaowang@stu.pku.edu.cn}
  \and
  Eshaan Nichani\\
  Princeton University\\
  \texttt{eshnich@princeton.edu}
  \and
  Jason D. Lee\\
  Princeton University\\
  \texttt{jasonlee@princeton.edu}
}
\let\leq=\leqslant 
\let\geq=\geqslant %
\let\le=\leqslant 
\let\ge=\geqslant %
\date{\today\vspace{-1em}}
\title{Learning Hierarchical Polynomials of Multiple Nonlinear Features with Three-Layer Networks}
\author{
Hengyu Fu\\
Peking University\\
\texttt{fhy2021@stu.pku.edu.cn}
\and
  Zihao Wang\\
  Stanford University\\
  \texttt{zihaow@stanford.edu}
  \and
  Eshaan Nichani\\
  Princeton University\\
  \texttt{eshnich@princeton.edu}
  \and
  Jason D. Lee\\
  Princeton University\\
  \texttt{jasonlee@princeton.edu}
}
\date{\today}
\begin{document}

\maketitle

\begin{abstract}
    In deep learning theory, a critical question is to understand how neural networks learn hierarchical features. In this work, we study the learning of hierarchical polynomials of \textit{multiple nonlinear features} using three-layer neural networks. We examine a broad class of functions of the form $f^{\star}=g^{\star}\circ \bp$, where $\bp:\mathbb{R}^{d} \rightarrow \mathbb{R}^{r}$ represents multiple quadratic features with $r \ll d$ and $g^{\star}:\mathbb{R}^{r}\rightarrow \mathbb{R}$ is a polynomial of degree $p$. This can be viewed as a nonlinear generalization of the multi-index model \citep{damian2022neural}, and also an expansion upon previous work that focused only on a single nonlinear feature, i.e. $r = 1$ \citep{nichani2023provable,wang2023learning}. 
    
    Our primary contribution shows that a three-layer neural network trained via layerwise gradient descent suffices for  
    \begin{itemize}\item complete recovery of the space spanned by the nonlinear features 
    \item efficient learning of the target function $f^{\star}=g^{\star}\circ \bp$ or transfer learning of $f=g\circ \bp$ with a different link function
    \end{itemize}
within $\widetilde{\cO}(d^4)$ samples and polynomial time.
For such hierarchical targets, our result substantially improves the sample complexity ${\Theta}(d^{2p})$ of the kernel methods, demonstrating the power of efficient feature learning. It is important to highlight that{  our results leverage novel techniques and thus manage to go beyond all prior settings} such as single-index and multi-index models as well as models depending just on one nonlinear feature, contributing to a more comprehensive understanding of feature learning in deep learning. 
\end{abstract}

\section{Introduction}
Deep neural networks have achieved remarkable empirical success across numerous domains of artificial intelligence \citep{alex2012,he2016deep}. This success can be largely attributed to their ability to extract latent features from real-world data and decompose complex targets into hierarchical representations, which improves test accuracy \citep{he2016deep} and allows efficient transfer learning \citep{devlin2018bert}. These feature learning capabilities are widely regarded as a core strength of neural networks over non-adaptive approaches such as kernel methods \citep{wei2020regularization,bai2020linearization}. 

Despite these empirical achievements, the feature learning capabilities of neural networks are less well understood from a theoretical point of view.
Previous work on feature learning has shown that two-layer neural networks can learn \textit{multiple} linear features of the input \citep{damian2022neural}, that is, multi-index models. However, the two-layer architecture inherently limits the network's ability to represent and learn nonlinear features \citep{daniely2017depthseparationneuralnetworks}. Given that many real-world scenarios involve diverse and nonlinear features, recent studies have shifted focus to investigating the learning of \emph{nonlinear} features using deeper neural networks. \citet{safran2022optimization, ren2023depth, nichani2023provable,wang2023learning} have demonstrated that three-layer networks, when trained via gradient descent, can efficiently learn \textit{hierarchical targets} of the form $h = g\circ p$, where $p$ represents certain types of features such as the norm $|\bx|$ or a quadratic form $\bx^{\top}\bA\bx$.  However, these studies are limited to relatively simple hierarchical functions and mainly focus on targets of a single feature. It remains unclear whether neural networks can efficiently learn a wider range of hierarchical functions, particularly those that depend on multiple nonlinear features. This leads us to the following central question:
\begin{center}
    \emph{Can neural networks adaptively identify \textbf{multiple nonlinear features} from the hierarchical targets by gradient descent,  thereby allowing an efficient learning for such targets?}
\end{center}

\subsection{Main Contributions}
In this paper, we provide strong theoretical evidence that three-layer neural networks have the ability to learn multiple hidden nonlinear features. Specifically, we study the problem of learning any hierarchical polynomial with multiple quadratic features using a three-layer network trained via layer-wise gradient descent. Our main contributions are summarized as follows:
\begin{itemize}
    \item \textbf{A Novel Analytic Framework for Multi-Nonlinear Feature Learning.} We demonstrate that when the target function belongs to a broad class of the form $f^{\star}=g^{\star}\circ\bp$, where $\bp:\RR^{d}\rightarrow\RR^{r}$ represents $r$ quadratic (nonlinear) features and $g^{\star}$ is a link function, the first step of gradient descent efficiently learns and recovers the space spanned by these nonlinear features $\bp$ within only $\widetilde{\cO}(d^4)$ samples. {{We remark that our proof techniques are also applicable to general nonlinear features. The core technical novelty is that we develop a novel and general universality argument (Lemma \ref{prop::extend chaterjee}) that bridges multi nonlinear feature models to multi-index models, which allows for an accurate reconstruction of the features through a simple linear transformation on the learned representations with small approximation error (Proposition \ref{prop::reconstructed feature main})}}
  
    \item \textbf{Improved Sample Complexity and Efficient Transfer Learning.} Leveraging the learned features in the first GD step, we prove that when the link function $g^{\star}$ is a polynomial of degree $p$, the gradient descent on the outer layer can achieves a vanishing generalization error with a small outer width and at most $\cO(r^{\cO(p)})$ additional training samples, removing the dependence on $d$ (Theorem \ref{thm:main_thm}).  This significantly improves upon the sample complexity of kernel methods, which require $\Theta(d^{2p})$ samples. Moreover, our analysis enables efficient transfer learning for any other target function of the form $f=g\circ\bp$ with a different link function $g$, which also only requires $\cO(r^{\cO(p)})$ additional samples.
    \end{itemize}
\subsection{Related Works}

\paragraph{Kernel Methods.}
Earlier research links the behavior of gradient descent (GD) on the entire network to its linear approximation near the initialization. In this scenario, neural networks act as kernels, known as the Neural Tangent Kernel (NTK). This connection bridges neural network analysis with established kernel theory and offers initial learning guarantees for neural networks~\citep{jacot2018neural, soltanolkotabi2018theoretical, du2018gradient, chizat2019lazy,arora2019exact}.  However, kernel theory fails to explain the superior empirical achievements of neural networks over kernel methods~\citep{arora2019exact, lee2020finite,e2020a}. Networks in the kernel regime \textbf{fail to learn features}~\citep{yang2021tensor}, not adaptable to hierarchical structures of real world targets. \citet{ghorbani2021linearized} proves that for uniformly distributed data on the sphere, the NTK method requires $\widetilde\Omega(d^k)$ samples to learn any polynomials of degree $k$ in $d$ dimensions, which is impractical when $k$ is large. Thus, a central question is how neural networks can detect and capture the underlying hierarchies in the target functions, which allows for a better generalization behavior versus kernel methods.

{
\paragraph{Learning Linear Features.}
Recent studies have demonstrated neural networks' capability to learn hierarchical functions of linear features more efficiently than kernel methods. Specifically, \citet{bietti2022learning, ba2022high} establish the efficient learning of single-index models, i.e., $f^{\star}(\bx)=g(\langle \bu,\bx\rangle)$. Furthermore, recent works \citet{damian2022neural, abbe2023sgd, dandi2023learning, bietti2023learning} further demonstrate that for isotropic data, two-layer or three-layer neural networks can effectively learn multi-index models of the form $f^*(\bx) = g(\bU\bx)$. These studies adopt certain modified training algorithms, such as layer-wise training. With sufficient feature learning, these networks can learn low-rank polynomials with a benign sample complexity of $\cO(d^{\cO(1)})$, which does not scale with the degree of the polynomial $g$.  Empirically, fully connected networks trained via gradient descent on image classification tasks also capture low-rank features \citep{lee2007sparse,radhakrishnan2022feature}. More recently, the learning of single-index and multi-index models is analyzed with more advanced algorithm framework or specified data structure. \cite{mousavihosseini2024learningmultiindexmodelsneural} considers learning general multi-index models with two-layer neural networks through a mean-field Langevin dynamics, \cite{dandi2024benefitsreusingbatchesgradient,lee2024neuralnetworklearnslowdimensional} goes beyond the traditional Correlational Statistical Query (CSQ) setting and consider algorithms that reuse samples for feature learning. \cite{mousavihosseini2023gradientbasedfeaturelearningstructured,NEURIPS2023_38a1671a,wang2024nonlinearspikedcovariancematrices} considers learning linear features with structured data (such as data with a spiked covariance) rather than the commonly considered isotropic one. \cite{cui2024asymptoticsfeaturelearningtwolayer,dandi2024randommatrixtheoryperspective} study the spectral structure revealed in the learned features with one huge gradient step through a spiked random feature model to understand the mechanism of feature learning in neural networks.

\paragraph{Learning Nonlinear Features.}

Previous studies indicate that neural networks can effectively learn specific hierarchies of nonlinear features. \citet{safran2022optimization} shows that GD can efficiently learn functions such as $\mathbf{1}_{\|\bx\| \geq \lambda}$ with a three-layer network. \citet{ren2023depth} demonstrates that $\mathrm{ReLU}(1 - \|\bx\|)$ can be learned by a multi-layer mean-field network. \cite{moniri2024theorynonlinearfeaturelearning} studies the nonlinear feature learning capabilities of two-layer neural networks with one step of gradient descent. \citet{allen2019can,allen2020backward} explore learning target functions of the form $p + \alpha g \circ p$ with $p$ being the underlying feature through a three-layer residual network, though they either need $\alpha=o_d(1)$ or cannot reach vanishing error. More recent works have addressed a broader class of nonlinear features compared with the previous research and demonstrate that three-layer neural networks can learn these hidden features efficiently. Specifically, \citet{nichani2023provable} demonstrates that a three-layer network trained with layer-wise GD algorithm effectively learns $g \circ p$ for a quadratic feature $p(\bx)=\bx^{\top}\bA\bx$ with an improved  sample complexity of $\widetilde\Theta(d^4)$. As a follow-up, \citet{wang2023learning} further demonstrates that such a network can in fact efficiently learn $g \circ p$ for $p$ within a broad subclass of degree $k$ polynomials and optimizes the sample complexity to $\widetilde{\cO}(d^k)$. 
}
However, all of these studies focus on a \textit{single nonlinear feature}, limiting their applicability to scenarios involving multiple features. Our work addresses this gap by establishing the first theoretical guarantee for efficiently learning hierarchical polynomials of \textit{multiple nonlinear features}, which significantly broadens the learnable function class and advances towards a better understanding of feature learning. 

    \section{Preliminaries}
\subsection{Notations}
We use bold letters to denote vectors and matrices. For a vector $\bv$, we denote its Euclidean norm by $\norm{\bv}{2}$. For a matrix $\bA$, we denote its operator and Frobenius norm as $\norm{\bA}{2}$ and $\norm{\bA}{\rm F}$, respectively. For any positive integer $n$, we denote $[n]=\{1,2,\dots,n\}$. Moreover, for any indexes $i$ and $j$, we denote $\delta_{ij}=1$ if $i=j$ and $0$ otherwise. We use  $\cO$, $\Theta$ and $\Omega$ to hide absolute constants. In addition, we denote $f\lesssim g$ when there exists some positive absolute constant $C$ with $f \leq Cg$. We use $\widetilde \cO$, $\widetilde\Theta$ and $\widetilde \Omega$ to ignore logarithmic terms. For a function $f: \mathcal{X}\rightarrow \RR$ and a distribution $v$ on $\mathcal{X}$, we denote $\norm{f}{L^p(\mathcal{X},v)}=\paren{\EE_{\bx\sim v}[\abs{f(\bx)}^p]}^{1/p}$. When the domain is clear from context, we write $\norm{f}{L^p}$ for simplicity. Finally, we write $\EE_{\bx}$ as the shorthand for $\EE_{\bx \sim v}$ sometimes.

\subsection{Problem Setup}

\paragraph{Data distribution} Our aim is to learn the target function $f^{\star}: \mathcal{X} \rightarrow \RR$, with $\mathcal{X}\subseteq \RR^{d}$ being the input space. Throughout the paper, we assume $\mathcal{X}= \mathbb{S}^{d-1}(\sqrt{d})$, that is, the sphere with radius $\sqrt{d}$ in $d$ dimensions. Also, we consider the data distribution to be the uniform distribution on the sphere, i.e.,  $\bx \sim {\rm Unif}(\cX)$, and we draw two independent datasets $\cD_1$, $\cD_2$,
each with $n_1$ and $n_2$ i.i.d. samples, respectively. Thus, we draw $n_1+n_2$ samples in total. 

\paragraph{Target function} For the target function $f^{\star}:\RR^{d}\rightarrow \RR$, we assume they are hierarchical functions of $r$ quadratic features
\begin{align*}
    f^{\star}(\bx)&=g^{\star}(\bp(\bx))=g^{\star}\left(\bx^{\top}\bA_{1}\bx,\bx^{\top}\bA_{2}\bx,\dots,\bx^{\top}\bA_{r}\bx\right).
\end{align*} 
This structure represents a broad class of functions where $\bp(\bx)=[\bx^{\top}\bA_{1}\bx,\bx^{\top}\bA_{2}\bx,\dots,\bx^{\top}\bA_{r}\bx]^{\top}$ represents $r$ quadratic features, and $g^{\star} :\RR^{r}\rightarrow \RR$ is a link function. Here we consider the case $r\ll d$. To simplify our analysis while maintaining generality, we make the following assumptions:
\begin{assumption}[Orthogonal quadratic features]\label{assump::features}
    For any $i,j \in[r]$, we suppose
    \begin{align*}
       \EE_{\bx}\brac{\bx^{\top}\bA_{i}\bx}=0,~~ \EE_{\bx}\brac{(\bx^{\top}\bA_{i}\bx)(\bx^{\top}\bA_{j}\bx)}=\delta_{ij} ~\text{ and }~ \norm{\bA_i}{op}\le \frac{\normA}{\sqrt{d}}.
    \end{align*}
    Here we assume $\normA={\rm poly}(\log d)$.
\end{assumption} The first assumption is equivalent to $\tr(\bA_i)=0$ for any $i\in[r]$. For $\bA_i$ such that $\tr(\bA_i) \neq 0$, we could simply subtract the mean of the feature to $\bA'_i=\bA_i-(\tr(\bA_i)/d)\cdot{\bI}$ so
 $$\bx^{\top}\bA'_i\bx=\bx^{\top}(\bA_i-(\tr(\bA_i)/d)\cdot{\bI})\bx=\bx^{\top}\bA_{i}\bx-\tr(\bA_i).$$ The second assumption on the feature orthonormality can be attained via linear transformation on the features, preserving the overall function class. The third assumption on the operator norm bound ensures that the features are balanced, which is common in the non-linear feature learning literature \citep{nichani2023provable,wang2023learning}. Moreover, we note that when the entries of $\bA_i$ are sampled i.i.d., the assumption is satisfied with high probability by standard random matrix arguments.
\begin{assumption}[Well-conditioned link function]\label{assump::target}
    For the link function $g^{\star}$, we assume $g^{\star}$ is a degree-$p$ polynomial with $\EE_{\bz}\brac{ g^{\star}(\bz)^2}=\Theta(1)$, where $\bz\sim \cN(\mathbf{0},\bI_r)$ and $p\in\mathbb{N}$ is a constant. Moreover, we assume the expected Hessian $\bH=\EE_{\bz}\brac{\nabla^2 g^{\star}(\bz)} \in \RR^{r\times r}$ is well-conditioned, i.e., there exists a constant $C_H$ such that $\lambda_{\min}(\bH) \ge \frac{C_H}{\sqrt{r}}$.
\end{assumption}
This assumption ensures the link function adequately emphasizes all $r$ features, preventing degeneracy to a lower-dimensional subspace. The second-moment condition is achievable through simple normalization.

\begin{assumption}[Prepocessed target function]\label{assump::preprocess target function}
    For the entire target function $f^{\star}$, we assume $\cP_{0}(f^{\star})=\EE_{\bx}[f^{\star}(\bx)]=0$ and $\norm{\cP_{2}(f^{\star})}{L^2}\le {\normP}/{\sqrt{d}}$. Here $\cP_{k}$ is the projection onto the function space of degree $k$ spherical harmonics on the sphere $\SS^{d-1}(\sqrt{d})$, and $\normP$ satisfies $\normP={\rm poly}(r,\log d)$.
\end{assumption}
We will give a rigorous definition of $\cP_{k}$ in Section \ref{sec::spherical}. This assumption is analogous to a preprocessing procedure conducted in \cite{damian2022neural}, which subtracts out the mean and linear component of the features from the target. The zero-mean condition ensures the network focuses on learning the function's variability rather than a constant offset. {{While  \cite{nichani2023provable,wang2023learning} assume the link function $g$ has non-zero linear component, we rather assume $g$ has a \textit{nearly zero linear component}, which prevents the target function from being dominated by a single linear combination of the quadratic features and keeps the learned representation space from collapsing to the one-dimenional space of that certain linear combination. This is an essential difference between single-feature and multi-feature learning, because  our assumptions ensure that the network genuinely learns to \textit{represent and distinguish all $r$ features} rather than conflate them, while assumptions in \cite{nichani2023provable,wang2023learning}  represent a \textit{degenerate case} that neural network may only learn the dominant linear combination of the  $r$ features. We provide examples and counterexamples as follows.}}

\begin{remark}
    These assumptions accommodate a wide range of target functions. For instance, $f^{\star}(\bx)=\frac{1}{\sqrt{r}}\sum_{k=1}^{r}\paren{\bx^{\top}\bA_{k}\bx}^2-\sqrt{r}$ satisfies Assumption \ref{assump::preprocess target function} with $\normP\lesssim \sqrt{r}\normA$ for any $\{\ba_k\}_{k\in[r]}$ under Assumption \ref{assump::features}. Moreover, for diagonal $\bA_{k}$ with $\bA_{k}={\rm diag}(\ba_k)$, where  $\ba_1,\ba_2,\dots,\ba_r$ are orthogonal zero-sum vectors with entries  $a_{k,i}\in \{\pm {c}/{\sqrt{d}}\}$, we can achieve $\normP=0$. Here $c=\Theta(1)$ is a normalizing constant. Notably, linear combinations of features like $f(
    \bx)=\frac{1}{\sqrt{r}}\sum_{k=1}^{r}\paren{\bx^{\top}\bA_{k}\bx}$ violate our assumptions, since  it represents a degenerate case with  $\norm{\cP_2(f)}{L^2}=\norm{f}{L^2}=\Theta(1)$.
\end{remark} 

\paragraph{Three-layer neural network} We adopt a standard three-layer neural network for learning the target functions. Let $m_1$, $m_2$ be the two hidden layer widths, and $\sigma_1$, $\sigma_2$ be two
activation functions. Our learner is a three-layer neural network parameterized by $\mathbf{\theta} = (\ba, \bW, \bb, \bV )$,
where $\ba \in \RR^{m_1}$, $\bW \in\RR^{m_1 \times m_2}$, $\bb \in \RR^{m_1}$, and $\bV \in \RR^{m_2 \times d}$. The network $f(\bx;\theta)$ is defined as
\begin{align}\label{equ::model}
f(\bx;\theta)=\frac{1}{m_1}\sum_{i=1}^{m_1}a_i\sigma_1\paren{\langle \bw_i,\sigma_2\paren{\bV\bx}\rangle+b_i}=\frac{1}{m_1}\sum_{i=1}^{m_1}a_i\sigma_1\paren{\langle \bw_i, \mathbf{h}^{(0)}(\bx)\rangle+b_i}.
\end{align}
Here, $\bw_i \in \RR^{m_2}$ is the $i$-th row of $\bW$ , and $\bh^{(0)}(\bx) := \sigma_2(\bV \bx)\in \RR^{m_2}$ is the random feature
embedding lying in the innermost layer. We initialize each row of $\bV$ to be drawn uniformly on the sphere of radius $\sqrt{d}$, i.e., $\bv_i^{(0)} \sim {\rm Unif}(\mathbb{S}^{d-1}(\sqrt{d}))$. For $\ba$, $\bb$ and $\bW$, we use a symmetric initialization so that $f(\bx;\theta^{(0)}) =0$ \citep{chizat2019lazy}. Explicitly, we assume that $m_1$ is an even number and for any $j \in [m_1/2]$, we initialize the paramters as
\begin{align*}
a_j^{(0)} = -a_{m_1-j}^{(0)} \sim {\rm Unif}\paren{{\{-1,1\}}},~~ \bw_j^{(0)} = \bw_{m_1-j}^{(0)}\sim\mathcal{N}(0,\epsilon\bI_{m_2}),~~\text{and}~~ b_j^{(0)} = b_{m_1-j}^{(0)}=0.
\end{align*}
Here $\epsilon>0$ is a hyperparameter to control the magnitude of the initial neurons. {{Different from \cite{nichani2023provable} where the weights $\bw_j$ are initialized at zeros, we require a \textit{random initialization}, which enables the learned weights to capture the multiple features in all directions instead of converging to a specific direction like the previous results for learning a single feature.}}

 For the activation functions $\sigma_1$ and $\sigma_2$, we have the following assumptions:
\begin{assumption}[Activation Function]\label{assump::activation}
    We take the outer activation function $\sigma_1$ and the inner activation function $\sigma_2$ as 
    \begin{align}\label{equ::activation}
        \sigma_1(t)=\left\{\begin{aligned} 2\abs{t}-1,~~~&\abs{t}\geq 1,\\
        t^2, ~~~~~~~~~&\abs{t}<1.
     \end{aligned}\right.~~~~\text{and} ~~~~\sigma_2(t)=\sum_{i=2}^{\infty}c_iQ_i(t),
    \end{align}
    where $Q_i(t)$ is the $i$-th degree Gegenbauer polynomial in the $d$-dimensional space. 
    Moreover, we assume there exist constants $C_{\sigma}$, $\alpha_{\sigma}$ such that $|\sigma_2(t)| \le C_{\sigma}$ for $\abs{t}\le {d}$, and $\EE_{\bx}\brac{{\sigma^{k}_2(\bx^{\top} \mathbf{1}_d)}}\le d^{-k}C_k$ for $k=2,4$. We assume $c_2=\Theta(1)$, and $C_2$, $C_4$ and $\set{c_i}_{i=2}^{\infty}$ are all constants independent of $n$, $d$, $m_1$ and $m_2$. 
\end{assumption}
We remark the outer activation $\sigma_1$ is a slightly modified version of the absolute value function $\abs{t}$, smoothed around the origin. The assumptions on $\sigma_2$ are based on the Gegenbauer expansion, often considered in the spherical analysis (introduced in Section \ref{sec::gegenbauer}). Compared to standard inner activations, we remove the constant term ($Q_0(t)=1$) and the linear term ($Q_1(t)=t/d$) to focus on learning nonlinear features without low-order interference. Importantly, these assumptions on activation functions maintain significant generality. The assumptions on magnitude and moments are satisfied by many common activation functions with appropriate scaling. {The core assumption in the Gegenbauer expansion is the non-zero component of $Q_2$, i.e., $c_2=\Theta(1)$, which we rely on for a subspace recovery of the $r$ quadratic features while other assumptions are made to simplify our analysis since other components in inner activation will lead to useless noises or biases in the weights after training. Moreover, if we consider higher degree nonlinear features such as degree $q$ polynomials, we expect that $\sigma_2$ has sufficient emphasis on $Q_q$ for efficient feature learning.}

 \begin{remark}
     $\sigma_2(t)=Q_2(t)=\frac{t^2-d}{d(d-1)}$ is an example of the inner activation function.
 \end{remark}

\paragraph{Training Algorithm} Following \citet{nichani2023provable}, our network is trained via layer-wise gradient descent with
sample splitting. Throughout the training process, we freeze the innermost layer weights $\bV$. In the first stage, the second layer weights $\bW$ are trained for one step with a specified learning rate $\eta_1$ and weight decay $\lambda_1$. In the second stage, we reinitialize the bias $\bb$ and train the outer layer
weights $\ba$ for $T-1$ steps. 
\paragraph{Transfer Learning} We remark that our algorithm allows transfer learning of a different target function $f$ that shares the same features of the original target:
\begin{align}  f^{\star}(\bx)\rightarrow f(\bx)=g\left(\bx^{\top}\bA_{1}\bx,\bx^{\top}\bA_{2}\bx,\dots,\bx^{\top}\bA_{r}\bx\right) \tag{transferred target}
\end{align}

In this case, we switch the target function from $f^{\star}=g^{\star}(\bp)$ to $f=g(\bp)$ in the second training stage. For the loss function, we use the standard squared loss:
\begin{align*}
    \hat{\cL}^{(1)}(\theta)&=\frac{1}{n_1}\sum_{\bx \in \cD_1} \paren{f(\bx;\theta)-f^{\star}(\bx)}^2,~~
    \hat{\cL}^{(2)}(\theta)=\begin{cases}
    \frac{1}{n_2}\sum_{\bx \in \cD_2} \paren{f(\bx;\theta)-f^{\star}(\bx)}^2~~\text{(original)},\\
        \frac{1}{n_2}\sum_{\bx \in \cD_2} \paren{f(\bx;\theta)-f(\bx)}^2~~~\text{(transferred)}.   
    \end{cases}
\end{align*}
This layer-wise training approach, combined with the ability to perform transfer learning, provides a powerful framework for learning and adapting to hierarchical functions with hidden features \citep{kulkarni2017layerwisetrainingdeepnetworks,damian2022neural,nichani2023provable}. The pseudocode for the entire training procedure is presented in Algorithm \ref{alg:: training algo}.

\begin{algorithm}
	\SetKwInOut{Input}{Input}
 \SetKw{Initialize}{initialize}
	\SetKwBlock{TrainW}{train $\bW$ on dataset $\cD_1$}{end}
 \SetKwBlock{Traina}{train $\ba$ on dataset $\cD_2$}{end}
	\SetKwBlock{Reinitialize}{re-initialize}{end}
	
	\Input{Learning rates $\eta_1,\eta_2$, weight decay $\lambda_1,\lambda_2$, parameter $\epsilon$, number of steps $T$}
 \Initialize{$\ba ,\bb,  \bW \text{ and } \bV.$}
	
	\TrainW{
	$\bW^{(1)} \leftarrow \bW^{(0)} - \eta_1 [\nabla_{\bW} \hat{\cL}^{(1)}(\theta) + \lambda_1 \bW^{(0)}]$
 }
	
	\Reinitialize{~~$b_{i}^{(1)} \sim {\rm Unif}({[-3,3]}), ~i\in[m_1]$\\
 $\ba^{(1)}, \bV^{(1)}\leftarrow\ba^{(0)},\bV^{(0)}$\\$\theta^{(1)} \leftarrow (\ba^{(1)},\bW^{(1)},\bb^{(1)},\bV^{(0)})$}
	\Traina{
	\For{$t=2$ \KwTo $T$}{
		$\ba^{(t)} \leftarrow \ba^{(t-1)} - \eta_2 [\nabla_{\ba} \hat{\cL}^{(2)}(\theta^{(t-1)}) + \lambda_2 \ba^{(t-1)}]$
	}}
	
	\Return Prediction function $f(\cdot;\theta^{(T)})$: $\bx \to  \frac{1}{m_1}
 \langle\ba^{(T)}, \sigma_1(\bW^{(1)} \mathbf{h}^{(0)}(\bx) + \bb^{(1)})\rangle$
	\caption{Layer-wise training algorithm}\label{alg:: training algo}
\end{algorithm}

\subsection{Technical Background: Analysis Over the Sphere}\label{sec::technical bg}
We briefly introduce spherical harmonics and Gegenbauer polynomials, which forms the foundation of our analysis over the sphere $\mathbb{S}^{d-1}(\sqrt{d})$. For more details, see Appendix \ref{appendix::mathsphere}.
\subsubsection{Spherical Harmonics}\label{sec::spherical}
Let $\tau_{d-1}$ be the uniform distribution on $\mathbb{S}^{d-1}(\sqrt{d})$. Consider functions in $L^2(\mathbb{S}^{d-1}(\sqrt{d}),\tau_{d-1})$, with scalar product and norm denoted as $\langle \cdot,\cdot \rangle_{L^2}$ and $\norm{\cdot}{L^2}$.
For $\ell \in \mathbb{Z}_{\ge 0}$, let $V_{d,\ell}$ be the linear space of homogeneous harmonic polynomials of degree $\ell$ restricted on $\mathbb{S}^{d-1}(\sqrt{d})$. The set $\{V_{d,\ell}\}_{\ell \ge 0}$ forms an orthogonal basis of the $L^2$ space, with dimension ${\rm dim}(V_{d,\ell})=\Theta(d^{\ell})$. For each $\ell \in \mathbb{Z}_{\ge 0}$, the spherical harmonics $\{Y_{\ell,j}\}_{j\in[B(d,\ell)]}$ form an orthonormal basis of $V_{d,\ell}$. Moreover, we denote by $\cP_k$ the orthogonal projections to $V_{d,k}$, which can be written as
\begin{align*}
    \cP_k(f)(\bx)=\sum_{\ell=1}^{B(d,k)}\langle f ,Y_{k,\ell}\rangle_{L^2}Y_{k,\ell}(\bx). 
\end{align*}
We also define $\cP_{\le \ell}\equiv \sum_{k=0}^{\ell}\cP_{k}$, $\cP_{> \ell}\equiv \bI-\cP_{\le \ell}$, $\cP_{< \ell}\equiv \cP_{\le \ell-1} $,  and $\cP_{\ge \ell}\equiv \cP_{> \ell-1} $.
\subsubsection{Gegenbauer Polynomials}\label{sec::gegenbauer}
Corresponding to the degree $\ell$ spherical harmonics in the $d$-dimension space, the $\ell$-th Gegenbauer polynomial $Q_\ell:[-d,d]\rightarrow \RR$ is a polynomial of degree $\ell$. The set $\{Q_\ell\}_{\ell \ge 0}$ forms an orthogonal basis on $L^2([-d,d],\widetilde{\tau}_{d-1})$, where $\widetilde{\tau}_{d-1}$ is the distribution of $\sqrt{d}\langle \bx,\be_1\rangle$ when $\bx \sim \tau_{d-1}$. In particular, these polynomials are normalized so that  $Q_{\ell}(d)=1$. We present the explicit forms of Gegenbauer polynomials of degree no more than $2$:
\begin{align*}
    Q_0(t)=1,~~~Q_1(t)=\frac{t}{d},~~\text{and}~~Q_2(t)=\frac{t^2-d}{d(d-1)}.
\end{align*}

Gegenbauer polynomials are directly related to spherical harmonics, leading to a number of elegant properties. We provide further details on these properties in Appendix \ref{appendix::mathsphere}.

\section{Main results}
The following is our main theorem, which bounds the population absolute loss of Algorithm \ref{alg:: training algo}:
\begin{theorem}\label{thm:main_thm}
Suppose $n_1 ,m_2 =\widetilde \Omega(d^4)$. Let $\hat \theta$ be the output of Algorithm \ref{alg:: training algo} after $T={\rm poly}(n_1,n_2,m_1,m_2,d)$ steps. Then, there exists a set of hyper-parameters $(\epsilon, \eta_1, \eta_2, \lambda_1, \lambda_2)$ such that, with high probability over the initialization of parameters and draws of $\mathcal{D}_1, \mathcal{D}_2$,  we have
\begin{align*}
\mathbb{E}_{\bx}\brac{\abs{f(\bx; \hat \theta) - f^{\star}(\bx)}}
&=\widetilde \cO \paren{ \underbrace{\sqrt{\frac{ r^{p} \normP^{2p}}{{\min(m_1,n_2)}} }}_{\text{Complexity of } g^{\star}}
+\underbrace{\sqrt{\frac{d^6 r^{p+1}}{m_2}}+ \sqrt{\frac{d^2 r^{p+1}}{n_1}} +\frac{ r^{p+2} }{d^{1/6}} }_{\text{Feature Learning Error}}}.
\end{align*}
Moreover, for any other degree $p$ polynomial $g:\RR^{r}\rightarrow \RR$ with $\norm{g}{L^2}\lesssim 1$, by substituting the target function $f^{\star}=g^{\star}\circ\bp$ by $f=g\circ\bp$ in the second training stage, we can achieve the same result for learning the new target function.
\end{theorem}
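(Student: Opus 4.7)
The plan is to track Algorithm \ref{alg:: training algo} stage by stage and combine the resulting error terms. Stage~1 is a feature learning step whose job is to put the columns of $\bW^{(1)}$ into a space that spans (an approximation of) the $r$ nonlinear features $\bp(\bx)$; Stage~2 then freezes $\bW^{(1)}$, reinitializes the biases, and reduces the problem to a convex regression on top of a fixed feature map $\bx\mapsto \sigma_1(\bW^{(1)}\bh^{(0)}(\bx)+\bb^{(1)})$. The final bound is then an approximation-plus-estimation decomposition: an approximation error coming from how well the post-Stage-1 feature map represents $g^{\star}\circ\bp$, an estimation error from running gradient descent on $\ba$ with $n_2$ samples, plus the feature-learning error produced in Stage~1.

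\textbf{Stage~1 analysis.} First I would compute $\bw_i^{(1)}=\bw_i^{(0)}-\eta_1[\nabla_{\bw_i}\hat{\cL}^{(1)}(\theta^{(0)})+\lambda_1\bw_i^{(0)}]$ using $f(\cdot;\theta^{(0)})=0$ and $\sigma_1(t)=t^2$ on $|t|<1$, so that with $\epsilon$ small the gradient linearizes to
\begin{align*}
\nabla_{\bw_i}\hat{\cL}^{(1)}(\theta^{(0)})\;\approx\;-\tfrac{2a_i^{(0)}}{m_1}\,\bM_{n_1}\,\bw_i^{(0)},\qquad \bM_{n_1}=\tfrac{1}{n_1}\sum_{\bx\in\cD_1}f^{\star}(\bx)\,\bh^{(0)}(\bx)\bh^{(0)}(\bx)^{\top}.
\end{align*}
The population counterpart $\bM=\EE_{\bx}[f^{\star}(\bx)\bh^{(0)}(\bx)\bh^{(0)}(\bx)^{\top}]\in\RR^{m_2\times m_2}$ is the central object. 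Because $\sigma_2$ is built from Gegenbauer polynomials of degree $\geq 2$ and $c_2=\Theta(1)$, expanding $\bh^{(0)}(\bx)_j \bh^{(0)}(\bx)_k$ via the Gegenbauer addition formula isolates a degree-$2$ spherical-harmonic piece of $f^{\star}$; under Assumption~\ref{assump::preprocess target function} the degree-$0,1$ components vanish and the degree-$2$ component is small, so the leading contribution to $\bM$ comes from the projection of $f^{\star}$ onto the span of $\{\bx^{\top}\bA_k\bx\}_{k=1}^{r}$, yielding, after straightforward algebra, $\bM \approx c_2^2 \sum_{k=1}^{r}\alpha_k(g^{\star})\,\bV\bA_k\bV^{\top}$ up to lower-order spherical-harmonic terms (where $\alpha_k(g^{\star})=\EE_{\bz}[\partial_k^2 g^{\star}(\bz)/2]$, so that Assumption \ref{assump::target} on the Hessian $\bH$ prevents collapse). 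I would then invoke the universality argument (Lemma \ref{prop::extend chaterjee}) to replace the true distribution of $\bh^{(0)}(\bx)$ by a Gaussian surrogate, which reduces the problem to a multi-index style question and lets me apply Proposition \ref{prop::reconstructed feature main} to produce coefficients $\{\beta_{i,k}\}$ such that a simple linear readout of $\bw_i^{(1)}$ recovers each feature $\bx^{\top}\bA_k\bx$ up to a small $L^2$ error. Concentration of $\bM_{n_1}\to\bM$ then contributes $\widetilde{\cO}(\sqrt{d^6/m_2}+\sqrt{d^2/n_1}+r^{O(1)}/d^{1/6})$-type error to the recovered features, accounting for the three feature learning terms in the bound.

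\textbf{Stage~2 analysis.} After the reinitialization $b_i^{(1)}\sim\mathrm{Unif}[-3,3]$, the network becomes $\bx\mapsto \tfrac{1}{m_1}\langle\ba,\sigma_1(\bW^{(1)}\bh^{(0)}(\bx)+\bb^{(1)})\rangle$, and I would use the result of Stage~1 to argue that with high probability $\langle\bw_i^{(1)},\bh^{(0)}(\bx)\rangle+b_i^{(1)}$ is close, as a function of $\bx$, to an affine function of $\bp(\bx)$. Since $\sigma_1(t)$ is essentially $|t|$, a standard Lipschitz/ReLU-type approximation argument (in the spirit of the random-feature construction of \citet{damian2022neural}) shows that with $m_1=\widetilde\Omega(r^{O(p)}\cdot \normP^{2p})$ neurons, the function class is rich enough to realize any degree-$p$ polynomial $g$ of $\bp$ with $\norm{\ba/m_1}{2}^2 \lesssim r^{p}\normP^{2p}$. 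This immediately explains the complexity term $\sqrt{r^{p}\normP^{2p}/\min(m_1,n_2)}$: the problem on $\ba$ with weight decay $\lambda_2$ is a strongly convex quadratic, so $T=\mathrm{poly}(\cdot)$ steps of gradient descent drive the empirical loss to its minimum and standard Rademacher bounds controlled by $\norm{\ba}{2}$ give the generalization gap. Critically, the feature map does \emph{not} depend on $g^{\star}$, so the identical analysis applies verbatim if we swap in any other $g$ of degree $p$ with $\norm{g}{L^2}\lesssim 1$ for the transfer learning claim.

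\textbf{Main obstacle.} The hardest step is Stage~1's subspace recovery in the multi-feature setting. Unlike the single-feature case of \citet{nichani2023provable} where zero initialization forces each neuron to align with the unique direction, here the weights must collectively span an $r$-dimensional subspace, so I need both (i) a lower bound on how much of $\bw_i^{(1)}$ lives in the feature subspace (which relies on the well-conditioned Hessian assumption $\lambda_{\min}(\bH)\geq C_H/\sqrt{r}$ and on $\norm{\cP_2(f^{\star})}{L^2}$ being small so the feature signal is not drowned by a rank-one artifact) and (ii) a way to move from the weight-level recovery to a functional recovery of each feature $\bx^{\top}\bA_k\bx$ simultaneously for all $k\in[r]$. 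The universality lemma bridging $\bh^{(0)}(\bx)$ to a Gaussian model and the explicit linear-reconstruction construction (Proposition \ref{prop::reconstructed feature main}) are precisely what I would expect to carry this step; controlling the approximation error in the universality step, which ultimately determines the $r^{p+2}/d^{1/6}$ term, is the technically delicate piece.
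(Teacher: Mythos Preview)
Your high-level outline (Stage~1 feature learning, Stage~2 random-feature regression, Rademacher generalization) matches the paper, and your Stage~2 sketch is essentially right. The gap is in the Stage~1 mechanism, where several things are off.

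First, Lemma~\ref{prop::extend chaterjee} is not applied to $\bh^{(0)}(\bx)$; it is applied to the $r$-dimensional feature vector $\bp(\bx)=(\bx^{\top}\bA_1\bx,\dots,\bx^{\top}\bA_r\bx)$ to show $\bp(\bx)\overset{\rm d}{\approx}\cN(\bzero,\bI_r)$. This is what lets one compute expectations of the form $\EE_{\bx}[g^{\star}(\bp(\bx))\cdot(\text{low-degree polynomial in }\bx)]$ by Stein's lemma in the surrogate Gaussian model. Applying universality to the $m_2$-dimensional random-feature embedding would not buy anything useful and is not what Proposition~\ref{prop::reconstructed feature main} does. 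Second, your identification of the leading term is inverted: under Assumption~\ref{assump::preprocess target function} the degree-$2$ projection $\cP_2(f^{\star})$ is \emph{small}, so the projection of $f^{\star}$ onto ${\rm span}\{\bx^{\top}\bA_k\bx\}$ cannot be the dominant signal. The signal lives in the degree-$4$ component $\cP_4(f^{\star})$, i.e., in $\EE_{\bx}[f^{\star}(\bx)(\bx\bx^{\top}-\bI)^{\otimes 2}]$, which by the paper's Proposition~\ref{prop::approximate stein lemma quad} is approximately $\sum_{i,j}\bH_{i,j}\,\bA_i\otimes\bA_j$; this is the full Hessian, not just the diagonal $\alpha_k=\bH_{kk}/2$ you wrote. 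Consequently your formula $\bM\approx c_2^{2}\sum_k\alpha_k\,\bV\bA_k\bV^{\top}$ is wrong on two counts: it uses only the diagonal of $\bH$, and the matrix $\bV\bA_k\bV^{\top}$ (with entries $\bv_i^{\top}\bA_k\bv_j$) cannot appear because $\sigma_2$ has no linear component; the correct rank-$r$ structure is (up to normalization) $\bP\bH\bP^{\top}$ with $\bP_{jk}=\bv_j^{\top}\bA_k\bv_j$. The paper in fact does not analyze $\bM$ spectrally at all; it works directly with the learned representation $\bh^{(1)}(\bx')=\frac{1}{n_1m_2}\sum_{\bx\in\cD_1}f^{\star}(\bx)\langle\bh^{(0)}(\bx),\bh^{(0)}(\bx')\rangle\bh^{(0)}(\bx)$ and exhibits an explicit reconstruction matrix $\bB^{\star}\propto\bH^{-1}[\bp(\bv_1),\dots,\bp(\bv_{m_2})]$ for which $\bB^{\star}\bh^{(1)}(\bx')\approx\bp(\bx')$. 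The $r^{p+2}/d^{1/6}$ term arises from the approximate Stein lemma for quadratics (Proposition~\ref{prop::approximate stein lemma quad}), not from a Gaussian approximation of $\bh^{(0)}$.
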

The full proof is provided in Appendix \ref{appendix::generalization}. To interpret the results, we provide the following discussion of Theorem \ref{thm:main_thm}. 
\paragraph{Feature learning error} This terms quantifies the  requirements on the first-stage sample complexity and the inner width to sufficiently capture the non-linear features. Given $d\gg r$, if the width $m_2=\widetilde\Omega(d^{6}r^{p+1})$ and the sample size $n_1=\widetilde\Omega(d^4+d^2r^{p+1})$, we can fully capture the underlying feature information and approximate any degree $p$ polynomials of the features.  We will demonstrate how Algorithm \ref{alg:: training algo}  learns these features through the learned representations in Proposition \ref{prop::reconstructed feature main} and express hierarchical polynomials in Proposition \ref{prop::construct random feature main}.

\paragraph{Complexity of $g^{\star}$} This term is the second-stage sample (and width) complexity given that the $r$ features have been fully captured in the first stage. Moreover, for a sufficiently preprocessed target function, i.e., $\normP=\cO(1)$,  we achieve the standard results of $\widetilde\cO(r^{p})$ complexity in learning a degree-$p$ polynomial in the $r$-dimensional space in the kernel regime.

\paragraph{Transfer learning} Leveraging the two-stage structure of training, we can learn a different target function in the second stage that shares the same features with the original target. This also supports the fact that we have fully captured the information of the $r$ nonlinear features in the first stage, making it possible for the efficient learning with a different polynomial head $g$. Moreover, by viewing the first stage as a pre-training process with $\widetilde\Omega(d^4+d^2r^{p+1})$ samples, only additional $\widetilde\cO(r^p\normP^{2p})$ samples are required to learn any degree $p$ polynomial of the features, which gets rid of the polynomial dependence on the ambient dimension of $d$.

\paragraph{Comparison with previous works} Compared with the sample complexity of $\widetilde\Omega(d^2r+dr^{p})$ in \citet{damian2022neural} for learning multi-index models, we have a similar polynomial dependence on $r$, and the dependence on $d$ increases from $d^2$ to $d^4$ because of the increased complexity of quadratic features rather than linear ones. Moreover, our approach significantly improves upon the $\Theta(d^{2p})$ sample complexity required by kernel methods to learn degree $p$ polynomials of quadratic features (i.e., degree $2p$ polynomials of the input). Crucially, our polynomial dependence on $d$ in the overall sample complexity is independent of the degree $p$ of the link function $g$.

{\paragraph{Near optimality of the sample complexity} We remark that our sample complexity of $\widetilde{\cO}({d^4})$ is nearly optimal with respect to $d$ for all algorithms that use one step of gradient descent for feature learning. Our assumptions on the target functions imply that the leap index\footnote{The leap index of a target function $f^{\star}$ is the first integer $\ell$ that $\mathcal{P}_{\ell}{f^{\star}}\neq 0$. Our assumptions imply a diminishing $\mathcal{P}_{<4}({f^{\star}})$ and a non-degenerate $\mathcal{P}_{4}({f^{\star}})$ as $d \rightarrow \infty$.} of our target functions  are basically 4 (more specifically, the second order information of $g\circ\bp$, where $\bp$ are quadratic features), and we also utilize $\mathcal{P}_4(f)$ for recovering the subspace of the $r$ quadratic features, which will be discussed in details in Section \ref{sec::proof sketch}. \cite{dandi2023twolayerneuralnetworkslearn} indicates that $\Omega(d^4)$ samples are required for an efficient learning of terms in $\mathcal{P}_4(f^{\star})$, which substantiates the near optimality of our result.}
    

\section{Proof Roadmap of Theorem \ref{thm:main_thm}}\label{sec::proof sketch}
The proof of Theorem \ref{thm:main_thm} unfolds in two training stages. First, by a novel universality argument (Lemma \ref{prop::extend chaterjee}), we show that after the first training stage, with sufficient training samples, the network learns to fully extract out the hidden features $\bp$
(Proposition \ref{prop::reconstructed feature main}). Next, we show that during the second stage, the network is capable of expressing the link function with a mild outer width $m_1$ (Proposition \ref{prop::construct random feature main}). We conclude the proof through standard Rademacher complexity analysis to quantify the generalization error of the second-stage model (detailed in Appendix \ref{appendix::generalization}).

\subsection{Stage 1: Learning the Features}
We provide a brief analysis on the learned representations after the first training stage. Denote $\bw_j=\epsilon^{-1}\bw_j^{(0)} \sim \cN(0,\bI_{m_2})$. According to Algorithm \ref{alg:: training algo}, by setting $\epsilon$ sufficiently small, after one-step gradient descent on $\bW$, we know for each $j\in[m_1]$,
\begin{align*}
\eta_1\nabla_{\bw_j^{(0)}} \cL(\theta^{(0)}) &= -\eta_1 \frac{a^{(0)}_j}{m_1}\cdot \frac{1}{n_1}\sum_{\bx\in\cD_1}f^*(\bx_i)\bh^{(0)}(\bx_i)\sigma_1'\left(\langle \epsilon\bw_j, \bh^{(0)}(\bx_i)\rangle\right)\\
&\underset{\epsilon \rightarrow 0}{\rightarrow} -\frac{2\epsilon\eta_1}{m_1} {a^{(0)}_j} \cdot {\frac{1}{n_1}\sum_{\bx\in\cD_1} f^*(\bx_i)\bh^{(0)}(\bx_i)\bh^{(0)}(\bx_i)^{\top}}\bw_j.
\end{align*}
By taking $\eta_1=\frac{m_1}{2\epsilon m_2}\cdot \eta$ for some $\eta>0$ to be chosen later and $\lambda_1=\eta_1^{-1}$, we have
\begin{align*}
\bw^{(1)}_j &= \bw^{(0)}_j -\eta_1 \left[\nabla_{\bw_j^{(0)}} \cL(\theta^{(0)}) + \lambda_1 \bw^{(0)}_j\right]
= \frac{\eta a^{(0)}_j}{m_2}\cdot\frac{1}{n_1}\sum_{\bx\in\cD_1} f^*(\bx_i)\bh^{(0)}(\bx_i)\bh^{(0)}(\bx_i)^{\top}\bw_j .
\end{align*}
Then for any second-stage training sample $\bx'\in\cD_2$, the inner-layer representation becomes
\begin{align*}
    \left\langle \bw_j^{(1)}, \sigma_2(\bV \bx') \right\rangle&=\frac{\eta a^{(0)}_j}{m_2}\left\langle \frac{1}{n_1}\sum_{\bx\in\cD_1} f^*(\bx_i)\bh^{(0)}(\bx_i)\bh^{(0)}(\bx_i)^{\top}\bw_j,\bh^{(0)}(\bx')  \right\rangle\\
    &=\eta a^{(0)}_j\cdot  \left\langle\bw_j,\underbrace{\frac{1}{n_1m_2}\sum_{\bx\in\cD_1}f^{\star}(\bx_i){\langle \bh^{(0)}(\bx_i),\bh^{(0)}(\bx')\rangle}\bh^{(0)}(\bx_i)}_{\bh^{(1)}(\bx')}\right\rangle.
\end{align*}
  Our main contribution in this part is that the first-step trained presentations representations $\mathbf{h}^{(1)}(\bx)$ approximately spans the space of the target features $(\bx^{\top}\bA_1\bx,\bx^{\top}\bA_2\bx,\dots,\bx^{\top}\bA_r\bx)$.
  Thus, the target features $\bp(\bx)$ can be reconstructed through a linear transformation from the learned representations $\mathbf{h}^{(1)}(\bx)$, which is formalized in the following proposition.

\begin{proposition}[Reconstruct the feature]\label{prop::reconstructed feature main}
   Suppose $m_2,n_1= \widetilde \Omega(d^{4})$. With high probability jointly on $\bV$ , $\cD_1$ and $\cD_2$, there exists a matrix $\bB^{\star}\in \RR^{r\times m_2}$ such that for any $\bx \in \cD_2$, we have
   \begin{align}\label{equ::prop 1 error}
       \norm{\bB^{\star}\mathbf{h}^{(1)}(\bx)-\bp(\bx)}{\rm 2} =\widetilde\cO \Bigg(\frac{d^3r}{\sqrt{m_2}}+ \frac{dr}{\sqrt{n_1}} +\frac{r^{\frac{p+5}{2}}}{d^{1/6}} \Bigg).
   \end{align}
\end{proposition}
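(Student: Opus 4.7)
The plan is to construct $\bB^\star$ via a population-level analysis and decompose the error into three terms matching the three summands in \eqref{equ::prop 1 error}. For each $k\in[r]$, I will pick the $k$-th row $\bb^\star_k\in\RR^{m_2}$ so that the auxiliary scalar function $\beta^k(\bx) := \langle\bb^\star_k, \bh^{(0)}(\bx)\rangle$ realizes an appropriate target in the random-feature span of $\{\sigma_2(\bv_j^\top\bx)\}_{j\in[m_2]}$. Expanding $\bb^{\star\top}_k \bh^{(1)}(\bx)$ as in Section \ref{sec::proof sketch} and passing formally to the $n_1,m_2\to\infty$ limit yields the population integral $\EE_{\bx'}[f^\star(\bx')\beta^k(\bx')K^{\rm pop}(\bx',\bx)]$, where $K^{\rm pop}(\bx,\bx')=\EE_{\bv\sim\tau_{d-1}}[\sigma_2(\bv^\top\bx)\sigma_2(\bv^\top\bx')]=\sum_{i\geq 2}\mu_i Q_i(\langle \bx,\bx'\rangle)$ by Gegenbauer expansion and the addition formula. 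The eigenvalues of the associated integral operator on $V_{d,j}$ are $\nu_j=\Theta(d^{-2j})$, so since each $p_k$ lies in $V_{d,2}$, the target $\beta^k$ is chosen so that the degree-2 interaction $\nu_2\cP_2(f^\star\beta^k)$ equals $p_k$; concretely, I take $\beta^k$ proportional to $\nu_2^{-1}\sum_\ell(\bH^{-1})_{k\ell}p_\ell$, which is well-defined by Assumption \ref{assump::target} since $\lambda_{\min}(\bH)\geq C_H/\sqrt r$. The link between degree-2 multiplication on the sphere and the Hessian $\bH$ of $g^\star$ is justified via the Gaussian proxy supplied by Lemma \ref{prop::extend chaterjee}.

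I then split $\bB^\star\bh^{(1)}(\bx)-\bp(\bx)$ into a telescoping sum $\bB^\star(\bh^{(1)}-\bh^{(1)}_{\rm exp})+\bB^\star(\bh^{(1)}_{\rm exp}-\bh^{(1)}_\infty)+(\bB^\star\bh^{(1)}_\infty-\bp)$, where $\bh^{(1)}_{\rm exp}$ replaces the $\cD_1$-average by $\EE_{\bx'\sim\tau_{d-1}}$, and $\bh^{(1)}_\infty$ additionally replaces the finite-width inner product $m_2^{-1}\langle\bh^{(0)}(\bx'),\bh^{(0)}(\bx)\rangle$ by $K^{\rm pop}(\bx',\bx)$. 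A preliminary computation based on the $\nu_2^{-1}$ rescaling gives $\norm{\bb^\star_k}{2}=\widetilde\Theta(d^2\sqrt r)$. The first two terms are then bounded by Bernstein-type concentration using the boundedness of $\sigma_2$ from Assumption \ref{assump::activation}, moment bounds on $f^\star$, and a union bound over $\bx\in\cD_2$ whose cost is absorbed into the $\widetilde O(\cdot)$ notation. This produces the respective rates $\widetilde O(dr/\sqrt{n_1})$ and $\widetilde O(d^3 r/\sqrt{m_2})$, where the extra $d^2$ in the random-feature term reflects the variance of $\langle\bh^{(0)}(\bx'),\bh^{(0)}(\bx)\rangle$ after $\nu_2^{-1}$ amplification.

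The remaining error $\bB^\star\bh^{(1)}_\infty(\bx)-\bp(\bx)=\sum_j\nu_j\cP_j(f^\star\beta^k)(\bx)-p_k(\bx)$ is the crux of the proof and the main obstacle. The degree-2 term reproduces $p_k$ only in the Gaussian proxy model where $\bp(\bx)\sim\cN(0,\bI_r)$, so its distance from $p_k$ for $\bx\sim\tau_{d-1}$ is exactly the Gaussian-equivalence error, while the higher-degree contributions $\cP_j(f^\star\beta^k)$ for $j\geq 4$ must be shown to vanish quantitatively. The extended Chatterjee argument in Lemma \ref{prop::extend chaterjee} will supply a $d^{-1/6}$ rate of universality (a cube-root Lindeberg-type swap) between $\bp(\bx)$ moments and $\cN(0,\bI_r)$ moments when integrated against smooth polynomials; combining this with the polynomial norms of the integrands is expected to yield $r^{p/2}$ from $g^\star$ (a degree-$p$ polynomial in $r$ variables), $\sqrt r$ from $\|\bH^{-1}\|$, $\sqrt r$ from taking the Euclidean norm over the $r$ output coordinates, and a further $r$-factor from the $L^2$ norm of $\beta^k$, producing the claimed $\widetilde O(r^{(p+5)/2}/d^{1/6})$. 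The hard part will be ensuring that the universality bound survives the $\nu_2^{-1}=\Theta(d^4)$ amplification required to isolate each $p_k$ on the (comparatively degenerate) degree-2 subspace—this is why the higher-order spherical-harmonic projections of $f^\star\beta^k$ must each be controlled without incurring further positive powers of $d$, and why a careful construction of $\beta^k$ that respects the block structure of $g^\star$ is essential.
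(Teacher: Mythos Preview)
Your construction of $\bB^\star$ and the telescoping decomposition into sample-, width-, and population-error terms match the paper's approach (its $\bD_{1,1},\bD_{1,2},\bD_2,\bD_3$ in Appendix~\ref{appendix::main proof reconstruct}), and your rates $\widetilde\cO(dr/\sqrt{n_1})$ and $\widetilde\cO(d^3r/\sqrt{m_2})$ for the first two pieces are correct. The gap is in the population term.

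You attribute the $d^{-1/6}$ rate to Lemma~\ref{prop::extend chaterjee} as ``a cube-root Lindeberg-type swap.'' This is wrong: Lemma~\ref{prop::extend chaterjee} and its corollary (Lemma~\ref{lemma::Gaussian W1}) give a $d^{-1/2}$ Wasserstein bound, not $d^{-1/6}$. The degradation is not a universality artifact but an \emph{operator-norm obstruction}. To control $\nu_2\cP_2(f^\star\beta^k)(\bx)-p_k(\bx)$ pointwise over $\bx\in\cD_2$, one must bound the Frobenius norm of the residual matrix $T(\bA_k)-T^\star(\bA_k)$ where $T(\bW)=\EE_{\bx'}[f^\star(\bx')\langle\bW,\bx'\bx'^\top-\bI\rangle(\bx'\bx'^\top-\bI)]$, i.e., test it against \emph{arbitrary} traceless $\bV$ with $\|\bV\|_F=1$. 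The Gaussian approximation of Lemma~\ref{prop::extend chaterjee} applied to the augmented feature vector $(\bp(\bx),\bx^\top\bV\bx)$ requires $\|\bV\|_{\rm op}/\|\bV\|_F=O(d^{-1/2})$, which fails for generic $\bV$. The paper resolves this via a spectral truncation (Lemmas~\ref{lemma::AiB strong}, \ref{lemma::B1B2}): split $\bV$ at eigenvalue threshold $\tau$ into a low-rank piece (few directions, each handled by rank-one universality, Corollary~\ref{lemma::Ai u}) and a flat residual with $\|\cdot\|_{\rm op}\lesssim\tau$; optimizing $\tau$ trades these errors and yields $d^{-1/4}$ (one generic matrix) or $d^{-1/6}$ (two). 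Your proposal contains no analog of this step, so as written it cannot produce the stated rate.

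Separately, you misplace the difficulty: the projections $\cP_j(f^\star\beta^k)$ for $j\geq3$ are harmless, since $\nu_j/\nu_2=\Theta(d^{-2(j-2)})$ kills them far below the $d^{-1/6}$ threshold (the paper disposes of them by Cauchy--Schwarz in one line inside Lemma~\ref{lemma:: bound D3 recons}). The ``$\nu_2^{-1}$ amplification'' you worry about is also not the issue---it is cancelled exactly by the $\nu_2$ in the kernel action. The genuine obstacle is the test-matrix operator norm, which you have not addressed.
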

The proof is provided in Appendix \ref{appendix proof reconstruct whole}. We summarize the main idea of the proof as follows.

\begin{figure}

    \centering
    \includegraphics[width=\linewidth]{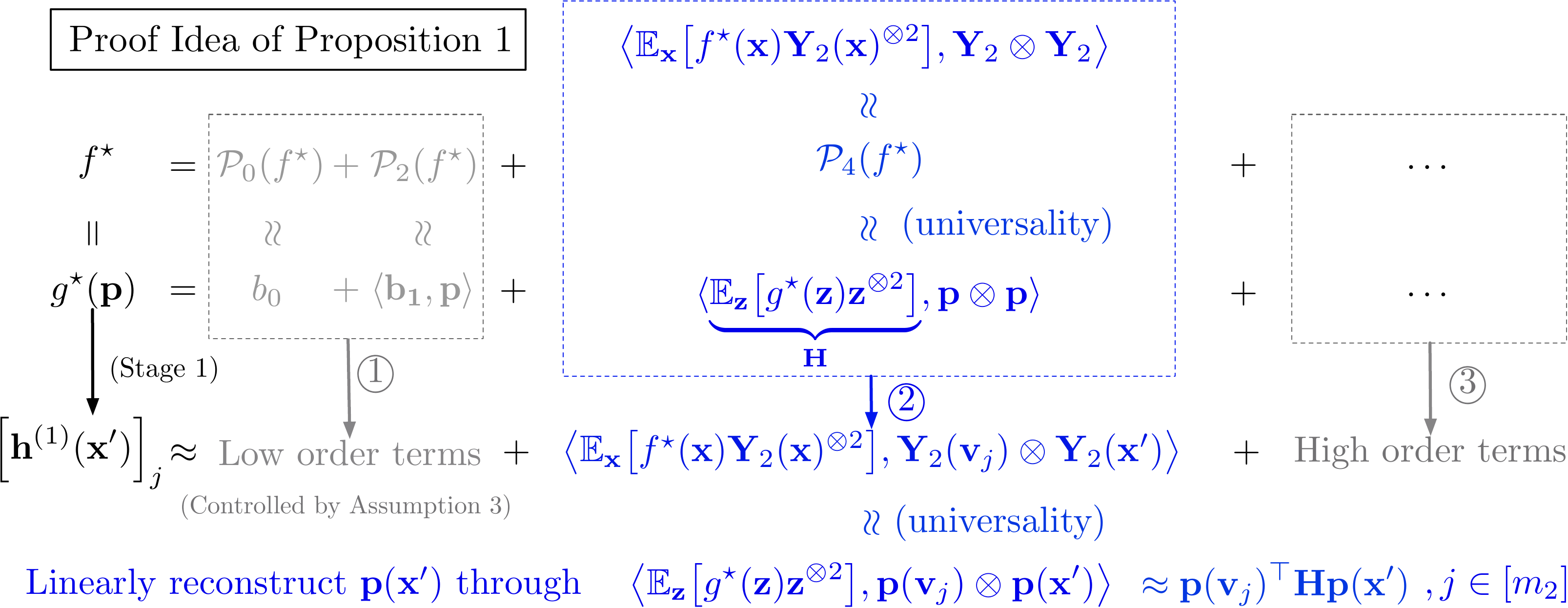}
    \caption{The proof idea of Proposition \ref{prop::reconstructed feature main}. { Block 1 characterizes the constant and linear terms of $g^{\star}$, which is approximately equivalent to the low-order terms $\cP_{< 4}(f^{\star})$ by our universality theory and results into biases in the learned weights $\bh^{(1)}(\bx'
    )$ after Stage 1. This bias is vanishing with $d \rightarrow \infty$ by our assumptions on $\cP_0({f^{\star}})$ and $\cP_2(f^{\star})$. Block 2 describes the second-order information of $g^{\star}$ (approximately $\cP_4(f^{\star})$), which is of the greatest importance and captured by the quadratic component $c_2Q_2(\cdot)$ in the inner activation $\sigma_2(\cdot)$ and converted into quantities spanned by the $r$ quadratic features $\bp$. Block 3 represents the remaining terms of $f^{\star}$, which leads to high-order nuisance in the learned weights, but still dominated by the second term due to Assumption \ref{assump::target} when $d$ is large, which enables us to utilize the terms in blue (resulted from Block 2) to reconstruct the features efficiently.}}  \label{figure::proof idea}
\end{figure}
\paragraph{Universality of features} The foundation of the proof lies in the universality result that the joint distribution of the multiple features $\bp$ is approximately multivariate standard Gaussian:
\begin{align*}
\left(\bx^{\top}\bA_{1}\bx,\bx^{\top}\bA_{2}\bx,\dots,\bx^{\top}\bA_{r}\bx\right) \overset{\rm d}{\approx} \cN \paren{\mathbf{0}_r,\bI_r},~~~d\gg r.
\end{align*}
It is worth mentioning that we provide a general universality theory that quantifies the difference between the distribution of any $r$-dimensional function (not limited in quadratic forms) and the $r$-dimensional Gaussian distribution, which is presented in Lemma \ref{prop::extend chaterjee}.
\begin{lemma}[Universality of vector-valued functions]\label{prop::extend chaterjee}
Suppose $\bX\sim \cN(\bzero,\bI_{d})$ is an $d$-dimensional standard Gaussian variable. If a function  $\mathbf{p}:\RR^{d}\rightarrow\RR^{r}$ satisfies $\EE_{\bX}\brac{\bp(\bX)}=\mathbf{0}_r$ and $\text{Cov}\paren{\bp(\bX),\bp(\bX)}=\bI_r$, then we have
\begin{align*}
    W_1(\text{Law}(\bp(\bX)),\cN(\bzero,\bI_{r}))\le \frac{4}{\sqrt{\pi}} \paren{\sum_{i=1}^{r}\EE\brac{\norm{\nabla p_i(\bX)}{2}^{4}}^{1/4} }\paren{\sum_{j=1}^{r}\EE\brac{\norm{\nabla^2 p_j(\bX)}{\rm op}^{4}}^{1/4}}.
\end{align*}
Here $\bp(\bx)=[p_1(\bx),p_2(\bx),\dots,p_r(\bx)]^{\top}$ and $W_1$ denotes the Wasserstein-$1$ distance.
\end{lemma}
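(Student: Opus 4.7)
My plan is to extend Chatterjee's second-order Poincaré inequality to the vector-valued setting by combining Stein's method for the multivariate standard Gaussian with Gaussian (Malliavin) integration by parts. By Kantorovich--Rubinstein duality,
\begin{align*}
W_1(\text{Law}(\bp(\bX)), \cN(\bzero, \bI_r)) = \sup_{\|h\|_{\mathrm{Lip}}\le 1}\bigl(\EE[h(\bp(\bX))] - \EE[h(\bZ)]\bigr)
\end{align*}
where $\bZ \sim \cN(\bzero, \bI_r)$, so it suffices to bound $\EE[h(\bp(\bX))] - \EE[h(\bZ)]$ uniformly over $1$-Lipschitz test functions $h:\RR^r\to\RR$.

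\textbf{Step 1: Stein PDE and its regularity.} For a fixed $1$-Lipschitz $h$, I will solve the multivariate Stein PDE $\Delta g(\bz) - \langle \bz, \nabla g(\bz)\rangle = h(\bz) - \EE[h(\bZ)]$ via the Ornstein--Uhlenbeck semigroup representation $g(\bz) = -\int_0^\infty \bigl(P_t h(\bz) - \EE[h(\bZ)]\bigr)dt$ with $P_t h(\bz) = \EE[h(e^{-t}\bz + \sqrt{1-e^{-2t}}\bZ')]$. Iterated Gaussian integration by parts on the inner Gaussian $\bZ'$ transfers both derivatives in $\bz$ onto Hermite-like weights in $\bZ'$, producing a uniform regularity estimate $\|\nabla^2 g(\bz)\|_{\mathrm{op}}\leq C_1$ for an explicit constant $C_1$ of order $1/\sqrt{\pi}$ (obtained by bounding $\EE[|Z'_iZ'_j-\delta_{ij}|\cdot\|\bZ'\|]$ and using $\int_0^\infty e^{-2t}/\sqrt{1-e^{-2t}}\,dt=1$). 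This is the vector-valued analogue of Chatterjee's $1$D Hessian bound.

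\textbf{Step 2: Gaussian IBP and the master identity.} Plugging $\bz=\bp(\bX)$ into the PDE and taking expectations gives $\EE[h(\bp(\bX))-h(\bZ)] = \EE[\Delta g(\bp(\bX))] - \EE[\langle \bp(\bX),\nabla g(\bp(\bX))\rangle]$. Since $\EE[\bp(\bX)]=\bzero$, the Gaussian covariance formula combined with Mehler's representation of the inverse OU operator gives, with $\bX'$ an independent copy of $\bX$ and $\tilde\bX_t := e^{-t}\bX + \sqrt{1-e^{-2t}}\bX'$,
\begin{align*}
\EE[p_i(\bX)\partial_i g(\bp(\bX))] = \sum_j\int_0^\infty e^{-t}\,\EE\bigl[(\partial^2_{ij}g)(\bp(\tilde\bX_t))\,\langle\nabla p_i(\bX),\nabla p_j(\tilde\bX_t)\rangle\bigr]\,dt.
\end{align*}
Because $\tilde\bX_t$ is marginally $\cN(\bzero,\bI_d)$, the Laplacian term can be rewritten in the same integrated form $\sum_{i,j}\int_0^\infty e^{-t}\delta_{ij}\EE[(\partial^2_{ij}g)(\bp(\tilde\bX_t))]dt$, and subtraction produces the master identity
\begin{align*}
\EE[h(\bp(\bX))-h(\bZ)] = \sum_{i,j}\int_0^\infty e^{-t}\,\EE\bigl[(\partial^2_{ij}g)(\bp(\tilde\bX_t))\bigl(\delta_{ij}-K_{ij}(t)\bigr)\bigr]\,dt,
\end{align*}
where $K_{ij}(t):=\langle\nabla p_i(\bX),\nabla p_j(\tilde\bX_t)\rangle$.

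\textbf{Step 3: bounding the Stein-kernel defect, and main obstacle.} A key structural point is that the deterministic part of the defect disappears after $e^{-t}$-integration: $\delta_{ij}=\EE[p_ip_j]=\int_0^\infty e^{-t}\EE[K_{ij}(t)]dt$, so the telescope $\int_0^\infty e^{-t}(\delta_{ij}-\EE[K_{ij}(t)])dt=0$ removes the deterministic piece and only the fluctuation $K_{ij}(t)-\EE[K_{ij}(t)]$ contributes. Using the entry-wise bound $|\partial^2_{ij}g|\leq\|\nabla^2 g\|_{\mathrm{op}}\leq C_1$ and applying the Gaussian Poincaré inequality to $K_{ij}(t)$ viewed as a function of the joint standard Gaussian $(\bX,\bX')$, the gradient produces terms of the form $(\nabla^2 p_i(\bX))\nabla p_j(\tilde\bX_t)+e^{-t}(\nabla^2 p_j(\tilde\bX_t))\nabla p_i(\bX)$. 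Bounding these in operator norm, using stationarity of $\tilde\bX_t\sim\bX$ to pass to $L^4$-moments of gradients and Hessians at a single point, and applying Cauchy--Schwarz to pair each Hessian with each gradient makes the double sum factorize, after summation over $i,j$, into $\bigl(\sum_i\EE[\|\nabla p_i(\bX)\|_2^4]^{1/4}\bigr)\bigl(\sum_j\EE[\|\nabla^2 p_j(\bX)\|_{\mathrm{op}}^4]^{1/4}\bigr)$, yielding the claimed bound. The main obstacle is the careful bookkeeping of constants through the successive applications of Cauchy--Schwarz and the $e^{-t}$-integration in order to land on the sharp prefactor $4/\sqrt{\pi}$; a secondary, routine issue is justifying the regularity step for only-Lipschitz $h$, handled by standard mollification $h_\epsilon=h\ast\phi_\epsilon$ together with the stability of $W_1$ under such smoothing.
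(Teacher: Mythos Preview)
Your approach is correct and essentially the same as the paper's: both combine the multivariate Stein equation (with the $\sqrt{2/\pi}$ Hessian bound on the solution), Gaussian integration by parts via the Ornstein--Uhlenbeck semigroup, the observation that the covariance hypothesis $\text{Cov}(\bp(\bX))=\bI_r$ kills the mean of the Stein-kernel defect, and the Gaussian Poincar\'e inequality applied to that defect to obtain the $L^4$ gradient/Hessian factorization. The only packaging difference is that the paper bundles the $t$-integral into the potential operator $\cG f=\int_0^\infty(P_tf-\EE f)\,dt$ and applies Poincar\'e in $\bX$ alone (then invokes the $L^4$-contraction $\EE\|\nabla\cG p_i\|^4\le\EE\|\nabla p_i\|^4$), whereas you keep the $t$-integral explicit and apply Poincar\'e to $K_{ij}(t)$ as a function of the pair $(\bX,\bX')$, using stationarity of $\tilde\bX_t$ to remove the $t$-dependence; these are equivalent computations and yield the same constant $4/\sqrt{\pi}$.
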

The proof is provided in Appendix \ref{appendix::mathstein}. This lemma extends the previous universality results of univariate Gaussian approximation theory \citep{chatterjee2007fluctuationseigenvaluessecondorder} to the multivariate version and could be of independent interest for the field of high dimensional probability theory. As a corollary, when we take $\bp$ to be $r$ quadratic features satisfying Assumption \ref{assump::features}, we ensure the $W_1$ distance is bounded by $\widetilde\cO(r^2/\sqrt{d})$ (see Lemma \ref{lemma::Gaussian W1} in the appendix for the formal statement). This approximation error finally contributes to third term in the error bound of Proposition \ref{prop::reconstructed feature main} (Equation \eqref{equ::prop 1 error}).

\paragraph{Utilizing the second-order information of $g^{\star}$} Lemma \ref{prop::extend chaterjee} establishes a crucial link between our model and the multi-index model studied by \citet{damian2022neural}. This connection allows us to simplify the analysis on non-linear features and utilize the second-order information of the link function $g^{\star}$ to fully recover the feature space. In the context of multi-index models where $f^{\star}(\bx) = g^{\star}(\bp(\bx))$ with $\bp(\bx) =\bU\bx$, it has been shown that for a prepossessed target with a non-degenerate expected Hessian $\bH=\EE_{\bz}\brac{\nabla^2 g^{\star}(\bz)}$,  the learned representations, dominated by the degree $2$ component of $f^*$ which takes form $\EE_{\bx}\brac{f^{\star}(\bx)\bx^{\otimes 2}}\approx \bU^{\top} \bH \bU$, are spanned by $\{\bu_i \otimes \bu_j\}_{i,j\in[r]}$. Extending this to our setting with quadratic features and applying the universality argument from Lemma \ref{prop::extend chaterjee}, we demonstrate that the degree $4$ component of our $f^*$, namely $\mathbb{E}_{\bx}\left[f^*(\bx)\bY_2(\bx)^{\otimes 2}\right]$, is approximately spanned by the quantities $\{\bA_i \otimes \bA_j\}_{i,j\in[r]}$, which is formalized in Proposition \ref{prop::approximate stein lemma quad} in Appendix \ref{appendix::asymptotic analysis inner feature reconstruct}. Here $\bY_2(\bx)$ represents the tensorized quadratic spherical harmonics. Under Assumption \ref{assump::preprocess target function}, it turns out that after the first step of GD (Stage 1 of Algorithm \ref{alg:: training algo}), the learned representations are dominated by this degree $4$ component (Proposition \ref{prop::bounded feature} in Appendix \ref{sec::bounded feature}). This domination enables efficient recovery of the "span" of the hidden features $\bp$. For a visual representation of our proof strategy, we also present our main idea of the proof in Figure \ref{figure::proof idea}. Remarkably, we find that the reconstruction matrix admits a surprisingly simple form of $\bB^{\star}\propto \bH^{-1}[\bp(\bv_1),\bp(\bv_2),\dots,\bp(\bv_{m_2})]$. We provide empirical support for the effectiveness of this reconstruction through experiments in Section \ref{sec::experiments}.



 \subsection{Stage 2: Learning the Link Function}
 By the deduction above, after the first training stage, the model becomes a random-feature model \citep{NIPS2007_013a006f}:
\begin{align}
    f(\bx';\theta)=\frac{1}{m_1}\sum_{j=1}^{m_1}a_j\sigma_1\paren{\eta a^{(0)}_j \langle \bw_j, \mathbf{h}^{(1)}(\bx') \rangle+b^{(1)}_j}. \label{equ::modelafterGD}
\end{align}
Here $\theta=(\ba,\bW^{(1)},\bb^{(1)},\bV)$, with $\ba=[a_1,a_2,\dots,a_{m_1}]^{\top}\in \RR^{m_1}$ being the trainable parameters in the second stage. Leveraging the construction in Proposition \ref{prop::reconstructed feature main}, we can construct a corresponding weight vector $\ba$ in the outer layer to express the polynomial $g(\bB^{\star}\mathbf{h}^{(1)}(\bx))\approx g(\bp(\bx))$.
\begin{proposition}[Expressivity of the second-stage model]
\label{prop::construct random feature main}
Suppose $g$ is a degree $p$ polynomial with $\norm{g}{L^2}\lesssim 1$. Then there exists a learning rate $\eta$ such that, with high probability over $\cD_1$, $\cD_2$, $\bW$ and $\bV$, there exists $\ba^{\star}\in\RR^{m_1}$ such that the parameter $\theta^{\star}=(\ba^{\star},\bW^{(1)},\bb^{(1)},\bV)$ achieves a small empirical loss:
\begin{align*}
\frac{1}{n_2}\sum_{\bx \in \cD_2} \paren{f(\bx;\theta^{\star})-g(\bp(\bx))}^2&= \widetilde\cO\paren{\frac{\norm{\ba^{\star}}{2}^2}{m^2_1}+ \frac{d^6r^{p+1}}{m_2}+ \frac{d^2r^{p+1}}{n_1} +\frac{r^{{2p+4}} }{d^{1/3}}}.
\end{align*}
Here $\ba^{\star}$ satisfies ${\norm{\ba^{\star}}{2}^2 }=\widetilde \cO\left(m_1r^p\normP^{2p}\right).$
\end{proposition}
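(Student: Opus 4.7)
The plan is to construct explicit outer-layer weights $\ba^\star$ that realize $g(\bp(\bx))$ on the second-stage data, and then to bound the resulting empirical loss by combining a deterministic feature-reconstruction error with a Monte-Carlo random-feature error.

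First, I would use Proposition \ref{prop::reconstructed feature main} to replace the true features $\bp(\bx)$ by the linear projection $\bB^\star \bh^{(1)}(\bx)$, and analyze the pointwise deviation $g(\bp(\bx)) - g(\bB^\star \bh^{(1)}(\bx))$. Since $g$ is a degree-$p$ polynomial with $\norm{g}{L^2}\lesssim 1$ and both $\bp(\bx)$ and $\bB^\star\bh^{(1)}(\bx)$ remain bounded with high probability on $\cD_2$, the function $g$ is effectively Lipschitz on the relevant domain with a constant of order $r^{\cO(p)}$, so squaring the $\ell_2$ error bound \eqref{equ::prop 1 error} and multiplying by the Lipschitz constant squared yields exactly the three ``feature learning'' terms $d^6 r^{p+1}/m_2 + d^2 r^{p+1}/n_1 + r^{2p+4}/d^{1/3}$ in the claimed bound.

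Next, I would show that $g(\bB^\star \bh)$, viewed as a polynomial in the effective $r$-dimensional coordinates $\bu_i^\top \bh := (\bB^\star)_i^\top \bh$, admits an integral representation against the random-feature activation of the form
\begin{align*}
g(\bB^\star \bh) = \EE_{(\bw, b, a^{(0)})}\brac{\alpha(\bw, b, a^{(0)})\, \sigma_1\paren{\eta a^{(0)} \langle \bw, \bh\rangle + b}},
\end{align*}
for a weight function $\alpha$ with $\EE[\alpha^2] = \widetilde{\cO}(r^p \normP^{2p})$, and then set $a_j^\star = \alpha(\bw_j, b_j^{(1)}, a_j^{(0)})$. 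The construction exploits that $\sigma_1$ coincides with $|t|$ away from the origin together with the uniform density of $b$ over $[-3,3]$, so that integrating a suitably chosen $\alpha$ against $\sigma_1(\cdot + b)$ recovers any smooth function of the Gaussian projection $\langle \bw, \bh\rangle$. Expanding $g$ in its $\binom{r+p}{p} = \cO(r^p)$ monomials in the coordinates $\bu_i^\top \bh$ and constructing each monomial's contribution separately via the Hermite expansion of $\langle \bw, \bh\rangle$ produces the desired $\EE[\alpha^2]$ bound; the $\normP^{2p}$ factor arises from the norm scaling of the rows of $\bB^\star$, which in turn inherits the magnitude of the reconstructed features in Proposition \ref{prop::reconstructed feature main}.

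Finally, a standard Bernstein/Rademacher concentration argument, uniform over the $n_2$ samples in $\cD_2$, bounds the squared empirical deviation between $\frac{1}{m_1}\sum_j a_j^\star \sigma_1(\cdot)$ and the above expectation by $\widetilde{\cO}(\EE[\alpha^2]/m_1) = \widetilde{\cO}(\norm{\ba^\star}{2}^2 / m_1^2)$, matching the first term of the bound. The main obstacle is Step 2: explicitly constructing $\alpha$ and tightly controlling $\EE[\alpha^2]$. The difficulty is that $\bw_j$ is Gaussian in the $m_2$-dimensional ambient space rather than in the $r$-dimensional effective feature space, so the orthogonal component $\langle(\bI - P_{\bB^\star})\bw_j, \bh^{(1)}(\bx)\rangle$ acts as ``noise'' inside $\sigma_1$; handling it requires leveraging the uniform distribution of $b_j^{(1)}$ and the smoothness of $\sigma_1$ near the origin to average out the orthogonal fluctuation before the integral identity for $|t|$ can be safely invoked.
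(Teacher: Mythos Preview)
Your three-term decomposition matches the paper exactly: the feature-reconstruction error (your Step 1) is Corollary~\ref{lem:bound_fn_diff}, the integral representation (your Step 2) is Lemma~\ref{lem:infinite_width_express}, and the Monte-Carlo concentration (your Step 3) is Lemma~\ref{lem:empirical_loss}. The gap is in Step 2, and the obstacle you flag is precisely the one the paper sidesteps rather than attacks.

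The paper never separates the projection onto the row space of $\bB^\star$ from its orthogonal complement inside $\sigma_1$. Instead it invokes a separate result, Proposition~\ref{prop::bounded feature}, showing that the \emph{full} inner product satisfies $|\langle \bw, \bh^{(1)}(\bx)\rangle| \lesssim \normP\iota^5\sqrt{m_2}/d^6$ with high probability, uniformly over $\bx\in\cD_2$. The learning rate is then fixed as $\eta = C\iota^{-5}\normP^{-1} m_2^{-1/2} d^6$ so that $z := \eta\,a^{(0)}\langle\bw,\bh^{(1)}(\bx)\rangle \in [-1,1]$ with high probability, orthogonal part included. On $|z|\le 1$ the paper constructs explicit $v_k(a,b)$ with $\EE_{a,b}[v_k(a,b)\sigma_1(az+b)] = z^k$ exactly (Lemma~\ref{lem:transform_activation}), reducing the activation to the pure monomial $(\eta\langle\bw,\bh\rangle)^k$. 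The remaining requirement $\EE_\bw[v_k(\bw)\,\bw^{\otimes k}] = \eta^{-k}(\bB^\star)^{\otimes k}(\bT_k)$ is solved by the closed form $v_k(\bw) = \eta^{-k}\vec(\bw^{\otimes k})^\top \Mat(\EE[\bw^{\otimes 2k}])^{-1}\vec((\bB^\star)^{\otimes k}(\bT_k))$, which gives $\|v_k\|_{L^2} \lesssim \eta^{-k}\|\bB^\star\|_{op}^k\|\bT_k\|_F$ directly. No Hermite expansion is needed, and the orthogonal ``noise'' you worry about never has to be isolated: once the input is bounded and the activation has been converted to a polynomial, the full Gaussian expectation over $\bw$ handles all directions at once.

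Two smaller corrections. First, the factor $\normP^{2p}$ does not come from $\|\bB^\star\|_{op}$; the formal version (Proposition~\ref{prop::reconstructed feature}) gives $\|\bB^\star\|_{op}\lesssim d^6/(\lambda_{\min}(\bH)\sqrt{m_2})$ with no $\normP$. It enters through $\eta^{-1}\propto\normP$, so that $\eta^{-1}\|\bB^\star\|_{op}\lesssim \iota^5\normP/\lambda_{\min}(\bH)\lesssim \iota^5\normP\sqrt{r}$, and the sum $\sum_{k\le p}(\eta^{-1}\|\bB^\star\|_{op})^k r^{(p-k)/4}$ produces the $\normP^{p}r^{p/2}$ scaling for $\|v\|_{L^2}$. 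Second, without Proposition~\ref{prop::bounded feature} you have no control on how large $\eta$ can be taken while keeping the argument of $\sigma_1$ in its quadratic region; this is the missing ingredient that dissolves your ``main obstacle'' rather than requiring you to average out the orthogonal fluctuation by hand.
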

The proof is provided in Appendix \ref{appendix::proof outer rf main}. We provide following discussions. 

\paragraph{Error propagation} To explain the increased polynomial dependence on $r$, we remark that the approximation error in Proposition \ref{prop::reconstructed feature main} gets multiplied by the averaged Lipschitz smoothness of the link function $g$, which is upper bounded by $\cO(r^{\frac{p-1}{2}})$. This product is then squared due to the use of squared loss in Proposition \ref{prop::construct random feature main}. 

\paragraph{Reduced complexity of $\ba$} Moreover, we remark that the complexity of $\ba$, i.e., $\norm{\ba}{2}$, gets rid of the polynomial dependence on $d$, which is greatly reduced compared with a naive random-feature model that requires $\norm{\ba}{2}^2=\Theta(m_1d^{2p})$.  This directly saves the second-stage sample complexity $n_1$ and the outer width $m_1$, since $n_1,m_2 =\Theta(m_1^{-1}\norm{\ba^{\star}}{2}^2)$ is required for efficient approximation and generalization \citep{ghorbani2021linearized}. We also examine this reduced dependency by comparing our model with a naive random feature model in learning hierarchical target functions in Section \ref{sec::experiments}. 

\paragraph{Arbitrariness of $g$} Thanks to the two-stage architecture and the sufficient learning of the features, the choice on the link function $g$ can be an arbitrary degree $p$ polynomial, not limited to the truth target $g^{\star}$. This allows us to conduct transfer learning tasks in Stage 2 of Algorithm \ref{alg:: training algo}.

  Finally, by standard Rademacher complexity analysis on the random feature model presented in Appendix \ref{appendix::generalization}, we conclude our proof.

\section{Numerical Experiments}\label{sec::experiments}
We empirically verify Theorem \ref{thm:main_thm} and Proposition \ref{prop::reconstructed feature main}. We consider learning functions with $r=3$ quadratic features. Regarding the target function, we choose the target functions to be of the form
\begin{align}\label{equ::target exp}
    f^{\star}_{d,p}(\bx)= \frac{f_{d,p}(\bx)-\EE\brac{f_{d,p}(\bx)}}{\sqrt{{\rm Var}[f_{d,p}(\bx)]}},~~~\text{with}~~f_{d,p}(\bx)=\sum_{i=1}^{r}\paren{\bx^{\top}\bA_1\bx}^p,~p\in \mathbb{N}.
\end{align}

 For the underlying features, we take $\bp(\bx)=[\bx^{\top}\bA_1\bx,\bx^{\top}\bA_2\bx,\bx^{\top}\bA_3\bx]^{\top}$  with $\bA_k=\diag{c\cdot \ba_k}$,  and $c>0$ is a normalizing constant. To ensure the orthogonality of the features and $\tr(\bA_k)=0$, we choose the ambient dimension $d$ to be divisible by $4$ and take $\ba_k$ to be
\begin{align*}
    \ba_1=\Vec{\paren{[\mathbf{1},\mathbf{1},-\mathbf{1},-\mathbf{1}] }},~\ba_2=\Vec{\paren{[\mathbf{1},-\mathbf{1},\mathbf{1},-\mathbf{1}] }},~\text{and}~~\ba_3=\Vec{\paren{[\mathbf{1},-\mathbf{1},-\mathbf{1},\mathbf{1}] }}.
\end{align*}
Here $\mathbf{1}$ is a vector of ones in $d/4$ dimensions, and $c=\sqrt{\frac{d+2}{2d^2}}$ to ensure that $\EE_{\bx}\brac{(\bx^{\top}\bA_k\bx)^2}=1$ for each $k=1,2,3$.

For the network architecture, we choose $\sigma_1$ as per \eqref{equ::activation} and $\sigma_2=Q_2$, with network sizes set to $m_1 = 10000$ and $m_2 = 20000$. We compare our proposed model \eqref{equ::modelafterGD} (given by Algorithm \ref{alg:: training algo}) against the naive random-feature model defined as
\begin{align}
    f^{\RF}(\bx';\theta)=\frac{1}{m_1}\sum_{j=1}^{m_1}a_j\sigma_1\paren{\eta a^{(0)}_j \langle \bw_j, \mathbf{h}^{(0)}(\bx') \rangle+b^{(1)}_j}, \label{equ::RF}
\end{align}
where $\ba$ is the only trainable parameter throughout the training process. Our experiments involve learning $f^{\star}_{d,p}$ with $p=4$ and $d \in \{8,16,32\}$. To examine our model's transfer learning capabilities, we also train the model on an initial target function $f^{\star}_{d,2}$ with $d=16$ and $n_1=2^{16}$ in the first stage, then transfer to targets $f^{\star}_{d,p}$ with $p=4,6,8$. For each task, we explore a range of sample sizes from $2^8$ to $2^{16}$. The results of these experiments are presented in Figure \ref{fig::Comparison m2=}. 

\paragraph{Improved sample complexity and Polynomial dependence on $d$} The left panel of Figure \ref{fig::Comparison m2=} demonstrates that our model outperforms the naive random-feature model across all dimensions. As the dimension $d$ increases, both models show larger test errors, but our model exhibits less sensitivity to $d$. This aligns with our theoretical analysis in Theorem \ref{thm:main_thm} that the sample complexity of kernel methods should be $\Omega(d^{2p-4})$ times greater than that of our model. Moreover, we redraw Figure \ref{fig::Comparison m2=} by plotting the test error against $\log_d n$. As shown in Figure \ref{fig::redraw comparison}, the loss curves for our model (Algorithm \ref{alg:: training algo}) align closely for different values of $d$, indicating that it achieves low error rates with only $\widetilde{\cO}(d^4)$ samples. In stark contrast, the naive random feature model exhibits significant separation between curves for different $d$ values, requiring more than $\widetilde{\cO}(d^{4})$ samples to achieve comparable error rates. This graphical evidence powerfully demonstrates how our approach eliminates the dependence on dimension $\Theta(d^{2p})$ presented in kernel methods, resulting in substantially improved sample complexity in high-dimensional settings.


\paragraph{Efficient transfer learning} The right panel of Figure \ref{fig::Comparison m2=} showcases our algorithm's strong transfer learning capabilities. our algorithm successfully learns all three transferred target functions with benign second-stage sample complexity. Notably, as the degree $p$ increases, the test error grows no faster than $r^{p}$, which is significantly slower than $d^{2p}$. This supports our theoretical result that the second-stage sample complexity depends on the number of features $r$ rather than the ambient dimension $d$, underscoring our model's strong transfer learning capabilities.


\begin{figure}
    \hspace{-2.5em}
    \subfigure[Comaprison between Algorithm \ref{alg:: training algo} and the random-feature model]{\includegraphics[width=0.75\linewidth]{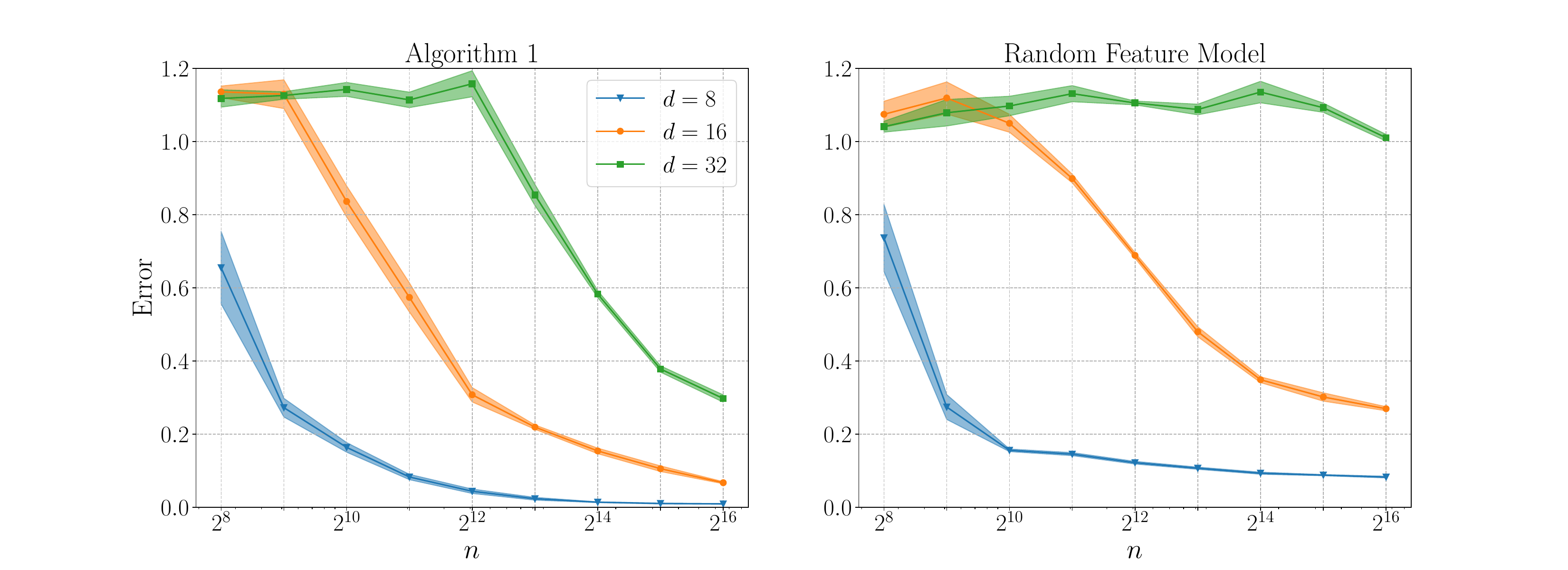}}\hspace{-3.2em}
    \subfigure[Performance of transfer learning]{\includegraphics[width=0.38\linewidth]{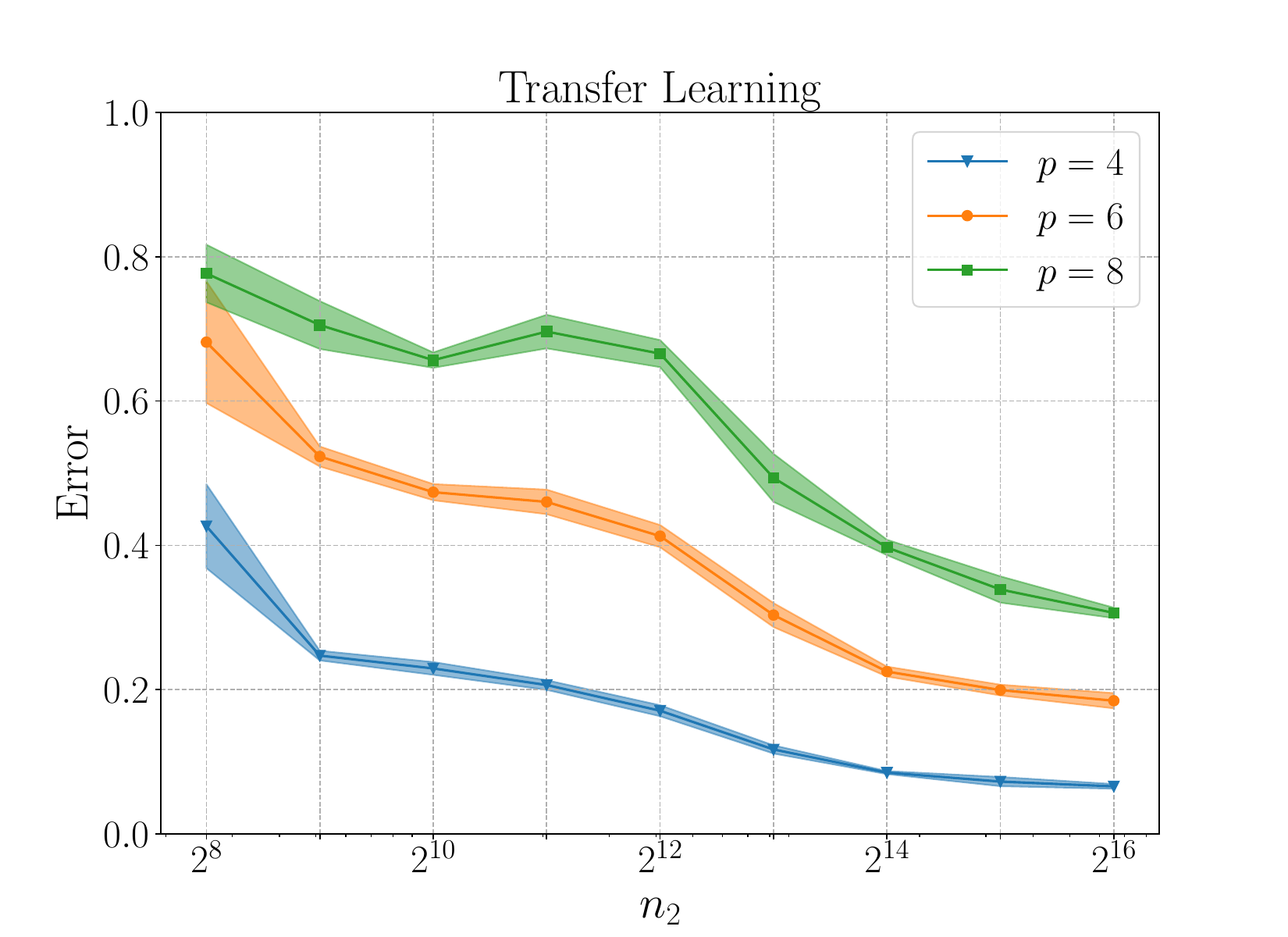}}
    \caption{For the left panel, Algorithm \ref{alg:: training algo} uses two equally sized datasets, while the random feature model uses the full dataset. For the right panel, we conduct transfer learning with  $n_1=2^{16}$ pretraining samples and plot the dependence on $n_2$. The figure reports the mean and normalized standard error of the test error using 10,000 fresh samples, based on $5$ independent experimental instances.}\label{fig::Comparison m2=} 
\end{figure}
 \paragraph{Accurate reconstruction of quadratic features} To further demonstrate our model's feature learning capabilities, we extract the learned features $\bh^{(1)}$ after the first training stage of Algorithm \ref{alg:: training algo}, using $f_{16,2}$ as the target. We then reconstruct these features using a linear transformation $\bB^{\star}\in \RR^{r\times m_2}$, as described in Proposition \ref{prop::reconstructed feature main}. We examine how reconstruction accuracy changes with first-stage sample sizes. Figure \ref{fig:feature reconstruction} shows the correlation between true and reconstructed features for each sample size. As $n_1$ increases, all features are better approximated simultaneously. Notably, $d^4$ samples prove sufficient to reconstruct the features with high accuracy, supporting our model's effective feature learning ability.
 \begin{figure}
    \centering
    \includegraphics[width=1\linewidth]{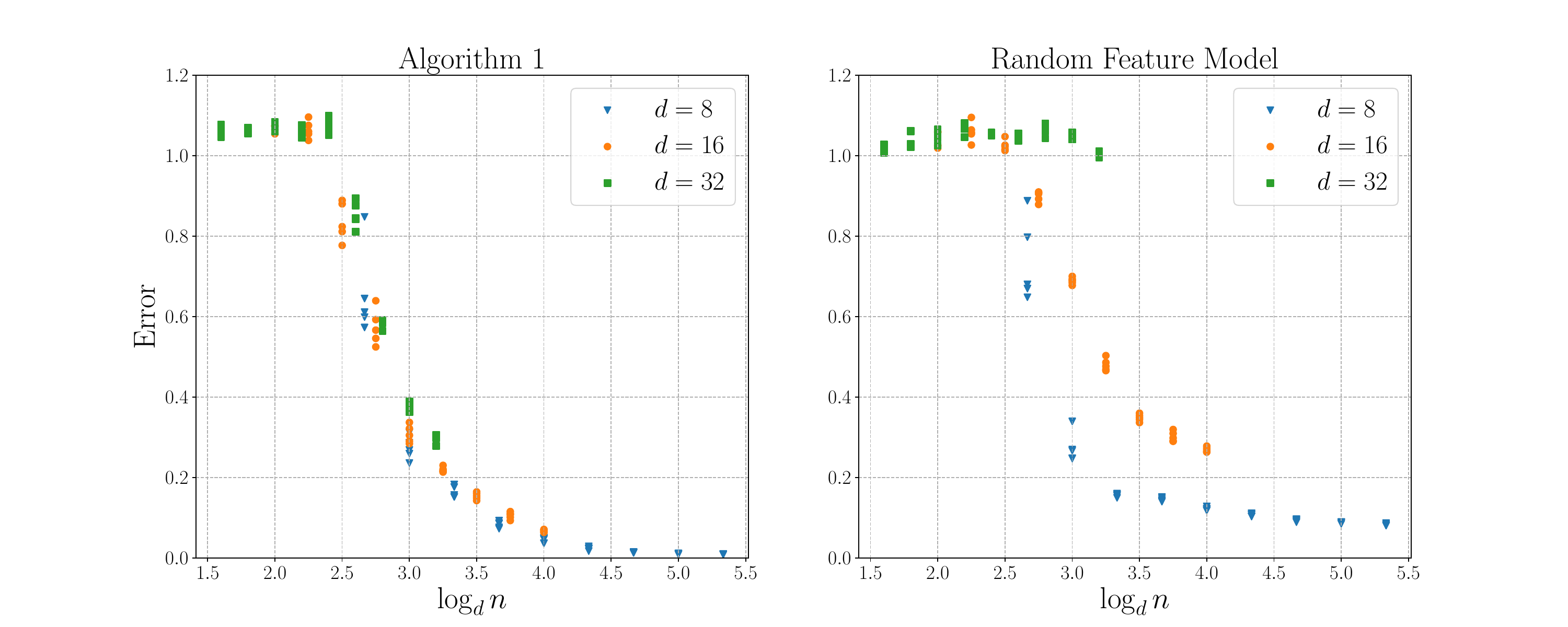}
    \caption{Test error of Algorithm \ref{alg:: training algo} and the naive random feature models with x-axis being the relative sample complexity $(\log_d n)$. We plot the test error of $5$ independent instances for each $d\in\{8,16,32\}$.}
    \label{fig::redraw comparison}
\end{figure}
\begin{figure}
\hspace{-2.2em}\includegraphics[width=1.1\linewidth]{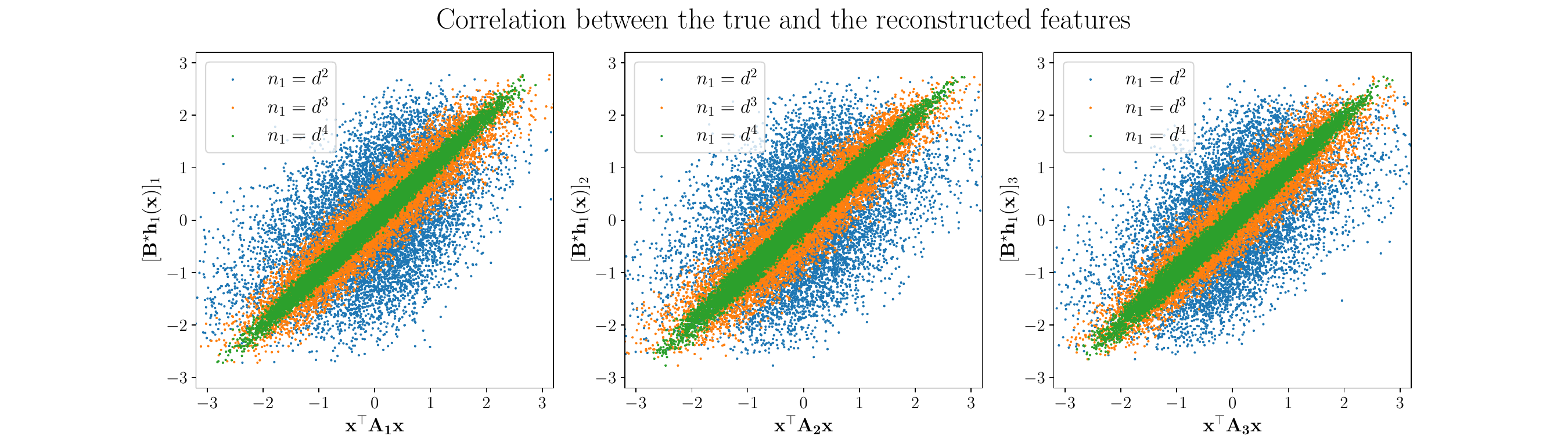}
    \caption{The linear correlation between the three true features and their corresponding reconstructed features for varying first-stage sample sizes $n_1$. The reconstructed features are standardized to match the variance of the true features. For $i=1,2,3$, the $i$-th scatter plot represents $10,000$ test sample points of $([\bB^{\star}\bh^{(1)}(\bx)]_i,\bx^{\top}\bA_i\bx)$ for $n_1 \in \{d^2, d^3, d^4\}$, where $d=16$.}
    \label{fig:feature reconstruction}
\end{figure}


\section{Conclusions and Discussions}
{
\paragraph{Comparison with \cite{nichani2023provable,wang2023learning}} As discussed under assumptions \ref{assump::preprocess target function} and the initialization of our neural networks, our work differs significantly in the targets of interests, the parametrization of neural networks, the mathematical strategies, and the intuitions behind the results. Our assumptions ensure a nearly zero linear component and a non-degenerate second order term of the link function $g$ which significantly contrasts the assumptions posed in  \cite{nichani2023provable,wang2023learning} that emphasize the linear component. Our random initialization (rather than a deterministic initialization used in the aforementioned two works) in the weights of the three-layer neural networks allows the learned weights to capture multiple features in all directions simultaneously after training rather than converge to a single direction. We develop a novel universality result to relate multiple nonlinear features to multivariate Gaussian, while these two works adopt existing result of the approximate Stein's lemma which only applies to single nonlinear feature. Most importantly, subspace recovery is completely different from and also significantly harder than single feature recovery considered in \cite{nichani2023provable,wang2023learning}.

\paragraph{Conclusions} In this work, we have shown the provable capabilities of three-layer networks in efficiently learning targets of multiple quadratic features. Leveraging a novel universality result, we have shown that one gradient step suffices for a full recovery of the subspace spanned by multiple quadratic features. In addition, leveraging the learned features, we have demonstrated the transfer learning capabilities of this three-layer neural network with a constant polynomial sample complexity guarantee.  To the best of our knowledge, this is the first theoretical result of efficiently learning such a board target function class of multiple nonlinear features with neural networks. We have made a great improvement on the sample complexity by highlighting feature learning compared to kernel methods. 

\paragraph{Future works} First, it may be possible that the sample complexity bound of $\widetilde\cO(d^4)$ could be improved to the information-theoretic optimal sample complexity ${\cO}(d^2)$ in learning general hierarchical polynomials of quadratic features. We think that this result may be achieved when we consider more advanced algorithms that utilize the samples more thoroughly such as using multiple steps of GD, which could be a great future extension of our work. Moreover, our methodology is not inherently limited to quadratic features. The principles shown in Figure \ref{figure::proof idea} and techniques developed here give a foundation for understanding the learning of even more complex function classes. Another natural future direction of our work is to understand whether and when our results can be generalized to learning multiple high-degree features. }

\clearpage
\bibliography{ref}

\begin{thebibliography}{47}
\providecommand{\natexlab}[1]{#1}
\providecommand{\url}[1]{\texttt{#1}}
\expandafter\ifx\csname urlstyle\endcsname\relax
  \providecommand{\doi}[1]{doi: #1}\else
  \providecommand{\doi}{doi: \begingroup \urlstyle{rm}\Url}\fi

\bibitem[Abbe et~al.(2023)Abbe, Adser{\`a}, and Misiakiewicz]{abbe2023sgd}
Emmanuel Abbe, Enric~Boix Adser{\`a}, and Theodor Misiakiewicz.
\newblock Sgd learning on neural networks: leap complexity and saddle-to-saddle dynamics.
\newblock In \emph{The Thirty Sixth Annual Conference on Learning Theory}, pages 2552--2623. PMLR, 2023.

\bibitem[Allen-Zhu and Li(2019)]{allen2019can}
Zeyuan Allen-Zhu and Yuanzhi Li.
\newblock What can resnet learn efficiently, going beyond kernels?
\newblock \emph{Advances in Neural Information Processing Systems}, 32, 2019.

\bibitem[Allen-Zhu and Li(2020)]{allen2020backward}
Zeyuan Allen-Zhu and Yuanzhi Li.
\newblock Backward feature correction: How deep learning performs deep learning.
\newblock \emph{arXiv preprint arXiv:2001.04413}, 2020.

\bibitem[Arora et~al.(2019)Arora, Du, Hu, Li, Salakhutdinov, and Wang]{arora2019exact}
Sanjeev Arora, Simon~S Du, Wei Hu, Zhiyuan Li, Russ~R Salakhutdinov, and Ruosong Wang.
\newblock On exact computation with an infinitely wide neural net.
\newblock \emph{Advances in neural information processing systems}, 32, 2019.

\bibitem[Ba et~al.(2022)Ba, Erdogdu, Suzuki, Wang, Wu, and Yang]{ba2022high}
Jimmy Ba, Murat~A Erdogdu, Taiji Suzuki, Zhichao Wang, Denny Wu, and Greg Yang.
\newblock High-dimensional asymptotics of feature learning: How one gradient step improves the representation.
\newblock \emph{Advances in Neural Information Processing Systems}, 35:\penalty0 37932--37946, 2022.

\bibitem[Ba et~al.(2023)Ba, Erdogdu, Suzuki, Wang, and Wu]{NEURIPS2023_38a1671a}
Jimmy Ba, Murat~A Erdogdu, Taiji Suzuki, Zhichao Wang, and Denny Wu.
\newblock Learning in the presence of low-dimensional structure: A spiked random matrix perspective.
\newblock In A.~Oh, T.~Naumann, A.~Globerson, K.~Saenko, M.~Hardt, and S.~Levine, editors, \emph{Advances in Neural Information Processing Systems}, volume~36, pages 17420--17449. Curran Associates, Inc., 2023.
\newblock URL \url{https://proceedings.neurips.cc/paper_files/paper/2023/file/38a1671ab0747b6ffe4d1c6ef117a3a9-Paper-Conference.pdf}.

\bibitem[Bai and Lee(2020)]{bai2020linearization}
Yu~Bai and Jason~D. Lee.
\newblock Beyond linearization: On quadratic and higher-order approximation of wide neural networks, 2020.

\bibitem[Bietti et~al.(2022)Bietti, Bruna, Sanford, and Song]{bietti2022learning}
Alberto Bietti, Joan Bruna, Clayton Sanford, and Min~Jae Song.
\newblock Learning single-index models with shallow neural networks.
\newblock \emph{Advances in Neural Information Processing Systems}, 35:\penalty0 9768--9783, 2022.

\bibitem[Bietti et~al.(2023)Bietti, Bruna, and Pillaud-Vivien]{bietti2023learning}
Alberto Bietti, Joan Bruna, and Loucas Pillaud-Vivien.
\newblock On learning gaussian multi-index models with gradient flow.
\newblock \emph{arXiv preprint arXiv:2310.19793}, 2023.

\bibitem[Chatterjee(2007)]{chatterjee2007fluctuationseigenvaluessecondorder}
Sourav Chatterjee.
\newblock Fluctuations of eigenvalues and second order poincar\'e inequalities, 2007.
\newblock URL \url{https://arxiv.org/abs/0705.1224}.

\bibitem[Chizat et~al.(2019)Chizat, Oyallon, and Bach]{chizat2019lazy}
Lenaic Chizat, Edouard Oyallon, and Francis Bach.
\newblock On lazy training in differentiable programming.
\newblock \emph{Advances in neural information processing systems}, 32, 2019.

\bibitem[Cui et~al.(2024)Cui, Pesce, Dandi, Krzakala, Lu, Zdeborová, and Loureiro]{cui2024asymptoticsfeaturelearningtwolayer}
Hugo Cui, Luca Pesce, Yatin Dandi, Florent Krzakala, Yue~M. Lu, Lenka Zdeborová, and Bruno Loureiro.
\newblock Asymptotics of feature learning in two-layer networks after one gradient-step, 2024.
\newblock URL \url{https://arxiv.org/abs/2402.04980}.

\bibitem[Damian et~al.(2022)Damian, Lee, and Soltanolkotabi]{damian2022neural}
Alexandru Damian, Jason Lee, and Mahdi Soltanolkotabi.
\newblock Neural networks can learn representations with gradient descent.
\newblock In \emph{Conference on Learning Theory}, pages 5413--5452. PMLR, 2022.

\bibitem[Dandi et~al.(2023{\natexlab{a}})Dandi, Krzakala, Loureiro, Pesce, and Stephan]{dandi2023learning}
Yatin Dandi, Florent Krzakala, Bruno Loureiro, Luca Pesce, and Ludovic Stephan.
\newblock Learning two-layer neural networks, one (giant) step at a time.
\newblock \emph{arXiv preprint arXiv:2305.18270}, 2023{\natexlab{a}}.

\bibitem[Dandi et~al.(2023{\natexlab{b}})Dandi, Krzakala, Loureiro, Pesce, and Stephan]{dandi2023twolayerneuralnetworkslearn}
Yatin Dandi, Florent Krzakala, Bruno Loureiro, Luca Pesce, and Ludovic Stephan.
\newblock How two-layer neural networks learn, one (giant) step at a time, 2023{\natexlab{b}}.
\newblock URL \url{https://arxiv.org/abs/2305.18270}.

\bibitem[Dandi et~al.(2024{\natexlab{a}})Dandi, Pesce, Cui, Krzakala, Lu, and Loureiro]{dandi2024randommatrixtheoryperspective}
Yatin Dandi, Luca Pesce, Hugo Cui, Florent Krzakala, Yue~M. Lu, and Bruno Loureiro.
\newblock A random matrix theory perspective on the spectrum of learned features and asymptotic generalization capabilities, 2024{\natexlab{a}}.
\newblock URL \url{https://arxiv.org/abs/2410.18938}.

\bibitem[Dandi et~al.(2024{\natexlab{b}})Dandi, Troiani, Arnaboldi, Pesce, Zdeborová, and Krzakala]{dandi2024benefitsreusingbatchesgradient}
Yatin Dandi, Emanuele Troiani, Luca Arnaboldi, Luca Pesce, Lenka Zdeborová, and Florent Krzakala.
\newblock The benefits of reusing batches for gradient descent in two-layer networks: Breaking the curse of information and leap exponents, 2024{\natexlab{b}}.
\newblock URL \url{https://arxiv.org/abs/2402.03220}.

\bibitem[Daniely(2017)]{daniely2017depthseparationneuralnetworks}
Amit Daniely.
\newblock Depth separation for neural networks, 2017.
\newblock URL \url{https://arxiv.org/abs/1702.08489}.

\bibitem[Devlin(2018)]{devlin2018bert}
Jacob Devlin.
\newblock Bert: Pre-training of deep bidirectional transformers for language understanding.
\newblock \emph{arXiv preprint arXiv:1810.04805}, 2018.

\bibitem[Du et~al.(2018)Du, Zhai, Poczos, and Singh]{du2018gradient}
Simon~S Du, Xiyu Zhai, Barnabas Poczos, and Aarti Singh.
\newblock Gradient descent provably optimizes over-parameterized neural networks.
\newblock \emph{arXiv preprint arXiv:1810.02054}, 2018.

\bibitem[E et~al.(2020)E, Ma, and Wu]{e2020a}
Weinan E, Chao Ma, and Lei Wu.
\newblock A comparative analysis of optimization and generalization properties of two-layer neural network and random feature models under gradient descent dynamics.
\newblock \emph{Science China Mathematics}, 63\penalty0 (7):\penalty0 1235--1258, jan 2020.
\newblock \doi{10.1007/s11425-019-1628-5}.
\newblock URL \url{https://doi.org/10.1007%2Fs11425-019-1628-5}.

\bibitem[Ghorbani et~al.(2021)Ghorbani, Mei, Misiakiewicz, and Montanari]{ghorbani2021linearized}
Behrooz Ghorbani, Song Mei, Theodor Misiakiewicz, and Andrea Montanari.
\newblock {Linearized two-layers neural networks in high dimension}.
\newblock \emph{The Annals of Statistics}, 49\penalty0 (2):\penalty0 1029 -- 1054, 2021.

\bibitem[He et~al.(2016)He, Zhang, Ren, and Sun]{he2016deep}
Kaiming He, Xiangyu Zhang, Shaoqing Ren, and Jian Sun.
\newblock Deep residual learning for image recognition.
\newblock In \emph{Proceedings of the IEEE conference on computer vision and pattern recognition}, pages 770--778, 2016.

\bibitem[Jacot et~al.(2018)Jacot, Gabriel, and Hongler]{jacot2018neural}
Arthur Jacot, Franck Gabriel, and Cl{\'e}ment Hongler.
\newblock Neural tangent kernel: Convergence and generalization in neural networks.
\newblock \emph{Advances in neural information processing systems}, 31, 2018.

\bibitem[Koornwinder(2018)]{Koornwinder_2018}
Tom~H. Koornwinder.
\newblock \emph{Dual Addition Formulas Associated with Dual Product Formulas}, page 373–392.
\newblock WORLD SCIENTIFIC, January 2018.
\newblock ISBN 9789813228887.
\newblock \doi{10.1142/9789813228887_0019}.
\newblock URL \url{http://dx.doi.org/10.1142/9789813228887_0019}.

\bibitem[Krizhevsky et~al.(2012)Krizhevsky, Sutskever, and Hinton]{alex2012}
Alex Krizhevsky, Ilya Sutskever, and Geoffrey~E Hinton.
\newblock {ImageNet} classification with deep convolutional neural networks.
\newblock In \emph{Advances in Neural Information Processing Systems}, 2012.

\bibitem[Kulkarni and Karande(2017)]{kulkarni2017layerwisetrainingdeepnetworks}
Mandar Kulkarni and Shirish Karande.
\newblock Layer-wise training of deep networks using kernel similarity, 2017.
\newblock URL \url{https://arxiv.org/abs/1703.07115}.

\bibitem[Lee et~al.(2007)Lee, Ekanadham, and Ng]{lee2007sparse}
Honglak Lee, Chaitanya Ekanadham, and Andrew Ng.
\newblock Sparse deep belief net model for visual area v2.
\newblock volume Vol 20, 01 2007.

\bibitem[Lee et~al.(2020)Lee, Schoenholz, Pennington, Adlam, Xiao, Novak, and Sohl-Dickstein]{lee2020finite}
Jaehoon Lee, Samuel Schoenholz, Jeffrey Pennington, Ben Adlam, Lechao Xiao, Roman Novak, and Jascha Sohl-Dickstein.
\newblock Finite versus infinite neural networks: an empirical study.
\newblock \emph{Advances in Neural Information Processing Systems}, 33:\penalty0 15156--15172, 2020.

\bibitem[Lee et~al.(2024)Lee, Oko, Suzuki, and Wu]{lee2024neuralnetworklearnslowdimensional}
Jason~D. Lee, Kazusato Oko, Taiji Suzuki, and Denny Wu.
\newblock Neural network learns low-dimensional polynomials with sgd near the information-theoretic limit, 2024.
\newblock URL \url{https://arxiv.org/abs/2406.01581}.

\bibitem[Mei et~al.(2021)Mei, Misiakiewicz, and Montanari]{mei2021learning}
Song Mei, Theodor Misiakiewicz, and Andrea Montanari.
\newblock Learning with invariances in random features and kernel models.
\newblock In \emph{Conference on Learning Theory}, pages 3351--3418. PMLR, 2021.

\bibitem[Moniri et~al.(2024)Moniri, Lee, Hassani, and Dobriban]{moniri2024theorynonlinearfeaturelearning}
Behrad Moniri, Donghwan Lee, Hamed Hassani, and Edgar Dobriban.
\newblock A theory of non-linear feature learning with one gradient step in two-layer neural networks, 2024.
\newblock URL \url{https://arxiv.org/abs/2310.07891}.

\bibitem[Mousavi-Hosseini et~al.(2023)Mousavi-Hosseini, Wu, Suzuki, and Erdogdu]{mousavihosseini2023gradientbasedfeaturelearningstructured}
Alireza Mousavi-Hosseini, Denny Wu, Taiji Suzuki, and Murat~A. Erdogdu.
\newblock Gradient-based feature learning under structured data, 2023.
\newblock URL \url{https://arxiv.org/abs/2309.03843}.

\bibitem[Mousavi-Hosseini et~al.(2024)Mousavi-Hosseini, Wu, and Erdogdu]{mousavihosseini2024learningmultiindexmodelsneural}
Alireza Mousavi-Hosseini, Denny Wu, and Murat~A. Erdogdu.
\newblock Learning multi-index models with neural networks via mean-field langevin dynamics, 2024.
\newblock URL \url{https://arxiv.org/abs/2408.07254}.

\bibitem[Nichani et~al.(2023)Nichani, Damian, and Lee]{nichani2023provable}
Eshaan Nichani, Alex Damian, and Jason~D Lee.
\newblock Provable guarantees for nonlinear feature learning in three-layer neural networks.
\newblock \emph{arXiv preprint arXiv:2305.06986}, 2023.

\bibitem[Prato and Tubaro(2007)]{Prato2007WickPI}
Giuseppe~Da Prato and Luciano Tubaro.
\newblock Wick powers in stochastic pdes: an introduction.
\newblock 2007.
\newblock URL \url{https://api.semanticscholar.org/CorpusID:55493217}.

\bibitem[Radhakrishnan et~al.(2022)Radhakrishnan, Beaglehole, Pandit, and Belkin]{radhakrishnan2022feature}
Adityanarayanan Radhakrishnan, Daniel Beaglehole, Parthe Pandit, and Mikhail Belkin.
\newblock Feature learning in neural networks and kernel machines that recursively learn features.
\newblock \emph{arXiv preprint arXiv:2212.13881}, 2022.

\bibitem[Rahimi and Recht(2007)]{NIPS2007_013a006f}
Ali Rahimi and Benjamin Recht.
\newblock Random features for large-scale kernel machines.
\newblock In J.~Platt, D.~Koller, Y.~Singer, and S.~Roweis, editors, \emph{Advances in Neural Information Processing Systems}, volume~20. Curran Associates, Inc., 2007.
\newblock URL \url{https://proceedings.neurips.cc/paper_files/paper/2007/file/013a006f03dbc5392effeb8f18fda755-Paper.pdf}.

\bibitem[Ren et~al.(2023)Ren, Zhou, and Ge]{ren2023depth}
Yunwei Ren, Mo~Zhou, and Rong Ge.
\newblock Depth separation with multilayer mean-field networks.
\newblock \emph{arXiv preprint arXiv:2304.01063}, 2023.

\bibitem[Ross(2011)]{ross2011fundamentals}
Nathan Ross.
\newblock Fundamentals of stein’s method.
\newblock 2011.

\bibitem[Safran and Lee(2022)]{safran2022optimization}
Itay Safran and Jason Lee.
\newblock Optimization-based separations for neural networks.
\newblock In \emph{Conference on Learning Theory}, pages 3--64. PMLR, 2022.

\bibitem[Soltanolkotabi et~al.(2018)Soltanolkotabi, Javanmard, and Lee]{soltanolkotabi2018theoretical}
Mahdi Soltanolkotabi, Adel Javanmard, and Jason~D Lee.
\newblock Theoretical insights into the optimization landscape of over-parameterized shallow neural networks.
\newblock \emph{IEEE Transactions on Information Theory}, 65\penalty0 (2):\penalty0 742--769, 2018.

\bibitem[van Handel(2016)]{vanprobability}
Ramon van Handel.
\newblock Probability in high dimensions.
\newblock 2016.
\newblock URL \url{https://web.math.princeton.edu/~rvan/APC550.pdf}.

\bibitem[Wang et~al.(2024)Wang, Wu, and Fan]{wang2024nonlinearspikedcovariancematrices}
Zhichao Wang, Denny Wu, and Zhou Fan.
\newblock Nonlinear spiked covariance matrices and signal propagation in deep neural networks, 2024.
\newblock URL \url{https://arxiv.org/abs/2402.10127}.

\bibitem[Wang et~al.(2023)Wang, Nichani, and Lee]{wang2023learning}
Zihao Wang, Eshaan Nichani, and Jason~D. Lee.
\newblock Learning hierarchical polynomials with three-layer neural networks, 2023.

\bibitem[Wei et~al.(2020)Wei, Lee, Liu, and Ma]{wei2020regularization}
Colin Wei, Jason~D. Lee, Qiang Liu, and Tengyu Ma.
\newblock Regularization matters: Generalization and optimization of neural nets v.s. their induced kernel, 2020.

\bibitem[Yang and Hu(2021)]{yang2021tensor}
Greg Yang and Edward~J Hu.
\newblock Tensor programs iv: Feature learning in infinite-width neural networks.
\newblock In \emph{International Conference on Machine Learning}, pages 11727--11737. PMLR, 2021.

\end{thebibliography}
\clearpage
\appendix
\begin{center}
    \noindent\rule{\textwidth}{4pt} \vspace{-0.2cm}
    
    \LARGE \textbf{Appendix} 
    
    \noindent\rule{\textwidth}{1.2pt}
\end{center}

\startcontents[sections]
\printcontents[sections]{l}{1}{\setcounter{tocdepth}{2}}
\clearpage
\section{Techinical Background}

\subsection{Asymptotic Notation}

Throughout the proof we will let $C$ be a fixed but sufficiently large constant.
\begin{definition}[high probability events]
    Let $\iota = C \log(dn_1n_2m_1m_2)$. We say that an event happens \emph{with high probability} if it happens with probability at least $1-\poly(d,n_1,n_2,m_1,m_2)e^{-\iota}$.
\end{definition}
\begin{example}
	If $z \sim N(0,1)$ then $\abs{z} \le \sqrt{2\iota}$ with high probability.
\end{example}

Note that high probability events are closed under union bounds over sets of size $\poly(d,n_1,n_2,m_1,m_2)$, such as $\cD_1$, $\cD_2$ and $\{\bw_j\}_{j\in[m_1]}$. We will also assume throughout the paper that $\iota \le C^{-1} d$.

\subsection{Multivariate Gaussian Approximation}\label{appendix::multivariate gaussian approximation}\label{appendix::mathstein}
In this section, we assume that $\bX \sim \cN(\bzero,\bI_{d})$ and aim to establish an upper bound of Wasserstein distance between the distribution of ${\mathbf{p}(\bX)}$ and the standard $r$-dimensional Gaussian distribution, i.e., Lemma \ref{prop::extend chaterjee}.

To prove Lemma \ref{prop::extend chaterjee}, we introduce Stein's method \citep{ross2011fundamentals} for multivariate Gaussian approximation. We will use the following additional notations.
\begin{itemize}
\item $\cG f(\bx):=\int_{0}^{\infty} {\EE_{\bZ \sim \cN(\bzero,\bI)}\brac{f\paren{e^{-t}\bx+\sqrt{1-e^{-2t}\bZ} } - f(\bZ)} }\d t$ denotes the potential operator of $f$.
\item $\cJ(\bp):=[\nabla p_1,\nabla p_2,...,\nabla p_r]^{\top} \in \RR^{r\times n}$ denotes the Jacobian matrix of $\bp$.
\end{itemize}
Now we state the supporting lemmas to prove Lemma \ref{prop::extend chaterjee}.
\begin{lemma}[Corollary 9.12 in \cite{vanprobability}]\label{lemma::Stein's method}
For any probability measure $\mu$ in $\RR^{r}$, we have
\begin{align*}
    W_1(\mu,\cN(\bzero,\bI_{r})) \le \sup\limits_{\norm{\nabla g}{} \le 1, \norm{\nabla^2 g }{}\le \sqrt{\frac{2}{\pi}} } \EE_{\bY\sim\mu}\brac{\Delta g(\bY) -\left\langle\nabla g(\bY),\bY \right\rangle}.
\end{align*}
\end{lemma}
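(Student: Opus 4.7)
The plan is to prove the bound via Stein's method for the multivariate Gaussian, combining the Kantorovich--Rubinstein duality with the Ornstein--Uhlenbeck (OU) semigroup. Writing $\gamma_r := \cN(\bzero,\bI_r)$, the dual characterization of Wasserstein-1 gives
$$W_1(\mu,\gamma_r) = \sup_{f:\lip(f)\le 1}\paren{\EE_{\bY\sim\mu}[f(\bY)]-\EE_{\bZ\sim\gamma_r}[f(\bZ)]},$$
so it suffices, for each $1$-Lipschitz $f$, to exhibit a $C^2$ function $g$ obeying the Stein equation $\Delta g(\bx)-\inn{\nabla g(\bx),\bx}=f(\bx)-\EE_{\bZ\sim\gamma_r}[f(\bZ)]$ and lying in the feasible set $\{\norm{\nabla g}{\infty}\le 1,\ \norm{\nabla^2 g}{\op}\le \sqrt{2/\pi}\}$. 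The conclusion will then follow by evaluating $\EE_\mu[\Delta g - \inn{\nabla g,\bY}]$ and taking the sup over $f$.

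The natural candidate is $g:=-\cG f$, with $\cG$ the potential operator defined in the excerpt. I would identify $\cG$ as the resolvent of the OU semigroup $P_t f(\bx)=\EE_{\bZ\sim\gamma_r}[f(e^{-t}\bx+\sqrt{1-e^{-2t}}\bZ)]$, whose invariant measure is $\gamma_r$ and whose generator is $\cL h=\Delta h - \inn{\bx,\nabla h}$. Since $\partial_t P_t f=\cL P_t f$ and $P_\infty f=\EE_{\gamma_r}[f]$, integrating in $t\in[0,\infty)$ (with Fubini/dominated convergence to justify swapping $\cL$ and the integral) yields $\cL(\cG f)=\EE_{\gamma_r}[f]-f$, which is exactly the Stein equation for $g=-\cG f$. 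Under $\bY\sim\mu$, this gives the key identity $\EE_{\bY\sim\mu}[\Delta g(\bY)-\inn{\nabla g(\bY),\bY}]=\EE_\mu[f]-\EE_{\gamma_r}[f]$.

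Next I would establish the derivative bounds for $g$. Writing $\bY_\bx:=e^{-t}\bx+\sqrt{1-e^{-2t}}\bZ$, direct differentiation gives $\nabla P_t f(\bx)=e^{-t}\EE[\nabla f(\bY_\bx)]$, so $\sup_\bx \norm{\nabla P_t f(\bx)}{2}\le e^{-t}$ and $\norm{\nabla g}{\infty}\le\int_0^\infty e^{-t}\,\dd t=1$. For the Hessian, I would apply Gaussian integration by parts $\EE[Z_i H(\bZ)]=\EE[\partial_{Z_i} H(\bZ)]$ to $H(\bZ)=f(\bY_\bx)$ so that one of the two spatial derivatives is converted into a multiplication by a Gaussian coordinate. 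This produces the asymmetric formula
$$\partial_j\partial_i P_t f(\bx)=\frac{e^{-2t}}{\sqrt{1-e^{-2t}}}\,\EE\brac{\partial_j f(\bY_\bx)\,Z_i}.$$
For unit vectors $u,v\in\RR^r$, an $L^\infty$--$L^1$ Hölder split yields
$$\abs{u\tp(\nabla^2 P_t f(\bx))v}\le \frac{e^{-2t}}{\sqrt{1-e^{-2t}}}\,\norm{\inn{v,\nabla f}}{\infty}\EE\abs{\inn{u,\bZ}}\le \frac{e^{-2t}}{\sqrt{1-e^{-2t}}}\sqrt{\frac{2}{\pi}},$$
using $\norm{\nabla f}{\infty}\le 1$ and $\EE|\inn{u,\bZ}|=\sqrt{2/\pi}$. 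The elementary change of variables $s=1-e^{-2t}$ gives $\int_0^\infty e^{-2t}/\sqrt{1-e^{-2t}}\,\dd t=1$, so $\norm{\nabla^2 g}{\op}\le\sqrt{2/\pi}$. Plugging this $g$ into the identity above and taking a sup over $1$-Lipschitz $f$ finishes the argument.

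The main obstacle is obtaining the sharp constant $\sqrt{2/\pi}$ in the Hessian bound: a symmetric treatment (either differentiating $P_t f$ twice directly in $\bx$, or using Cauchy--Schwarz on both factors) only gives $\norm{\nabla^2 g}{\op}\le 1$. The improvement is extracted by the asymmetric integration-by-parts decomposition above, which pairs the $L^\infty$ control on $\nabla f$ with the first absolute moment of a standard Gaussian. A minor regularity wrinkle—that a $1$-Lipschitz $f$ is only a.e. differentiable—is handled in the standard way: apply the bound to the mollified $f\ast\varphi_\varepsilon$ and send $\varepsilon\downarrow 0$, using dominated convergence to transfer the inequality to $f$.
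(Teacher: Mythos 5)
Your proof is correct; the paper itself does not prove this lemma but simply cites Corollary 9.12 of \cite{vanprobability}, and your argument is exactly the standard one underlying that citation: Kantorovich--Rubinstein duality, the Stein equation solved by (minus) the Ornstein--Uhlenbeck potential $\cG f$, the gradient bound from $\int_0^\infty e^{-t}\,\dd t=1$, and the sharp Hessian constant from the asymmetric integration by parts giving $\EE\abs{\langle u,\bZ\rangle}=\sqrt{2/\pi}$ together with $\int_0^\infty e^{-2t}(1-e^{-2t})^{-1/2}\,\dd t=1$. The details (signs, the index placement in the mixed-derivative formula, the mollification of the Lipschitz test function) all check out, so nothing further is needed.
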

\begin{lemma}[Lemma 9.21 in \cite{vanprobability}]\label{lemma::stein cov}
    Suppose $\bX=(X_1,X_2,...,X_d)\sim \cN(\bzero,\bI_{n})$ is an $d$-dimensional standard Gaussian variable. Then for any functions $g:\RR^{d}\rightarrow\RR$ and $h:\RR^{d}\rightarrow \RR$, we have
    \begin{align*}
\text{Cov}\paren{g(\bX),h(\bX)}=\EE_{\bX}\brac{\left\langle\nabla g(\bX),\nabla \cG h(\bX) \right\rangle}
\end{align*}
\end{lemma}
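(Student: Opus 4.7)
The plan is to recognize $\cG$ as (up to sign) the inverse of the Ornstein--Uhlenbeck generator on mean-zero functions, and then invoke the self-adjoint Dirichlet-form structure of that semigroup. Introduce the Ornstein--Uhlenbeck semigroup $P_t h(\bx) := \EE_{\bZ \sim \cN(\bzero,\bI_d)}[h(e^{-t}\bx + \sqrt{1-e^{-2t}}\,\bZ)]$, so that the defining integral for $\cG$ becomes $\cG h(\bx) = \int_0^{\infty} \paren{P_t h(\bx) - \EE_{\bZ}[h(\bZ)]}\,\d t$. Since $P_\infty h \equiv \EE_{\bZ}[h(\bZ)]$ is a constant in $\bx$, we have $\nabla \cG h(\bx) = \int_0^\infty \nabla P_t h(\bx)\,\d t$ once the interchange of gradient and integral is justified.

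The main computation is to differentiate $\Phi(t) := \EE_{\bX}\brac{g(\bX)\, P_t h(\bX)}$ in $t$ and recognize the derivative as a Dirichlet form at time $t$. A direct second-order Taylor expansion of the Mehler representation yields $\partial_t P_t h = L P_t h$ with the OU generator $Lu := \Delta u - \inn{\bx,\nabla u}$. Pairing with $g$ and applying the standard Gaussian integration-by-parts identity $\EE_{\bX}[g(\bX)\, Lu(\bX)] = -\EE_{\bX}[\inn{\nabla g(\bX),\nabla u(\bX)}]$ (which follows by integrating against the Gaussian density $\phi$ and using $\nabla \phi/\phi = -\bx$), one obtains $\Phi'(t) = -\EE_{\bX}[\inn{\nabla g(\bX),\nabla P_t h(\bX)}]$. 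Integrating from $0$ to $\infty$, using $\Phi(0)=\EE_{\bX}[g(\bX)h(\bX)]$ and $\Phi(\infty)=\EE_{\bX}[g(\bX)]\EE_{\bZ}[h(\bZ)]$, and then swapping the $t$-integral with the expectation, gives $\text{Cov}(g(\bX),h(\bX)) = \EE_{\bX}[\inn{\nabla g(\bX),\nabla\cG h(\bX)}]$, which is the claim.

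The only obstacle is technical: the Fubini swap and the interchange $\nabla\int = \int\nabla$ both require absolute integrability in $t$. These are readily controlled because Mehler's formula combined with the chain rule gives the commutation $\nabla P_t h(\bx) = e^{-t}\,P_t(\nabla h)(\bx)$, which decays exponentially in $t$ and makes every step converge absolutely whenever $\nabla g$ and $\nabla h$ are of polynomial growth. I would therefore first prove the identity on a dense subclass of sufficiently smooth functions with polynomially bounded derivatives, where all manipulations are justified termwise, and then extend to the full stated generality by density and dominated convergence, as is standard in Malliavin--Stein theory.
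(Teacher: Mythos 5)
Your proof is correct: the paper does not prove this lemma at all but simply cites it (Lemma 9.21 of the referenced lecture notes), and your semigroup-interpolation argument — writing $\mathrm{Cov}(g,h)=-\int_0^\infty \frac{\d}{\d t}\EE[g\,P_t h]\,\d t$, using $\partial_t P_t h = LP_th$ together with Gaussian integration by parts, and controlling the swaps via $\nabla P_t h = e^{-t}P_t(\nabla h)$ — is exactly the standard proof given in that source. No gaps beyond the usual (and correctly flagged) regularity caveats, which the paper's statement also leaves implicit.
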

With the lemmas above, we begin our proof of Lemma \ref{prop::extend chaterjee}.
\begin{proof}[Proof of Lemma \ref{prop::extend chaterjee}]
   By invoking Lemma \ref{lemma::Stein's method} with $\mu=\text{Law}(\bp)$ and $\bY=\bp(\bX)$, for any $g(\by):\RR^{r}\rightarrow\RR$ with $\norm{\nabla g}{} \le 1$ and $ \norm{\nabla^2 g }{}\le \sqrt{\frac{2}{\pi}} $, we aim to bound 
    \begin{align*}
        \underbrace{\EE_{\bX}\brac{\Delta g(\bp(\bX)) -\left\langle\nabla g(\bp(\bX)),\bp(\bX) \right\rangle}}_{\spadesuit}= \sum_{i=1}^{r}\EE_{\bX}\brac{\frac{\partial^2 g}{\partial y_i^2}\bigg |_{\by=\bp(\bX)}-p_i(\bX)\frac{\partial g}{\partial y_i}\bigg |_{\by=\bp(\bX)}}.
    \end{align*}
    Since for any $i\in[r]$, $\EE\brac{p_i(\bX)}=0$, we have 
    \begin{align*}
        \EE\brac{p_i(\bX)\frac{\partial g}{\partial y_i}\bigg |_{\by=\bp(\bX)}}&=\text{Cov}\paren{p_i(\bX),\frac{\partial g}{\partial y_i}\bigg |_{\by=\bp(\bX)}}\\
        &=\EE\brac{\left\langle \nabla_{\bx}\frac{\partial g}{\partial y_i}\bigg |_{\by=\bp(\bX)}, \nabla_{\bx}\cG p_{i}(\bX) \right\rangle}\\
        &=\EE\brac{\left\langle \sum_{j=1}^{r}\frac{\partial^2 g}{\partial y_i\partial y_j}\bigg |_{\by=\bp(\bX)}\nabla_{\bx} p_j(\bX), \nabla_{\bx}\cG p_{i}(\bX) \right\rangle}\\
        &=\sum_{j=1}^{r}\EE\brac{ \frac{\partial^2 g}{\partial y_i\partial y_j}\bigg |_{\by=\bp(\bX)}\left\langle\nabla_{\bx} p_j(\bX), \nabla_{\bx}\cG p_{i}(\bX) \right\rangle},
    \end{align*}
where the second equality follows from Lemma \ref{lemma::stein cov} and we obtain the third equality by the chain rule. Thus, we have
\begin{align}\label{equ::expect W1 bound}
\spadesuit=\EE\brac{\left\langle \nabla^2 g(\bp(\bX)),\bI_r -\cJ(\bp(\bX))\cJ(\cG \bp(\bX))^{\top} \right\rangle}.
\end{align}
For a special case, for any $i,j\in[r]$, we take $g(\by)=y_iy_j$ in \eqref{equ::expect W1 bound}, obtaining that
\begin{align*}
    \EE\brac{\left\langle\nabla_{\bx} p_j(\bX), \nabla_{\bx}\cG p_{i}(\bX) \right\rangle}=\left\{ \begin{matrix}
    \EE\brac{2p_j(\bX)p_i(\bX)}=0,~~i\ne j,\\
    \EE\brac{2p_i^2(\bX)}-1=1,~~i=j.
    \end{matrix}\right.
\end{align*}
Thus, $\EE\brac{\bI_r -\cJ(\bp(\bX))\cJ(\cG \bp(\bX))^{\top}}=\bzero_{r\times r}$. Since $\norm{\nabla^2 g}{}\le \sqrt{\frac{2}{\pi}}$, we have $\abs{\brac{\nabla^2 g}_{i,j}}\le \sqrt{\frac{2}{\pi}}$ for any $i,j\in[r]$. We can therefore estimate
\begin{align*}
    W_1(\text{Law}(\bp(\bX)),\cN(\bzero,\bI_{r}))&\le \sqrt{\frac{2}{\pi}} \sum_{i,j\in[r]}\EE\brac{\abs{\delta_{i,j}-\left\langle\nabla_{\bx} p_j(\bX), \nabla_{\bx}\cG p_{i}(\bX) \right\rangle}}\\
    &\le \sqrt{\frac{2}{\pi}} \sum_{i,j\in[r]}\text{Var}\brac{\left\langle\nabla_{\bx} p_j(\bX), \nabla_{\bx}\cG p_{i}(\bX) \right\rangle}^{1/2}\\
    &\le \sqrt{\frac{2}{\pi}} \sum_{i,j\in[r]} \EE\brac{\norm{\nabla_{\bx} \left\langle\nabla_{\bx} p_j(\bX), \nabla_{\bx}\cG p_{i}(\bX) \right\rangle}{}^2}^{1/2},
\end{align*}
where we invoke Poincar\'e inequality in the last inequality. For any $i,j\in[r]$, we have
\begin{align*}
    &\quad\EE\brac{\norm{\nabla_{\bx} \left\langle\nabla_{\bx} p_j(\bX), \nabla_{\bx}\cG p_{i}(\bX) \right\rangle}{}^2}\\
    &=\EE\brac{\norm{\nabla^2_{\bx}p_j(\bX)\nabla \cG p_{i}(\bX) + \nabla_{\bx}p_j(\bX)\nabla^2\cG p_{i}(\bX)}{}^2}\\ 
    &\le 2\EE\brac{\norm{\nabla^2_{\bx}p_j(\bX)\nabla \cG p_{i}(\bX)}{}^2}+2\EE\brac{\norm{\nabla_{\bx}p_j(\bX)\nabla^2\cG p_{i}(\bX)}{}^2}\\
    &\le 2\EE\brac{\norm{\nabla^2p_j}{}^{4}}^{1/2} \EE\brac{\norm{\nabla \cG p_{i}}{}^{4}}^{1/2}+2\EE\brac{\norm{\nabla p_j}{}^{4}}^{1/2} \EE\brac{\norm{\nabla^2 \cG p_{i}}{}^{4}}^{1/2}\\
    &\le 2\EE\brac{\norm{\nabla^2p_j}{}^{4}}^{1/2} \EE\brac{\norm{\nabla p_{i}}{}^{4}}^{1/2}+2\EE\brac{\norm{\nabla p_j}{}^{4}}^{1/2} \EE\brac{\norm{\nabla^2 p_{i}}{}^{4}}^{1/2}.
\end{align*}
The last inequality follows from the inequality in Page 308 in \cite{vanprobability}. By adding up all the terms along $i$ and $j$, we have
\begin{align*}
   &\quad W_1(\text{Law}(\bp(\bX)),\cN(\bzero,\bI_{r}))\\
    &\le\sqrt{\frac{2}{\pi}}\sum_{i,j\in[r]} \sqrt{2\EE\brac{\norm{\nabla^2p_j}{}^{4}}^{1/2} \EE\brac{\norm{\nabla p_{i}}{}^{4}}^{1/2}+2\EE\brac{\norm{\nabla p_j}{}^{4}}^{1/2} \EE\brac{\norm{\nabla^2 p_{i}}{}^{4}}^{1/2}}\\
    &\le\frac{2}{\sqrt{\pi}}\sum_{i,j\in[r]} \paren{\EE\brac{\norm{\nabla^2p_j}{}^{4}}^{1/4} \EE\brac{\norm{\nabla p_{i}}{}^{4}}^{1/4}+\EE\brac{\norm{\nabla p_j}{}^{4}}^{1/4} \EE\brac{\norm{\nabla^2 p_{i}}{}^{4}}^{1/4}}\\
    &=\frac{4}{\sqrt{\pi}} \paren{\sum_{i=1}^{r}\EE\brac{\norm{\nabla p_i}{}^{4}}^{1/4} }\paren{\sum_{j=1}^{r}\EE\brac{\norm{\nabla^2 p_j}{}^{4}}^{1/4}}.
\end{align*}
We complete our proof.
\end{proof}
\subsection{Hypercontractivity of Polynomials}\label{subappendix: gaussian hypercontractivity}

The following Lemma is cited from \citet{mei2021learning} and is designed for uniform distribution on the sphere in $d$ dimension.
\begin{lemma}\label{lemma: sphere hypercontractivity}
For any $\ell \in \mathbb{N}$ and $f \in L^2(\mathbb{S}^{d-1})$ to be a degree $\ell$ polynomial, for any $q \geq 2$, we have
$$
\left(\EE_{\bz\sim \operatorname{Unif}(\SS^{d-1}(\sqrt{d}))}\left[f(\bz)^q\right]\right)^{2/q} \le (q-1)^{\ell} \EE_{\bz\sim \operatorname{Unif}(\SS^{d-1}(\sqrt{d}))}\left[f(\bz)^2\right].
$$
\end{lemma}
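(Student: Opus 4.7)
The plan is to derive the spherical hypercontractivity estimate from the classical Gaussian hypercontractivity of Bonami--Nelson, which states that any polynomial $F$ of degree $\le \ell$ on $\RR^d$ satisfies $\|F\|_{L^q(\gamma)}^2 \le (q-1)^\ell \|F\|_{L^2(\gamma)}^2$ for every $q \ge 2$, where $\gamma$ denotes the standard Gaussian measure on $\RR^d$. The bridge between the Gaussian and spherical measures will be the polar decomposition $\bg = R\bu$ for $\bg \sim \gamma$, with $R = \|\bg\|_2$ (so $R^2 \sim \chi^2_d$) independent of $\bu \sim \unif(\SS^{d-1})$.

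First I would reduce to the unit sphere by the rescaling $\bu = \bz/\sqrt{d}$. This map pushes $\unif(\SS^{d-1}(\sqrt{d}))$ to $\unif(\SS^{d-1})$, preserves the degree of $f$, and leaves the ratio $\|f\|_q/\|f\|_2$ unchanged, so it suffices to prove the inequality on the unit sphere. Next I would handle the case where $F$ is \emph{homogeneous} of degree $m$: homogeneity gives $F(\bg) = R^m F(\bu)$, and independence of $R$ and $\bu$ yields
\[
\EE_\bg[F(\bg)^q] = \EE[R^{mq}]\cdot \EE_\bu[F(\bu)^q], \qquad \EE_\bg[F(\bg)^2] = \EE[R^{2m}]\cdot \EE_\bu[F(\bu)^2].
\]
Substituting into the Gaussian bound and invoking Jensen's inequality $\EE[R^{mq}]^{2/q} \ge \EE[R^{2m}]$ (valid since $q/2 \ge 1$), I obtain
\[
\EE_\bu[F^q]^{2/q} \le (q-1)^m \cdot \frac{\EE[R^{2m}]}{\EE[R^{mq}]^{2/q}} \cdot \EE_\bu[F^2] \le (q-1)^m \EE_\bu[F^2],
\]
which is precisely the claimed bound for homogeneous polynomials.

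For a general polynomial $f$ of degree $\ell$ on $\SS^{d-1}$, I would homogenize using the identity $\|\bu\|_2^2 = 1$. Decomposing $f$ into spherical harmonics $f = \sum_{k=0}^\ell Y_k$ and extending each $Y_k$ to a harmonic homogeneous polynomial $H_k$ of degree $k$, one multiplies the lower-degree $H_k$ by integer powers of $\|\bx\|_2^2$ to produce a homogeneous polynomial $\tilde F$ on $\RR^d$ that agrees with $f$ on the unit sphere. Applying the homogeneous case to $\tilde F$ then yields the desired bound.

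The main obstacle is that homogenization by powers of $\|\bx\|_2^2$ only raises degrees by even increments, and so cannot merge components of opposite parity into a single homogeneous polynomial of degree $\ell$. The natural workaround is to split $f = f_e + f_o$ by the parity of the harmonic degrees, apply the homogeneous bound separately to each part lifted to its maximal even/odd degree $\le \ell$, and recombine via the orthogonality $f_e \perp f_o$ in $L^2(\sigma)$; the triangle inequality in $L^q$ then yields the claim up to an absolute constant. Recovering the sharp constant $(q-1)^\ell$ leverages the deeper observation that the harmonic extensions $H_k$ are mutually orthogonal even under the \emph{Gaussian} measure (since $\EE_\bg[H_j H_k] = \EE[R^{j+k}]\, \EE_\bu[Y_j Y_k] = 0$ for $j \ne k$), which enables a single unified Gaussian lift $F = \sum_k H_k$ of degree $\ell$, followed by a direct application of Gaussian hypercontractivity together with the moment ratios $\EE[R^{2k}]/\EE[R^{kq}]^{2/q} \le 1$; this is essentially Beckner's sharp spherical hypercontractivity in disguise.
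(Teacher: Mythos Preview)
The paper does not prove this lemma at all: it simply cites it from \citet{mei2021learning}. So there is no in-paper argument to compare your proposal against; your write-up is already more than the paper provides.

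On the merits of your argument: the reduction to the unit sphere and the homogeneous case are correct and clean. The parity split plus triangle inequality in $L^q$ indeed gives the inequality with the constant $(q-1)^\ell$ replaced by $4(q-1)^\ell$ or so, which is perfectly adequate for every application in the paper (the $(q-1)^\ell$ factor is only ever used inside $\widetilde\cO(\cdot)$ bounds).

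However, your final paragraph claiming to recover the \emph{sharp} constant has a genuine gap. The lift $F=\sum_k H_k$ evaluated at $\bg=R\bu$ gives $\sum_k R^k Y_k(\bu)$, so the radial and angular variables are entangled in a $k$-dependent way. Orthogonality of the $H_k$ in $L^2(\gamma)$ lets you compute $\|F\|_{L^2(\gamma)}^2=\sum_k \EE[R^{2k}]\|Y_k\|_{L^2(\sigma)}^2$, but there is no analogous decomposition of $\|F\|_{L^q(\gamma)}$ for $q\neq 2$, so the per-$k$ moment ratios $\EE[R^{2k}]/\EE[R^{kq}]^{2/q}\le 1$ cannot be invoked termwise. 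In other words, Gaussian hypercontractivity applied to $F$ controls $\EE_R\big[\|f_R\|_{L^q(\sigma)}^q\big]^{2/q}$ with $f_R(\bu)=\sum_k R^k Y_k(\bu)$, not $\|f_1\|_{L^q(\sigma)}^2$, and there is no monotonicity or Jensen step that bridges the two. Beckner's sharp result on the sphere goes through the intrinsic spherical (Jacobi) semigroup and a curvature--dimension argument, not through this polynomial lift. If you want the exact constant, either cite Beckner/Bakry--\'Emery directly or accept the harmless constant factor from the parity split.
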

We remark that the results above are also multiplicative.
\begin{lemma}\label{lemma: sphere hypercontractivity multi}
For any $\ell \in \mathbb{N}$ and $f \in L^2((\mathbb{S}^{d-1})^{k})$ to be a degree $\ell$ polynomial in the components of each $\bz_1,\bz_2,\dots,\bz_k$, for any $q \geq 2$, we have
$$
\left(\EE_{\bz\sim \operatorname{Unif}\paren{\SS^{d-1}(\sqrt{d})}^k}\left[f(\bz)^q\right]\right)^{2/q} \le (q-1)^{k\ell} \EE_{\bz\sim \operatorname{Unif}\paren{\SS^{d-1}(\sqrt{d})}^k}\left[f(\bz)^2\right].
$$
Here $\bz=\Vec\paren{[\bz_1,\bz_2,\dots,\bz_k]}$.
\end{lemma}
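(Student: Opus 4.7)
The plan is to prove this by induction on $k$, using Lemma~\ref{lemma: sphere hypercontractivity} as both the base case and the one-variable engine, combined with Minkowski's integral inequality to exchange the order of mixed $L^p$ norms. The base case $k=1$ is immediate from Lemma~\ref{lemma: sphere hypercontractivity}, so we assume the inequality holds on $\paren{\SS^{d-1}(\sqrt{d})}^{k-1}$ for any polynomial of degree at most $\ell$ in each variable and carry out the inductive step for $k$.

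First, fix $(\bz_2,\dots,\bz_k)$ and treat $f$ as a polynomial of degree at most $\ell$ in $\bz_1$. Applying Lemma~\ref{lemma: sphere hypercontractivity} pointwise yields
\begin{align*}
    \paren{\EE_{\bz_1}[f^q]}^{1/q}\le (q-1)^{\ell/2}\paren{\EE_{\bz_1}[f^2]}^{1/2}.
\end{align*}
Taking the $L^q$ norm over $(\bz_2,\dots,\bz_k)$ then gives
\begin{align*}
    \norm{f}{L^q\paren{\paren{\SS^{d-1}(\sqrt{d})}^{k}}} \le (q-1)^{\ell/2}\,\norm{\paren{\EE_{\bz_1}[f^2]}^{1/2}}{L^q\paren{\paren{\SS^{d-1}(\sqrt{d})}^{k-1}}}.
\end{align*}

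Next, I would invoke Minkowski's integral inequality in the form $\norm{\norm{g}{L^2_y}}{L^q_x}\le \norm{\norm{g}{L^q_x}}{L^2_y}$, valid because $q\ge 2$, to swap the inner $L^2$ over $\bz_1$ with the outer $L^q$ over $(\bz_2,\dots,\bz_k)$:
\begin{align*}
    \norm{\paren{\EE_{\bz_1}[f^2]}^{1/2}}{L^q\paren{\paren{\SS^{d-1}(\sqrt{d})}^{k-1}}}\le \paren{\EE_{\bz_1}\brac{\paren{\EE_{\bz_2,\dots,\bz_k}[f^q]}^{2/q}}}^{1/2}.
\end{align*}
For each fixed $\bz_1$, $f$ remains a polynomial of degree at most $\ell$ in each of $\bz_2,\dots,\bz_k$, so the inductive hypothesis gives
\begin{align*}
    \paren{\EE_{\bz_2,\dots,\bz_k}[f^q]}^{2/q}\le (q-1)^{(k-1)\ell}\,\EE_{\bz_2,\dots,\bz_k}[f^2]
\end{align*}
pointwise in $\bz_1$. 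Substituting this back, taking expectation over $\bz_1$, and applying Fubini collapses the bracketed quantity to $(q-1)^{(k-1)\ell}\norm{f}{L^2}^2$. Chaining all inequalities produces $\norm{f}{L^q}^2\le (q-1)^{k\ell}\norm{f}{L^2}^2$, which is exactly the claimed bound.

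The most delicate step is the direction of Minkowski's integral inequality: the swap $\norm{\norm{g}{L^2_y}}{L^q_x}\le \norm{\norm{g}{L^q_x}}{L^2_y}$ only holds because $q\ge 2$, which is precisely the regime of the lemma and the key reason the induction closes cleanly. Once that is in place, the rest of the proof is a routine telescoping: the factor $(q-1)^{\ell/2}$ accumulates multiplicatively over the $k$ coordinates to give the desired $(q-1)^{k\ell/2}$ in the $L^q$ norm, whose square is $(q-1)^{k\ell}$ as stated.
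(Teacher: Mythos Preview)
Your proof is correct. The paper does not actually give a proof of this lemma; it merely introduces it with the one-line remark ``the results above are also multiplicative'' and states the inequality, so there is no argument to compare against. Your induction-plus-Minkowski tensorization is exactly the standard way to justify such a multiplicative extension of hypercontractivity, and your identification of the key point---that the swap $\norm{\norm{g}{L^2_y}}{L^q_x}\le \norm{\norm{g}{L^q_x}}{L^2_y}$ requires $q\ge 2$ (equivalently, Minkowski's integral inequality applied with exponent $q/2\ge 1$ to $|f|^2$)---is precisely what makes the induction close.
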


For the case where the input distribution is standard Gaussian in $d$ dimension (denoted as $\gamma$), we have the next Lemma from Theorem 4.3, \citet{Prato2007WickPI}. 
\begin{lemma}\label{lemma: gaussian hypercontractivity}
For any $\ell \in \mathbb{N}$ and $f \in L^2(\gamma)$ to be a degree $\ell$ polynomial, for any $q \geq 2$, we have
$$
\mathbb{E}_{\bz \sim \gamma}\left[f(\bz)^q\right] \le \cO_{q,\ell}(1)\left(\mathbb{E}_{\bz \sim \gamma}\left[f(\bz)^2\right]\right)^{q / 2}.
$$
where we use $\cO_{q,\ell}(1)$ to denote some universal constant that only depends on $q,\ell$.
\end{lemma}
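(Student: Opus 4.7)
The plan is to prove Lemma~\ref{lemma: gaussian hypercontractivity} via Nelson's hypercontractive inequality applied to the Ornstein--Uhlenbeck semigroup on Gaussian space. Recall that the OU semigroup $\{P_t\}_{t\ge 0}$ is defined by
\begin{align*}
P_t f(\bz) = \EE_{\bZ \sim \gamma}\brac{f\paren{e^{-t}\bz + \sqrt{1-e^{-2t}}\,\bZ}},
\end{align*}
and Nelson's inequality states that $\norm{P_t f}{L^q(\gamma)} \le \norm{f}{L^2(\gamma)}$ whenever $e^{2t} \ge q-1$. The structural fact driving the proof is that $P_t$ acts as multiplication by $e^{-kt}$ on the span of degree-$k$ multivariate Hermite polynomials, which form an orthogonal decomposition of $L^2(\gamma)$.

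The main calculation is then a short chaos-decomposition argument. Writing $f = \sum_{k=0}^{\ell} f_k$ where $f_k$ denotes the degree-$k$ Hermite component (the sum terminates at $\ell$ because $f$ is a degree-$\ell$ polynomial), the auxiliary polynomial $g := \sum_{k=0}^\ell e^{kt} f_k$ satisfies $P_t g = f$ exactly. By orthogonality of the Hermite basis,
\begin{align*}
\norm{g}{L^2(\gamma)}^2 = \sum_{k=0}^\ell e^{2kt} \norm{f_k}{L^2(\gamma)}^2 \le e^{2\ell t} \norm{f}{L^2(\gamma)}^2.
\end{align*}
Combining these two ingredients at the critical time $t = \tfrac{1}{2}\log(q-1)$ gives
\begin{align*}
\norm{f}{L^q(\gamma)} = \norm{P_t g}{L^q(\gamma)} \le \norm{g}{L^2(\gamma)} \le e^{\ell t}\norm{f}{L^2(\gamma)} = (q-1)^{\ell/2}\norm{f}{L^2(\gamma)}.
\end{align*}
Raising to the $q$-th power recovers the lemma with explicit constant $\cO_{q,\ell}(1) = (q-1)^{\ell q/2}$.

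The main obstacle is not in this chaos-decomposition step but in Nelson's hypercontractive inequality itself, whose proof proceeds via a tensorization argument reducing the Gaussian case to the two-point Bernoulli space, or equivalently via Gross's logarithmic Sobolev inequality for Gaussian measure; these are substantial classical results. Since the lemma is cited as a standard fact from Prato--Tubaro, we invoke Nelson's bound as a black box and the whole argument collapses to the short computation above. An entirely alternative combinatorial route would be to expand $\EE_{\bz\sim\gamma}\brac{f(\bz)^q}$ via Wick's theorem and bound the number of valid Gaussian pairings, but this yields weaker explicit constants and is considerably less transparent than the semigroup argument.
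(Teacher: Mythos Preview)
Your proof is correct and in fact gives more than the paper does: the paper does not supply its own argument for this lemma but simply cites it as Theorem~4.3 in \citet{Prato2007WickPI}. Your route via Nelson's hypercontractivity for the Ornstein--Uhlenbeck semigroup is the standard one, and the chaos-decomposition step (writing $g = \sum_{k\le\ell} e^{kt} f_k$ so that $P_t g = f$) is carried out cleanly. As a bonus you obtain the explicit constant $(q-1)^{\ell q/2}$, which matches the spherical version stated just above in Lemma~\ref{lemma: sphere hypercontractivity}. One minor remark: the lemma as written has $f(\bz)^q$ rather than $|f(\bz)|^q$, so strictly speaking it is only well-posed for even integer $q$ or nonnegative $f$; your use of $\norm{f}{L^q(\gamma)}$ implicitly (and correctly) reads this as $|f|^q$, which is clearly the intended meaning given the parallel statement of Lemma~\ref{lemma: sphere hypercontractivity}.
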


Moreover, we introduce lemmas to control the deviation of random variables which polynomially depend on some Gaussian random variables.
We will use a slightly modified version of Lemma 30 from \citet{damian2022neural}.
\begin{lemma}\label{lemma:polynomial concentration}
Let $g$ be a polynomial of degree $p$ and $\bx\sim \cN(\bzero,\bI_d)$. Then there exists an absolute positive constant $C_p$ depending only on $p$ such that for any $\delta>1$,
$$
\mathbb{P}\left[|g(\bx)-\mathbb{E}[g(\bx)]| \geq \delta \sqrt{\operatorname{Var}(g(\bx))}\right] \le 2 \exp \left(-C_p \delta^{2 / p}\right).
$$
\end{lemma}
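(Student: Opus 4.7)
The plan is to obtain the stated stretched-exponential tail bound through the standard moment method combined with Gaussian hypercontractivity, exactly in the spirit of classical polynomial-chaos concentration.

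First, I would reduce to the centered and normalized case. Setting $\tilde g(\bx) = g(\bx) - \EE[g(\bx)]$, note that $\tilde g$ is still a degree-$p$ polynomial, has zero mean, and has the same variance as $g$. Dividing through by $\sqrt{\Var(g(\bx))}$, it suffices to prove that for any degree-$p$ polynomial $h$ with $\EE[h(\bx)]=0$ and $\Var(h(\bx))=1$, one has
\begin{equation*}
\PP\bigl[\,|h(\bx)| \geq \delta\,\bigr] \;\leq\; 2\exp\bigl(-C_p\,\delta^{2/p}\bigr).
\end{equation*}

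Next, I would bound all $L^q$ norms of $h(\bx)$ using the Gaussian hypercontractivity statement (Lemma \ref{lemma: gaussian hypercontractivity}), in its sharp form that gives $\|h(\bx)\|_{L^q} \leq (q-1)^{p/2}\|h(\bx)\|_{L^2} = (q-1)^{p/2}$ for every real $q\geq 2$ and every polynomial of degree at most $p$. Equivalently, $\EE[|h(\bx)|^q] \leq (q-1)^{pq/2}$.

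Then I would apply Markov's inequality at the $q$-th moment: for every $q\geq 2$,
\begin{equation*}
\PP[\,|h(\bx)|\geq \delta\,] \;\leq\; \frac{\EE[|h(\bx)|^q]}{\delta^q} \;\leq\; \left(\frac{(q-1)^{p/2}}{\delta}\right)^{\!q}.
\end{equation*}
Finally I would optimize the free parameter $q$. Choosing $q = \lfloor (\delta/e)^{2/p}\rfloor$ (which is $\geq 2$ precisely when $\delta$ exceeds an absolute constant depending on $p$) makes $(q-1)^{p/2}/\delta \leq e^{-1}$, so the right-hand side is bounded by $e^{-q} \leq \exp\bigl(-c_p \delta^{2/p}\bigr)$ for some constant $c_p>0$. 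For the remaining range $1<\delta\leq \delta_0(p)$, the probability is trivially bounded by $1$, which can be absorbed into the constant by enlarging it and introducing the factor of $2$, yielding the claim with some constant $C_p$.

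The only mildly delicate point is ensuring that the hypercontractive constant really grows like $(q-1)^{p/2}$ rather than some unspecified $\cO_{q,p}(1)$, since the dependence on $q$ is what drives the exponent $2/p$ after optimization; this is the sharp Nelson--Gross form of Gaussian hypercontractivity and is the version one must invoke. Once that is in hand, the remainder is just Markov plus optimization and is entirely routine.
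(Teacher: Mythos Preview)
Your proof is correct and follows the standard hypercontractivity-plus-Markov route. The paper itself does not supply a proof of this lemma; it simply cites it as a slightly modified version of Lemma~30 in \citet{damian2022neural}, so there is nothing in the paper to compare your argument against. Your derivation is exactly the classical one underlying such results, and the only point you flag---that one needs the sharp $(q-1)^{p/2}$ growth in the hypercontractive constant rather than an unspecified $\cO_{q,p}(1)$---is indeed the key technical ingredient (this is the Nelson--Gross inequality, and the paper's Lemma~\ref{lemma: sphere hypercontractivity} records the analogous sharp form on the sphere).
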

We also have the spherical version of Lemma \ref{lemma:polynomial concentration}.
\begin{lemma}\label{lemma::poly concentration sphere}
Let $g$ be a polynomial of degree $p$ and $\bx\sim \SS^{d-1}(\sqrt{d})$. Then there exists an absolute positive constant $C_p$ depending only on $p$ such that for any $\delta>1$,
$$
\mathbb{P}\left[|g(\bx)-\mathbb{E}[g(\bx)]| \geq \delta \sqrt{\operatorname{Var}(g(\bx))}\right] \le 2 \exp \left(-C_p \delta^{2 / p}\right).
$$
\end{lemma}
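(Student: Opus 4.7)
The plan is to mirror the standard proof of the Gaussian polynomial concentration inequality (Lemma \ref{lemma:polynomial concentration}), replacing Gaussian hypercontractivity by the spherical hypercontractivity already recorded as Lemma \ref{lemma: sphere hypercontractivity}. Concretely, I would first reduce to the normalized case: let $f := g - \EE[g(\bx)]$, so that $f$ is still a degree-$p$ polynomial on $\SS^{d-1}(\sqrt d)$, with $\EE[f]=0$ and $\EE[f^2]=\Var(g(\bx))$. Dividing by $\sqrt{\Var(g(\bx))}$ we may assume $\EE[f^2]=1$; the claim then becomes $\PP[|f(\bx)| \ge \delta] \le 2\exp(-C_p \delta^{2/p})$ for all $\delta > 1$.

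Next, I would extract moment bounds from hypercontractivity. By Lemma \ref{lemma: sphere hypercontractivity} applied to $f$ of degree $p$, for every even integer $q\ge 2$,
\begin{equation*}
\bigl(\EE[f(\bx)^q]\bigr)^{1/q} \;\le\; (q-1)^{p/2}\,\bigl(\EE[f(\bx)^2]\bigr)^{1/2} \;=\; (q-1)^{p/2}.
\end{equation*}
Markov's inequality applied to $|f|^q$ then yields
\begin{equation*}
\PP\bigl[|f(\bx)| \ge \delta\bigr] \;\le\; \frac{\EE[|f(\bx)|^q]}{\delta^q} \;\le\; \Bigl(\frac{(q-1)^{p/2}}{\delta}\Bigr)^{\!q}.
\end{equation*}
The last step is to optimize the choice of $q$. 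Choosing $q-1 = \lfloor (\delta/e)^{2/p} \rfloor$, which is at least $1$ whenever $\delta$ is large enough (and one can absorb the small-$\delta$ regime into the constant by adjusting $C_p$), gives $(q-1)^{p/2}/\delta \le 1/e$ and hence
\begin{equation*}
\PP\bigl[|f(\bx)| \ge \delta\bigr] \;\le\; e^{-q} \;\le\; \exp\!\bigl(-C_p \delta^{2/p}\bigr)
\end{equation*}
for a suitable constant $C_p$ depending only on $p$. Handling odd $q$ by rounding up to the next even integer, and absorbing the resulting factor of $2$ into the constant in front of the exponential, completes the proof.

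The argument has no essential obstacle once spherical hypercontractivity is available; the whole proof is a textbook Markov-plus-moment-optimization, identical in structure to the Gaussian case in \citet{damian2022neural}. The only mildly delicate point is verifying that the choice of $q$ is valid uniformly for $\delta > 1$ (so that $q\ge 2$ and the hypercontractive bound applies), which is handled by enlarging $C_p$ to absorb the bounded regime $\delta = \cO(1)$.
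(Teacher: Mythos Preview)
Your proposal is correct and is exactly the standard argument. The paper itself does not supply a proof of this lemma: it is stated immediately after the Gaussian version (Lemma~\ref{lemma:polynomial concentration}, cited from \citet{damian2022neural}) with the remark ``We also have the spherical version,'' and is treated as a known fact. Your approach---spherical hypercontractivity (Lemma~\ref{lemma: sphere hypercontractivity}) to control $q$-th moments, Markov's inequality, then optimizing over even $q$---is precisely the intended derivation, identical in structure to the Gaussian case; there is nothing to add.
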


Thus, for a degree-p polynomial $g$, we have $g(\bx)\lesssim \iota^{p/2}\norm{g}{L^2}$ with high probability.
\subsection{Moments and Factorization of Polynomials}
In this section, we present formulae for calculating moments of Gaussian or spherical variables, cited from \citet{damian2022neural}.
\begin{lemma}[Expectations of Gaussian tensors] For $\bw \in \cN(\mathbf{0}_d,\bI_d)$ and $k\in \mathbb{N}$, we have
\begin{align*}
    \EE_{\bw}\brac{\bw^{\otimes 2k}}=(2k-1)!!{\rm Sym}(\bI_d^{\otimes k})
\end{align*}
Here ${\rm Sym}(\bT)$ is the symmetrization of a $k$-tensor $\bT\in (\RR^{d})^{\otimes k}$ across all $k$ axes.
\end{lemma}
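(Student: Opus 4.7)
The plan is to compute $\EE[\bw^{\otimes 2k}]$ entrywise via Wick's theorem and then recognize the resulting sum over pair partitions as a scalar multiple of ${\rm Sym}(\bI_d^{\otimes k})$. Concretely, for any multi-index $(i_1,\ldots,i_{2k})\in[d]^{2k}$, the $(i_1,\ldots,i_{2k})$-entry of $\bw^{\otimes 2k}$ is just the monomial $w_{i_1}\cdots w_{i_{2k}}$, so the entire lemma reduces to computing the mixed moment $\EE[w_{i_1}\cdots w_{i_{2k}}]$ of i.i.d.\ standard Gaussians.

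For this, I would invoke the Isserlis/Wick identity: for jointly centered Gaussian variables,
\[
\EE[w_{i_1}\cdots w_{i_{2k}}] \;=\; \sum_{\pi\in P_2(2k)}\prod_{\{a,b\}\in \pi}\EE[w_{i_a}w_{i_b}] \;=\; \sum_{\pi\in P_2(2k)}\prod_{\{a,b\}\in \pi}\delta_{i_a i_b},
\]
where $P_2(2k)$ denotes the set of perfect matchings of $\{1,\ldots,2k\}$ and we used that $\bw$ has identity covariance. A standard count gives $|P_2(2k)|=(2k-1)!!$, which already accounts for the numerical factor appearing in the claim.

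The remaining step is purely combinatorial. By definition, the tensor $\bI_d^{\otimes k}$ has entry $\delta_{i_1 i_2}\delta_{i_3 i_4}\cdots \delta_{i_{2k-1}i_{2k}}$, corresponding to the ``standard'' matching $\{\{1,2\},\{3,4\},\ldots,\{2k-1,2k\}\}$. Interpreting ${\rm Sym}$ as the average over the $(2k)!$ permutations of the tensor axes, the stabilizer of the standard matching inside $S_{2k}$ has order $2^k k!$ (flips within each of the $k$ pairs, together with permutations of the pairs), so each element of $P_2(2k)$ is hit by exactly $2^k k!$ permutations. Therefore
\[
{\rm Sym}(\bI_d^{\otimes k})_{i_1\cdots i_{2k}} \;=\; \frac{2^k k!}{(2k)!}\sum_{\pi\in P_2(2k)}\prod_{\{a,b\}\in\pi}\delta_{i_a i_b},
\]
and multiplying through by $(2k)!/(2^k k!)=(2k-1)!!$ identifies this expression with the Wick expansion above, establishing the lemma entrywise and hence as tensors.

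The argument is essentially bookkeeping, and the only genuinely delicate point is the convention chosen for ${\rm Sym}$: if it denotes the sum (rather than the average) over permutations, then the prefactor $(2k-1)!!$ would need to be renormalized. I would therefore state the averaging convention explicitly, verify the orbit-stabilizer count that yields $(2k-1)!! = (2k)!/(2^k k!)$, and invoke Wick's theorem as a black box rather than reproving it, since all of the probabilistic content of the lemma is absorbed there.
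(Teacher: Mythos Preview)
Your argument is correct and is the standard proof of this identity via Isserlis/Wick and an orbit--stabilizer count. There is nothing to compare against in the paper: the lemma is stated without proof and simply attributed to \citet{damian2022neural}. Your only caveat about the convention for ${\rm Sym}$ is well taken; note also that the paper's phrasing ``symmetrization of a $k$-tensor $\ldots$ across all $k$ axes'' is a small typo, since $\bI_d^{\otimes k}\in(\RR^d)^{\otimes 2k}$ and the symmetrization must be over all $2k$ axes, exactly as you compute.
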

Leveraging this calculation, we can factorize any polynomial $g$ into inner products between high-order tensors and bound the Frobenius norm of the tensors.
\begin{lemma}(Lemma 21 in \cite{damian2022neural})\label{lemma::bound F norm tensor}
    Given Let $g:\RR^{r}\rightarrow \RR$ be an degree-$p$ polynomial. Then there exists $\bT_0,\bT_1,\dots,\bT_p$ such that 
    \begin{align*}
        g(\bz)=\sum_{k=0}^{p}\left\langle \bT_k,\bz^{\otimes k} \right \rangle~~~\text{with}~~~\norm{\bT_k}{\rm F}\lesssim \norm{g}{L^2}r^{\frac{p-k}{4}}, ~~k=0,1,\dots,p.
    \end{align*}
    Here $\norm{g}{L^2}=\EE_{\bz \sim \cN(\mathbf{0},\bI)}\brac{g^2(\bz)}$.
\end{lemma}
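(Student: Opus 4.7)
The plan is to decompose $g$ in the Hermite basis and then convert back to the monomial basis, picking up a factor of $\sqrt{r}$ for each pairwise contraction with the identity. First, I would write $g$ using symmetric Hermite tensors. Since $g$ is a degree-$p$ polynomial on $\RR^r$, it admits a unique expansion $g(\bz) = \sum_{k=0}^{p} \langle \bS_k, \He_k(\bz)\rangle$, where each $\bS_k$ is a symmetric tensor in $(\RR^r)^{\otimes k}$ and $\He_k(\bz)$ denotes the $k$-th tensor-valued probabilist Hermite polynomial of $\bz \sim \cN(\bzero,\bI_r)$. The standard orthogonality $\EE[\He_j(\bz) \otimes \He_k(\bz)] = \delta_{jk} \, k! \cdot \bP_{\rm sym}^{(k)}$ gives $\norm{g}{L^2}^2 = \sum_k k! \, \norm{\bS_k}{\rm F}^2$, so $\norm{\bS_k}{\rm F} \le \norm{g}{L^2}/\sqrt{k!}$ for every $k$.

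Next I would expand each Hermite tensor in the monomial basis via the classical identity
\[
\He_k(\bz) = \sum_{j=0}^{\lfloor k/2\rfloor} \frac{(-1)^j\, k!}{2^j\, j!\, (k-2j)!} \, \sym\!\paren{\bI_r^{\otimes j} \otimes \bz^{\otimes (k-2j)}}.
\]
Pairing against the symmetric $\bS_k$ absorbs the symmetrization, yielding $\langle \bS_k, \He_k(\bz)\rangle = \sum_j c_{k,j} \, \langle C_j(\bS_k), \bz^{\otimes(k-2j)}\rangle$, where $C_j(\bS_k) \in (\RR^r)^{\otimes(k-2j)}$ is obtained from $\bS_k$ by tracing $j$ disjoint index pairs against $\bI_r$. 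Collecting all contributions of equal monomial degree across the different Hermite levels produces the target
\[
\bT_k = \sum_{\substack{k \le m \le p \\ m \equiv k \, (\mathrm{mod}\, 2)}} c_{m, (m-k)/2} \, C_{(m-k)/2}(\bS_m).
\]

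The key Frobenius estimate is that each pairwise trace inflates the norm by at most $\sqrt{r}$: by Cauchy--Schwarz, $\norm{\tr_{12}(\bS)}{\rm F}^2 = \sum_{\bi}\paren{\sum_\alpha \bS_{\alpha\alpha \bi}}^2 \le r \, \norm{\bS}{\rm F}^2$, and iterating $j$ times gives $\norm{C_j(\bS_m)}{\rm F} \le r^{j/2} \norm{\bS_m}{\rm F}$. Combining with the Hermite $L^2$ bound, each contribution to $\bT_k$ obeys
\[
\norm{C_{(m-k)/2}(\bS_m)}{\rm F} \le r^{(m-k)/4} \cdot \frac{\norm{g}{L^2}}{\sqrt{m!}} \le \norm{g}{L^2} \, r^{(p-k)/4},
\]
since $m \le p$. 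Summing the at most $\lfloor (p-k)/2\rfloor + 1$ contributing terms and absorbing the $p$-dependent combinatorial constants into $\lesssim$ delivers the claimed estimate.

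The main obstacle I anticipate is purely combinatorial bookkeeping: tracking how the symmetrization operator interacts with partial traces (so that $\langle \bS_m, \sym(\cdot)\rangle = \langle \bS_m, \cdot\rangle$ can be freely invoked), and verifying that the coefficients $c_{m,j}$ as well as multinomial factors arising when collecting terms depend only on $p$, not on $r$. Nothing deep happens; the exponent $(p-k)/4$ comes directly from the arithmetic identity $r^{j/2} = r^{(m-k)/4}$ with $j = (m-k)/2$, and its maximizer $m=p$ identifies the top Hermite component as the one driving the worst-case growth.
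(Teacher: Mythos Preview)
The paper does not supply its own proof of this lemma; it is quoted verbatim as Lemma~21 of \cite{damian2022neural} and simply cited. Your Hermite-expansion argument is correct and is in fact the standard route to this bound: expand $g$ in tensor Hermite polynomials to read off $\norm{\bS_k}{\rm F}\le\norm{g}{L^2}/\sqrt{k!}$, convert back to monomials, and control each partial trace by the Cauchy--Schwarz estimate $\norm{\tr_{12}(\bS)}{\rm F}\le\sqrt{r}\,\norm{\bS}{\rm F}$, so that the $j=(m-k)/2$ contractions contribute exactly $r^{(m-k)/4}\le r^{(p-k)/4}$. The combinatorial constants $c_{m,j}$ and the number of contributing levels are all bounded by functions of $p$ alone, which is treated as a fixed constant throughout the paper, so absorbing them into $\lesssim$ is legitimate. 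There is nothing to compare against in the present paper, and your proposal matches the argument in the cited reference.
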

 As a corollary, we then have $\nabla g(\bz) = \sum_{k \ge 1}^p k\bT_k(z^{\otimes k-1})$ and 
\begin{align}\label{equ::bound grad g}
\norm{\nabla g(\bz)}{} &
\le \sum_{k=1}^p k\norm{\bT_k}{\rm F}\norm{\bz}{}^{k-1}
\lesssim \norm{g}{L^2}\sum_{k=1}^p kr^{\frac{p-k}{4}}\norm{\bz}{}^{k-1}.
\end{align}
For a spherical variable $\bx \sim {\rm Unif}(\SS^{d-1}(\sqrt{d}))$ we can also compute its moments.
\begin{lemma}[Expectations of Spherical tensors]
    For $\bx \sim {\rm Unif}(\SS^{d-1}(\sqrt{d}))$ and $k\in \mathbb{N}$, we have
    \begin{align*}
        \EE_{\bz}\brac{\bz^{\otimes 2k}}=d^{k}\cdot \frac{\EE_{\bw\sim\cN(\mathbf{0}_d,\bI_d)}\brac{\bw^{\otimes 2k}}}{\EE_{v \sim \chi(d)}\brac{v^{2k}}},
    \end{align*}
    where $\chi(d)$ represents the chi-distribution with the degree of freedom being $d$, and its moments can be computed as
    \begin{align*}
        \EE_{v \sim \chi(d)}\brac{v^{2k}}=\prod_{j=0}^{k-1}(d+2j)=\Theta(d^k).
    \end{align*}
\end{lemma}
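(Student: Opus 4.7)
The identity has a clean proof via the polar decomposition of a standard Gaussian vector. My plan is to exploit the classical fact that for $\bw \sim \cN(\mathbf{0}_d, \bI_d)$, the radial part $v = \|\bw\|_2$ and the angular part $\bu = \bw/\|\bw\|_2$ are independent, with $v \sim \chi(d)$ and $\bu \sim \operatorname{Unif}(\SS^{d-1}(1))$. Since the uniform distribution on $\SS^{d-1}(\sqrt d)$ is just $\sqrt{d}\,\bu$, this will immediately give the stated ratio formula once the tensor moment is separated via independence.

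The steps I would carry out, in order, are the following. First, write $\bw = v\,\bu$ so that $\bw^{\otimes 2k} = v^{2k} \, \bu^{\otimes 2k}$. Next, take expectations of both sides; by independence of $v$ and $\bu$,
\begin{align*}
\EE\bigl[\bw^{\otimes 2k}\bigr] \;=\; \EE\bigl[v^{2k}\bigr]\,\EE\bigl[\bu^{\otimes 2k}\bigr],
\end{align*}
so $\EE[\bu^{\otimes 2k}] = \EE[\bw^{\otimes 2k}]/\EE[v^{2k}]$ (the denominator is nonzero since $v > 0$ a.s.). Then observe that $\bz \stackrel{d}{=} \sqrt{d}\,\bu$, so $\bz^{\otimes 2k} = d^k \bu^{\otimes 2k}$, and taking expectations yields
\begin{align*}
\EE\bigl[\bz^{\otimes 2k}\bigr] \;=\; d^k\,\EE\bigl[\bu^{\otimes 2k}\bigr] \;=\; d^k\cdot\frac{\EE\bigl[\bw^{\otimes 2k}\bigr]}{\EE\bigl[v^{2k}\bigr]},
\end{align*}
which is exactly the claimed identity.

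For the second assertion, I would compute $\EE[v^{2k}]$ using $v^2 = \|\bw\|_2^2 \sim \chi^2(d)$, a Gamma random variable with shape $d/2$ and scale $2$. Its $k$-th moment is
\begin{align*}
\EE\bigl[v^{2k}\bigr] \;=\; \frac{2^k\,\Gamma(d/2 + k)}{\Gamma(d/2)} \;=\; \prod_{j=0}^{k-1}(d+2j),
\end{align*}
using the recursion $\Gamma(x+1) = x\Gamma(x)$ applied $k$ times. The product telescopes to the stated expression, and since each factor is $d + \Theta(1)$ for fixed $k$, the whole product is $\Theta(d^k)$ as $d \to \infty$.

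There is no genuine obstacle here; the proof is a two-line application of the Gaussian polar decomposition combined with the standard chi-squared moment formula. The only minor care needed is to verify the measurability/integrability so that independence may be applied componentwise to the tensor expectation — but this is immediate since each coordinate of $\bw^{\otimes 2k}$ is a polynomial in $\bw$ with finite absolute expectation under the Gaussian, and the factorization $\bw^{\otimes 2k} = v^{2k}\,\bu^{\otimes 2k}$ is an entrywise identity of random variables.
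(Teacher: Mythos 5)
Your proof is correct: the paper states this lemma as a background fact (citing prior work) and gives no proof of its own, and your argument via the Gaussian polar decomposition — independence of $\|\bw\|_2 \sim \chi(d)$ and $\bw/\|\bw\|_2 \sim \operatorname{Unif}(\SS^{d-1}(1))$, entrywise factorization of $\bw^{\otimes 2k}$, and the Gamma-moment computation $2^k\Gamma(d/2+k)/\Gamma(d/2)=\prod_{j=0}^{k-1}(d+2j)$ — is exactly the standard derivation the paper implicitly relies on. No gaps.
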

As an example, the moments of spherical quadratic forms $\bx^{\top}\bA\bx$ can be computed explicitly as
\begin{align*}
    \EE_{\bx}\brac{\bx^{\top}\bA\bx}={\rm tr}(\bA),~~\text{and}~~\EE_{\bx}\brac{(\bx^{\top}\bA\bx)(\bx^{\top}\bB\bx)}=\frac{d}{d+2}\cdot \paren{{\rm tr}(\bA){\rm tr}(\bB)+2\langle \bA,\bB\rangle}.
\end{align*}
Thus, to satisfy Assumption \ref{assump::features}, we require ${\rm tr}(\bA_k)=0$, $\norm{\bA_k}{\rm F}=\sqrt{(d+2)/(2d)}$ and $\langle \bA_k,\bA_\ell\rangle=0$ for any $k,\ell\in[r]$.

\subsection{Spherical Harmonics and Gegenbauer Polynomials}\label{appendix::mathsphere}

We introduce some facts of spherical harmonics and Gegenbauer polynomials, with the first four properties from \citet{ghorbani2021linearized} and the last one from \citet{Koornwinder_2018}.
\begin{enumerate}
\item  For $\bx,\by\in \SS^{d-1}(\sqrt d)$,
\begin{align}
\abs{Q_j(\langle\bx,\by\rangle)}\le Q_j(d)=1.
\label{eq:Gegenbauer bound}
\end{align}
\item  For $\bx, \by \in \SS^{d-1}(\sqrt d)$,
\begin{align}
\< Q_j^{}(\< \bx, \cdot\>), Q_k^{}(\< \by, \cdot\>) \>_{L^2} = \frac{1}{B(d,k)}\delta_{jk}  Q_k^{}(\< \bx, \by\>).  \label{eq:ProductGegenbauer}
\end{align}
Here $B(d,k)$ denotes the dimension of subspace of degree $k$ spherical harmonics
\begin{align*}
    B(d,k):={\rm dim}(V_{d,k})=\frac{2k+d-2}{k}\binom{k+d-3}{k-1}=\Theta(d^{k}).
\end{align*}

\item For $\bx, \by \in \SS^{d-1}(\sqrt d)$,
\begin{align}
Q_k^{}(\< \bx, \by\> ) = \frac{1}{B(d, k)} \sum_{i =1}^{ B(d, k)} Y_{k,i}^{}(\bx) Y_{k,i}^{}(\by). \label{eq:GegenbauerHarmonics}
\end{align}
\item For any $k\in\mathbb{N}_{\ge 1}$,
\begin{align}
\frac{t}{d}\,  Q_k^{}(t) = \frac{k}{2k + d - 2} Q_{k-1}^{}(t) + \frac{k + d - 2}{2k + d - 2} Q_{k+1}^{}(t). \label{eq:RecursionG}
\end{align}
\item For any $i,j\in \mathbb{N}$,
\begin{align}
Q_i(t)Q_j(t)&=\sum_{k=0}^{\min(i,j)}{b^{(i,j)}_{i+j-2k}}\binom{i}{k}\binom{j}{k}k! Q_{i+j-2k}(t).
\end{align}
\end{enumerate}
Here, we have
\begin{align*}
   b^{(i,j)}_{i+j-2k}&=\frac{2(i+j-2k)+d-2}{d-2}\cdot\frac{\paren{{(d-2)}/{2}}_{k}\paren{{(d-2)}/{2}}_{i-k}\paren{{(d-2)}/{2}}_{j-k}\paren{{d-2}}_{i+j-k}}{\paren{d-2}_{i}\paren{d-2}_{j}\paren{{d}/{2}}_{i+j-k}}.\nonumber
\end{align*}
We note that $(z)_k=z(z+1)\cdots(z+k-1)=\Gamma(z+k)/\Gamma(z)$ is the Pochhammer symbol. Given any $i$ and $j$, we have $d^{k}b^{(i,j)}_{i+j-2k}\rightarrow 1$ when $d\rightarrow \infty$. We derive a quantitative bound on the scale of $b^{(i,j)}_{i+j-2k}$ in Lemma  \ref{lemma::coeff bound linearization}.

\begin{lemma}\label{lemma::coeff bound linearization}
   For any $i,j\ge k\ge0$, denote 
$$c_{i+j-2k}^{(i,j)}=\frac{\paren{{(d-2)}/{2}}_{k}\paren{{(d-2)}/{2}}_{i-k}\paren{{(d-2)}/{2}}_{j-k}\paren{{d-2}}_{i+j-k}}{\paren{d-2}_{i}\paren{d-2}_{j}\paren{{d}/{2}}_{i+j-k}}.$$
Then, when $d\ge 4$, it holds that $c_{i+j-2k}^{(i,j)}\le \frac{1}{(d-2)_k}$.
\end{lemma}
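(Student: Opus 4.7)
The plan is to reduce the inequality to a clean monotonicity statement. Set $a := (d-2)/2$, so $a \ge 1$ when $d \ge 4$. Multiplying the target inequality by $(d-2)_k = (2a)_k$, it suffices to prove
\begin{align*}
R(i,j,k) := c_{i+j-2k}^{(i,j)} \cdot (2a)_k = \frac{(a)_k \, (2a)_k \, (a)_{i-k} \, (a)_{j-k} \, (2a)_{i+j-k}}{(2a)_i \, (2a)_j \, (a+1)_{i+j-k}} \le 1
\end{align*}
for all integers $i,j \ge k \ge 0$. The key structural observation is that $k$ will play the role of a spectator in the subsequent monotonicity step.

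The first step is to show that $R(i,j,k)$ is non-increasing in $i$ with $j,k$ held fixed. Using the identity $(x)_{m+1} = (x+m)\,(x)_m$ on each Pochhammer symbol that depends on $i$, the per-step ratio telescopes to
\begin{align*}
\frac{R(i+1,j,k)}{R(i,j,k)} = \frac{(a+i-k)\,(2a+i+j-k)}{(2a+i)\,(a+i+j-k+1)}.
\end{align*}
A direct expansion of both products yields the identity
\begin{align*}
(2a+i)(a+i+j-k+1) - (a+i-k)(2a+i+j-k) = a\,(j+k+2) + i + k\,(i+j-k),
\end{align*}
which is a sum of non-negative terms whenever $a \ge 1$ and $i,j \ge k \ge 0$. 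Hence $R(i+1,j,k) \le R(i,j,k)$, and by the manifest symmetry $R(i,j,k) = R(j,i,k)$, the same monotonicity holds in $j$. Consequently $R$ attains its maximum over the admissible region at the corner $i = j = k$.

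Finally, at that corner the factors with zero index collapse (since $(a)_0 = 1$) and $i+j-k = k$, giving
\begin{align*}
R(k,k,k) = \frac{(a)_k \, (2a)_k^{\,2}}{(2a)_k^{\,2} \, (a+1)_k} = \frac{(a)_k}{(a+1)_k} = \frac{a}{a+k} \le 1,
\end{align*}
where the last equality uses $(a+1)_k = \tfrac{a+k}{a}\,(a)_k$. Combining with the monotonicity step yields $R(i,j,k) \le 1$ throughout the admissible region, which is exactly the claimed bound.

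The only delicate step is the algebraic verification that the cross-difference is a sum of non-negative terms; I expect this to be routine bookkeeping with no conceptual obstacle (and indeed it is why we need $d \ge 4$, i.e.\ $a \ge 1$). The argument is short because treating $k$ as a spectator reduces the problem to a single rational inequality per step in $i$ plus one Pochhammer identity at the corner, avoiding any heavier analytic tool such as the Gauss or Saalsch\"utz summation for hypergeometric series.
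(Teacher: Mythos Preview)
Your proof is correct and follows essentially the same approach as the paper: both arguments show that the quantity is non-increasing in $i$ (hence also in $j$ by symmetry) by bounding the one-step ratio by $1$, and then evaluate at the corner $i=j=k$. The only cosmetic differences are that you introduce the substitution $a=(d-2)/2$ and verify the ratio bound by directly expanding the cross-difference, whereas the paper keeps the original variables and bounds the same ratio via an auxiliary monotonicity-in-$j$ observation before specializing to $j=k$.
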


\begin{proof}[Proof of Lemma \ref{lemma::coeff bound linearization}]
    Note that when $d \geq 4$, i.e., $d-2\geq d/2$,
\begin{align*}
    \frac{c_{(i+1)+j-2k}^{(i+1,j)}}{c_{i+j-2k}^{(i,j)}}&=\frac{\paren{\frac{d-2}{2}+i-k}\paren{d-2+i+j-k}}{\paren{d-2+i}\paren{\frac{d}{2}+i+j-k}}~~~(\text{monotone decreasing with}~j)\\
    &\le \frac{\paren{\frac{d-2}{2}+i-k}\paren{d-2+i}}{\paren{d-2+i}\paren{\frac{d}{2}+i}}\\
    &< 1.
\end{align*}
Thus, we have $c_{(i+1)+j-2k}^{(i+1,j)} \le c_{i+j-2k}^{(i,j)}$. Similarly, we have $c_{i+(j+1)-2k}^{(i,j+1)} \le c_{i+j-2k}^{(i,j)}$. Consequently, for any $i,j \geq k$, we have 
\begin{align*}
    c_{i+j-2k}^{(i,j)}&\le  c_{k+k-2k}^{(k,k)}\\
    &=\frac{\paren{{(d-2)}/{2}}_{k}\paren{{(d-2)}/{2}}_{0}\paren{{(d-2)}/{2}}_{0}\paren{{d-2}}_{k}}{\paren{d-2}_{k}\paren{d-2}_{k}\paren{{d}/{2}}_{k}}\\
    &=\frac{\paren{{(d-2)}/{2}}_{k}}{\paren{d-2}_{k}\paren{{d}/{2}}_{k}}\\
    &=\frac{(d-2)/2}{\paren{d-2}_{k}(d/2 +k-1)}\\
    &\le \frac{1}{\paren{d-2}_{k}}.
\end{align*}
The proof is complete.
\end{proof}

\section{Approximation Theory of the Inner Layer}
Since we focus on the first training stage throughout this section, we denote $n=n_1$ for notation simplicity when the context is clear, and let the training set be $\cD_1=\{\bx_1,\bx_2,\dots,\bx_n\}$.
\subsection{Asymptotic Analysis of the Learned Feature}\label{appendix::asymptotic analysis inner feature reconstruct}
In this subsection, we analyse the learned feature $\bh^{(1)}(\bx')$ in the asymptotic way, i.e., $m_2,n\rightarrow \infty$. Note that we can rewrite the learned feature as
\begin{align*}
    \bh^{(1)}(\bx')&=\frac{1}{nm_2}\sum_{i=1}^{n}f^{\star}(\bx_i){\langle \bh^{(0)}(\bx_i),\bh^{(0)}(\bx')\rangle}\bh^{(0)}(\bx_i)\\
    &=\frac{1}{n}\sum_{i=1}^{n}f^{\star}(\bx_i) K^{(0)}_{m_2}(\bx,\bx')\bh^{(0)}(\bx_i).
\end{align*}
where the initial kernel $ K^{(0)}_{m_2}(\bx,\bx')$ is defined as
\begin{align*}
    K^{(0)}_{m_2}(\bx,\bx')&=\frac{1}{m_2}\langle \sigma_2\paren{\bV\bx},\sigma_2\paren{\bV\bx'}\rangle \approx {\EE_{\bv}\brac{\sigma_2(\bv^{\top}\bx)\sigma_2(\bv^{\top}\bx')}}.
\end{align*}
In this case, we have for any $j\in[m_2]$,
\begin{align*}
    [\bh^{(1)}(\bx')]_j&=\frac{1}{n}\sum_{i=1}^{n}K^{(0)}_{m_2}(\bx_i,\bx')\sigma_2(\bv_j^{\top}\bx_i)
    \overset{m_2,n\rightarrow \infty}{\rightarrow} \EE_{\bx}\brac{f^{\star}(\bx)K^{(0)}(\bx,\bx')\sigma_2(\bv_j^{\top}\bx)}.
\end{align*}
Here the infinite-inner-width kernel $K^{(0)}$ is defined as
\begin{align*}
    K^{(0)}(\bx,\bx'):=\EE_{\bv}\brac{\sigma_2(\bv^{\top}\bx)\sigma_2(\bv^{\top}\bx')}=\sum_{i=2}^{\infty}\frac{c_i^2}{B(d,i)}Q_i(\bx^{\top}\bx').
\end{align*}
Recall that $Q_2(t)=\frac{t^2-d}{d(d-1)}$, so $Q_2(\bv_{j}^{\top}\bx)=\langle \bx\bx^{\top}-\bI, \bv_j\bv_j^{\top}-\bI \rangle/(d(d-1))$. Let's focus on the contribution of the quadratic term $Q_2$ in $ K^{(0)}(\bx,\bx')$ and $\sigma_2(\bv_j^\top\bx)$, which is
\begin{align*}
    &\quad \frac{c_2^3}{B(d,2)}\cdot \EE_{\bx}\brac{f^{\star}(\bx){Q_2(\bx^{\top}\bx')} Q_2(\bv_j^{\top}\bx)}\\
    &=\frac{c_2^3}{B(d,2)d^2(d-1)^2}\cdot \EE_{\bx}\brac{f^{\star}(\bx) \langle \bx\bx^{\top}-\bI, \bv_j\bv_j^{\top}-\bI\rangle\langle \bx\bx^{\top}-\bI, \bx'\bx'^{\top}-\bI\rangle}\\
    &\approx \frac{1}{d^6} \left\langle \EE_{\bx}\brac{f^{\star}(\bx) (\bx\bx^{\top}-\bI)^{\otimes 2}}, ( \bv_j\bv_j^{\top}-\bI)\otimes (\bx'\bx'^{\top}-\bI)  \right\rangle.
\end{align*}
The following proposition provides an approximation of the tensor $\EE_{\bx}\brac{f^{\star}(\bx) (\bx\bx^{\top}-\bI)^{\otimes 2}}$, which lays the foundation of our feature reconstruction theory.
\begin{proposition}\label{prop::approximate stein lemma quad}
    Consider two linear operators $T$ and $T^{\star}$ that map $\RR^{d \times d}$ to $\RR^{d \times d}$ and satisfy
    \begin{align}
        &T(\bW)=\EE_{\bx}\left[f^{\star}(\bx)\langle \bW,\bx\bx^{\top}-\bI \rangle (\bx\bx^{\top}-\bI)\right],~~\text{and}~\label{equ::definition T}\\
        &T^{\star}(\bW)=\sum_{k=1}^{r}\frac{1}{\norm{\bA_k}{F}^2}\langle \bW,\bA_k \rangle \sum_{j=1}^{r} \bH_{k,j}\bA_j \label{equ::definition T star}
    \end{align}
    for any $\bW \in \RR^{d\times d}$, 
    where $\bH$ is the expected Hessian matrix $\bH=\EE_{\bz \sim \cN(\bzero_r,\bI_r)}\brac{\nabla^{2}g^{\star}(\bz)}$. Then for any $\bW \in \RR^{d\times d}$ , we have $$\norm{T(\bW)-T^{\star}(\bW)}{\rm F}\lesssim d^{-1/6}Lr^{2}\normA\log^2 d\cdot\norm{\bW}{\rm F}.$$
    Here $L=\widetilde\cO(r^{\frac{p-1}{2}})$ is the Lipschitz constant of $g^{\star}$ that holds with high probability.
\end{proposition}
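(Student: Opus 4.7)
The plan is to reduce $T(\bW)$ to the desired quadratic form by (i) decomposing $\bx\bx^{\top} - \bI$ along and orthogonal to $\mathrm{span}\{\bA_k\}_{k=1}^r$ in the Frobenius inner product, (ii) evaluating the ``along'' component via Stein's lemma for Gaussians, and (iii) transferring this Gaussian computation to spherical $\bx$ using Lemma \ref{prop::extend chaterjee}. Under Assumption \ref{assump::features} the family $\{\bA_k\}$ is Frobenius-orthogonal (with $\|\bA_k\|_{\rm F}^2 = (d+2)/(2d)$), so I may write
$\bx\bx^{\top} - \bI = \bP(\bx) + \bR(\bx)$, where $\bP(\bx) := \sum_{k=1}^r p_k(\bx) \bA_k / \|\bA_k\|_{\rm F}^2$ is the Frobenius projection onto $\mathrm{span}\{\bA_k\}$ and $\bR(\bx)$ is its orthogonal complement. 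Substituting this into both factors in the definition of $T(\bW)$ produces four terms $T_{PP} + T_{PR} + T_{RP} + T_{RR}$; I would claim that $T_{PP} \approx T^{\star}$ carries the full content and the remaining three are of the same or lower order.

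For the main term, expanding gives $T_{PP}(\bW) = \sum_{k,j} \tfrac{\langle \bW, \bA_k\rangle}{\|\bA_k\|_{\rm F}^2 \|\bA_j\|_{\rm F}^2}\, \EE_{\bx}[g^{\star}(\bp(\bx)) p_k(\bx) p_j(\bx)]\, \bA_j$. For $\bz \sim \cN(\bzero_r, \bI_r)$, Stein's lemma yields $\EE[g^{\star}(\bz) z_k z_j] = \bH_{k,j} + \delta_{k,j}\EE[g^{\star}(\bz)]$; the diagonal correction is absorbed because $\EE_{\bx}[f^{\star}(\bx)] = 0$ by Assumption \ref{assump::preprocess target function} and $\EE_{\bz}[g^{\star}(\bz)] \approx \EE_{\bx}[g^{\star}(\bp(\bx))]$ by universality. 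To transfer the Stein identity from $\bz$ to $\bp(\bx)$, I would invoke Lemma \ref{prop::extend chaterjee}, which under Assumption \ref{assump::features} gives $W_1(\mathrm{Law}(\bp(\bx)), \cN(\bzero_r, \bI_r)) = \widetilde{\cO}(r^2 \kappa_1/\sqrt d)$. Because the test function $\phi_{kj}(\bz) = g^{\star}(\bz) z_k z_j$ is polynomial of degree $p+2$ rather than globally $1$-Lipschitz, I would handle it by truncation: on a ball of radius $R$ the Lipschitz constant is $\widetilde{\cO}(L R^{p+1})$ (using the tensor factorization in Lemma \ref{lemma::bound F norm tensor}), and the tail mass is controlled by hypercontractivity (Lemmas \ref{lemma: gaussian hypercontractivity} and \ref{lemma::poly concentration sphere}). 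Optimizing $R$ against these two error sources, while tracking the $r^2$ and $\kappa_1$ factors from Lemma \ref{prop::extend chaterjee}, should yield the claimed rate $d^{-1/6} L r^2 \kappa_1 \log^2 d$.

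For the three remaining terms involving $\bR(\bx)$, the key structural fact is that $\bR(\bx)$ is, by construction, Frobenius-orthogonal to every $\bA_k$, so each entry lives in the orthogonal complement of $\mathrm{span}\{p_k\}$ inside the degree-$2$ spherical harmonic subspace $V_{d,2}$. Since $g^{\star}(\bp(\bx)) p_k(\bx)$ depends on $\bx$ only through $\bp(\bx)$, I can bound the cross moments $\EE[g^{\star}(\bp(\bx)) p_k(\bx) \bR(\bx)]$ and the residual $\EE[g^{\star}(\bp(\bx))\langle \bW, \bR(\bx)\rangle \bR(\bx)]$ in two equivalent ways: either by applying Lemma \ref{prop::extend chaterjee} jointly to $\bp(\bx)$ together with the linear statistics $\langle \bR(\bx), \bM\rangle$ for test matrices $\bM \perp \mathrm{span}\{\bA_k\}$ (which forces asymptotic decorrelation), or by spherical-harmonic orthogonality combined with the Gegenbauer product formula in Section \ref{appendix::mathsphere}, which linearizes $p_k(\bx) \langle \bR(\bx), \bM\rangle$ into harmonic degrees that are automatically orthogonal to the dominant degree-$4$ component $\cP_4(f^{\star})$ carrying the Hessian information.

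The main technical obstacle is step (iii): Lemma \ref{prop::extend chaterjee} is stated only in the $1$-Lipschitz category, while the test function has polynomial growth with Lipschitz constant scaling as $R^{p+1}$ on a ball of radius $R$. Balancing this polynomial Lipschitz blowup against the hypercontractive tail mass is precisely what produces the fractional exponent $d^{-1/6}$ in the final bound rather than the naive $d^{-1/2}$ one would hope for from a bounded Lipschitz test function, and carrying this truncation analysis through uniformly in $\bW$ -- so as to conclude a Frobenius operator-norm bound of the form $\|T(\bW) - T^{\star}(\bW)\|_{\rm F} \lesssim d^{-1/6} L r^2 \kappa_1 \log^2 d \cdot \|\bW\|_{\rm F}$ -- is the heart of the proof.
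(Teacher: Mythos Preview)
Your handling of the main term $T_{PP}$ matches the paper (its Lemma~\ref{lemma::Ai Aj}): Stein's identity for $\EE_{\bz}[g^{\star}(\bz)z_iz_j]$, truncation at radius $R=C\log d$, then Lemma~\ref{prop::extend chaterjee}. But you have misidentified the source of the $d^{-1/6}$ exponent. The truncation--Lipschitz tradeoff you describe in fact gives $d^{-1/2}$ for $T_{PP}$: with $R=C\log d$ the polynomial tail under hypercontractivity is $O(d^{-1})$, and the Lipschitz constant of the clipped test function is only $\widetilde\cO(L\log^2 d)$, so no nontrivial optimization in $R$ is needed. The $d^{-1/6}$ comes entirely from the residual terms, and that is where your proposal has a genuine gap.

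For $T_{PR},T_{RP},T_{RR}$ you must control $\EE\!\left[f^{\star}(\bx)\langle\bx\bx^{\top}-\bI,\bA_i\rangle\langle\bx\bx^{\top}-\bI,\bM\rangle\right]$ and the analogous $\bM_1,\bM_2$ version for arbitrary traceless $\bM\perp\{\bA_k\}$ with $\|\bM\|_{\rm F}=O(1)$. Your first suggestion --- adjoin the quadratic feature $\bx^{\top}\bM\bx$ to $\bp(\bx)$ in Lemma~\ref{prop::extend chaterjee} --- fails because that lemma's bound scales with $\|\nabla^2(\bx^{\top}\bM\bx)\|_{\rm op}=2\|\bM\|_{\rm op}$, and a generic $\bM$ (e.g.\ rank one) has $\|\bM\|_{\rm op}=\Theta(1)$, giving a useless $W_1$ bound of order $r$. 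Your second suggestion (Gegenbauer linearization forcing orthogonality to $\cP_4(f^{\star})$) does not work either: the degree-$4$ part of $p_k(\bx)\langle\bR(\bx),\bM\rangle$ is not automatically orthogonal to the degree-$4$ span of $\{p_ip_j\}$; establishing that is precisely what the universality argument is for. The paper's missing ingredient is a \emph{spectral decomposition} of $\bM=\sum_j\lambda_j\bu_j\bu_j^{\top}$: the eigendirections with $|\lambda_j|>\tau$ (at most $O(\tau^{-2})$ of them) are handled one at a time by applying Lemma~\ref{prop::extend chaterjee} to $\bp(\bx)$ together with the \emph{linear} feature $\bu_j^{\top}\bx$ (which has $\|\nabla^2\|_{\rm op}=0$), while the remainder $\widetilde\bM$ now satisfies $\|\widetilde\bM\|_{\rm op}\lesssim\tau$, so the naive $W_1$ bound becomes useful. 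Optimizing $\tau$ yields $d^{-1/4}$ for the mixed terms (Lemma~\ref{lemma::AiB strong}) and $d^{-1/6}$ for $T_{RR}$ (Lemma~\ref{lemma::B1B2}); the latter is the bottleneck and the actual origin of the exponent.
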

The proof is provided in Appendix \ref{appendix::approxiamte stein lemma}. This proposition shows that, when $\bH$ is well-conditioned and $d\gg r$, $T$ can fully recover the space spanned by $\bA_1,\bA_2,\dots,\bA_r$,  which enables us to reconstruct the features efficiently. Specifically, when taking $\bW_k=\sum_{j=1}^{r}[\bH^{-1}]_{k,j} \bA_j$ for any $k\in[r]$, we have $T(\bW_k)\approx T^{\star}(\bW_k)= \bA_k$. 

Now we consider the construction of $\bB^{\star}$. If we set $\bB^{\star}=\frac{1}{m_2}[\bp_0(\bv_1),\dots,\bp_0(\bv_{m_2})]$ for some vector-valued function $\bp_0:\RR^{d}\rightarrow \RR^{r}$ and denote $\bp_0(\bv)=[p_{0,1}(\bv),\dots,p_{0,r}(\bv)]^{\top}$, we directly have for any $k\in[r]$,
\begin{align*}
    [\bB^{\star} \bh^{(1)}(\bx')]_{k}&{\approx}\frac{1}{m_2}\sum_{j=1}^{m_2}\EE_{\bx}\brac{f^{\star}(\bx)K^{(0)}(\bx,\bx')\sigma_2(\bv_j^{\top}\bx)p_{0,k}(\bv_j)}\\
    &\approx \frac{1}{d^6}\EE_{\bv}\brac{\left\langle \EE_{\bx}\brac{f^{\star}(\bx) (\bx\bx^{\top}-\bI)^{\otimes 2}}, p_{0,k}(\bv)(\bv\bv^{\top}-\bI)\otimes (\bx'\bx'^{\top}-\bI)  \right\rangle}.\\
    &\approx \frac{1}{d^6} \left\langle  T^{\star}\paren{ \EE_{\bv}\brac{p_{0,k}(\bv)(\bv\bv^{\top}-\bI)}} ,\bx'\bx'^{\top}-\bI\right\rangle.
\end{align*}
Thus, it suffices to solve 
\begin{align*}
    T^{\star}\paren{ \EE_{\bv}\brac{p_{0,k}(\bv)(\bv\bv^{\top}-\bI)}} \propto \bA_k,~~~k=1,2,\dots,r,
\end{align*}
which is equivalent to solving
\begin{align*}
    \EE_{\bv}\brac{p_{0,k}(\bv)(\bv\bv^{\top}-\bI)}\propto \bW_k=\sum_{j=1}^{r}[\bH^{-1}]_{k,j} \bA_j.
\end{align*}
Since we have $\EE_{\bv}\brac{(\bv^{\top}\bA_k\bv)(\bv\bv^{\top}-\bI)}\propto \bA_k$, we can explicitly construct $p_{0,k}(\bv)$ as 
\begin{align*}
    p_{0,k}(\bv)\propto \sum_{j=1}^{r}[\bH^{-1}]_{k,j} \bv^{\top}\bA_j\bv,~~{\rm i.e.,}~~\bp_0(\bv)\propto \bH^{-1}\bp(\bv).
\end{align*}
 Thus, with a well conditioned $\bH$, we can fully reconstruct the features.

\subsubsection{Proof of Proposition \ref{prop::approximate stein lemma quad}}\label{appendix::approxiamte stein lemma}
To prove Proposition \ref{prop::approximate stein lemma quad}, it suffices to prove that the approximation error $$R(\bW,\bV)=\langle T(\bW)-T^{\star}(\bW), \bV \rangle \lesssim d^{-1/6}Lr^2\normA\log^2d $$  holds for any test matrix $\bV$ with $\norm{\bV}{\rm F}=1$. We rely on the following three lemmas.

\begin{lemma}[Bound $R(\bA_i,\bA_j)$]\label{lemma::Ai Aj}
     For any $i,j\in[r]$, we have
    \begin{align*}    \Bigg|\EE_{\bx}&\brac{g^{\star}(\bx^{\top}\bA_1\bx,\dots,\bx^{\top}\bA_r\bx)\langle \bA_i,\bx\bx^{\top}-\bI\rangle \langle \bA_j,\bx\bx^{\top}-\bI\rangle}-\EE_{\bz \sim \cN(\bzero_r,\bI_r)}\brac{\nabla^{2}g(\bz)}_{i,j}\Bigg| \le \frac{Lr^2\normA \log^2 d}{\sqrt{d}}.
    \end{align*}
    Here $L=C_{g}Rr^{\frac{p-1}{2}}$ is the Lipschitz constant of $g^{\star}$ that holds with high probability.
\end{lemma}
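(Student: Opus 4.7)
}
The plan is to rewrite the quantity as a low-degree moment of $g^\star(\bp(\bx))$, compare it to its Gaussian analogue by universality, and then apply Gaussian integration by parts to recover the Hessian form. First I would use Assumption \ref{assump::features}: since $\operatorname{tr}(\bA_i)=0$, we have $\langle \bA_i,\bx\bx^\top-\bI\rangle=\bx^\top\bA_i\bx=p_i(\bx)$, so the left-hand side is exactly $\EE_\bx[g^\star(\bp(\bx))\,p_i(\bx)p_j(\bx)]$. The target $\EE_\bz[\partial_i\partial_j g^\star(\bz)]$ is a Gaussian quantity, so I would bridge the two via the standard Gaussian Stein identity: for $\bz\sim\cN(\bzero_r,\bI_r)$,
$$\EE_{\bz}\brac{g^\star(\bz)\,z_i z_j}=\EE_{\bz}\brac{\partial_i\partial_j g^\star(\bz)}+\delta_{ij}\,\EE_{\bz}\brac{g^\star(\bz)}.$$
This reduces the claim to two estimates: (a) the universality gap $\bigabs{\EE_\bx[g^\star(\bp(\bx))p_i(\bx)p_j(\bx)]-\EE_\bz[g^\star(\bz)z_iz_j]}$, and (b) the residual $\bigabs{\delta_{ij}\EE_\bz[g^\star(\bz)]}$, which by Assumption \ref{assump::preprocess target function} equals $\bigabs{\EE_\bz[g^\star(\bz)]-\EE_\bx[g^\star(\bp(\bx))]}$ and is of the same flavor.

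For both (a) and (b) I would invoke the multivariate universality Lemma \ref{prop::extend chaterjee} applied to $\bp$. A direct computation from $\nabla p_k(\bx)=2\bA_k\bx$ and $\nabla^2 p_k=2\bA_k$, combined with Assumption \ref{assump::features}, gives $\EE[\norm{\nabla p_k(\bx)}{2}^4]^{1/4}=\cO(1)$ and $\EE[\norm{\nabla^2 p_k}{\text{op}}^4]^{1/4}\le 2\normA/\sqrt d$; summing over $k\in[r]$ and plugging into Lemma \ref{prop::extend chaterjee} yields $W_1(\text{Law}(\bp),\cN(\bzero_r,\bI_r))\lesssim r^2\normA/\sqrt d$, which is the source of the $\normA/\sqrt d$ factor in the final bound.

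The principal obstacle is that the test function $h(\bz)=g^\star(\bz)z_iz_j$ is a polynomial of degree $p+2$ and is not globally Lipschitz, so the Wasserstein-$1$ bound cannot be applied off the shelf. I plan to handle this via a smooth truncation: replace $h$ by a function $\widetilde h$ supported on a ball of radius $R\asymp\sqrt{\iota}$ on which both $\bp(\bx)$ and $\bz$ concentrate. On that ball, $|g^\star|\lesssim LR$ and $\norm{\nabla g^\star}{2}\le L$, so $\widetilde h$ has Lipschitz constant $\cO(LR^2)=\widetilde\cO(L\log d)$. The tail contributions from $\{\norm{\bz}{2}>R\}$ or $\{\norm{\bp(\bx)}{2}>R\}$ are controlled by polynomial hypercontractivity (Lemmas \ref{lemma: sphere hypercontractivity} and \ref{lemma: gaussian hypercontractivity}) and polynomial concentration (Lemmas \ref{lemma:polynomial concentration} and \ref{lemma::poly concentration sphere}); since $g^\star\circ\bp$ is a polynomial of degree $2p$, these tails decay faster than any inverse polynomial in $d$ and are absorbed into the final bound. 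Combining the $W_1$ estimate $\widetilde\cO(r^2\normA/\sqrt d)$ with the effective Lipschitz constant $\widetilde\cO(L\log d)$ gives (a)$\lesssim Lr^2\normA\log^2 d/\sqrt d$; part (b) is strictly cheaper since its test function $g^\star$ has Lipschitz constant $L$ rather than $L\log d$. The delicate step will be choosing the truncation scale $R$ so that the tail overheads stay strictly below $d^{-1/2}$ while the effective Lipschitz constant only contributes the $\log^2 d$ overhead that appears in the stated bound.
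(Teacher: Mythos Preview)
Your approach matches the paper's proof exactly: rewrite via $\langle\bA_i,\bx\bx^\top-\bI\rangle=p_i(\bx)$, apply Gaussian Stein's identity to $H(\bz)=g^\star(\bz)z_iz_j$, compare a truncated version $\overline H(\bz)=H(\operatorname{clip}(\bz,R))$ to its spherical counterpart via the $W_1$ bound (the paper packages this as Lemma~\ref{lemma::Gaussian W1}), and control the clipping error by polynomial concentration. One small correction: each $p_k(\bx)=\bx^\top\bA_k\bx$ is a degree-$2$ polynomial with sub-exponential (not sub-Gaussian) tails, so you need $R\asymp\log d$ rather than $R\asymp\sqrt{\iota}$ for the spherical tail to be negligible; this is exactly why the Lipschitz constant of $\overline H$ is $LR^2\asymp L\log^2 d$ and the stated bound carries $\log^2 d$ rather than the $\log d$ your choice would suggest.
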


Following the proof above, we have the following more general lemma.
\begin{lemma}[Bound $R(\bA_i,\bB)$]\label{lemma::AiB strong}
     For any matrix $\bB\in\RR^{d\times d}$ satisfying $\EE\brac{\bx^{\top}\bB\bx}=0$, $\EE\brac{\paren{\bx^{\top}\bB\bx}^2}=1$ and $\langle \bB,\bA_i \rangle=0$ for any $i=1,2,\dots, r$, we have
    \begin{align*}    \abs{\EE\brac{g^{\star}(\bx^{\top}\bA_1\bx,\dots,\bx^{\top}\bA_r\bx)\langle \bA_i,\bx\bx^{\top}-\bI\rangle \langle \bB,\bx\bx^{\top}-\bI\rangle}} \lesssim d^{-1/4}Lr^2\normA\log^2 d.
    \end{align*}
\end{lemma}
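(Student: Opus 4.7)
The plan is to mirror the argument for Lemma \ref{lemma::Ai Aj}, exploiting the fact that the orthogonality conditions $\langle \bA_i, \bB\rangle = 0$ and $\tr(\bA_i) = 0 = \EE[\bx^\top\bB\bx]$ together force the Gaussian-limit Hessian contribution to vanish identically, leaving only the residual of the Gaussian approximation to bound.

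First I would augment the feature vector, setting $\tilde\bp(\bx) = (p_1(\bx), \dots, p_r(\bx), q(\bx))$ with $p_i(\bx) = \bx^\top\bA_i\bx$ and $q(\bx) = \bx^\top\bB\bx$. The hypotheses on $\bB$ combined with Assumption \ref{assump::features} yield $\EE[\tilde\bp] = \bzero$ and $\text{Cov}(\tilde\bp) = \bI_{r+1}$; the only nontrivial covariance check is $\EE[p_i q] = 0$, which follows from $\langle \bA_i, \bB\rangle = 0$ together with the vanishing traces via the spherical second-moment identity. Let $(\bZ, W) \sim \cN(\bzero, \bI_{r+1})$ be the Gaussian surrogate; because the covariance is the identity, $\bZ$ is independent of $W$. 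Defining the test polynomial $\Phi(\bz, w) = g^{\star}(\bz)\, z_i\, w$, the target quantity equals $\EE[\Phi(\tilde\bp(\bx))]$, while
\begin{align*}
\EE[\Phi(\bZ, W)] = \EE[g^{\star}(\bZ) Z_i]\cdot \EE[W] = 0.
\end{align*}
This is the structural simplification over Lemma \ref{lemma::Ai Aj}, whose analogous integral produced $[\EE\nabla^2 g^{\star}]_{i,j}$.

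Next I would transfer the expectation from the sphere to the Gaussian surrogate via a truncation-plus-universality argument. Since $\Phi$ is a polynomial of total degree $p+2$ and hence not globally Lipschitz, I would introduce a smooth cutoff $\chi_R$ supported in $\{(\bz, w) : \norm{(\bz,w)}{2} \le R\}$ with $R = \poly(\log d)$ and decompose
\begin{align*}
\EE[\Phi(\tilde\bp(\bx))] - \EE[\Phi(\bZ, W)] = \bigl(\EE[(\Phi\chi_R)(\tilde\bp(\bx))] - \EE[(\Phi\chi_R)(\bZ, W)]\bigr) + \text{(tails on each side)}.
\end{align*}
On the bounded region $\Phi\chi_R$ is $\poly(R)$-Lipschitz, so Kantorovich--Rubinstein duality bounds the first piece by $\poly(R)\cdot W_1(\text{Law}(\tilde\bp), \cN(\bzero, \bI_{r+1}))$. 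The tail terms are controlled by polynomial concentration: Lemma \ref{lemma::poly concentration sphere} on the sphere side applied to $\Phi\circ\tilde\bp$ (a polynomial of degree at most $2(p+2)$ in $\bx$), and Lemma \ref{lemma: gaussian hypercontractivity} on the Gaussian side. The $W_1$ bound itself comes from Lemma \ref{prop::extend chaterjee} applied to $\tilde\bp$, with separate bookkeeping for the $p_i$ coordinates, which contribute the clean $\normA/\sqrt d$ factors of Lemma \ref{lemma::Gaussian W1}, and the $q$ coordinate, which only has $\norm{\bB}{F} = \Theta(1)$ with no a priori $1/\sqrt d$ operator-norm bound.

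The main obstacle is extracting the correct $d$-rate from this $W_1$ comparison. Because $\norm{\nabla^2 q}{op} = 2\norm{\bB}{op}$ can be of order one, a naive invocation of Lemma \ref{prop::extend chaterjee} on the augmented vector does not vanish with $d$. I would handle this either by splitting $\bB$ into a small-operator-norm piece (treated by universality as with the $\bA_i$'s) and a low-rank residual (treated by direct spectral computation using orthogonality to each $\bA_i$), or by replacing the universality step at the $q$ coordinate with a direct integration-by-parts identity of the form $\EE[f(\bX)\bX^\top\bB\bX] = \langle \bB, \EE[\nabla^2 f]\rangle$ (valid on the Gaussian side after transferring $\bp$-variables via Lemma \ref{lemma::Gaussian W1}) and exploiting $\langle \bA_i, \bB\rangle = 0$ to kill the leading contribution. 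Balancing the truncation radius $R$ against the resulting $R^{p+1}$ Lipschitz factor then yields the claimed $d^{-1/4}\, L r^2 \normA \log^2 d$ rate, which is the expected degradation from $d^{-1/2}$ in Lemma \ref{lemma::Ai Aj} due to the weaker control available on $\bB$.
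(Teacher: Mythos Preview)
Your overall plan matches the paper's: augment $\bp$ with $q(\bx)=\bx^{\top}\bB\bx$, note that the Gaussian limit $\EE[g^{\star}(\bZ)Z_iW]=0$ by independence, and split $\bB$ to defeat the bad $\norm{\bB}{\rm op}$ term in the $W_1$ bound. Option (a) is the right direction, but there is a gap in how you handle the low-rank residual. ``Direct spectral computation using orthogonality to each $\bA_i$'' does not name a working mechanism: once you peel off the large-eigenvalue part, the remainder is no longer orthogonal to the $\bA_k$'s, and orthogonality of the full $\bB$ to $\bA_i$ gives no control on an individual rank-one piece $\bu_j\bu_j^{\top}$. Option (b) is not well-posed either: after transferring the $\bp$-coordinates to Gaussian, $\bx^{\top}\bB\bx$ is not a function of those coordinates, so there is no Stein identity to invoke. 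Finally, your last sentence misattributes the $d^{-1/4}$ to truncation-radius balancing, which only produces logarithmic factors; the $d^{-1/4}$ has to come from the eigenvalue threshold.

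The idea you are missing is that each rank-one component $\lambda_j\bu_j\bu_j^{\top}$ of the large-eigenvalue part is handled by augmenting universality with the \emph{linear} feature $\bu_j^{\top}\bx$, whose Hessian is identically zero. This is the content of equation~\eqref{equ::W1 u} in Lemma~\ref{lemma::Gaussian W1} and of Corollary~\ref{lemma::Ai u}: appending $\bu^{\top}\bx$ to $\bp$ keeps the $W_1$ bound at $O(r^2\normA/\sqrt d)$, and the Gaussian limit $\EE[g^{\star}(\bZ)Z_i(W^2-1)]=0$ again by independence. Since $\norm{\bB}{\rm F}^2=\Theta(1)$ there are at most $O(\tau^{-2})$ eigenvalues with $\abs{\lambda_j}>\tau$ and $\sum_{\abs{\lambda_j}>\tau}\abs{\lambda_j}\lesssim\tau^{-1}$, so this piece contributes $\tau^{-1}\cdot Lr^2\normA\log^2 d/\sqrt d$. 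The small-eigenvalue residual $\widetilde\bB$ has $\norm{\widetilde\bB}{\rm op}\lesssim\tau$ and is handled by Corollary~\ref{lemma::AiB weak}, contributing $\tau\cdot Lr\log^2 d$; balancing gives $\tau=\normA d^{-1/4}$ and the claimed rate. There is one further piece of bookkeeping the paper carries out that you omit: after removing the rank-one pieces one must re-orthogonalize $\widetilde\bB$ against $\bI$ and against the $\bA_k$'s, with the subtracted $\bA_k$-components then absorbed via Lemma~\ref{lemma::Ai Aj}.
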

\begin{lemma}[Bound $R(\bB_1,\bB_2)$]\label{lemma::B1B2}
    For any two matrices $\bB_1, \bB_2\in\RR^{d\times d}$ satisfying $\EE\brac{\bx^{\top}\bB_j\bx}=0$, $\EE\brac{\paren{\bx^{\top}\bB_j\bx}^2}=1$ and $\langle \bB_j,\bA_i \rangle=0$ for any $j=1,2$ and $i=1,2,\dots, r$, we have
    \begin{align*}    \abs{\EE\brac{g^{\star}(\bx^{\top}\bA_1\bx,\dots,\bx^{\top}\bA_r\bx)\langle \bB_1,\bx\bx^{\top}-
    \bI\rangle \langle \bB_2,\bx\bx^{\top}-\bI\rangle}} \lesssim d^{-1/6}Lr^2\normA\log^2 d.
    \end{align*}
   Here $L\lesssim \iota r^{\frac{p-1}{2}}$ is a Lipschitz constant satisfying $\norm{\nabla g^{\star}(\bp(\bx))}{\rm 2} \le L$  with high probability.
\end{lemma}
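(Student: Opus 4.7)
The plan is to leverage the hypothesis $\langle \bB_1,\bA_i\rangle = \langle \bB_2,\bA_i\rangle = 0$ to argue that, to leading order in $d$, the two test directions $q_1(\bx):=\bx^\top\bB_1\bx$ and $q_2(\bx):=\bx^\top\bB_2\bx$ decouple from the feature vector $\bp(\bx)$ after passing to a Gaussian surrogate. Concretely, I would form the extended quadratic vector $\tilde{\bp}(\bx) = (\bp(\bx), q_1(\bx), q_2(\bx))\in\RR^{r+2}$, note that by our normalization and orthogonality hypotheses it has mean zero and block-diagonal covariance $\mathrm{diag}(\bI_r,\bSigma_B)$ with $\bSigma_B = \left(\begin{smallmatrix} 1 & c \\ c & 1\end{smallmatrix}\right)$, $|c|\le 1$, whiten the last two coordinates, and apply the multivariate universality result (Lemma \ref{prop::extend chaterjee}, in a spherical form analogous to Lemma \ref{lemma::Gaussian W1}) to a smooth truncation of the test function $\tilde g(\tilde{\bz}) = g^\star(z_{1:r})\,z_{r+1}z_{r+2}$.

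In the Gaussian limit, joint Gaussianity plus zero covariance gives independence of $(q_1,q_2)$ from $\bp$, so the expectation factors as $\EE_{\bZ}[g^\star(\bZ)]\cdot\EE[Q_1Q_2]$. The first factor equals $\EE_\bx[f^\star(\bx)]$ up to a universality error, and is therefore exactly zero by Assumption \ref{assump::preprocess target function} (which imposes $\cP_0 f^\star = 0$); the second factor is bounded by $\tfrac{2d}{d+2}|\langle\bB_1,\bB_2\rangle|\le 1$ via Cauchy--Schwarz and the unit-variance assumption on $q_j$. So the entire contribution comes from the Gaussian-approximation error, which is what one has to estimate.

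The main obstacle is sharpening this error to the stated $d^{-1/6}$ exponent. Unlike the $\bA_i$'s, the matrices $\bB_j$ are only controlled in Frobenius norm (via $\EE[q_j^2]=1$) and may have $\|\bB_j\|_{\mathrm{op}}=\Theta(1)$, so the raw Hessian estimate $\|\nabla^2 q_j\|_{\mathrm{op}}$ that feeds into Lemma \ref{prop::extend chaterjee} is not small in $d$, and moreover $\tilde g$ is an unbounded high-degree polynomial. I would follow the pattern of the proof of Lemma \ref{lemma::AiB strong} and perform a two-scale truncation: on the bulk event $\{|q_j(\bx)|\le R,\;\|\bp(\bx)\|_\infty\le R\}$ use Lipschitzness of $\tilde g$ combined with the universality bound, incurring an error of order $L\cdot R^{\mathrm{poly}}\cdot r^2\kappa_1 \log d/\sqrt d$; on the complementary tail apply Cauchy--Schwarz together with the polynomial hypercontractivity estimate Lemma \ref{lemma::poly concentration sphere} to obtain a tail mass of order $\exp(-c R^{2/p})$ times a polynomial moment of $g^\star q_1 q_2$. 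The optimal choice of $R$ balances a $\sqrt d$-decay against growing moment and Lipschitz factors; the presence of \emph{two} unknown-direction tests (rather than one as in Lemma \ref{lemma::AiB strong}) slows the balance from $d^{-1/4}$ to the claimed $d^{-1/6}$, with the factor $L r^2 \kappa_1 \log^2 d$ arising from the Wasserstein constant for the $r$ features of operator norm $\kappa_1/\sqrt d$ and from the polylogarithmic truncation radius.
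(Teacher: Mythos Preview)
Your Gaussian-surrogate picture and the observation that the limiting expectation factors to (approximately) $\EE_\bZ[g^\star(\bZ)]\cdot\EE[Q_1Q_2]\approx 0$ are both correct. The gap is in how you control the deviation from that surrogate. Your value-space truncation on $\{|q_j(\bx)|\le R,\ \|\bp(\bx)\|_\infty\le R\}$ only bounds the Lipschitz constant of the test function $\tilde g$; it does not improve the Wasserstein distance between $\mathrm{Law}(\tilde\bp)$ and the Gaussian. Lemma~\ref{prop::extend chaterjee} applied to $\tilde\bp=(\bp,q_1,q_2)$ picks up a factor $\|\bB_1\|_{\mathrm{op}}+\|\bB_2\|_{\mathrm{op}}$ in the Hessian sum, which can be $\Theta(1)$, so the resulting $W_1$ bound does not decay with $d$ at all. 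There is therefore no ``$\sqrt d$-decay to balance against growing Lipschitz factors'' as you write; your scheme would yield $\mathrm{Lip}(\tilde g)\cdot O(r)$, not $d^{-1/6}$.

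The paper's fix---which is the actual content of the ``pattern of Lemma~\ref{lemma::AiB strong}'' you invoke---is a \emph{spectral} truncation of each $\bB_j$ at an eigenvalue threshold $\tau$, not a value truncation. The at most $\tau^{-2}$ large eigenvalues are written as rank-one pieces $\bu\bu^\top-\tfrac1d\bI$ and handled via the \emph{linear}-feature universality (Corollary~\ref{lemma::Ai u}, resting on~\eqref{equ::W1 u}), whose Hessian contribution is $O(d^{-1/2})$ regardless of $\bB_j$; the residual $\widetilde\bB_j$ satisfies $\|\widetilde\bB_j\|_{\mathrm{op}}\lesssim\tau$ by construction, so Corollary~\ref{lemma::AiB weak} applies with a small op-to-Frobenius ratio. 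Expanding $\Gamma(\bB_1,\bB_2)$ bilinearly into all nine cross terms and optimizing $\tau$ (the dominant rank-one/rank-one term contributes $\lesssim\tau^{-2}\cdot Lr^2\normA\log^2 d/\sqrt d$) yields $\tau=\normA d^{-1/6}$ and the stated rate. The $\log d$-radius clip on the argument of $g^\star$ in Lemma~\ref{lemma::Ai Aj} is a separate, minor step; the mechanism that actually recovers decay in $d$ here is the eigenvalue surgery.
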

The proof of the three lemmas is provided in Appendix \ref{appendix:: omiited proof approximate stein}. With the lemmas above, we begin our proof of Proposition \ref{prop::approximate stein lemma quad}.
\begin{proof}[Proof of Proposition \ref{prop::approximate stein lemma quad}]
    Given any $\bW\in\RR^{d\times d}$, we assume $\norm{\bW}{\rm F}=1$ without loss of generality. Let's decompose $\bW$ as
    \begin{align*}
        \bW=\sum_{k=1}^{r}\lambda_{k}\bA_k  +{\lambda_{r+1}} \frac{\bI_d}{\sqrt{d}}+\lambda_{r+2}\bB, ~~\text{where}~~ \langle \bB, \bA_k\rangle= \langle \bB, \bI_d\rangle=0,~~k\in[r].
    \end{align*}
    Here the coefficients $\set{\lambda_k}_{k=1}^{r+2}$ satisfy $\sum_{k=1}^{r+2}\lambda_k^2\lesssim1$, so $\sum_{k=1}^{r+2}\abs{\lambda_k}\lesssim \sqrt{r}$. Since $T^{\star}(\bB)=T(\bI)=T^{\star}(\bI)=\bzero_{d\times d}$, we have
    \begin{align*}
        \norm{T(\bW)-T^{\star}(\bW)}{F}&\le \sum_{k=1}^{r}\abs{\lambda_k }\norm{T(\bA_k)-T^{\star}(\bA_k)}{F} \\&+ \abs{\lambda_{r+1}}\norm{T\left(\frac{\bI_d}{\sqrt{d}}\right)-T^{\star}\left(\frac{\bI_d}{\sqrt{d}}\right)}{F} +\abs{\lambda_{r+2}}\norm{T(\bB)-T^{\star}(\bB)}{F}\\
        &=\sum_{k=1}^{r}\abs{\lambda_k }\norm{T(\bA_k)-T^{\star}(\bA_k)}{F}+\abs{\lambda_{r+2}}\norm{T(\bB)}{F}. 
    \end{align*}
Since both $T(\bA_k)$ and $T^{\star}(\bA_k)$ are traceless, by Lemma \ref{lemma::Ai Aj} and \ref{lemma::AiB strong}, we have for any $k=1,2,\cdots,r$,
\begin{align*}
    \norm{T(\bA_k)-T^{\star}(\bA_k)}{F}&=\max_{\norm{\bV}{}=1,\tr(\bV)=0}\langle T(\bA_k)-T^{\star}(\bA_k), \bV \rangle \\&\lesssim \sqrt{r}\cdot d^{-1/2}Lr^2\normA\log^2 d+d^{-1/4}Lr^2\normA\log^2 d\\
    &\lesssim d^{-1/4}Lr^2\normA\log^2 d.
\end{align*}
This is because we can decompose $\bV=\sum_{k=1}^r c_k\bA_k +c_{r+1}\bB'$ with $\langle \bB', \bA_k\rangle=0$ and apply the two lemmas to obtain the results above.
Similarly, by Lemma \ref{lemma::AiB strong} and \ref{lemma::B1B2}, we have 
\begin{align*}
    \norm{T(\bB)}{F}&=\max_{\norm{\bV}{}=1,\tr(\bV)=0}\langle T(\bB), \bV \rangle\\& \lesssim \sqrt{r}\cdot d^{-1/4}Lr^{2}\normA\log^2 d+d^{-1/6}Lr^{2}\normA\log^2 d \\
        &\lesssim d^{-1/6}Lr^{2}\normA\log^2 d .
\end{align*}
Thus, we have 
 \begin{align*}
        \norm{T(\bW)-T^{\star}(\bW)}{F}
        &\lesssim \sum_{k=1}^{r+2}\abs{\lambda_k}d^{-1/4}Lr^{2}\log^2 d + \abs{\lambda_{r+2}}d^{-1/6}Lr^{2}\normA\log^2 d \\
        &\lesssim d^{-1/6}Lr^{2}\normA\log^2 d.
    \end{align*}
Here we invoke $\sum_{k=1}^{r+2}\abs{\lambda_k}\lesssim \sqrt{r}=o_d(1)$ in the last inequality. The proof is complete.
\end{proof}

\subsubsection{Omitted Proofs in Appendix \ref{appendix::approxiamte stein lemma}}\label{appendix:: omiited proof approximate stein}

The following lemmas lay the foundation for our approximation process.
\begin{lemma}\label{lemma::Gaussian W1}
Suppose Assumption \ref{assump::features} holds. Then the Wasserstein-1 distance between the distribution of $(\bx^{\top}\bA_1\bx,\dots,\bx^{\top}\bA_r\bx)$ and standard Gaussian $\cN(\bzero_r,\bI_r)$ can be bounded by
\begin{align}   W_1\paren{(\bx^{\top}\bA_1\bx,\dots,\bx^{\top}\bA_r\bx),\cN(\bzero_r,\bI_r)} \lesssim \frac{r^2\normA}{\sqrt{d}}. \label{equ::W1}
\end{align}
Moreover, for any orthogonal unit vectors $\bu_1,\bu_2,\dots \bu_s \in \RR^{d}$, we have a similar bound of
\begin{align}
W_1\paren{(\bx^{\top}\bA_1\bx,\cdots,\bx^{\top}\bA_r\bx,\bu_1^{\top}\bx,\bu_2^{\top}\bx,\cdots,\bu_s^{\top}\bx),\cN(\bzero_{r+s},\bI_{r+s})} \lesssim \frac{(r+s)^2\normA}{\sqrt{d}}. \label{equ::W1 u}
\end{align}
\end{lemma}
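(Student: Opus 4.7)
The plan is to reduce the spherical setting to the Gaussian setting, where Lemma~\ref{prop::extend chaterjee} applies, via the polar coupling $\bx = \sqrt{d}\,\bX/\|\bX\|$ with $\bX\sim\cN(\bzero,\bI_d)$. Under this coupling, $\bx^\top\bA_i\bx = (d/\|\bX\|^2)\bX^\top\bA_i\bX$, so a triangle inequality splits the target $W_1$ into a \emph{transport error} (spherical versus Gaussian quadratic forms) and a \emph{universality error} (Gaussian quadratic forms versus $\cN(\bzero_r,\bI_r)$).

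For the transport error, $\chi^2$--concentration gives $|d/\|\bX\|^2 - 1| = \widetilde{\cO}(1/\sqrt{d})$ with high probability, while Lemma~\ref{lemma:polynomial concentration} combined with $\|\bA_i\|_{\op}\le \normA/\sqrt{d}$ yields $|\bX^\top\bA_i\bX| = \widetilde{\cO}(\normA)$ on the same event. By the Kantorovich--Rubinstein duality $W_1 \le \EE\|\bp(\bx)-\bp(\bX)\|_1$, these together give a transport contribution of $\widetilde{\cO}(r\normA/\sqrt{d})$, subdominant to the final bound. For the universality error I would apply Lemma~\ref{prop::extend chaterjee} to the mildly renormalized features $\tilde p_i(\bX) := \sqrt{d/(d+2)}\,\bX^\top\bA_i\bX$; Assumption~\ref{assump::features} (i.e.\ $\tr\bA_i=0$, $\|\bA_i\|_F^2 = (d+2)/(2d)$, $\langle\bA_i,\bA_j\rangle=0$) ensures $\EE[\tilde p_i]=0$ and $\mathrm{Cov}(\tilde\bp)=\bI_r$, as required. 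Then $\nabla\tilde p_i(\bX) = 2\sqrt{d/(d+2)}\bA_i\bX$ and $\nabla^2\tilde p_i(\bX) = 2\sqrt{d/(d+2)}\bA_i$, and the standard Gaussian identity $\EE[(\bX^\top\bA_i^2\bX)^2] = \tr(\bA_i^2)^2 + 2\tr(\bA_i^4) \lesssim \normA^4$ gives
\begin{align*}
\EE\bigl[\|\nabla\tilde p_i(\bX)\|^4\bigr]^{1/4} \lesssim \normA, \qquad \EE\bigl[\|\nabla^2\tilde p_i(\bX)\|_{\op}^4\bigr]^{1/4} \lesssim \normA/\sqrt{d}.
\end{align*}
Plugging into Lemma~\ref{prop::extend chaterjee} yields $W_1(\tilde\bp(\bX),\cN(\bzero_r,\bI_r)) \lesssim r^2 \normA^2/\sqrt{d}$, which combined with the transport error and the renormalization gap $\sqrt{d/(d+2)} = 1 + \cO(1/d)$ produces \eqref{equ::W1} after absorbing one factor of $\normA = \poly(\log d)$ into the implicit polylog.

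For the extended statement \eqref{equ::W1 u} I apply the same pipeline to the augmented vector-valued map $(\tilde p_1,\dots,\tilde p_r,\bu_1^\top\bx,\dots,\bu_s^\top\bx)$. Under the coupling, $\bu_j^\top\bx = \sqrt{d}\,\bu_j^\top\bX/\|\bX\|$ is again $\widetilde{\cO}(1/\sqrt{d})$--close to $\bu_j^\top\bX$, giving a linear-coordinate transport error of $\widetilde{\cO}(s/\sqrt{d})$. The $\bu_j$'s are already jointly standard Gaussian in their coordinates and orthogonal to the quadratic forms in expectation, so the covariance matrix of the augmented map is $\bI_{r+s}$ up to $o_d(1)$ corrections. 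In the universality bound, each linear component contributes $\|\nabla(\bu_j^\top\bX)\|=1$ to the first factor and a zero Hessian to the second, so the first factor grows by an additive $s$ and the second is unchanged, producing the claimed $\widetilde{\cO}((r+s)^2\normA/\sqrt{d})$.

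The main technical subtlety is not any single calculation but the careful bookkeeping across three separate comparisons (spherical vs.\ Gaussian quadratic form, unnormalized vs.\ renormalized Gaussian form, and renormalized form vs.\ $\cN(\bzero_{r+s},\bI_{r+s})$) so that the exponents of $\normA$ absorb cleanly and the final dependence on $d$ is $1/\sqrt{d}$. The operator-norm hypothesis $\|\bA_i\|_{\op}\le \normA/\sqrt{d}$ is the decisive input, as it is precisely what makes the Hessian-norm factor in Lemma~\ref{prop::extend chaterjee} shrink like $1/\sqrt{d}$ and thereby drives the universality rate.
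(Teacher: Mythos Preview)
Your approach is correct but takes a different route from the paper. The paper applies Lemma~\ref{prop::extend chaterjee} \emph{directly} to the scale-invariant functions $f_i(\bz)=d\,\bz^\top\bA_i\bz/\|\bz\|^2$ (and $g_j(\bz)=\sqrt{d}\,\bu_j^\top\bz/\|\bz\|$ for the linear coordinates), noting that under $\bz\sim\cN(\bzero,\bI_d)$ these are \emph{exactly equal in distribution} to the spherical features $\bx^\top\bA_i\bx$ (resp.\ $\bu_j^\top\bx$), so the mean/covariance hypotheses of Lemma~\ref{prop::extend chaterjee} are inherited verbatim from Assumption~\ref{assump::features}. This eliminates your transport step, your renormalization step, and the triangle-inequality bookkeeping entirely: one just computes $\nabla f_i$, $\nabla^2 f_i$ for the scale-invariant map and plugs in. Your route is conceptually fine and more modular (it would generalize immediately if Lemma~\ref{prop::extend chaterjee} were only available for polynomial $\bp$), but the paper's trick of pushing the spherical constraint into the function rather than into a coupling error is cleaner and shorter.

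One quantitative slip: your bound $\EE[\|\nabla\tilde p_i\|^4]^{1/4}\lesssim\normA$ is loose. Since $\tr(\bA_i^2)=\|\bA_i\|_F^2=(d+2)/(2d)\lesssim 1$ and $\tr(\bA_i^4)\le\|\bA_i\|_{\op}^2\|\bA_i\|_F^2\lesssim\normA^2/d$, one has $\EE[(\bX^\top\bA_i^2\bX)^2]=\tr(\bA_i^2)^2+2\tr(\bA_i^4)\lesssim 1$, hence $\EE[\|\nabla\tilde p_i\|^4]^{1/4}\lesssim 1$. This gives the stated $r^2\normA/\sqrt{d}$ without needing to ``absorb'' a spurious $\normA$ into polylog --- which would not be legitimate here, since in this paper $\lesssim$ hides only absolute constants.
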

\begin{proof}[Proof of Lemma \ref{lemma::Gaussian W1}]
    For a fixed matrix $\bA_i$, define the function $f_i(\bz) = d\frac{\bz^{\top}\bA_i\bz}{\norm{\bz}{}^2}$  and let $\bx = \frac{\bz\sqrt{d}}{\norm{\bz}{}}$. Observe that when $\bz \sim \mathcal{N}(0, \bI)$, we have $\bx \sim \text{Unif}(\mathcal{S}^{d-1}(\sqrt{d}))$. Therefore $[f_i(\bz)]_{i\in[r]}$ is equal in distribution to $[\bx^{\top}\bA\bx]_{i \in[r]}$.  We have for any $i\in[r]$,
    \begin{align*}
        \nabla f_i(\bz) = 2d\left(\frac{\bA_i\bz}{\norm{\bz}{}^2} - \frac{\bz^{\top}\bA_i\bz\cdot \bz}{\norm{\bz}{}^4}\right)
    \end{align*}
    and
    \begin{align*}
        \nabla^2 f_i(\bz) = 2d\left(\frac{\bA_i}{\norm{\bz}{}^2} - \frac{2\bA_i\bz\bz^{\top}}{\norm{\bz}{}^4} - \frac{2\bz\bz^{\top}\bA_i}{\norm{\bz}{}^4} - 2\frac{\bz^{\top}\bA_i\bz}{\norm{\bz}{}^4}\bI + 4\frac{\bz^{\top}\bA_i\bz \bz\bz^{\top}}{\norm{\bz}{}^6}\right).
    \end{align*}
    Thus, we have
    \begin{align*}
        \norm{\nabla f_i(\bz)}{} \le 2d\left(\frac{\norm{\bA_i\bz}{}}{\norm{\bz}{}^2} + \frac{\abs{\bz^{\top}\bA_i\bz}}{\norm{\bz}{}^3}\right) \le \frac{\sqrt{d}}{\norm{\bz}{}}\cdot\norm{\bA_i\bx}{}  + \frac{\abs{\bx^{\top}\bA_i\bx}}{\norm{\bz}{}}.
    \end{align*}
    and
    \begin{align*}
        \norm{\nabla^2 f_i(\bz)}{op} \lesssim \frac{d}{\norm{\bz}{}^2}\norm{\bA_i}{op}.
    \end{align*}
    Since $\norm{\bz}{}^2$ is distributed as a chi-squared random variable with $d$ degrees of freedom, and thus
    \begin{align*}
        \EE\left[\norm{\bz}{}^{-2k}\right] &= \frac{1}{\prod_{j = 1}^k(d - 2j)}.
    \end{align*}
    Therefore, we have
    \begin{align*}
        \mathbb{E}\left[\norm{\nabla^2f_i(\bz)}{op}^4\right]^{1/4} \lesssim d\norm{\bA_i}{op}\mathbb{E} \left[ \norm{\bz}{}^{-8}\right ]^{1/4} \lesssim \norm{\bA_i}{op}.
    \end{align*}
    Then, using the fact that $\bx$ and $\norm{\bz}{}$ are independent,
     \begin{align*}
        \mathbb{E}\left[\norm{\nabla f_i(\bz)}{}^4\right]^{1/4} \lesssim \sqrt{d}\mathbb{E}\left[\norm{\bz}{}^{-4}\right]^{1/4} \mathbb{E}\left[\norm{\bA_i\bx}{}^4]^{1/4}\right] + \mathbb{E}\left[\norm{\bz}{}^{-4}\right]^{1/4}\mathbb{E}\left[(\bx^T\bA_i\bx)^4\right]^{1/4} \lesssim 1.
    \end{align*}
    Thus by Lemma \ref{prop::extend chaterjee} we have
    \begin{align*}
    &\quad W_1\paren{(\bx^{\top}\bA_1\bx,\dots,\bx^{\top}\bA_r\bx),\cN(\bzero_r,\bI_r)}\\
&=W_1\paren{f_1(\bz),\dots,f_r(\bz),\cN(\bzero_r,\bI_r)}\\
        &\lesssim \frac{4}{\sqrt{\pi}} \paren{\sum_{i=1}^{r}\EE\brac{\norm{\nabla f_i(\bz)}{}^{4}}^{1/4} }\paren{\sum_{j=1}^{r}\EE\brac{\norm{\nabla^2 f_j(\bz)}{}^{4}}^{1/4}}\\
        &\lesssim \frac{r^2\normA}{\sqrt{d}}.
    \end{align*}

Now let's focus on the function $g_j(\bz)=\sqrt{d}\frac{\bu_j^{\top}\bz}{\norm{\bz}{}}$. It holds that 
\begin{align*}
    \nabla g_j(\bz)=\sqrt{d}\paren{\frac{\bu_j}{\norm{\bz}{}}-\frac{\bu_j^{\top}\bz \cdot \bz}{\norm{\bz}{}^3} }
\end{align*}
and 
\begin{align*}
    \nabla^2 g_j(\bz)=\sqrt{d}\paren{-\frac{\bu_j\bz^{\top}+\bz\bu_j^{\top}}{\norm{\bz}{}^3}+3\frac{\bu_j^{\top}\bz\cdot\bz\bz^{\top}}{\norm{\bz}{}^5}}.
\end{align*}
Thus, we have 
\begin{align*}
    \norm{\nabla g_j(\bz)}{}\le \frac{2\sqrt{d}}{\norm{\bz}{}}~~~\text{and}~~~\norm{\nabla^2 g_j(\bz)}{op}\le \frac{5\sqrt{d}}{\norm{\bz}{}^2},
\end{align*}
which directly gives rise to 
\begin{align*}
    \mathbb{E}\left[\norm{\nabla g_j(\bz)}{}^4\right]^{1/4} \lesssim 1~~~\text{and}~~~\mathbb{E}\left[\norm{\nabla^2 g_j(\bz)}{op}^4\right]^{1/4}\lesssim \frac{1}{\sqrt{d}}.
\end{align*}
Again by Lemma \ref{prop::extend chaterjee}, we have
    \begin{align*}
    &\quad W_1\paren{(\bx^{\top}\bA_1\bx,\cdots,\bx^{\top}\bA_r\bx,\bu_1^{\top}\bx,\bu_2^{\top}\bx,\cdots,\bu_s^{\top}\bx),\cN(\bzero_{r+s},\bI_{r+s})}\\
&=W_1\paren{f_1(\bz),\dots,f_r(\bz),g_1(\bz),\cdots,g_s(\bz),\cN(\bzero_{r+s},\bI_{r+s})}\\
        &\lesssim \frac{4}{\sqrt{\pi}} \paren{\sum_{i=1}^{r}\EE\brac{\norm{\nabla f_i(\bz)}{}^{4}}^{1/4} \sum_{i=1}^{s}\EE\brac{\norm{\nabla g_i(\bz)}{}^{4}}^{1/4} }\\
        &\quad \cdot\paren{\sum_{j=1}^{r}\EE\brac{\norm{\nabla^2 f_j(\bz)}{}^{4}}^{1/4}+\sum_{j=1}^{s}\EE\brac{\norm{\nabla^2 g_j(\bz)}{}^{4}}^{1/4}}\\
        &\lesssim \frac{(r+s)^2\normA}{\sqrt{d}}.
    \end{align*}
The proof is complete.
\end{proof}
With the lemma above, we begin our proof of Lemma \ref{lemma::Ai Aj}.
\begin{proof}[Proof of Lemma \ref{lemma::Ai Aj}]
    For $\bz \in \RR^{d}$, define $H(\bz)=g(\bz)z_iz_j$. Then by Stein's Lemma, we have $$\EE_{\bz \sim \cN(\bzero_r,\bI_r)}\brac{{H}(\bz)}=\EE_{\bz \sim \cN(\bzero_r,\bI_r)}\brac{\nabla^2g(\bz)}_{i,j}+\delta_{i,j}\EE_{\bz \sim \cN(\bzero_r,\bI_r)}\brac{g(\bz)}.$$
   Moreover, let $R>0$ be a truncation radius and we define $\overline{H}(\bz)=H(\text{clip}(\bz,R))$. Here the clipping function is defined as $$\text{clip}(\bz,R)_i=\max(\min(z_i,R),-R),~~i=1,2,\dots,r.$$
   
By \eqref{equ::bound grad g}, we know $g({\rm clip}(\bz,R))$ is $\cO(Rr^{\frac{p-1}{2}})$-Lipschitz  continuous, so $\overline{H}$ has a Lipschitz constant of $\cO(R^3r^{\frac{p-1}{2}})$. Thus, by Lemma \ref{lemma::Gaussian W1}, we have
    \begin{align*}
&\abs{\EE_{\bx}\brac{g({\rm clip}\paren{\bx^{\top}\bA_1\bx,\dots,\bx^{\top}\bA_r\bx),R}}-\EE_{\bz \sim \cN(\bzero_r,\bI_r)}\brac{g({\rm clip}(\bz,R))}}\lesssim \frac{Rr^{\frac{p-1}{2}}\cdot r^2\normA}{\sqrt{d}},\\
        &\abs{\EE\brac{\overline{H}(\bx^{\top}\bA_1\bx,\dots,\bx^{\top}\bA_r\bx)}-\EE_{\bz \sim \cN(\bzero_r,\bI_r)}\brac{\overline{H}(\bz)}} \lesssim \frac{R^3r^{\frac{p-1}{2}}\cdot r^2\normA}{\sqrt{d}}.
    \end{align*}
    Since $\EE_{\bx}[(\bx^{\top}\bA_k\bx)^2]=1$ for any $k\in[r]$, by Lemma \ref{lemma::poly concentration sphere}, choosing $R=C\log d$ for an appropriate constant $C$ can ensure that 
    \begin{align*}
        &\abs{\EE\brac{\overline{H}(\bx^{\top}\bA_1\bx,\dots,\bx^{\top}\bA_r\bx)-{H}(\bx^{\top}\bA_1\bx,\dots,\bx^{\top}\bA_r\bx)}}\le \frac{1}{d},\\
        &\abs{\EE_{\bx}\brac{g({\rm clip}\paren{\bx^{\top}\bA_1\bx,\dots,\bx^{\top}\bA_r\bx),R}-g\paren{\bx^{\top}\bA_1\bx,\dots,\bx^{\top}\bA_r\bx}}}\le \frac{1}{d},\\
        &\abs{\EE_{\bz \sim \cN(\bzero_r,\bI_r)}\brac{H(\bz)-\overline{H}(\bz)}} \le \frac{1}{d},\\
        & \abs{\EE_{\bz \sim \cN(\bzero_r,\bI_r)}\brac{g({\rm clip}(\bz,R))-g(\bz)}} \le \frac{1}{d}.
    \end{align*}
    Altogether, we have
    \begin{align*}    \abs{\EE\brac{g^{\star}(\bx^{\top}\bA_1\bx,\dots,\bx^{\top}\bA_r\bx)\langle \bA_i,\bx\bx^{\top}-\bI\rangle \langle \bA_j,\bx\bx^{\top}-\bI\rangle}-\EE_{\bz \sim \cN(\bzero_r,\bI_r)}\brac{\nabla^{2}g(\bz)}_{i,j}} \le \frac{Lr^2\normA \log^2 d}{\sqrt{d}}.
    \end{align*}
    Here $L=C_{g}Rr^{\frac{p-1}{2}}$ for some constant $C_g>0$ is the Lipschitz constant of $g^{\star}$ that holds with high probability. The proof is complete.
\end{proof}

Following the above proof and replacing $\bA_i$ and $\bA_j$ by any other traceless matrices $\bB_1$ and $\bB_2$ that are orthogonal to all $\bA_k$, we directly have the following corollary:
\begin{corollary}
\label{lemma::AiB weak}
    For any two matrices $\bB_1, \bB_2 \in\RR^{d\times d}$ satisfying $\EE\brac{\bx^{\top}\bB_j\bx}=0$ and $\langle \bB_j,\bA_i \rangle=0$ for any $i=1,2,\dots, r$ and $j=1,2$, we have
    \begin{align*}    &\quad\abs{\EE\brac{g^{\star}(\bx^{\top}\bA_1\bx,\dots,\bx^{\top}\bA_r\bx)\langle \bA_i,\bx\bx^{\top}-\bI\rangle \langle \bB_j,\bx\bx^{\top}-\bI\rangle}} \\
    &\lesssim \paren{\frac{r\normA}{\sqrt{d}}+\frac{\norm{\bB_j}{op}}{\norm{\bB_j}{F}}}\norm{\bB_j}{F}Lr\log^2 d,
    \end{align*}
    for any $j=1, 2$, and
    \begin{align*}
    &\quad\abs{\EE\brac{g^{\star}(\bx^{\top}\bA_1\bx,\dots,\bx^{\top}\bA_r\bx)\langle \bB_1,\bx\bx^{\top}-\bI\rangle \langle \bB_2,\bx\bx^{\top}-\bI\rangle}} \\&\lesssim \paren{\frac{r\normA}{\sqrt{d}}+\frac{\norm{\bB_1}{op}}{\norm{\bB_1}{F}}+\frac{\norm{\bB_2}{op}}{\norm{\bB_2}{F}}}\norm{\bB_1}{F}\norm{\bB_2}{F}Lr\log^2 d.
    \end{align*}
\end{corollary}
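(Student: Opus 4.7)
The plan is to adapt the proof of Lemma \ref{lemma::Ai Aj}, enlarging the quadratic-feature vector used in Lemma \ref{lemma::Gaussian W1} to include the extra test directions. For the first inequality I will consider the extended map $(\bx^{\top}\bA_1\bx,\ldots,\bx^{\top}\bA_r\bx,\bx^{\top}\bB_j\bx/\norm{\bB_j}{F})$; for the second, I will include both $\bx^{\top}\bB_1\bx/\norm{\bB_1}{F}$ and $\bx^{\top}\bB_2\bx/\norm{\bB_2}{F}$. The hypotheses $\tr(\bB_j)=0$ and $\langle\bB_j,\bA_i\rangle=0$ guarantee that the covariance of this extended vector is block-diagonal with $\bI_r$ in the $\bA$-block, so after a small Gram--Schmidt adjustment within the $\bB$-block the joint distribution is approximately standard Gaussian in the enlarged dimension.

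Next, I will rerun the proof of Lemma \ref{lemma::Gaussian W1}. The Jacobian and Hessian calculations go through unchanged; the only new contribution is that the Hessian operator norm of the extra feature $\bx^{\top}\bB_j\bx/\norm{\bB_j}{F}$ scales with $\norm{\bB_j}{op}/\norm{\bB_j}{F}$ rather than with $\normA/\sqrt{d}$. Summing yields $W_1 \lesssim r\bigl(r\normA/\sqrt{d} + \sum_{j}\norm{\bB_j}{op}/\norm{\bB_j}{F}\bigr)$. I will then take the test function $H(\bz)=g^{\star}(z_1,\ldots,z_r)z_i z_{r+1}$ in the first case (respectively $H(\bz)=g^{\star}(z_1,\ldots,z_r)z_{r+1}z_{r+2}$ in the second), clip each coordinate to $[-R,R]$ with $R = C\log d$, and bound the Lipschitz constant of the clipped $\overline{H}$ by $\cO(R^3 r^{(p-1)/2})$ using Lemma \ref{lemma::bound F norm tensor}. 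Hypercontractivity (Lemma \ref{lemma::poly concentration sphere}) controls the truncation error at $\cO(1/d)$, exactly as in the proof of Lemma \ref{lemma::Ai Aj}.

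The crucial observation is that the Gaussian expectation of $H$ vanishes. For the first inequality, Stein's lemma gives $\EE_{\bz\sim\cN(\bzero,\bI_{r+1})}[H(\bz)] = \EE[\partial_{i}\partial_{r+1}g^{\star}(\bz_{[1:r]})] + \delta_{i,r+1}\EE[g^{\star}] = 0$, since $g^{\star}$ is independent of $z_{r+1}$ and $i\le r$. For the second inequality, independence of the blocks in the standard Gaussian gives $\EE[H]=\EE[g^{\star}]\,\EE[z_{r+1}z_{r+2}]=0$ on the off-diagonal pieces after orthogonalization; the remaining diagonal pieces contribute $\EE[g^{\star}]$, which is itself $\cO(r^{2}\normA/\sqrt{d})$ by applying Lemma \ref{lemma::Gaussian W1} to the original features together with $\EE_{\bx}[f^{\star}]=0$. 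Multiplying the Wasserstein bound by the Lipschitz constant of $\overline{H}$, and reinstating the $\norm{\bB_j}{F}$ (resp.\ $\norm{\bB_1}{F}\norm{\bB_2}{F}$) factors pulled out at the start, produces exactly the claimed bounds.

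The main obstacle will be the two-$\bB$ case when $\bB_1$ and $\bB_2$ are not mutually orthogonal under $\EE_{\bx}[(\bx^{\top}\bB\bx)(\bx^{\top}\bB'\bx)]$: then the comparison Gaussian in the $\bB$-block is not standard, and the Gaussian test expectation is not obviously vanishing. I plan to resolve this by Gram--Schmidt orthonormalizing $(\bB_1,\bB_2)$ in that inner product, applying the above argument to each of the four resulting pairs, and recombining via bilinearity in $(\bB_1,\bB_2)$, at the cost of constant-factor losses absorbed into the final $\norm{\bB_1}{F}\norm{\bB_2}{F}$ factor.
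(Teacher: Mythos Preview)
Your proposal is correct and matches the paper's own (one-sentence) proof, which simply reads ``Following the proof above, we have the following more general lemma'': you augment the feature vector with the normalized $\bB_j$ directions, rerun the $W_1$ bound of Lemma~\ref{lemma::Gaussian W1} (the Hessian of the extra coordinate contributing $\norm{\bB_j}{op}/\norm{\bB_j}{F}$ in place of $\normA/\sqrt{d}$), and repeat the clip-and-Stein argument of Lemma~\ref{lemma::Ai Aj}. Your Gram--Schmidt step for the non-orthogonal $(\bB_1,\bB_2)$ case is exactly what is needed to make the paper's implicit claim rigorous; the only slip is that the diagonal Gaussian expectation $\EE_{\bz}[g^{\star}(\bz)]$ carries the Lipschitz factor $L$ as well (i.e.\ $\cO(Lr^2\normA/\sqrt d)$ rather than $\cO(r^2\normA/\sqrt d)$), which is harmless for the final bound.
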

Also, by \eqref{equ::W1 u} we know for any unit vector $u \in \RR^{d}$,  $(\bx^{\top}\bA_1\bx,\bx^{\top}\bA_2\bx,\dots,\bx^{\top}\bA_r\bx,\bu^{\top}\bx)$ is approximately Gaussian when $d$ is sufficiently large, which gives rise to the following lemma by the same deduction.
\begin{corollary}
\label{lemma::Ai u}
    For any $i\in[r]$ and unit vector $\bu_1,\bu_2 \in \RR^{d}$ and matrix $\bB$ satisfying the same requirements in Lemma \ref{lemma::AiB strong}, we have
    \begin{align*}    &\abs{\EE\brac{g^{\star}(\bx^{\top}\bA_1\bx,\dots,\bx^{\top}\bA_r\bx)\left\langle \bA_i,\bx\bx^{\top}-\bI\right\rangle((\bu_j^{\top}\bx)^2-1)}} \lesssim \frac{Lr^2\normA \log^2 d}{\sqrt{d}},\\
    &\abs{\EE\brac{g^{\star}(\bx^{\top}\bA_1\bx,\dots,\bx^{\top}\bA_r\bx)\left\langle \bB,\bx\bx^{\top}-\bI\right\rangle((\bu_j^{\top}\bx)^2-1)}} \lesssim \paren{\frac{r\normA}{\sqrt{d}}+\frac{\norm{\bB_j}{op}}{\norm{\bB_j}{F}}}\norm{\bB_j}{F}Lr\log^2 d
    \end{align*}
    for any $j=1,2$, and we further have
    \begin{align*}
        \abs{\EE\brac{g^{\star}(\bx^{\top}\bA_1\bx,\dots,\bx^{\top}\bA_r\bx)((\bu_1^{\top}\bx)^2-1)((\bu_2^{\top}\bx)^2-1)}} \lesssim \frac{Lr^2\normA \log^2 d}{\sqrt{d}}.
    \end{align*}
\end{corollary}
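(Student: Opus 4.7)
The unifying observation is that $(\bu^\top\bx)^2 - 1 = \langle \bM, \bx\bx^\top - \bI\rangle$ for the traceless matrix $\bM := \bu\bu^\top - \bI/d$, which satisfies $\norm{\bM}{\rm F}, \norm{\bM}{op} = 1 - \cO(1/d)$ but is only \emph{approximately} orthogonal to $\bA_i$: by Assumption \ref{assump::features}, $|\langle \bM_j, \bA_i\rangle| = |\bu_j^\top \bA_i \bu_j| \le \normA/\sqrt d$. This rewriting casts each of the three claims into the "two quadratic forms" template of Lemma \ref{lemma::Ai Aj}, Lemma \ref{lemma::B1B2}, and Corollary \ref{lemma::AiB weak}, with $\bM_j$ in place of a second $\bA$- or $\bB$-matrix.

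For the first and third identities, I would mimic the proof of Lemma \ref{lemma::Ai Aj} but invoke the enhanced Wasserstein bound \eqref{equ::W1 u} of Lemma \ref{lemma::Gaussian W1}, which jointly approximates $(\bx^\top\bA_1\bx,\dots,\bx^\top\bA_r\bx,\bu_1^\top\bx,\dots,\bu_s^\top\bx)$ by $\cN(\bzero_{r+s},\bI_{r+s})$ within Wasserstein-$1$ error $\widetilde\cO((r+s)^2\normA/\sqrt d)$. Define the test functions
\begin{align*}
H_1(\bz,\bw) = g^\star(\bz)\, z_i\, (w_j^2 - 1), \qquad H_3(\bz,\bw) = g^\star(\bz)(w_1^2 - 1)(w_2^2 - 1),
\end{align*}
clip $\bz$ and $\bw$ at radius $R = C\log d$ as in Lemma \ref{lemma::Ai Aj}, and check via \eqref{equ::bound grad g} and the polynomial concentration of Lemma \ref{lemma::poly concentration sphere} that the clipped versions have Lipschitz constant $\widetilde\cO(r^{(p-1)/2})$ up to polylog factors. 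The Wasserstein step then contributes at most $\widetilde\cO(L r^2 \normA/\sqrt d)$ error. It remains to compute the Gaussian benchmark: since $\EE_\bx[\bx^\top\bA_k\bx \cdot \bu_j^\top\bx] = 0$ on the sphere by parity, the quadratic coordinates $z_k$ and the linear coordinates $w_j$ are uncorrelated and hence \emph{independent} in the jointly Gaussian limit. For $H_1$ this yields $\EE_\cN[H_1] = \EE_\cN[g^\star(\bz) z_i] \cdot \EE[w_j^2 - 1] = 0$. For $H_3$ the benchmark factorizes as $\EE_\cN[g^\star(\bz)] \cdot \EE[(w_1^2 - 1)(w_2^2 - 1)]$; Assumption \ref{assump::preprocess target function} together with a second invocation of Lemma \ref{lemma::Gaussian W1} gives $|\EE_\cN[g^\star(\bz)]| = \widetilde\cO(L r^2 \normA/\sqrt d)$, while the second factor equals $2(\bu_1^\top\bu_2)^2 = \cO(1)$.

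For the middle identity the direct universality approach fails because $\langle \bB, \bx\bx^\top - \bI\rangle$ is not a function of $(\bz,\bw)$. I would instead decompose $\bM_j = \bM_j^\parallel + \bM_j^\perp$, where $\bM_j^\parallel = \sum_{i=1}^r \langle \bM_j,\bA_i\rangle \bA_i/\norm{\bA_i}{\rm F}^2$ and $\bM_j^\perp \perp \spn\{\bA_i\}$ in Frobenius inner product. Using $|\langle \bM_j,\bA_i\rangle| \le \normA/\sqrt d$ and $\norm{\bA_i}{\rm F} = \Theta(1)$ gives $\norm{\bM_j^\parallel}{\rm F} \lesssim \sqrt r\, \normA/\sqrt d$, while $\bM_j^\perp$ remains traceless with $\norm{\bM_j^\perp}{op}/\norm{\bM_j^\perp}{\rm F} = \cO(1)$. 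Applying Corollary \ref{lemma::AiB weak}'s second form with $\bB_1 = \bB, \bB_2 = \bM_j^\perp$ and its first form to each $\bA_i$ summand inside $\bM_j^\parallel$, the parallel contribution picks up a factor $\sqrt r\, \normA/\sqrt d$ that assembles into the $r\normA/\sqrt d$ term in the stated bound, and the $\bM_j^\perp$ contribution produces the $\norm{\bB}{op}/\norm{\bB}{\rm F}$ term.

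The main obstacle is the bookkeeping in the $\bB$-identity: one must verify that the small coefficients $|\langle \bM_j,\bA_i\rangle| = \cO(\normA/\sqrt d)$, combined with $r$ summands and the intrinsic $\norm{\bM_j^\perp}{op}/\norm{\bM_j^\perp}{\rm F}$ ratio, reassemble exactly into the claimed $(r\normA/\sqrt d + \norm{\bB}{op}/\norm{\bB}{\rm F})\norm{\bB}{\rm F}$ prefactor without a spurious extra $\sqrt r$. The $H_1$ and $H_3$ computations are more conceptual and hinge only on sphere parity and on pushing the Assumption \ref{assump::preprocess target function} cancellation through a second application of Lemma \ref{lemma::Gaussian W1}.
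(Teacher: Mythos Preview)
Your treatment of the first and third inequalities is correct and matches the paper's intended argument: invoke the extended Wasserstein bound \eqref{equ::W1 u} with the linear coordinates $\bu_j^\top\bx$ adjoined, clip at radius $R=C\log d$, and check that the Gaussian benchmark vanishes (for $H_1$) or is $\widetilde\cO(Lr^2\normA/\sqrt d)$ (for $H_3$, using $\EE_\bx[f^\star]=0$).

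The middle inequality, however, has a real gap. Your plan is to write $(\bu_j^\top\bx)^2-1=\langle \bM_j,\bx\bx^\top-\bI\rangle$, split $\bM_j=\bM_j^\parallel+\bM_j^\perp$, and feed $\bB_1=\bB$, $\bB_2=\bM_j^\perp$ into the second estimate of Corollary~\ref{lemma::AiB weak}. But that estimate carries the term $\norm{\bM_j^\perp}{op}/\norm{\bM_j^\perp}{\rm F}$, and since $\bM_j=\bu_j\bu_j^\top-\bI/d$ is essentially rank one, this ratio is $1-\cO(1/d)$, not small. The resulting bound is $\bigl(\tfrac{r\normA}{\sqrt d}+\tfrac{\norm{\bB}{op}}{\norm{\bB}{\rm F}}+1\bigr)\norm{\bB}{\rm F}Lr\log^2 d$, which is useless compared to the claimed $\bigl(\tfrac{r\normA}{\sqrt d}+\tfrac{\norm{\bB}{op}}{\norm{\bB}{\rm F}}\bigr)\norm{\bB}{\rm F}Lr\log^2 d$. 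The obstacle you flag (a stray $\sqrt r$) is not the issue; the dominant error is this order-one ratio.

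The paper's route avoids this entirely. Rather than encoding $(\bu_j^\top\bx)^2-1$ as a bad quadratic form, it keeps $\bu_j^\top\bx$ as a \emph{linear} coordinate and applies Lemma~\ref{prop::extend chaterjee} to the joint vector $(\bx^\top\bA_1\bx,\dots,\bx^\top\bA_r\bx,\,\bx^\top\bB\bx,\,\bu_j^\top\bx)$. The linear feature contributes only $\cO(1/\sqrt d)$ to the Wasserstein bound (as computed in the proof of \eqref{equ::W1 u}), while $\bB$ contributes $\norm{\bB}{op}/\norm{\bB}{\rm F}$ exactly as in Corollary~\ref{lemma::AiB weak}. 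Running the same clip-and-Lipschitz argument with test function $H(\bz,y,w)=g^\star(\bz)\,y\,(w^2-1)$ and noting that the Gaussian benchmark vanishes (the $y$-coordinate is mean-zero and independent of $\bz,w$) gives the stated bound directly.
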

With the lemmas above, we can derive a stronger version of Corollary \ref{lemma::AiB weak}, i.e., Lemma \ref{lemma::AiB strong}, in which the error gets rid of the dependence on $\norm{\bB}{op}$.

\begin{proof}[Proof of Lemma \ref{lemma::AiB strong}]
    Let $\tau > 1/\sqrt{d}$ be a threshold to be determined later. Decompose $\bB$ as follows:
    \begin{align*}
        \bB = \sum_{j=1}^d \lambda_j \bu_j\bu_j^{\top} = \sum_{\abs{\lambda_j} > \tau} \lambda_j \left(\bu_j \bu_j^{\top} - \frac{1}{d}\bI\right) - \sum_{k=1}^{r}\frac{1}{\norm{\bA_k}{F}^2} \sum_{\abs{\lambda_j} > \tau}\lambda_j \bu_j^{\top}\bA_k\bu_j \cdot \bA_k + \widetilde \bB,
    \end{align*}
    where $\set{u_j}_{i=1}^{d}$ are orthogonal unit vectors and
    \begin{align*}
        \widetilde \bB = \sum_{\abs{\lambda_j} \le \tau} \lambda_j\bu_j\bu_j^{\top} + \bI \cdot \frac{1}{d}\sum_{\abs{\lambda_j} > \tau}\lambda_j + \sum_{k=1}^{r}\frac{1}{\norm{\bA_k}{F}^2} \sum_{\abs{\lambda_j} > \tau}\lambda_j \bu_j^{\top}\bA_k\bu_j \cdot \bA_k.
    \end{align*}
    By construction, we have
    \begin{align*}
        \tr(\widetilde \bB) = \sum_{\abs{\lambda_j} \le \tau} \lambda_j + \sum_{\abs{\lambda_j} > \tau}\lambda_j = \sum_{j \in [d]} \lambda_j = 0.
    \end{align*}
    Moreover, for any $k \in[r]$, we have
    \begin{align*}
        \langle \widetilde \bB, \bA_k \rangle = \sum_{\abs{\lambda_j} \le \tau} \lambda_j \bu_j^{\top}\bA_k\bu_j + \sum_{\abs{\lambda_j} > \tau}\lambda_j \bu_j^{\top}\bA_k\bu_j = \langle \bA_k, \bB \rangle = 0.
    \end{align*}
    Therefore by Lemma \ref{lemma::AiB weak},  we have
    \begin{align}
&\quad\abs{\EE\brac{g^{\star}(\bx^{\top}\bA_1\bx,\dots,\bx^{\top}\bA_r\bx)\langle \bA_i,\bx\bx^{\top}-\bI\rangle \langle \widetilde \bB,\bx\bx^{\top}-\bI\rangle}} \nonumber\\
&\lesssim \paren{\frac{r\normA}{\sqrt{d}}+\frac{\norm{\widetilde \bB}{op}}{\norm{\widetilde \bB}{F}}}\norm{\widetilde \bB}{F}Lr\log^2 d .\label{equ::bound tilde B}
    \end{align}
    Since $\sum_{j=1}^{d}\lambda^2_j=\norm{\bB}{\rm F}^{2}=(d+2)/(2d
)=\cO(1)$,  there are at most $O(\tau^{-2})$ indices $j$ satisfying $\abs{\lambda_j} > \tau$, which gives rise to
    \begin{align*}
        \sum_{\abs{\lambda_j} > \tau}\abs{\lambda_j} \lesssim \sqrt{\tau^{-2} \cdot \sum_{\abs{\lambda_j} > \tau}\abs{\lambda_j}^2} \le \tau^{-1}.
    \end{align*}
    Thus, we can bound the Frobenius norm of $\widetilde \bB$ by
    \begin{align*}
        \norm{\widetilde \bB}{F}^2 &\lesssim \sum_{\abs{\lambda_j}\le \tau}\lambda_j^2 + \frac{1}{d}\left(\sum_{\lambda_j > \tau}\lambda_j\right)^2 + \norm{\sum_{k=1}^{r}\frac{1}{\norm{\bA_k}{F}^2}\sum_{\abs{\lambda_j} > \tau}\lambda_j \bu_j^{\top}\bA_k\bu_j\cdot \bA_k}{F}^2\\
        &= \norm{\widetilde \bB}{F}^2 \sum_{\abs{\lambda_j}\le \tau}\lambda_j^2 + \frac{1}{d}\left(\sum_{\lambda_j > \tau}\lambda_j\right)^2 + \sum_{k=1}^{r}\paren{\sum_{\abs{\lambda_j} > \tau}\lambda_j \bu_j^{\top}\bA_k\bu_j}^2 \\
        &\lesssim  \sum_{\abs{\lambda_j}\le \tau}\lambda_j^2 +\left(\frac{1}{d}+\sum_{k=1}^{r}\norm{\bA_k}{op}^2\right) \paren{\sum_{\abs{\lambda_j} > \tau} \lambda_j}^2\\
        &\lesssim 1 + \frac{r\normA^2}{d\tau^2}.
    \end{align*}
    Thus, we have $ \norm{\widetilde \bB}{F}\lesssim 1+\frac{\sqrt{r}\normA}{\sqrt{d}\tau}$
    and
    \begin{align*}
        \norm{\widetilde \bB}{op} &\le \tau + \left(\frac{1}{d} + \sum_{k=1}^r\norm{\bA_k}{op}\right)\abs{\sum_{\abs{\lambda_j} > \tau}\lambda_ju_j^{\top}Au_j}\\
        &\lesssim \tau + \frac{r\normA}{d\tau}.
    \end{align*}
    Thus, plugging the norm bounds into \eqref{equ::bound tilde B}, we obtain that
    \begin{align*}
    \abs{\EE\brac{g^{\star}(\bx^{\top}\bA_1\bx,\dots,\bx^{\top}\bA_r\bx)\langle \bA_i,\bx\bx^{\top}-\bI\rangle \langle \widetilde \bB,\bx\bx^{\top}-\bI\rangle}} 
    \lesssim \left(
     \frac{r \normA}{\sqrt{d}} + \frac{r^{3/2}\normA^2}{d\tau}+\tau + \frac{r\normA}{d\tau}\right)Lr \log^2 d.
    \end{align*}  
    Next, applying Corollary \ref{lemma::Ai u}  with $\bu=\bu_1$, $\bu_2$, $\dots$, $\bu_d$, we have
    \begin{align*}    \abs{\EE\brac{g^{\star}(\bx^{\top}\bA_1\bx,\dots,\bx^{\top}\bA_r\bx)\left\langle \bA_i,\bx\bx^{\top}-\bI\right\rangle\langle\bu_j\bu_j^{\top}-
    \bI/d,\bx\bx^{\top}-\bI\rangle}} \lesssim \frac{Lr^2\normA \log^2 d}{\sqrt{d}},~~~\forall j \in[d].
    \end{align*}
    Thus, we have
    \begin{align*}
&\quad \abs{\EE\brac{g^{\star}(\bx^{\top}\bA_1\bx,\dots,\bx^{\top}\bA_r\bx)\left\langle \bA_i,\bx\bx^{\top}-\bI\right\rangle \left\langle  \sum_{\abs{\lambda_j} > \tau}\lambda_j\left(\bu_j\bu_j^{\top} - \frac{1}{d}\bI\right),\bx\bx^{\top}-\bI\right\rangle}} \\
         &\le \sum_{\abs{\lambda_j} > \tau}\abs{\lambda_j}\frac{Lr^2\normA \log^2 d}{\sqrt{d}}\\
        &\lesssim \frac{Lr^2\normA \log^2 d}{\sqrt{d}\tau}.
    \end{align*}
    Besides, by Lemma \ref{lemma::Ai Aj}, we have
    \begin{align*}
    &\quad \abs{\EE\brac{g^{\star}(\bx^{\top}\bA_1\bx,\dots,\bx^{\top}\bA_r\bx)\left\langle \bA_i,\bx\bx^{\top}-\bI\right\rangle \left\langle \sum_{k=1}^{r}\frac{1}{\norm{\bA_k}{F}^2} \sum_{\abs{\lambda_j} > \tau}\lambda_j \bu_j^{\top}\bA_k\bu_j \cdot \bA_k ,\bx\bx^{\top}-\bI\right\rangle}}\\
    &\lesssim \sum_{k=1}^{r}\abs{\sum_{\abs{\lambda_j} > \tau}\lambda_j \bu_j^{\top}\bA_k\bu_j} \left(\EE_{\bz \sim \cN(\bzero_r,\bI_r)}\brac{\nabla^2 g(\bz)}_{k,i} +\frac{Lr^2\normA \log^2 d}{\sqrt{d}} \right)\\
    &\lesssim \sum_{k=1}^{r} L\abs{\sum_{\abs{\lambda_j} > \tau}\lambda_j \bu_j^{\top}\bA_k\bu_j} \\ &\lesssim \frac{Lr\normA}{\sqrt{d}\tau}.
    \end{align*}
    Altogether, we have
    \begin{align*}
&\quad\abs{\EE\brac{g^{\star}(\bx^{\top}\bA_1\bx,\dots,\bx^{\top}\bA_r\bx)\langle \bA_i,\bx\bx^{\top}-\bI\rangle \langle \bB,\bx\bx^{\top}-\bI\rangle}}\\
        &\lesssim \left(
     \frac{r \normA}{\sqrt{d}} + \frac{r^{3/2}\normA^2}{d\tau}+\tau + \frac{r\normA}{d\tau}\right)Lr \log^2 d+ \frac{Lr^2\normA \log^2 d}{\sqrt{d}\tau}+ \frac{Lr\normA}{\sqrt{d}\tau}\\
        &\lesssim d^{-1/4}Lr^2\normA\log^2 d. 
    \end{align*}
    where we set $\tau = \normA d^{-1/4}$. The proof is complete.
\end{proof}
Following the proof above, we can complete the proof of Lemma \ref{lemma::B1B2}.
 \begin{proof}[Proof of Lemma \ref{lemma::B1B2}]
Similar to the proof of Lemma \ref{lemma::AiB strong}, we decompose $\bB_1$ and $\bB_2$ as follows:
  \begin{align*}
        \bB_i &= \sum_{j=1}^d \lambda_{i,j} \bu_{i,j}\bu_{i,j}^{\top}\\
        &= \sum_{\abs{\lambda_{i,j}} > \tau} \lambda_{i,j} \left(\bu_{i,j} \bu_{i,j}^{\top} - \frac{1}{d}\bI\right) - \sum_{k=1}^{r}\frac{1}{\norm{\bA_k}{F}^2} \sum_{\abs{\lambda_{i,j}} > \tau}\lambda_{i,j} \bu_{i,j}^{\top}\bA_k\bu_{i,j} \cdot \bA_k + \widetilde \bB_i,
    \end{align*}
    where $\set{u_{i,j}}_{j=1}^{d}$ are orthogonal unit vectors for $i=1,2$, respectively, and
    \begin{align*}
        \widetilde \bB_i = \sum_{\abs{\lambda_{i,j}} \le \tau} \lambda_{i,j}\bu_{i,j}\bu_{i,j}^{\top} + \bI \cdot \frac{1}{d}\sum_{\abs{\lambda_{i,j}} > \tau}\lambda_{i,j} + \sum_{k=1}^{r}\frac{1}{\norm{\bA_k}{F}^2} \sum_{\abs{\lambda_{i,j}} > \tau}\lambda_{i,j} \bu_{i,j}^{\top}\bA_k\bu_{i,j} \cdot \bA_k.
    \end{align*}
    Then following the proof of Lemma \ref{lemma::AiB strong}, we know for any $i=1,2$ and $k=1,2,\dots,r$,
    \begin{align*}
        \tr\paren{\widetilde \bB_{i}}=\langle \widetilde \bB_i, \bA_k \rangle=0,~~\norm{\widetilde \bB_{i}}{\rm F} \lesssim 1 + \frac{\sqrt{r}\normA}{\sqrt{d}\tau},  ~~\norm{\widetilde \bB_{i}}{\rm op} \lesssim \tau+\frac{r\normA}{d\tau}, ~~\sum_{\abs{\lambda_{i,j}} > \tau}\abs{\lambda_{i,j}}\lesssim \tau^{-1} .
    \end{align*}
    Let's denote the bi-linear operator $\Gamma(\cdot,\cdot):\RR^{d\times d}\times \RR^{d\times d} \rightarrow \RR$ being 
    \begin{align}\label{equ::define operator}
        \Gamma(\bA,\bB)=\EE\brac{f^{\star}(\bx)\langle\bA,\bx\bx^{\top}-\bI\rangle\langle\bB,\bx\bx^{\top}-\bI\rangle}.
    \end{align}
    By Corollary \ref{lemma::AiB weak} and the proof of Lemma \ref{lemma::AiB strong}, we have 
  \begin{align}
&\abs{\Gamma(\widetilde \bB_1,\widetilde \bB_2) }  \lesssim\left(\frac{r\normA}{\sqrt{d}}\left(1+\frac{r\normA^2}{d\tau^2}\right)+\left(\tau+\frac{r\normA}{d\tau}\right)\left(1+\frac{\sqrt{r}}{\sqrt{d}\tau}\right)\right)Lr \log^2 d. \label{equ::tildeb1b2} \\
& \abs{\Gamma\paren{\widetilde \bB_i, \sum_{\abs{\lambda_{-i,j}} > \tau}\lambda_{-i,j}\left(\bu_{-i,j}\bu_{-i,j}^{\top} - \bI/d\right)} }\nonumber \\&\hspace{5em}\lesssim \tau^{-1}\left(
     \frac{r \normA}{\sqrt{d}} + \frac{r^{3/2}\normA^2}{d\tau}+\tau + \frac{r\normA}{d\tau}\right)Lr \log^2 d   \label{equ::tildeB1 u}     \\
        & \abs{\Gamma\paren{\widetilde \bB_i, \sum_{k=1}^{r}\frac{1}{\norm{\bA_k}{F}^2} \sum_{\abs{\lambda_{-i,j}} > \tau}\lambda_{-i,j} \bu_{-i,j}^{\top}\bA_k\bu_{-i,j} \cdot \bA_k }}\nonumber \\&\hspace{5em} \lesssim \tau^{-1}\left(
     \frac{r \normA}{\sqrt{d}} + \frac{r^{3/2}\normA^2}{d\tau}+\tau + \frac{r\normA}{d\tau}\right)Lr \log^2 d   . \label{equ::tildeB1 A}
    \end{align}
    Here $-i$ means $2$ when $i=1$ and $1$ when $i=2$.
Moreover, by Lemma \ref{lemma::Ai Aj}, we have 
  \begin{align}
        & \quad \abs{\Gamma\paren{\sum_{k=1}^{r}\frac{1}{\norm{\bA_k}{F}^2} \sum_{\abs{\lambda_{1,j}} > \tau}\lambda_{1,j} \bu_{1,j}^{\top}\bA_k\bu_{1,j} \cdot \bA_k , \sum_{k=1}^{r}\frac{1}{\norm{\bA_k}{F}^2} \sum_{\abs{\lambda_{2,j}} > \tau}\lambda_{2,j} \bu_{2,j}^{\top}\bA_k\bu_{2,j} \cdot \bA_k }} \nonumber \\ 
        &\lesssim \paren{\sum_{k=1}^{r} \abs{\sum_{\abs{\lambda_{1,j}} > \tau}\lambda_{1,j} \bu_{1,j}^{\top}\bA_k\bu_{1,j}}}\paren{\sum_{k=1}^{r} \abs{\sum_{\abs{\lambda_{2,j}} > \tau}\lambda_{2,j} \bu_{2,j}^{\top}\bA_k\bu_{2,j}}}\left(1 +\frac{Lr^2\normA \log^2 d}{\sqrt{d}} \right) \nonumber\\
        &  \lesssim \frac{r^2\normA^2}{d\tau^2} \label{equ::AA},
    \end{align}
and 
\begin{align}
        & \quad \abs{\Gamma\paren{\sum_{k=1}^{r}\frac{1}{\norm{\bA_k}{F}^2} \sum_{\abs{\lambda_{i,j}} > \tau}\lambda_{i,j} \bu_{i,j}^{\top}\bA_k\bu_{i,j} \cdot \bA_k ,\sum_{\abs{\lambda_{-i,j}} > \tau}\lambda_{-i,j}\left(\bu_{-i,j}\bu_{-i,j}^{\top} - \bI/d\right) } }\nonumber \\ 
        &\lesssim \paren{\sum_{k=1}^{r} \abs{\sum_{\abs{\lambda_{i,j}} > \tau}\lambda_{i,j} \bu_{i,j}^{\top}\bA_k\bu_{i,j}}}\paren{\sum_{\abs{\lambda_{-i,j}} > \tau}\abs{\lambda_{-i,j}}}\frac{Lr^2\normA \log^2 d}{\sqrt{d}}\nonumber \\
        &\lesssim \frac{r\normA}{\sqrt{d}\tau^2}\cdot \frac{Lr^2\normA \log^2 d}{\sqrt{d}}.
    \end{align}
Finally, we have 

    \begin{align}
        & \quad \abs{\Gamma\paren{\sum_{\abs{\lambda_{1,j}} > \tau}\lambda_{1,j}\left(\bu_{1,j}\bu_{1,j}^{\top} - \bI/d\right),\sum_{\abs{\lambda_{2,j}} > \tau}\lambda_{2,j}\left(\bu_{2,j}\bu_{2,j}^{\top} - \bI/d\right) } }\nonumber \\ 
        &\lesssim \paren{\sum_{\abs{\lambda_{1,j}} > \tau}\abs{\lambda_{1,j}}}\paren{\sum_{\abs{\lambda_{2,j}} > \tau}\abs{\lambda_{2,j}}}\frac{Lr^2\normA \log^2 d}{\sqrt{d}}.\nonumber \\
        &\lesssim \frac{1}{\tau^2}\cdot \frac{Lr^2\normA \log^2 d}{\sqrt{d}} .\label{equ::u1u2}
    \end{align}
Summing \eqref{equ::tildeb1b2} to \eqref{equ::u1u2} altogether, we have 
\begin{align*}
    \abs{\Gamma(\bB_1,\bB_2)}&\lesssim \left(\frac{r\normA}{\sqrt{d}}\left(1+\frac{r\normA^2}{d\tau^2}\right)+\left(\tau+\frac{r\normA}{d\tau}\right)\left(1+\frac{\sqrt{r}}{\sqrt{d}\tau}\right)\right)Lr \log^2 d\\
    &\quad+\tau^{-1}\left(
     \frac{r \normA}{\sqrt{d}} + \frac{r^{3/2}\normA^2}{d\tau}+\tau + \frac{r\normA}{d\tau}\right)Lr \log^2 d  \\
    &\quad+\frac{r}{\sqrt{d}\tau} \cdot \frac{Lr^2\normA \log^2 d}{\sqrt{d}}+\frac{r^2\normA^2}{d\tau^2}+\frac{r\normA}{\sqrt{d}\tau^2}\cdot \frac{Lr^2\normA \log^2 d}{\sqrt{d}}+ \frac{1}{\tau^2}\cdot \frac{Lr^2\normA \log^2 d}{\sqrt{d}}\\
    &\lesssim  d^{-1/6}Lr^2\normA\log^2 d,
\end{align*}
where we take $\tau=\normA d^{-1/6}$. The proof is complete.
   \end{proof}

\subsection{Boundedness of the learned feature}\label{sec::bounded feature}
In this section, we aim to upper bound the magnitude of the learned feature $\langle \bw, \mathbf{h}^{(1)}(\bx') \rangle$. Since we focus on the first training stage throughout this section, we denote $n=n_1$ for notation simplicity when the context is clear, and let the training set be $\cD_1=\{\bx_1,\bx_2,\dots,\bx_n\}$. We have the following proposition:
\begin{proposition}\label{prop::bounded feature}
   Suppose $m_2\geq d^{4}C_{\sigma}^4$ and $n \geq C\iota^2d^2$ for some sufficiently large $C$. With high probability jointly on $\bV$ and the training dataset $\cD_1$, and with probability at least $1-4n\exp(-\iota^2/2)$ on $\bw$, for any $\bx' \in \cD_2$, we have
   \begin{align*}
       \abs{\langle \bw, \mathbf{h}^{(1)}(\bx') \rangle} \lesssim \frac{\iota^{p+2}}{d^3}+\frac{\iota^{p+2}\sqrt{m_2}}{d^4\sqrt{n}}+\frac{\sqrt{m_2}\iota^3 \log^2(m_2n_2)}{d^6}\cdot\paren{\norm{\cP_{>2}(f^{\star})}{L^2}+\sqrt{d}\norm{\cP_{2}(f^{\star})}{L^2}}.
   \end{align*}
\end{proposition}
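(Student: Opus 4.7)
The plan is to separate the Gaussian randomness of $\bw$ from the sample and first-layer randomness (already controlled by earlier lemmas). Since $\mathbf{h}^{(1)}(\bx')$ is a deterministic function of $(\bV,\cD_1,\bx')$, the scalar $\langle \bw,\mathbf{h}^{(1)}(\bx')\rangle$ is a zero-mean Gaussian in $\bw$, but a naive tail bound of $\iota\|\mathbf{h}^{(1)}(\bx')\|$ is not sharp enough to separate the three error sources in the stated bound. Instead I would write
\begin{align*}
\langle \bw,\mathbf{h}^{(1)}(\bx')\rangle \;=\; \frac{1}{n}\sum_{i=1}^{n}\psi(\bx_i),\qquad \psi(\bx):=\frac{1}{m_2}\, f^{\star}(\bx)\,\langle \bh^{(0)}(\bx),\bh^{(0)}(\bx')\rangle\,\langle \bw,\bh^{(0)}(\bx)\rangle,
\end{align*}
and decompose it as $\EE_{\bx}[\psi(\bx)] + \bigl(\tfrac{1}{n}\sum_i\psi(\bx_i)-\EE_{\bx}[\psi(\bx)]\bigr)$, with the two pieces roughly corresponding to the ``signal + noise'' contributions $\iota^{p+2}/d^3 + \sqrt{m_2}\iota^3\log^2(m_2n_2)/d^6\cdot(\cdots)$ and the sampling error $\iota^{p+2}\sqrt{m_2}/(d^4\sqrt{n})$, respectively.

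The expectation term $\EE_{\bx}[\psi(\bx)]$ is handled by expanding $\sigma_2=\sum_{k\ge 2}c_k Q_k$ and splitting $f^{\star}=\cP_2(f^{\star})+\cP_4(f^{\star})+\cP_{\ge 6}(f^{\star})$ (the odd harmonic components vanish since $f^{\star}$ is even in $\bx$). Substituting $\langle \bh^{(0)}(\bx),\bh^{(0)}(\bx')\rangle=\sum_j\sigma_2(\bv_j^\top\bx)\sigma_2(\bv_j^\top\bx')$ and $\langle \bw,\bh^{(0)}(\bx)\rangle=\sum_k w_k\,\sigma_2(\bv_k^\top\bx)$ produces a triple Gegenbauer sum. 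Via the reproducing identity \eqref{eq:ProductGegenbauer} and the product-linearization estimate of Lemma \ref{lemma::coeff bound linearization}, the surviving dominant piece is the $Q_2\!\cdot\!Q_2\!\cdot\!Q_2$ interaction with $\cP_4(f^{\star})$; by Proposition \ref{prop::approximate stein lemma quad} this reduces to a linear combination of the quadratic features $\bx'^{\top}\bA_k\bx'$ with coefficients of size $\cO(1/d^3)$. An additional Gaussian-concentration factor $\iota$ in $\bw$, together with the polynomial size bound $|f^{\star}(\bx)|\lesssim \iota^{p/2}$ (Lemma \ref{lemma::poly concentration sphere}), yields the leading $\iota^{p+2}/d^3$ term. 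The residual cross-terms involving $\cP_{2}(f^{\star})$ and the higher-harmonic tails $\cP_{>2}(f^{\star})$ are orthogonal to this signal, and by \eqref{eq:ProductGegenbauer} and Cauchy--Schwarz contribute at most $\sqrt{m_2}\iota^3\log^2(m_2n_2)/d^6\cdot(\norm{\cP_{>2}(f^{\star})}{L^2}+\sqrt{d}\norm{\cP_2(f^{\star})}{L^2})$, where the extra $\sqrt{m_2}$ comes from $\|\bw\|$.

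For the deviation $\tfrac{1}{n}\sum_i\psi(\bx_i)-\EE_{\bx}[\psi(\bx)]$, conditioning on $\bw$ and $\bV$ makes it a centered average of $n$ i.i.d.\ bounded-degree polynomials in $\bx$ on the sphere. I would estimate $\mathrm{Var}_{\bx}(\psi)$ via Gegenbauer orthogonality and use spherical hypercontractivity (Lemma \ref{lemma::poly concentration sphere}) to derive a tail concentration of order $\iota^{p+2}\sqrt{m_2/n}/d^4$, yielding the middle term. The stated probability $4n\exp(-\iota^2/2)$ arises from the combination of two simultaneous Gaussian-tail union bounds: controlling $|\langle \bw,\bh^{(0)}(\bx_i)\rangle|\le \iota\|\bh^{(0)}(\bx_i)\|$ uniformly in $i\in[n]$ (contributing $2n$) and a matched set of $2n$ events needed to bound the cross-inner-products that appear in $\psi(\bx_i)$. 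The principal technical obstacle will be disentangling the $\cP_4(f^{\star})$ contribution from the higher-degree Gegenbauer tails while preserving the $1/d^6$ scaling on the noise term, which requires the quantitative linearization estimate of Lemma \ref{lemma::coeff bound linearization} rather than merely its asymptotic form.
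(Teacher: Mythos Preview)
Your two-term decomposition (expectation + deviation) misses the key step that drives the paper's proof: inserting the \emph{infinite-width} kernel $K^{(0)}(\bx,\bx')=\EE_{\bv}[\sigma_2(\bv^{\top}\bx)\sigma_2(\bv^{\top}\bx')]$ as an intermediate. The paper's decomposition is threefold: $A_1$ replaces $K^{(0)}_{m_2}$ by $K^{(0)}$ inside the empirical sum, $A_2$ is the empirical-to-expectation error (already with $K^{(0)}$), and $A_3$ is the pure expectation $\frac{1}{m_2}\sum_j w_j\,\EE_{\bx}[f^{\star}(\bx)K^{(0)}(\bx,\bx')\sigma_2(\bv_j^{\top}\bx)]$. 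The term $\iota^{p+2}/d^3$ comes \emph{entirely} from $A_1$ --- a crude product bound $|f^{\star}|\le \iota^{p}$ (note: $f^{\star}$ is degree $2p$, not $p$), $|K^{(0)}_{m_2}-K^{(0)}|\le \iota/(\sqrt{m_2}d^2)$, and $|\tfrac{1}{m_2}\sum_j w_j\sigma_2(\bv_j^{\top}\bx)|\le \iota/(\sqrt{m_2}d)$. It has nothing to do with Proposition~\ref{prop::approximate stein lemma quad} or the $Q_2Q_2Q_2$ signal. Conversely, \emph{all} of $A_3$ --- signal included --- is absorbed into the third term: the paper bounds $h(\bv,\bx'):=\EE_{\bx}[f^{\star}(\bx)K^{(0)}(\bx,\bx')\sigma_2(\bv^{\top}\bx)]$ uniformly via Gegenbauer orthogonality and the linearization estimate (Lemma~\ref{lemma::coeff bound linearization}), then applies one Gaussian tail bound to $\sum_j w_j h(\bv_j,\bx')$. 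The $\cP_4(f^{\star})$ piece sits inside $\|\cP_{>2}(f^{\star})\|_{L^2}$; there is no separate $1/d^3$ signal term.

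Two further technical points would block your argument as written. First, Proposition~\ref{prop::approximate stein lemma quad} is a \emph{structural} result about the operator $T$ used for feature reconstruction (Proposition~\ref{prop::reconstructed feature main}); it plays no role in the boundedness proof, which only needs the magnitude estimate of Lemma~\ref{lemma::bound h}. Second, your concentration argument for the deviation conditions on $\bw,\bV$ and asserts $\psi(\bx)$ is a bounded-degree polynomial in $\bx$, but $\sigma_2=\sum_{i\ge 2}c_iQ_i$ is an infinite Gegenbauer series under Assumption~\ref{assump::activation}, so $K^{(0)}_{m_2}(\bx,\bx')$ is generally not polynomial. The paper instead truncates $|f^{\star}|$ and $|\tfrac{1}{m_2}\sum_j w_j\sigma_2(\bv_j^{\top}\bx)|$ and applies Bernstein with a direct variance bound on $K^{(0)}$ through $\sum_{k\ge 2} c_k^4/B(d,k)^3$. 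Without the $K^{(0)}_{m_2}\to K^{(0)}$ step you would need an $L^2(\bx)$-bound on the kernel discrepancy to control your expectation term, which your proposal does not supply.
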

As a corollary, when $m_2 \gtrsim d^4\iota^{2p+4}$, $n \gtrsim d^{4}\iota^{2p+4}$ and $\norm{\cP_{2}(f^{\star})}{L^2} \lesssim \frac{\normP}{\sqrt{d}}$,  we have for any $\bx'\in \cD_2$,
\begin{align}\label{equ::feature bounds final}
    \frac{1}{\sqrt{m_2}}\abs{\langle \bw, \mathbf{h}^{(1)}(\bx') \rangle} \lesssim \frac{\normP\iota^{{5}}}{d^6}.
\end{align}
Thus, by taking the learning rate $\eta=Cm_2{-1/2}\normP^{-1}\iota^{-5}d^6$ for an appropriate constant $C>0$, we can ensure that $\abs{\eta\langle \bw, \mathbf{h}^{(1)}(\bx') \rangle}\le 1 $ with high probability.
\begin{proof}[Proof of Proposition \ref{prop::bounded feature}]
Note that
    \begin{align*}
        \frac{1}{m_2}\langle \bw, \mathbf{h}^{(1)}(\bx') \rangle
&=\frac{1}{m_2n}\sum_{i=1}^{n}\sum_{j=1}^{m_2}w_jf^{\star}(\bx_i)K^{(0)}_{m_2}(\bx_i,\bx')\sigma_2\paren{\bv_j^{\top}\bx_i}\\
        &=\frac{1}{m_2}\sum_{j=1}^{m_2}\frac{1}{n}\sum_{i=1}^{n}f^{\star}(\bx_i)K^{(0)}_{m_2}(\bx_i,\bx')w_j\sigma_2\paren{\bv_j^{\top}\bx_i}.
    \end{align*}
We do a decomposition as follows 
\begin{align*}
    &\quad\frac{1}{m_2}\sum_{j=1}^{m_2}\frac{1}{n}\sum_{i=1}^{n}f^{\star}(\bx_i)K^{(0)}_{m_2}(\bx_i,\bx')w_j\sigma_2\paren{\bv_j^{\top}\bx_i}\\&=\underbrace{\frac{1}{m_2}\sum_{j=1}^{m_2}\frac{1}{n}\sum_{i=1}^{n}f^{\star}(\bx_i)\paren{K^{(0)}_{m_2}(\bx_i,\bx')-K^{(0)}(\bx_i,\bx')}w_j\sigma_2\paren{\bv_j^{\top}\bx_i}}_{A_1}\\
    &\quad +\underbrace{\frac{1}{m_2}\sum_{j=1}^{m_2}\frac{1}{n}\paren{\sum_{i=1}^{n}f^{\star}(\bx_i)K^{(0)}(\bx_i,\bx')w_j\sigma_2\paren{\bv_j^{\top}\bx_i}-\EE_{\bx}\brac{f^{\star}(\bx)K^{(0)}(\bx,\bx')w_j\sigma_2\paren{\bv_j^{\top}\bx}}}}_{A_2}\\
    &\quad +\underbrace{\frac{1}{m_2}\sum_{j=1}^{m_2}w_j\EE_{\bx}\brac{f^{\star}(\bx)K^{(0)}(\bx,\bx')\sigma_2\paren{\bv_j^{\top}\bx}}}_{A_3}.
\end{align*}
We consider derive an upper bound on $A_1$, $A_2$ and $A_3$, respectively.

\begin{lemma}[Bound $A_1$]\label{lemma:: bound A1}
    Suppose $m_2\geq d^{4}C_{\sigma}^4$. With high probability jointly on $\bV$ and the training dataset $\cD_1$, and with probability at least $1-2n\exp(-\iota^2/2)$ on $\bw$, for any $\bx' \in \cD_2$, we have $$\abs{A_1}\lesssim \frac{\iota^{p+2}}{m_2d^3}.$$
\end{lemma}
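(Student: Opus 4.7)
My strategy is to factor $A_1$ so that each source of randomness ($\cD_1$, $\bV$, and $\bw$) can be treated separately, then combine via a union bound over $i \in [n]$. Writing
\[
A_1 \;=\; \frac{1}{n}\sum_{i=1}^{n} f^{\star}(\bx_i)\, a_i\, b_i, \qquad a_i := K_{m_2}^{(0)}(\bx_i,\bx') - K^{(0)}(\bx_i,\bx'), \quad b_i := \frac{1}{m_2}\sum_{j=1}^{m_2} w_j\,\sigma_2(\bv_j^\top \bx_i),
\]
the three factors have distinct sources of randomness: $f^\star(\bx_i)$ depends only on $\cD_1$, $a_i$ is $\bV$-measurable, and $b_i$ is Gaussian in $\bw$ conditional on $\bV$. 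I will bound each factor uniformly in $i \in [n]$ and multiply. Since $f^\star$ is a degree-$2p$ spherical polynomial with $\|f^\star\|_{L^2}=\Theta(1)$, Lemma \ref{lemma::poly concentration sphere} together with a union bound over $\cD_1$ gives $\max_{i \in [n]} |f^\star(\bx_i)| \lesssim \iota^p$ with high probability.

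The critical step, and the main obstacle, is controlling $a_i$. A naive estimate using only the envelope $|\sigma_2|\le C_\sigma$ yields $|a_i| \lesssim C_\sigma^2\sqrt{\iota/m_2}$, which loses a factor of $d^2$ relative to the target. Instead I would apply Bernstein's inequality to the i.i.d.\ centered summands $X_j = \sigma_2(\bv_j^\top\bx_i)\sigma_2(\bv_j^\top\bx') - \EE_{\bv}[\sigma_2(\bv^\top\bx_i)\sigma_2(\bv^\top\bx')]$, using Cauchy--Schwarz and the fourth-moment bound from Assumption \ref{assump::activation} to estimate
\[
\Var(X_j) \;\le\; \paren{\EE_{\bv}[\sigma_2^4(\bv^\top\bx_i)]\,\EE_{\bv}[\sigma_2^4(\bv^\top\bx')]}^{1/2} \;\le\; \frac{C_4}{d^4}.
\]
Combined with the envelope $|X_j| \le C_\sigma^2$, Bernstein yields $|a_i| \lesssim \sqrt{\iota}/(d^2\sqrt{m_2}) + C_\sigma^2 \iota / m_2$, and the hypothesis $m_2 \ge d^4 C_\sigma^4$ (together with $\iota \le d/C$) forces the variance-driven first term to dominate. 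This is where the extra $1/d$ is gained.

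For $b_i$, I condition on $\bV$: it is a centered Gaussian in $\bw$ with conditional variance $V_i = m_2^{-2}\sum_{j=1}^{m_2}\sigma_2(\bv_j^\top \bx_i)^2$. Using $\EE_{\bv}[\sigma_2^2(\bv^\top\bx_i)] \le C_2/d^2$ and a Bernstein concentration over the $m_2$ samples (leveraging the fourth-moment bound once again), $V_i \lesssim 1/(m_2 d^2)$ with high probability over $\bV$. Standard Gaussian tail bounds then yield $|b_i| \lesssim \iota/(d\sqrt{m_2})$ with probability $1-2\exp(-\iota^2/2)$ per $i$, and a union bound over $\cD_1$ produces exactly the stated exceptional probability $2n\exp(-\iota^2/2)$ on $\bw$. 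Multiplying the three pointwise bounds gives
\[
|A_1| \;\lesssim\; \iota^p \cdot \frac{\sqrt{\iota}}{d^2\sqrt{m_2}} \cdot \frac{\iota}{d\sqrt{m_2}} \;=\; \frac{\iota^{p+3/2}}{d^3\,m_2},
\]
which is absorbed into the claimed bound $\iota^{p+2}/(d^3 m_2)$. The principal technical difficulty is recognizing that the fourth-moment structure of $\sigma_2$ must be brought to bear in the Bernstein step for $a_i$; without it, the required $d^{-3}$ scaling cannot be achieved.
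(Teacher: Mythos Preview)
Your proposal is correct and follows essentially the same route as the paper: the same factorization $A_1=\frac{1}{n}\sum_i f^\star(\bx_i)\,a_i\,b_i$, polynomial concentration for $f^\star$, Bernstein with the fourth-moment bound $\EE_{\bv}[\sigma_2^4]\le C_4/d^4$ for the kernel deviation $a_i$, and a Gaussian tail on $b_i$ after controlling its conditional variance by a second Bernstein step. One small imprecision: at the boundary $m_2=d^4C_\sigma^4$ the Bernstein ``Poisson'' term $C_\sigma^2\iota/m_2$ is actually comparable (larger by $\sqrt{\iota}$) to the variance term, so ``the first term dominates'' is only true once $m_2\gtrsim \iota\,d^4C_\sigma^4$; however, that second term's contribution to $|A_1|$ is $C_\sigma^2\iota^{p+2}/(d\,m_2^{3/2})\lesssim \iota^{p+2}/(d^3 m_2)$ under the stated hypothesis, so the final bound is unaffected.
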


\begin{lemma}[Bound $A_2$]\label{lemma:: bound A2}
    Suppose $m_2\geq d^{4}C_{\sigma}^4$ and $n \geq C\iota^2 d^2$ for some sufficiently large $C$. With high probability on the training dataset $\cD_1$, for any $\bx' \in \cD_2$, we have $$\abs{A_2}\lesssim \frac{\iota^{p+2}}{d^4\sqrt{m_2n}}.$$
\end{lemma}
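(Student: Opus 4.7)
The natural strategy is to first integrate out the Gaussian randomness in $\bw$ and then control the empirical deviation over $\cD_1$. Conditional on $(\bV,\cD_1,\bx')$, write
\[
A_2=\frac{1}{m_2}\sum_{j=1}^{m_2}w_j\,\epsilon_j,\quad \epsilon_j:=\frac{1}{n}\sum_{i=1}^{n}\phi_j(\bx_i)-\EE_{\bx}[\phi_j(\bx)],
\]
where $\phi_j(\bx):=f^{\star}(\bx)K^{(0)}(\bx,\bx')\sigma_2(\bv_j^{\top}\bx)$. Since $A_2|(\bV,\cD_1,\bx')$ is a mean-zero Gaussian with variance $\|\bepsilon\|_2^2/m_2^2$, sub-Gaussian concentration gives $|A_2|\lesssim \sqrt{\iota}\,\|\bepsilon\|_2/m_2$ with high probability over $\bw$, and the task reduces to bounding $\|\bepsilon\|_2^2=\sum_j\epsilon_j^2$.

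Each $\epsilon_j$ is an empirical mean of $n$ i.i.d.\ centered variables, so $\EE_{\cD_1}[\epsilon_j^2]\le \EE_{\bx}[\phi_j^2]/n$, and therefore
\[
\EE_{\cD_1}\Bigl[\sum_{j=1}^{m_2}\epsilon_j^2\Bigr]\le \frac{1}{n}\,\EE_{\bx}\Bigl[f^{\star}(\bx)^2 K^{(0)}(\bx,\bx')^2\sum_{j=1}^{m_2}\sigma_2(\bv_j^{\top}\bx)^2\Bigr].
\]
Since $\EE_{\bv}[\sigma_2(\bv^{\top}\bx)^2]\le C_2/d^2$ by Assumption~\ref{assump::activation}, and the hypothesis $m_2\geq d^{4}C_{\sigma}^4$ allows a standard concentration argument to yield $\sum_j\sigma_2(\bv_j^{\top}\bx)^2\lesssim m_2/d^2$ uniformly in $\bx$ with high probability over $\bV$, the problem is reduced to controlling $\EE_{\bx}[f^{\star}(\bx)^2 K^{(0)}(\bx,\bx')^2]$.

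The crux is the sharp moment bound $\EE_{\bx}[f^{\star 2}K^{(0)\,2}]\lesssim 1/d^6$. By Cauchy--Schwarz this is at most $\sqrt{\EE[f^{\star 4}]}\cdot\sqrt{\EE[K^{(0)\,4}]}$; the first factor is $\cO(\|f^{\star}\|_{L^2}^2)=\cO(1)$ by hypercontractivity of degree-$2p$ polynomials on the sphere (Lemma~\ref{lemma: sphere hypercontractivity}). For the second factor, I expand $K^{(0)}=\sum_{i\geq 2}\tfrac{c_i^2}{B(d,i)}Q_i(\bx^{\top}\bx')$, whose dominant contribution $\tfrac{c_2^2}{B(d,2)}Q_2$ has $L^2$ norm $\Theta(1/d^3)$ and, via hypercontractivity of degree-$2$ spherical polynomials, fourth moment of order $1/d^{12}$. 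Combining everything gives $\sum_j\EE[\phi_j^2]\lesssim m_2/d^8$, so a Bernstein-type tail bound yields $\|\bepsilon\|_2^2\lesssim \iota^{2p+3}\,m_2/(nd^8)$ with high probability (the $\iota$-exponent accounts for the polynomial-hypercontractivity constants needed to boost from moment to tail bounds). Plugging back into $|A_2|\lesssim\sqrt{\iota}\,\|\bepsilon\|_2/m_2$ delivers the claimed $|A_2|\lesssim \iota^{p+2}/(d^4\sqrt{m_2 n})$.

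The principal obstacle is obtaining the $1/d^6$ factor in the kernel moment bound. A naive appeal to $\|K^{(0)}\|_{L^{\infty}}\le C/d^2$ only gives $1/d^4$, which degrades the final answer to $1/d^3$ instead of $1/d^4$ in the denominator. Recovering the additional $1/d^2$ requires exploiting that $K^{(0)}$ has mean zero on the sphere with $L^2$ norm $\Theta(1/d^3)$, together with hypercontractivity of the quadratic Gegenbauer $Q_2$---an ingredient genuinely particular to the quadratic-feature setting. A secondary nuisance is tracking the $\iota^{p+2}$ exponent, which arises from high-moment bounds on the degree-$2p$ polynomial $\phi_j$ needed in the Bernstein step and in the uniform control of $\sum_j\sigma_2(\bv_j^{\top}\bx)^2$.
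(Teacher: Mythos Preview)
Your high-level strategy---integrate out $\bw$ first via Gaussian concentration, then control $\|\bepsilon\|_2$ over $\cD_1$---is workable but takes a different route from the paper. The paper does not separate the two sources of randomness: it rewrites $A_2$ as the empirical deviation over $\cD_1$ of the single scalar
\[
Y(\bx)=f^{\star}(\bx)\,K^{(0)}(\bx,\bx')\cdot\frac{1}{m_2}\sum_{j}w_j\sigma_2(\bv_j^\top\bx),
\]
truncates both $f^{\star}$ and the $\bw$-sum via indicators, and applies \emph{scalar} Bernstein directly to the truncated $\widetilde Y$. The truncation of the $\bw$-sum at level $\iota C_3/(\sqrt{m_2}\,d)$ encodes exactly the Gaussian concentration you invoke explicitly, but packaging it as a pointwise indicator lets the paper avoid any vector concentration and lets its variance bound use only $\EE_{\bx}[K^{(0)2}]=\sum_{k\ge 2}c_k^4/B(d,k)^3\lesssim d^{-6}$; your $K^{(0)4}$/hypercontractivity detour is not needed there.

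Two steps in your execution need repair. First, you assert $\sum_j\sigma_2(\bv_j^\top\bx)^2\lesssim m_2/d^2$ ``uniformly in $\bx$'' and then pull it outside the \emph{population} expectation $\EE_{\bx}[\,\cdot\,]$; pointwise Bernstein plus union bound only delivers this on a finite set, and Assumption~\ref{assump::activation} gives no Lipschitz control of $\sigma_2$ for a covering argument over the sphere. The fix is to bypass this step: bound each $\EE_{\bx}[\phi_j^2]\le\sqrt{\EE_{\bx}[f^{\star 4}K^{(0)4}]}\cdot\sqrt{\EE_{\bx}[\sigma_2^4(\bv_j^\top\bx)]}\lesssim d^{-6}\cdot\sqrt{C_4}\,d^{-2}$ directly via the fourth-moment hypothesis in Assumption~\ref{assump::activation}, which is deterministic in $\bv_j$ and already gives $\sum_j\EE_{\bx}[\phi_j^2]\lesssim m_2/d^8$. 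Second, your ``Bernstein-type tail bound'' for $\|\bepsilon\|_2$ is where the hypothesis $n\ge C\iota^2 d^2$ enters, and it matters how you bound the sup-norm term: the crude bound $|\phi_j(\bx)|\le R\cdot(C_2/d^2)\cdot C_\sigma$ gives an $M$-term in vector Bernstein of order $RC_\sigma\sqrt{m_2}/d^2$, which forces $n\gtrsim d^4$ rather than $d^2$. To match the lemma you must additionally truncate on the event $\sum_j\sigma_2(\bv_j^\top\bx_i)^2\lesssim m_2/d^2$ for $\bx_i\in\cD_1$ (this one \emph{is} available via union bound over $n$ points), yielding $\|(\phi_j(\bx_i))_j\|_2\lesssim R\sqrt{m_2}/d^3$---which is precisely the truncation the paper builds into $\widetilde Y$ from the start.
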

\begin{lemma}[Bound $A_3$]\label{lemma:: bound A3}
    Suppose $m_2\geq d^{4}C_{\sigma}^4$ for some sufficiently large $C$. With high probability jointly on $\bV$ and the training dataset $\cD_1$, and with probability at least $1-2n_2\exp(-\iota^2/2)$ on $\bw$, we have \begin{align*}
   \abs{A_3}\lesssim \frac{\iota^3 \log^2(m_2n_2)}{\sqrt{m_2}d^6}\cdot\paren{\norm{\cP_{>2}(f)}{L^2}+\sqrt{d}\norm{\cP_{2}(f)}{L^2}}.
\end{align*}
Similarly, for a single point $\bx'$, with high probability on $\bV$ and $\cD_1$, with probability $1-2\exp(-\iota^2/2)$ on $\bw$, we have 
\begin{align*}
    \abs{A_3} \lesssim \frac{\iota^3 \log^2(m_2n_2)}{\sqrt{m_2}d^6}\cdot\paren{\norm{\cP_{>2}(f)}{L^2}+\sqrt{d}\norm{\cP_{2}(f)}{L^2}}.
\end{align*}
\end{lemma}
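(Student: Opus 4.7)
My plan is to treat $A_3$ as a Gaussian sum in the fresh randomness $\bw$ conditional on $(\bV,\cD_1,\bx')$, and then bound its variance through a Gegenbauer/spherical-harmonic decomposition.

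First, write $A_3(\bx') = \frac{1}{m_2}\sum_{j=1}^{m_2} w_j\,\phi_j(\bx')$ with $\phi_j(\bx') := \EE_\bx[f(\bx)K^{(0)}(\bx,\bx')\sigma_2(\bv_j^\top\bx)]$. Because the $w_j$ are i.i.d.\ $\cN(0,1)$ and independent of $(\bV,\cD_1,\bx')$, the random variable $A_3(\bx')$ is conditionally $\cN(0,\tau(\bx')^2)$ with $\tau(\bx')^2:=\frac{1}{m_2^2}\sum_j \phi_j(\bx')^2$. A standard Gaussian tail bound yields $|A_3(\bx')|\leq\iota\,\tau(\bx')$ with probability at least $1-2\exp(-\iota^2/2)$, and a union bound over $\bx'\in\cD_2$ delivers the claimed $1-2n_2e^{-\iota^2/2}$ probability on $\bw$; the single-point version follows from the same argument with no union bound.

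Next, I bound $\tau(\bx')^2$ with high probability on $\bV$. Using the addition formula $Q_i(\bv^\top\bx)=\frac{1}{B(d,i)}\sum_\ell Y_{i,\ell}(\bv)Y_{i,\ell}(\bx)$ together with $\sigma_2=\sum_{i\geq 2}c_iQ_i$, I rewrite
\[
\phi_j(\bx') = \sum_{i\geq 2}\frac{c_i}{B(d,i)}(\cP_i g_{\bx'})(\bv_j), \qquad g_{\bx'}(\bx):=f(\bx)K^{(0)}(\bx,\bx'),
\]
with $\cP_i$ acting in the $\bx$-variable. Orthogonality of distinct $\cP_i g_{\bx'}$ components then gives $\EE_\bv[\phi(\bv,\bx')^2]=\sum_{i\geq 2}\tfrac{c_i^2}{B(d,i)^2}\norm{\cP_i g_{\bx'}}{L^2}^2$. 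Since $\phi(\cdot,\bx')$ behaves (after truncating the Gegenbauer tail of $\sigma_2$) like a polynomial of bounded degree in $\bv$, spherical hypercontractivity (Lemma \ref{lemma: sphere hypercontractivity}) combined with a union bound over the $m_2$ neurons yields $\max_j\phi_j(\bx')^2\lesssim\iota^{O(1)}\log^{O(1)}(m_2n_2)\,\EE_\bv[\phi(\bv,\bx')^2]$ with high probability on $\bV$, and hence $\tau(\bx')^2\lesssim\tfrac{\iota^{O(1)}\log^{O(1)}(m_2n_2)}{m_2}\EE_\bv[\phi(\bv,\bx')^2]$.

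The final step is to bound $\EE_\bv[\phi(\bv,\bx')^2]$. Expanding $f=\sum_n\cP_n f$ and $K^{(0)}(\cdot,\bx')=\sum_{k\geq 2}\tfrac{c_k^2}{B(d,k)}Q_k(\cdot^\top\bx')$, each summand of $g_{\bx'}$ is a product $\cP_n f(\bx)\cdot Q_k(\bx^\top\bx')$ of a degree-$n$ and a degree-$k$ spherical harmonic in $\bx$, and the Clebsch--Gordan decomposition (Property 5 of Appendix \ref{appendix::mathsphere}) implies $\cP_i[\cP_n f\cdot Q_k(\cdot^\top\bx')]=0$ unless $|n-k|\leq i\leq n+k$ and $i\equiv n+k\pmod 2$. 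The dominant $i=2$ term splits into a $\cP_2 f$ contribution (driven by $(n,k)=(2,2)$) and a $\cP_{>2}f$ contribution (from $n\geq 3$). For the former I use $\cP_2 f(\bx)=\bx^\top\bM\bx$ for traceless $\bM$ and $Q_2(\bx^\top\bx')=\bx^\top(\bx'{\bx'}^\top-\bI)\bx/[d(d-1)]$ to compute the degree-$2$ re-projection as $\bx^\top\bC\bx/[d(d-1)]$ with $\bC\propto\bM\bx'{\bx'}^\top+\bx'{\bx'}^\top\bM-2\bM-\tfrac{2{\bx'}^\top\bM\bx'}{d}\bI$; the spherical concentration $\norm{\bM\bx'}{2}^2\lesssim\iota\norm{\bM}{F}^2$ (Lemma \ref{lemma::poly concentration sphere}) gives $\norm{\bC}{F}^2\lesssim \iota d\norm{\bM}{F}^2$, hence $\norm{\cP_2[\cP_2 f\cdot Q_2(\cdot^\top\bx')]}{L^2}^2\lesssim \iota\norm{\cP_2 f}{L^2}^2/d^3$. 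For the latter I use the hypercontractivity bound $\norm{\cP_n f\cdot Q_k(\cdot^\top\bx')}{L^2}^2\lesssim\iota^{O(k)}\norm{\cP_n f}{L^2}^2/B(d,k)$ combined with the Clebsch--Gordan suppression of the degree-$i$ projection. Assembling the $c_k^2/B(d,k)$ and $c_i^2/B(d,i)^2$ prefactors produces $\EE_\bv[\phi(\bv,\bx')^2]\lesssim \tfrac{\log^{O(1)}(d)}{d^{12}}\bigl(\norm{\cP_{>2}f}{L^2}^2+d\norm{\cP_2 f}{L^2}^2\bigr)$, and chaining with the previous two paragraphs yields the lemma.

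The principal obstacle is the asymmetric scaling whereby $\norm{\cP_2 f}{L^2}$ carries the extra $\sqrt d$ relative to $\norm{\cP_{>2}f}{L^2}$. This resonance arises because in the $(n,k)=(2,2)$ pair both factors live in $V_{d,2}$, so the degree-$2$ re-projection is governed by the anti-commutator $\bM\bx'{\bx'}^\top+\bx'{\bx'}^\top\bM$ whose Frobenius norm is inflated by the typical sphere-polynomial bound $\norm{\bM\bx'}{2}^2\lesssim\iota\norm{\bM}{F}^2$, rather than the $\norm{\bM}{F}$ scaling that a naive Cauchy--Schwarz bound on $\norm{\cP_2 f\cdot Q_2(\cdot^\top\bx')}{L^2}$ would give; carefully tracking this enhancement uniformly over $\bx'\in\cD_2$ while ruling out larger contributions from Clebsch--Gordan cross-terms of other $(n,k)$ pairs is the main technical step.
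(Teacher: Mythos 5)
Your route is sound and rests on the same machinery as the paper's proof — a Gaussian tail bound in $\bw$ conditional on everything else, the Gegenbauer/spherical-harmonic expansion of $h(\bv,\bx')=\EE_{\bx}\brac{f^{\star}(\bx)K^{(0)}(\bx,\bx')\sigma_2(\bv^{\top}\bx)}$, the Gegenbauer product (linearization) formula, and hypercontractivity — but it is organized differently. The paper expands $h$ doubly as $\sum_{i,j}\frac{c_i^2c_j}{B(d,i)^2B(d,j)}h_{i,j}(\bv,\bx')$ and computes the exact joint second moment $\EE_{\bv,\bx'}\brac{h_{i,j}^2}=B(d,i)B(d,j)\sum_k\frac{b^{(i,j)}_{i+j-2k}}{B(d,i+j-2k)}\norm{\cP_{i+j-2k}(f^{\star})}{L^2}^2$, so the $\sqrt{d}\,\norm{\cP_2(f^{\star})}{L^2}$ enhancement and the suppression of every other degree combination fall out of one identity together with Lemma \ref{lemma::coeff bound linearization}; uniformity over the $m_2$ neurons and over $\cD_2$ then comes from applying hypercontractivity to each fixed-degree piece $h_{i,j}$, the $(\iota\log(m_2n_2))^{(i+j)/2}$ growth being absorbed by the $1/(B(d,i)B(d,j)^{1/2})$ decay. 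You instead fix $\bx'$, control the conditional variance over $\bv$, and extract the enhancement from the explicit computation of $\cP_2\brac{\cP_2 f\cdot Q_2(\cdot^{\top}\bx')}$ plus concentration of $\norm{\bM\bx'}{2}$; this correctly reproduces the crucial $d$-scaling for the dominant term, and your conditional-Gaussian step and union bound over $\cD_2$ match the paper exactly.

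Two steps of your plan carry real content that you assert rather than establish. First, Lemma \ref{lemma: sphere hypercontractivity} applies to polynomials of fixed degree, whereas $\phi(\cdot,\bx')$ is an infinite Gegenbauer series in $\bv$; the parenthetical ``after truncating the Gegenbauer tail'' needs an actual argument (e.g.\ a pointwise tail bound via $\abs{\cP_i g_{\bx'}(\bv)}\le\sqrt{B(d,i)}\norm{\cP_i g_{\bx'}}{L^2}$ and uniform boundedness of the $c_i$, or the paper's degreewise application in which the degree-dependent log factors are beaten by $1/B(d,i)$). Second, you verify only the $(n,k)=(2,2)$, $i=2$ term and appeal to ``Clebsch--Gordan suppression'' for all remaining pairs; this is where the bulk of the work lies, since the unsuppressed bound $\norm{\cP_n f\cdot Q_k(\cdot^{\top}\bx')}{L^2}^2\lesssim\norm{\cP_n f}{L^2}^2/B(d,k)$ alone is off by a factor of order $d^2$ (it would give $d^{-10}$ rather than the needed $d^{-12}$ in $\EE_{\bv}\brac{\phi^2}$), so you must quantify $\norm{\cP_i\brac{\cP_n f\cdot Q_k(\cdot^{\top}\bx')}}{L^2}^2$ for every pair and make it uniform over $\bx'\in\cD_2$ via concentration in $\bx'$ — exactly the bookkeeping the paper's joint second-moment identity performs in one stroke. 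Neither issue is fatal to your approach, but both must be carried out for the lemma to be proved.
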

The proof of the three lemmas are provided in Appendix \ref{sec::proof bounded feature}. Combining the results in the three lemmas above directly concludes our proof.
\end{proof}

\subsubsection{Omitted proofs for Proposition \ref{prop::bounded feature}}\label{sec::proof bounded feature}
\begin{proof}[Proof of Lemma \ref{lemma:: bound A1}]
We can rewrite $A_1$ as
\begin{align*}
    \abs{A_1} =\frac{1}{n}\sum_{i=1}^{n}f^{\star}(\bx_i)\paren{K^{(0)}_{m_2}(\bx_i,\bx')-K^{(0)}(\bx_i,\bx')}\frac{1}{m_2}\sum_{j=1}^{m_2}w_j\sigma_2\paren{\bv_j^{\top}\bx_i}
\end{align*}
Since $\frac{1}{\sqrt{m_2}}\sum_{j=1}^{m_2}w_j\sigma_2\paren{\bv_j^{\top}\bx} \sim \cN\paren{0,\frac{1}{m_2}\sum_{j=1}^{m_2}\sigma_2\paren{\bv_j^{\top}\bx}^2}$, we know given any $\bx$ and $\bV$, with probability at least $1-2\exp(-\iota^2/2)$ on $\bw$, we have
\begin{align*}
    \abs{\frac{1}{m_2}\sum_{j=1}^{m_2}w_j\sigma_2\paren{\bv_j^{\top}\bx}}\le \frac{\iota}{\sqrt{m_2}} \cdot \sqrt{ \frac{1}{m_2}\sum_{j=1}^{m_2}\sigma_2\paren{\bv_j^{\top}\bx}^2}
\end{align*}
Moreover, by \eqref{equ::initial kernel concentration} in the proof of Lemma \ref{lemma::concentration initial kernel}, we know  for any $\bx$ and $t>0$, we have 
\begin{align*}
&\quad\Pr\brac{ \frac{1}{m_2}\sum_{j=1}^{m_2}\paren{\sigma_2\paren{\bv_j^{\top}\bx}^2-\EE_{\bv_j}\brac{\sigma_2(\bv_j^{\top}\bx)^2}}\geq\sqrt{\frac{t}{m_2}}}\\
&=\Pr\brac{\abs{K^{(0)}_{m_2}(\bx,\bx)-K^{(0)}(\bx,\bx)} \geq \sqrt{\frac{t}{m_2}}}
       \\
       &\le2\exp\paren{\frac{-{t}/2}{\frac{C_4}{d^4}+\frac{C_{\sigma}^2}{3}\sqrt{\frac{t}{m_2}}}}. 
\end{align*}
Altogether, when $m_2\geq d^{4}C_{\sigma}^4$, by taking $t=C^2\iota^2/d^{4}$ for sufficiently large $C$ and union bounding over the dataset $\cD_1$, we can ensure that with probability at least $1-n\exp(-\iota)$ on $\bV$ and at least $1-2n\exp(-\iota^2/2)$ on $\bw$, i.e., high probability on $\bw$, $\bV$, we have 
\begin{align}
     \abs{\frac{1}{m_2}\sum_{j=1}^{m_2}w_j\sigma_2\paren{\bv_j^{\top}\bx}}\le \frac{C\iota}{\sqrt{m_2}}\sqrt{\frac{C_2}{d^2}+\frac{\iota}{\sqrt{m_2}d^2}}\le \frac{\iota C_3}{\sqrt{m_2}d}, ~~~\forall \bx \in \cD_1. \label{equ:: bound A1 1}
\end{align}
Here $C_3$ is a constant. We denote this joint event by $E_1$. 
On the other hand, by Lemma \ref{lemma::concentration initial kernel}, with high probability on $\bV$, we have for any $\bx_i\in \cD_1$ and $\bx' \in \cD_2$,
\begin{align}
   \abs{ K^{(0)}_{m_2}(\bx_i,\bx')-K^{(0)}(\bx_i,\bx')} \le \frac{\iota}{\sqrt{m_2}d^2}. 
\end{align}
We denote this event by $E_2$.
Last, we truncate the range of the target function $f$. Denoting the truncation radius as $R=(C\eta)^{p}$ for a sufficient large constant $C$ and $\eta=\log(dm_1m_2n_1n_2)$  $\Pr\brac{\abs{f(\bx)}\geq R} \le 2e^{-2\eta}$ (this could be guaranteed by Lemma \ref{lemma: sphere hypercontractivity}). Given $n$ i.i.d. samples $\bx_1,\bx_2\dots,\bx_n \sim \mathbb{S}^{d-1}(\sqrt{d})$, we have 
\begin{align}\label{event::E3}
    \Pr\brac{\abs{f(\bx_i)}\le R, \forall i\in[n]} \geq 1- 2ne^{-2\eta}.
\end{align}
Thus, with high probability on the dataset $\cD$, we have $\abs{f(\bx)}\le \iota^p$ for any $\bx \in \cD_1$. We denote this event by $E_3$. Thus, combining  \eqref{equ:: bound A1 1} \eqref{equ:: bound A1 2} and the truncation radius of $f$, we directly have 
\begin{align*}
    \abs{A_1} \le \iota^p \cdot \frac{\iota}{\sqrt{m_2}d^2} \cdot \frac{\iota C_3}{\sqrt{m_2}d}=\frac{C_3\iota^{p+2}}{m_2d^3}.
\end{align*}
with high probability (under events $E_1$, $E_2$ and $E_3$). The proof is complete.
\end{proof}

\begin{proof}[Proof of Lemma \ref{lemma:: bound A2}]
We rewrite $A_2$ as 
\begin{align*}
    A_2&=\frac{1}{n}\sum_{i=1}^{n}\EE_{\bx}\brac{f^{\star}(\bx_i)K^{(0)}(\bx_i,\bx') \frac{1}{m_2}\sum_{j=1}^{m_2}w_j\sigma_2\paren{\bv_j^{\top}\bx_i}-f^{\star}(\bx)K^{(0)}(\bx,\bx') \frac{1}{m_2}\sum_{j=1}^{m_2}w_j\sigma_2\paren{\bv_j^{\top}\bx}}.
\end{align*}

Denote $Y(\bx)=f^{\star}(\bx)K^{(0)}(\bx,\bx') \frac{1}{m_2}\sum_{j=1}^{m_2}w_j\sigma_2\paren{\bv_j^{\top}\bx}$. By the proof of bounding $A_1$, we could choose the truncation radius as $R=(C\eta)^p$ such that $\abs{f(\bx)}\le R$ for all $\bx \in \cD$ with high probability ($1-2ne^{-2\eta}$) on the dataset $\cD$. Now we denote a truncated version of $Y$ by 
$$\widetilde{Y}(\bx)=f^{\star}(\bx)\mathbf{1}\set{f^{\star}(\bx)\le R}K^{(0)}(\bx,\bx')\frac{1}{m_2}\sum_{j=1}^{m_2}w_j\sigma_2\paren{\bv_j^{\top}\bx} \mathbf{1}\left\{\frac{1}{m_2}\sum_{j=1}^{m_2}w_j\sigma_2\paren{\bv_j^{\top}\bx}\le \frac{\iota C_3}{\sqrt{m_2}d}\right\}.$$
Here $C_3$ is a constant defined in \eqref{equ:: bound A1 1}. Now, we decompose the concentration error as
\begin{align*}
    \frac{1}{n}\sum_{i=1}^{n} Y(\bx_i)-\EE_{\bx}\brac{Y(\bx)}&=\underbrace{\frac{1}{n}\sum_{i=1}^{n} \paren{Y(x_i)-\widetilde Y(x_i)}}_{\cL_0} +\underbrace{\frac{1}{n}\sum_{i=1}^{n} \paren{\widetilde Y(x_i)- \EE_{x_i}\brac{\widetilde Y(x_i)}}}_{\cL_1}\\
    &\quad+\underbrace{\frac{1}{n}\sum_{i=1}^{n} \paren{ \EE_{x_i}\brac{\widetilde Y(x_i)}-\EE_{x_i}\brac{ Y(x_i)}}}_{\cL_2}.
\end{align*}
We know with probability at least $1- 2ne^{-2\eta}$ on $\cD$, $\cL_0=0$.
\paragraph{Bounding $\cL_1$.} We attempt to use Bernstein's type bound. First we derive a uniform upper bound of $\widetilde Y(\bx)$. By the definition, we have 
\begin{align*}
    \abs{\widetilde Y(\bx)}&\le R\abs{K^{(0)}(\bx,\bx')}\abs{\frac{1}{m_2}\sum_{j=1}^{m_2}w_j\sigma_2\paren{\bv_j^{\top}\bx} \vee \frac{\iota C_3}{\sqrt{m_2}d}}\\&\le R\cdot \frac{C_2}{d^2}\frac{\iota C_3}{\sqrt{m_2}d}\\&=\frac{RC_2C_3\iota}{d^3\sqrt{m_2}}.
\end{align*}
Then, we bound the second moments of $\widetilde Y(\bx)- \EE_{\bx}\brac{\widetilde Y(\bx)}$, which is 
\begin{align*}
    \Var\brac{\widetilde Y(\bx)}&\le \EE_{\bx}\brac{\widetilde Y^2_k(\bx)}\\& \le r^2\normA\EE_{\bx}\brac{\paren{K^{(0)}(\bx,\bx')}^2\paren{\frac{1}{m_2}\sum_{j=1}^{m_2}w_j\sigma_2\paren{\bv_j^{\top}\bx} \vee \frac{\iota C_3}{\sqrt{m_2}d} }^2}\\
    &\le r^2\normA \EE_{\bx}\brac{\paren{K^{(0)}(\bx,\bx')}^2}\paren{\frac{\iota C_3}{\sqrt{m_2}d}}^2\\
    &\le r^2\normA \sum_{k=2}^{\infty}\frac{c_k^4}{B(d,k)^3} \cdot \paren{\frac{\iota C_3}{\sqrt{m_2}d}}^2\\
    &\le \frac{C_4r^2\normA\iota^2}{m_2d^8}.
\end{align*}
Here $C_4$ is a constant.
Thus, by Bernstein's inequality, we have
\begin{align*}
    \Pr\brac{\abs{\cL_{1}}\geq \frac{R\iota}{d^4 \sqrt{m_2}}\sqrt{\frac{t}{n}}} \le 
    2\exp\paren{\frac{-\frac{t}{2}}{C_4 +\frac{C_2C_3}{3}\sqrt{\frac{d^2t}{n}}}}.
\end{align*}
Thus, when $n \geq C_{\epsilon}\iota^2d^2$, by taking $t=\iota^2$ and $R=(C\eta)^p\le \iota^p$, with high probability on $\cD$, $\bV$ and $\bw$, we have
\begin{align*}
    \abs{\cL_{1}}\le \frac{\iota^{p+2}}{d^4 \sqrt{m_2n}}.
\end{align*}

\paragraph{Bounding $\cL_2$.} It suffices to bound
\begin{align*}
    &\quad\abs{\EE_{\bx}\brac{\widetilde Y_k(\bx)}-\EE_{\bx}\brac{ Y_k(\bx)}}\\
    &\le \EE_{\bx}\brac{\abs{f^{\star}(\bx)} \abs{K^{(0)}(x,x^{\prime})\frac{1}{m_2}\sum_{j=1}^{m_2}w_j\sigma_2\paren{\bv_j^{\top}\bx}}\mathbf{1}\left\{f^{\star}(\bx)>R ~\text{ or }~\frac{1}{m_2}\sum_{j=1}^{m_2}w_j\sigma_2\paren{\bv_j^{\top}\bx}> \frac{\iota C_3}{\sqrt{m_2}d}\right\} }\\
    &\le \EE_{\bx}\brac{(f^{\star}(\bx))^2}^{\frac{1}{2}}\Pr\brac{f^{\star}(\bx)>R ~\text{ or }~\frac{1}{m_2}\sum_{j=1}^{m_2}w_j\sigma_2\paren{\bv_j^{\top}\bx}> \frac{\iota C_3}{\sqrt{m_2}d}}^{\frac{1}{4}}\EE_{\bx}\brac{K^{(0)}(\bx,\bx')^4}^{\frac{1}{4}}\frac{\tau C_3}{\sqrt{m_2}d}\\
    &\lesssim \frac{ (\exp(-\eta)+\exp(-\iota))\iota C_3}{d^3\sqrt{m_2}}.
\end{align*}
Taking $\eta \geq 2\log n+2\log d +\log(C_3) $ and $\iota \geq C\eta$, we ensure that $\cL_2 \le {\iota}/{(d^4\sqrt{m_2n})}$. Altogether, with high probability (event $E_{3}$) on $\cD$, we have
\begin{align*}
    \abs{A_2}= \abs{\frac{1}{n}\sum_{i=1}^{n} Y(\bx_i)-\EE_{\bx}\brac{Y(\bx)}} \le \frac{2\iota^{p+2}}{d^4\sqrt{m_2n}}.
\end{align*}
The proof is complete.
\end{proof}

\begin{proof}[Proof of Lemma \ref{lemma:: bound A3}]
    We remember that 
\begin{align*}
   A_3=\frac{1}{m_2}\sum_{j=1}^{m_2}w_j\EE_{\bx}\brac{f^{\star}(\bx)K^{(0)}(\bx,\bx')\sigma_2\paren{\bv_j^{\top}\bx}}=\frac{1}{m_2}\sum_{j=1}^{m_2}w_j h(\bv_j,\bx'),
\end{align*}
where $h(\bv,\bx')=\EE_{\bx}\brac{f^{\star}(\bx)K^{(0)}(\bx,\bx')\sigma_2\paren{\bv^{\top}\bx}}$.
To bound $h(\bv,\bx')$ uniformly, we have the following lemma:
\begin{lemma}\label{lemma::bound h}
    With high probability on $\bV$ and the datasets $\cD_1$ and $\cD_2$, we have for any $j\in[m_2]$ and $\bx'\in \cD_2$, 
\begin{align*}
    \abs{h(\bv_j,\bx')}\lesssim \frac{\iota^2\log^2(m_2n_2)}{d^6} \cdot\paren{\norm{\cP_{>2}(f)}{L^2}+\sqrt{d}\norm{\cP_{2}(f)}{L^2}}
\end{align*}
\end{lemma}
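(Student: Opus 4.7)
The strategy is to rewrite $h$ as a triple expectation via the defining representation of $K^{(0)}$, isolate the leading Gegenbauer degree-$2$ heads of the three copies of $\sigma_2$, and apply the orthogonal decomposition $f^{\star}=\cP_2(f^{\star})+\cP_{>2}(f^{\star})$ to extract the $1/d^6$ scaling.

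First, using $K^{(0)}(\bx,\bx')=\EE_{\bv'}\brac{\sigma_2(\bv'^{\top}\bx)\sigma_2(\bv'^{\top}\bx')}$ and Fubini, I would write
\[
  h(\bv,\bx')=\EE_{\bv'}\brac{\sigma_2(\bv'^{\top}\bx')\cdot H(\bv,\bv')},\quad H(\bv,\bv'):=\EE_{\bx}\brac{f^{\star}(\bx)\sigma_2(\bv^{\top}\bx)\sigma_2(\bv'^{\top}\bx)}.
\]
Expanding each $\sigma_2$ in its Gegenbauer series $\sum_{i\ge 2}c_iQ_i$ and using $Q_2(t)=(t^2-d)/(d(d-1))$, the leading $(i,j)=(2,2)$ contribution to $H$ becomes $\tfrac{c_2^2}{d^2(d-1)^2}\,\EE_{\bx}\brac{f^{\star}(\bx)\inn{\bv\bv^{\top}-\bI,\bx\bx^{\top}-\bI}\inn{\bv'\bv'^{\top}-\bI,\bx\bx^{\top}-\bI}}$.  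Cross terms $Q_iQ_j$ with $i+j\ge 5$ are handled by hypercontractivity (Lemma \ref{lemma: sphere hypercontractivity}) together with the summability $\sum_ic_i^2/B(d,i)\le C_2/d^2$ from Assumption \ref{assump::activation}, yielding only lower-order error.

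Second, I would analyze the leading term via orthogonal decomposition, following the strategy in the proof of Proposition \ref{prop::approximate stein lemma quad}.  Since $\inn{\bv\bv^{\top}-\bI,\bx\bx^{\top}-\bI}$ and $\inn{\bv'\bv'^{\top}-\bI,\bx\bx^{\top}-\bI}$ are degree-$2$ spherical harmonics in $\bx$, their product lies in $V_{d,0}\oplus V_{d,2}\oplus V_{d,4}$.  Because $\cP_0(f^{\star})=0$ (Assumption \ref{assump::preprocess target function}) and $\cP_1(f^{\star})=0$ by parity, Cauchy--Schwarz splits the pairing into a $\cP_2(f^{\star})$-piece and a $\cP_{>2}(f^{\star})$-piece.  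Direct computation using the sphere fourth-moment identity $\EE[x_ix_jx_kx_l]=\tfrac{d}{d+2}(\delta_{ij}\delta_{kl}+\delta_{ik}\delta_{jl}+\delta_{il}\delta_{jk})$ together with the Gegenbauer product formula in Section \ref{appendix::mathsphere} (with coefficient control from Lemma \ref{lemma::coeff bound linearization}) shows that the $V_{d,4}$-projection of the product has $L^2$-norm $\cO(1)$, while its $V_{d,2}$-projection is smaller by a factor $1/\sqrt d$ because it arises from a single index contraction.  This gap is what produces the $\sqrt d$ amplification on $\norm{\cP_2(f^{\star})}{L^2}$.

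Third, integrating the resulting $\bv'$-dependent object against $\sigma_2(\bv'^{\top}\bx')$ contributes another factor $c_2/B(d,2)\sim 1/d^2$ at leading order (again by spherical-harmonic orthogonality), so combined with the $c_2^2/(d^2(d-1)^2)\sim 1/d^4$ prefactor this yields the target $1/d^6$ scale.  The uniform bound is finalized by applying spherical polynomial concentration (Lemma \ref{lemma::poly concentration sphere}) to control $|f^{\star}(\bx)|$, $|\bv^{\top}\bx|$, $|\bv'^{\top}\bx|$ and $|\bx^{\top}\bx'|$ at their high-probability scales, then union-bounding over $j\in[m_2]$ and $\bx'\in\cD_2$; this supplies the $\iota^2\log^2(m_2n_2)$ polylog factor after choosing an appropriate truncation radius for the degree-$p$ polynomial $f^{\star}$.

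The main obstacle I anticipate is the precise dimensional accounting in the second paragraph: establishing the critical $1/\sqrt d$ gap between the $V_{d,2}$ and $V_{d,4}$ projections of $\inn{\bv\bv^{\top}-\bI,\bx\bx^{\top}-\bI}\inn{\bv'\bv'^{\top}-\bI,\bx\bx^{\top}-\bI}$, which drives both the $\sqrt d\norm{\cP_2(f^{\star})}{L^2}$ factor and the overall $1/d^6$ scale.  This requires either Clebsch--Gordan-type identities for zonal harmonics with distinct centers, or, equivalently, the explicit tensor computations used in the proof of Proposition \ref{prop::approximate stein lemma quad}.  The remaining higher-order Gegenbauer cross-terms and the spherical/Gaussian universality errors are comparatively routine, being tamed by Lemma \ref{lemma: sphere hypercontractivity} and Lemma \ref{lemma::Gaussian W1} respectively.
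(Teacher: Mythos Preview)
Your high-level idea of expanding in Gegenbauer series and tracking the $V_{d,2}$ versus $V_{d,4}$ contributions is in the right spirit, but the concentration step as written would fail to deliver the $1/d^6$ rate, and the proof structure diverges from the paper's in a way that leaves the main estimate unproved.

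The concrete gap is in your final paragraph. In the definition $h(\bv,\bx')=\EE_{\bx}[f^{\star}(\bx)K^{(0)}(\bx,\bx')\sigma_2(\bv^{\top}\bx)]$, the variable $\bx$ (and your auxiliary $\bv'$) are integrated out; $h(\bv,\bx')$ is a deterministic function of $(\bv,\bx')$ alone. Applying Lemma~\ref{lemma::poly concentration sphere} to $|f^{\star}(\bx)|$, $|\bv^{\top}\bx|$, $|\bv'^{\top}\bx|$, $|\bx^{\top}\bx'|$ therefore does not bound $h$: even with truncation inside the integral, the pointwise estimate $|K^{(0)}|\le C_2/d^2$ together with $|\sigma_2|\le C_\sigma$ (or $\|\sigma_2\|_{L^2}\lesssim 1/d$) yields at best $|h|\lesssim 1/d^3$ or $1/d^4$. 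The missing factor comes from orthogonality cancellations in the $\bx$-expectation that a truncation argument cannot see. Also, Lemma~\ref{lemma::Gaussian W1} plays no role in this lemma; the argument is purely spherical-harmonic.

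The paper proceeds differently and does not separate a ``leading $(2,2)$ term'' from a remainder. It expands $K^{(0)}$ and $\sigma_2$ directly in Gegenbauer series (no auxiliary $\bv'$) to write
\[
  h(\bv,\bx')=\sum_{i,j\ge 2}\frac{c_i^2c_j}{B(d,i)^2B(d,j)}\,h_{i,j}(\bv,\bx'),\qquad
  h_{i,j}(\bv,\bx')=\bigl\langle\bY_i(\bx')\otimes\bY_j(\bv),\ \EE_\bx[f^{\star}(\bx)\,\bY_i(\bx)\otimes\bY_j(\bx)]\bigr\rangle.
\]
The central computation is the \emph{second moment} $\EE_{\bv,\bx'}[h_{i,j}^2]$, evaluated exactly via the linearization formula $Q_iQ_j=\sum_k b^{(i,j)}_{i+j-2k}Q_{i+j-2k}$; Lemma~\ref{lemma::coeff bound linearization} controls the coefficients uniformly in $(i,j)$. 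This is what simultaneously produces the $1/d^6$ scale and the $\sqrt d$ amplification on $\norm{\cP_2(f)}{L^2}$ for every $(i,j)$, not just $(2,2)$. Since each $h_{i,j}$ is a degree-$i$ polynomial in $\bx'$ and degree-$j$ in $\bv$, the multiplicative hypercontractivity Lemma~\ref{lemma: sphere hypercontractivity multi} (not polynomial concentration on the integrand) converts the second-moment bound into a high-probability bound with factor $(2e\iota\log(m_2n_2))^{(i+j)/2}$, which survives the double sum after a Cauchy--Schwarz step.
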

The proof of Lemma \ref{lemma::bound h} is deferred to the end of this section. Thus, condition on the event above, by invoking the upper bound of Gaussian tail and uniformly bounding over $\bx'\in \cD_2$, we have with probability $1-2n\exp(-\iota^2/2)$ on $\bw$, for any $\bx' \in \cD_2$, we have 
\begin{align*}
    \abs{A_3}\le \frac{\iota}{\sqrt{m_2}} \sqrt{\frac{1}{m_2}\sum_{j=1}^{m_2}h^2(\bv_j,\bx')} \lesssim \frac{\iota^3 \log^2(m_2n_2)}{\sqrt{m_2}d^6}\cdot\paren{\norm{\cP_{>2}(f)}{L^2}+\sqrt{d}\norm{\cP_{2}(f)}{L^2}}.
\end{align*}
Also, for a single point $\bx'$, with probability $1-2\exp(-\iota^2/2)$ on $\bw$, we have 
\begin{align*}
    \abs{A_3} \lesssim \frac{\iota^3 \log^2(m_2n_2)}{\sqrt{m_2}d^6}\cdot\paren{\norm{\cP_{>2}(f)}{L^2}+\sqrt{d}\norm{\cP_{2}(f)}{L^2}}.
\end{align*}
The proof is complete.
\end{proof}

\begin{proof}[Proof of Lemma \ref{lemma::bound h}]
Recall that the activation function $\sigma_2$ admits a Gegenbauer expansion
\begin{align*}
    \sigma_2(t)=\sum_{i=2}^{\infty}c_i Q_i(t).
\end{align*}  
Let's fix $\bx'$ and $\bv$. Note that we can decompose $h(\bv,\bx')$ as
\begin{align}
{h(\bv,\bx')}&={\EE_{\bx}\brac{f^{\star}(\bx)K^{(0)}(\bx,\bx')\sigma_2\paren{\bv^{\top}\bx}}}\nonumber\\
    &={\EE_{\bx}\brac{f^{\star}(\bx)\sum_{i=2}^{\infty}\frac{c^2_iQ_i(\bx^{\top}\bx')}{B(d,i)}\sum_{j=2}^{\infty}c_jQ_j\paren{\bx^{\top}\bv}}}\nonumber\\
    &=\sum_{i=2}^{\infty}\sum_{j=2}^{\infty}{\EE_{\bx}\brac{f^{\star}(\bx)\frac{c^2_i}{B(d,i)^2}\left\langle \bY_i(\bx),\bY_i(\bx')\right\rangle\cdot\frac{c_j}{B(d,j)}\left\langle \bY_j(\bx),\bY_j(\bv)\right\rangle}}\nonumber\\
    &=\sum_{i=2}^{\infty}\sum_{j=2}^{\infty}\frac{c^2_ic_j}{B(d,i)^2B(d,j)} {\left\langle { \bY_i(\bx')\otimes\bY_j(\bv),\EE_{\bx}\brac{f^{\star}(\bx) \bY_i(\bx)\otimes\bY_j(\bx)}}\right\rangle}. \nonumber\\
    &=:\sum_{i=2}^{\infty}\sum_{j=2}^{\infty}\frac{c^2_ic_j}{B(d,i)^2B(d,j)} h_{i,j}(\bv,\bx').\nonumber
\end{align}
By the definition of $h_{i,j}(\bv,\bx')$, we have
\begin{align}
&\quad\EE_{\bv,\bx'}\brac{h_{i,j}^2(\bv,\bx')}\\&=\EE_{\bv,\bx'}\brac{\left\langle { \bY_i(\bx')\otimes\bY_j(\bv),\EE_{\bx}\brac{f^{\star}(\bx) \bY_i(\bx)\otimes\bY_j(\bx)}}\right\rangle^2} \nonumber\\
&={\left\langle {\EE_{\bx'}\brac{f^{\star}(\bx') \bY_i(\bx')\otimes\bY_j(\bx')},\EE_{\bx}\brac{f^{\star}(\bx) \bY_i(\bx)\otimes\bY_j(\bx)}}\right\rangle} \nonumber\\
&=B(d,i)B(d,j)\EE_{\bx,\bx'}\brac{f(\bx)f(\bx')Q_i(\bx^{\top}\bx')Q_j(\bx^{\top}\bx')}\nonumber\\
&=B(d,i)B(d,j)\EE_{\bx,\bx'}\brac{f(\bx)f(\bx')\sum_{k=0}^{\min(i,j)}{b^{(i,j)}_{i+j-2k}}Q_{i+j-2k}(\bx^{\top}\bx')}\nonumber\\
&=B(d,i)B(d,j)\EE_{\bx,\bx'}\brac{f(\bx)f(\bx')\sum_{k=0}^{\min(i,j)}\frac{b^{(i,j)}_{i+j-2k}}{B(d,i+j-2k)}\left\langle\bY_{i+j-2k}(\bx),\bY_{i+j-2k}(\bx')\right\rangle}\nonumber\\
&=B(d,i)B(d,j)\sum_{k=0}^{\min(i,j)}\frac{b^{(i,j)}_{i+j-2k}}{B(d,i+j-2k)}\norm{\EE_{\bx}\brac{f(\bx)\bY_{i+j-2k}(\bx)}}{F}^2\nonumber\\
&=B(d,i)B(d,j)\sum_{k=0}^{\min(i,j)}\frac{b^{(i,j)}_{i+j-2k}}{B(d,i+j-2k)}\norm{\brac{\cP_{i+j-2k}(f)}}{L^2}^2. \label{equ::second moment hij}
\end{align}

Since $h_{i,j}(\bv,\bx')$ is a degree 
$i$ polynomial of $\bx'$ and a degree $j$ polynomial of $\bv$, by Lemma \ref{lemma: sphere hypercontractivity multi}, we have for any $q\geq 2$,
\begin{align*}
    \mathbb{E}_{\bv,\bx'}\left[\abs{h_{i,j}(\bv,\bx')}^q\right]^{2/q} 
    &\le(q-1)^{i+j}\mathbb{E}_{\bv,\bx'}\left[h_{i,j}(\bv,\bx')^2\right]
\end{align*}
Let $\delta=\paren{2e\iota\log \paren{m_2n_2}}^{(i+j)/2}$ for some $\iota>1$, taking $q=1+e^{-1}\delta^{2/(i+j)}$ and Markov inequality, we have
\begin{align*}
    \Pr\brac{\abs{h_{i,j}(\bv,\bx')}\geq \delta \sqrt{\mathbb{E}_{\bv,\bx'}\left[h_{i,j}(\bv,\bx')^2\right] }} &\le \frac{\mathbb{E}_{\bv,\bx'}\left[\abs{h_{i,j}(\bv,\bx')}^q\right]}{\paren{\delta \sqrt{\mathbb{E}_{\bv,\bx'}\left[h_{i,j}(\bv,\bx')^2\right] }}^{\infty}}\\
    &\le\paren{q-1}^{(i+j)q/2}\delta^{-q}\\
    &= \exp\paren{-\frac{i+j}{2}\paren{1+\frac{2e\iota\log \paren{m_2n_2}}{e}} }\\
    &=(m_2n_2)^{-\iota(i+j)}\exp(-(i+j)/2).
\end{align*}
Thus, with probability at least $-(m_2n_2)^{1-\iota(i+j)}\exp(-(i+j)/2)$, 
\begin{align*}
    h_{i,j}(\bv_i,\bx') &\le \paren{2e\iota\log \paren{m_2n_2}}^{(i+j)/2}\sqrt{\mathbb{E}_{\bv,\bx'}\left[h_{i,j}(\bv,\bx')^2\right] }\\
    &\le \paren{2e\iota\log \paren{m_2n_2}}^{(i+j)/2}\sqrt{\sum_{k=0}^{[(i+j)/2]}\frac{B(d,i)B(d,j)b^{(i,j)}_{i+j-2k}}{B(d,i+j-2k)}\norm{\brac{\cP_{i+j-2k}(f)}}{L^2}^2}.
\end{align*}
In the second inequality we invoke \eqref{equ::second moment hij}. Summing over $i$ and $j$ gives rise to
\begin{align*}   &\quad \abs{h(\bv,\bx')}\\&=\abs{\sum_{i=2}^{\infty}\sum_{j=2}^{\infty}\frac{c^2_i{c_j}}{B(d,i)^2B(d,j)} h_{i,j}(\bv,\bx')}\\
&\le \sum_{i=2}^{\infty}\sum_{j=2}^{\infty}\frac{c^2_i\abs{c_j}\paren{2e\iota\log \paren{m_2n_2}}^{(i+j)/2}}{B(d,i)^2B(d,j)} \sqrt{\sum_{k=0}^{[(i+j)/2]}\frac{B(d,i)B(d,j)b^{(i,j)}_{i+j-2k}}{B(d,i+j-2k)}\norm{\brac{\cP_{i+j-2k}(f)}}{L^2}^2}\\
&\le {\sqrt{\sum_{i=2}^{\infty}\sum_{j=2}^{\infty}\frac{c^2_i\abs{c_j}\paren{2e\iota\log \paren{m_2n_2}}^{(i+j)/2}}{B(d,i)^2B(d,j)^{1/2}}}}\\&\quad\cdot{\sqrt{\sum_{i=2}^{\infty}\sum_{j=2}^{\infty}\frac{c^2_i\abs{c_j}\paren{2e\iota\log \paren{m_2n_2}}^{(i+j)/2}}{B(d,i)^2B(d,j)^{3/2}}\sum_{k=0}^{[(i+j)/2]}\frac{B(d,i)B(d,j)b^{(i,j)}_{i+j-2k}}{B(d,i+j-2k)}\norm{\brac{\cP_{i+j-2k}(f)}}{L^2}^2}}\\
&\lesssim\frac{\iota\log(m_2n_2)}{d^{5/2}}\cdot\sqrt{\sum_{\ell=0}^{\infty}\frac{1}{B(d,\ell)}\sum_{i+j-\ell ~\text{even}}^{2 \le i,j}\frac{c_i^2\abs{c_j}b^{(i,j)}_{\ell}\paren{2e\iota\log \paren{m_2n_2}}^{(i+j)/2}}{B(d,i)B(d,j)^{1/2}}\norm{\brac{\cP_{\ell}(f)}}{L^2}^2}.
\end{align*}
In the second inequality we invoke Cauchy inequality. Then by plugging the bound on $b^{(i,j)}_\ell$ in Lemma \ref{lemma::coeff bound linearization}, we have
\begin{align*}
    &\quad\abs{h(\bv,\bx')}\\&\lesssim \frac{\iota\log(m_2n_2)}{d^{5/2}}\cdot\sqrt{\sum_{\ell=0}^{\infty}\frac{4(2\ell+d-2)}{B(d,\ell)(d-2)}\sum_{i+j-\ell =2k}^{ i,j\geq \max(k,2)}\frac{c_i^2\abs{c_j}\paren{2e\iota\log \paren{m_2n_2}}^{(i+j)/2}}{B(d,i)B(d,j)^{1/2}(d-2)_k}\binom{i}{k}\binom{j}{k}k !\norm{\brac{\cP_{\ell}(f)}}{L^2}^2}\\
&\lesssim \frac{\iota\log(m_2n_2)}{d^{5/2}}\cdot \left(\frac{\iota\log(m_2n_2)}{d^{7/2}}\cdot \norm{\cP_{>2}(f)}{L^2} + \frac{\iota \log(m_2n_2)}{d^3}\cdot \norm{\cP_{2}(f)}{L^2} \right)\\
&=\frac{\iota^2\log^2(m_2n_2)}{d^6} \cdot \norm{\cP_{>2}(f)}{L^2}  + \frac{\iota^2\log^2(m_2n_2)}{d^{11/2}} \cdot \norm{\cP_{2}(f)}{L^2}.
\end{align*}

The probability of this event is at least 
\begin{align*}
    1-\sum_{i=2}^{\infty}\sum_{j=2}^{\infty}(m_2n_2)^{-\iota(i+j)}\exp((i+j)/2)=1-\frac{m_2n_2\paren{-(m_2n_2)^{-\iota}e^{-1/2}}^2}{(m_2n_2)^{4\iota}e^2},
\end{align*}
which is a high probability event when uniformly bounding over $\bx' \in \cD_2$ and $\bv=\bv_1,\bv_2\dots,\bv_{m_2}$. The proof is complete.
\end{proof}

\subsection{Proof of Proposition \ref{prop::reconstructed feature main}}\label{appendix proof reconstruct whole}
\subsubsection{The Formal Statement of Proposition \ref{prop::reconstructed feature main} and the Corollary}\label{sec::reconstruction appendix}
Let's consider a formal version of Proposition \ref{prop::reconstructed feature main}. We remind the readers that throughout this section we denote $n=n_1$ for notation simplicity, since we only focus on the first training stage.
\begin{proposition}[Reconstruct the feature]\label{prop::reconstructed feature}
   Suppose $m_2,n\geq Cd^{4}$ for some sufficiently large $C$. With high probability jointly on $\bV$ and the training datasets $\cD_1$ and $\cD_2$, there exists a matrix $\bB^{\star}\in \RR^{r\times m_2}$ satisfying $\norm{\bB^{\star}}{\rm op}\lesssim \frac{d^6}{\lambda_{\min}(\bH)}\sqrt{\frac{1}{m_2}}$ such that for any $\bx' \in \cD_2$, we have
   \begin{align*}
       \norm{\bB^{\star}\mathbf{h}^{(1)}(\bx')-\bp(\bx')}{\rm 2} \lesssim \frac{\sqrt{r}}{\lambda_{\min}(\bH)}\cdot \Bigg( \frac{\iota^{p+2}d^5}{m_2} +\frac{\iota d^3}{\sqrt{m_2}}+ \frac{\iota ^{p+3/2}d}{\sqrt{n}} +\frac{\iota Lr^2\normA \log^2 d}{d^{1/6}} \Bigg).
   \end{align*}
   Here $L\lesssim \iota r^{\frac{p-1}{2}}$ is a Lipschitz constant satisfying $\norm{\nabla g^{\star}(\bp(\bx))}{\rm 2} \le L$  with high probability.
\end{proposition}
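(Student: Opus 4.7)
}
The plan is to construct $\bB^{\star}$ explicitly using the ansatz suggested in Appendix \ref{appendix::asymptotic analysis inner feature reconstruct}, namely set the $j$-th column of $\bB^{\star}$ to be proportional to $\bH^{-1}\bp(\bv_j)$, with a normalizing scalar $C_{\eta} = \Theta(d^6/m_2)$ chosen so that the leading $Q_2\otimes Q_2$ contribution matches $\bp(\bx')$. Concretely I will take
\[
\bB^{\star} \;=\; \frac{C_\eta}{m_2}\,\bH^{-1}\bigl[\bp(\bv_1),\bp(\bv_2),\dots,\bp(\bv_{m_2})\bigr].
\]
The starting point is the identity
\[
[\bB^{\star}\bh^{(1)}(\bx')]_k \;=\; \frac{C_\eta}{n m_2}\sum_{j=1}^{m_2}\sum_{i=1}^{n}[\bH^{-1}\bp(\bv_j)]_k\, f^\star(\bx_i)\,K^{(0)}_{m_2}(\bx_i,\bx')\,\sigma_2(\bv_j^{\top}\bx_i).
\]
The goal is then to show that this quantity is close to $p_k(\bx')$ for every $\bx'\in\cD_2$, which amounts to controlling four kinds of error: (i) the finite-width error $K^{(0)}_{m_2}-K^{(0)}$ in the inner-layer kernel; (ii) the sample-average error $n^{-1}\sum_i - \EE_{\bx}$ for the outer expectation; (iii) the error from discarding Gegenbauer components of $\sigma_2$ and of the $\bv$-average other than the $Q_2\otimes Q_2$ block; and (iv) the universality error from replacing the operator $T$ by its Gaussian surrogate $T^\star$ via Proposition \ref{prop::approximate stein lemma quad}.

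\paragraph{Step-by-step decomposition.}
First I would decompose $[\bB^\star\bh^{(1)}(\bx')]_k - p_k(\bx')$ as a telescoping sum corresponding to the four sources above. For (i), I will use the concentration bound on $K^{(0)}_{m_2}-K^{(0)}$ already developed in the technical lemmas behind Proposition \ref{prop::bounded feature} (see Lemma \ref{lemma::concentration initial kernel} and the control of $A_1$); together with the truncation $|f^\star|\lesssim \iota^p$ this yields an error of order $\iota^{p+2}d^5/(m_2\lambda_{\min}(\bH))$ after multiplying by $\|\bB^\star\|_{\op}$. For (ii), I will invoke Bernstein's inequality on the bounded and low-variance summands (again mirroring the treatment of $A_2$ in the proof of Proposition \ref{prop::bounded feature}), producing the $\iota^{p+3/2} d/\sqrt{n}$ term. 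For (iii), I will invoke the Gegenbauer expansion $\sigma_2(t)=\sum_{i\ge 2}c_i Q_i(t)$ and $K^{(0)}(\bx,\bx')=\sum_{i\ge 2}(c_i^2/B(d,i))Q_i(\bx^\top\bx')$, and show via the orthogonality relations \eqref{eq:ProductGegenbauer}–\eqref{eq:GegenbauerHarmonics} and the product-linearization identity that the cross-terms $Q_i\otimes Q_j$ with $(i,j)\neq(2,2)$ produce contributions dominated by $\|\cP_{>2}(f^\star)\|_{L^2}$ and $\|\cP_2(f^\star)\|_{L^2}$, which are controlled by Assumption \ref{assump::preprocess target function} and yield the $\iota d^3/\sqrt{m_2}$ term after averaging over $\bv$. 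For (iv), after isolating the $Q_2\otimes Q_2$ block I will rewrite the remaining expression in the form
\[
\frac{1}{d^6}\bigl\langle T\bigl(\EE_{\bv}[(\bH^{-1}\bp(\bv))_k(\bv\bv^\top-\bI)]\bigr),\ \bx'\bx'^\top-\bI\bigr\rangle,
\]
and apply Proposition \ref{prop::approximate stein lemma quad} to replace $T$ by $T^\star$ at a cost of $d^{-1/6}Lr^2\normA\log^2 d$ in operator norm; the choice of $\bH^{-1}$ is what makes $T^\star$-applied-to-the-inner-matrix equal exactly $\bA_k$ up to the normalizing constant $C_\eta$, so the main contribution cancels against $p_k(\bx') = \bx'^\top \bA_k \bx'$.

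\paragraph{Norm bound on $\bB^{\star}$ and the $\sqrt{r}$ aggregation.}
To control $\|\bB^\star\|_{\op}$, note that $\bB^\star$ is (up to scale and left-multiplication by $\bH^{-1}$) a random matrix whose columns $\bp(\bv_j)/\sqrt{m_2}$ have covariance $\bI_r$ by Assumption \ref{assump::features}; a standard concentration inequality for sums of rank-one random matrices (or direct moment computation using the universality of $\bp(\bv)\approx\cN(\bzero,\bI_r)$ from Lemma \ref{lemma::Gaussian W1}) then gives $\|\bB^\star\|_{\op}\lesssim C_\eta\sqrt{1/m_2}/\lambda_{\min}(\bH)$ with high probability, which is the claimed bound. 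Finally, since the proposition asks for the Euclidean norm of an $r$-dimensional vector whose coordinates have been bounded individually, I will pay a $\sqrt{r}$ factor when aggregating across $k\in[r]$, recovering the stated $\sqrt{r}/\lambda_{\min}(\bH)$ prefactor. A union bound over $\bx'\in\cD_2$ (which is of polynomial size) is then applied to the concentration events.

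\paragraph{Main obstacle.}
The hard part will be step (iii): carefully extracting only the $Q_2\otimes Q_2$ block and proving that every other Gegenbauer cross-term either vanishes on average or contributes an error controlled by the small quantities $\|\cP_0(f^\star)\|_{L^2}, \|\cP_2(f^\star)\|_{L^2}$ (both small by assumption) and $\|\cP_{>4}(f^\star)\|_{L^2}$ (whose contribution is diluted by an extra factor of $1/\sqrt{d}$ after integrating against the $\bH^{-1}\bp(\bv)$-weighted measure on $\bv$). This requires a careful bookkeeping of the Gegenbauer product coefficients $b^{(i,j)}_{i+j-2k}$ using Lemma \ref{lemma::coeff bound linearization}, combined with the moment bounds on $h(\bv,\bx')$ in Lemma \ref{lemma::bound h}. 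Once that structural computation is in place, the remaining universality step is a direct application of Proposition \ref{prop::approximate stein lemma quad}, and the sample/kernel concentration steps reuse the machinery already developed for Proposition \ref{prop::bounded feature}.
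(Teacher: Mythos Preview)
Your proposal is essentially correct and follows the same route as the paper: the same choice of $\bB^\star \propto m_2^{-1}\bH^{-1}[\bp(\bv_1),\dots,\bp(\bv_{m_2})]$, the same four-way error decomposition (kernel concentration, sample concentration, $\bv$-average, universality via Proposition~\ref{prop::approximate stein lemma quad}), and the same matrix-Bernstein bound on $\|\bB^\star\|_{\op}$.

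The one place where your plan diverges is step~(iii), which you flag as the ``main obstacle'' and propose to attack with the Gegenbauer product-linearization coefficients $b^{(i,j)}_{i+j-2k}$ and Lemma~\ref{lemma::bound h}. In the paper this step is in fact the \emph{easiest}: because each $p_k(\bv)=\bv^\top\bA_k\bv$ is a pure degree-$2$ spherical harmonic, orthogonality gives the \emph{exact} identity
\[
\EE_{\bv}\bigl[\sigma_2(\bv^\top\bx)\,\bp(\bv)\bigr]=\frac{c_2}{B(d,2)}\,\bp(\bx),
\]
so all components $c_iQ_i$ with $i\ne 2$ in $\sigma_2$ drop out for free. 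The $\iota d^3/\sqrt{m_2}$ term then arises purely from the finite-$m_2$ concentration of $\frac{1}{m_2}\sum_j\sigma_2(\bv_j^\top\bx)\bp(\bv_j)$ around this mean (the paper's $\bD_{1,2}$, via Lemma~\ref{lemma:: concentration vtAvsigma2}); no projections $\|\cP_\ell(f^\star)\|_{L^2}$ enter at all. Once this reduction is made, the remaining population term is $\EE_{\bx}[f^\star(\bx)K^{(0)}(\bx,\bx')\bp(\bx)]$, and the higher-order components $Q_i$ ($i\ge 3$) of $K^{(0)}$ are dispatched by a one-line Cauchy--Schwarz, $|\EE_{\bx}[f^\star(\bx)Q_i(\bx^\top\bx')\,\bx^\top\bA_k\bx]|\lesssim \iota\, B(d,i)^{-1/2}$, summed against $c_i^2/B(d,i)$. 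The product-linearization machinery and Lemma~\ref{lemma::bound h} are used only for Proposition~\ref{prop::bounded feature} (the magnitude bound on the raw representation $\langle\bw,\bh^{(1)}\rangle$), not here. Recognizing this degree-$2$ projection shortcut will save you the bookkeeping you anticipated and make the error terms line up directly with the stated bound.
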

With the proposition above, we directly have the following result.
\begin{corollary}\label{lem:bound_fn_diff}
Under the same assumption in Proposition \ref{prop::reconstructed feature}, with high probability, we have
\begin{align*}
\sup_{\bx \in \mathcal{D}_2}\abs{g(\bB^{\star}\bh^{(1)}(\bx)) - g(\bp(\bx))} \lesssim \norm{g}{L^2}\cdot \frac{r^{p/2}}{\lambda_{\min}(\bH)}\cdot \Bigg( \frac{\iota^{p+2}d^5}{m_2} +\frac{\iota d^3}{\sqrt{m_2}}+ \frac{\iota ^{p+3/2}d}{\sqrt{n}} +\frac{\iota Lr^2\normA \log^2 d}{d^{1/6}} \Bigg).
\end{align*}
\end{corollary}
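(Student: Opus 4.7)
The plan is to reduce the corollary to Proposition \ref{prop::reconstructed feature} by controlling the local Lipschitz constant of $g$ on a region that contains both $\bp(\bx)$ and $\bB^{\star}\bh^{(1)}(\bx)$ for every $\bx\in\cD_2$. Specifically, parametrize the segment $\by_t=(1-t)\bp(\bx)+t\,\bB^{\star}\bh^{(1)}(\bx)$ for $t\in[0,1]$ and use the fundamental theorem of calculus to write
\begin{align*}
\abs{g(\bB^{\star}\bh^{(1)}(\bx))-g(\bp(\bx))} \;\le\; \Big(\sup_{t\in[0,1]}\norm{\nabla g(\by_t)}{2}\Big)\cdot\norm{\bB^{\star}\bh^{(1)}(\bx)-\bp(\bx)}{2}.
\end{align*}
The second factor is precisely what Proposition \ref{prop::reconstructed feature} controls, so the task reduces to bounding $\sup_{t}\norm{\nabla g(\by_t)}{2}$ uniformly over $\bx\in\cD_2$.

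For the gradient bound, I would invoke Lemma \ref{lemma::bound F norm tensor} to decompose $g(\by)=\sum_{k=0}^p\langle \bT_k,\by^{\otimes k}\rangle$ with $\norm{\bT_k}{\rm F}\lesssim \norm{g}{L^2}\,r^{(p-k)/4}$, whence
\begin{align*}
\norm{\nabla g(\by)}{2} \;\lesssim\; \norm{g}{L^2}\sum_{k=1}^p k\, r^{(p-k)/4}\norm{\by}{2}^{k-1}.
\end{align*}
It remains to obtain a uniform bound $\norm{\by_t}{2}\lesssim \sqrt{r}\,\iota$ on $\cD_2$. For $\bp(\bx)$ this follows from Assumption \ref{assump::features}, since each $\bx^{\top}\bA_i\bx$ is a degree-$2$ polynomial with unit second moment, so Lemma \ref{lemma::poly concentration sphere} plus a union bound over $\cD_2$ gives $|\bx^{\top}\bA_i\bx|\lesssim \iota$ with high probability and hence $\norm{\bp(\bx)}{2}\lesssim \sqrt{r}\,\iota$. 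For $\bB^{\star}\bh^{(1)}(\bx)$ we use the triangle inequality together with the reconstruction error bound from Proposition \ref{prop::reconstructed feature}; under the assumed $m_2,n\gtrsim d^4$ scaling the reconstruction error is at most $\cO(\sqrt{r}\,\iota)$, so $\norm{\by_t}{2}\lesssim \sqrt{r}\,\iota$ on the whole segment. Plugging this into the gradient bound and retaining the dominant $k=p$ term yields $\sup_{t}\norm{\nabla g(\by_t)}{2}\lesssim \norm{g}{L^2}\, r^{(p-1)/2}\iota^{p-1}$, up to polylog factors.

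Multiplying this Lipschitz estimate by the Proposition \ref{prop::reconstructed feature} bound, whose leading factor is $\sqrt{r}/\lambda_{\min}(\bH)$, produces exactly the claimed $r^{p/2}/\lambda_{\min}(\bH)$ scaling; the extra $\iota^{p-1}$ factor from the Lipschitz constant is absorbed into the $\iota$ powers already present in each error term of Proposition \ref{prop::reconstructed feature} (since $\iota=\cO(\log(\cdot))$ is polylogarithmic and the individual exponents are already generous). The final step is a union bound over $\bx\in\cD_2$ to convert the pointwise estimate into a supremum over $\cD_2$; this is cheap because both the high-probability event controlling $\norm{\bp(\bx)}{2}$ and the event in Proposition \ref{prop::reconstructed feature} already hold uniformly on $\cD_2$.

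The main obstacle is the careful bookkeeping of $r$ and $\iota$ exponents: one needs the combination of $\sqrt{r}$ from Proposition \ref{prop::reconstructed feature} with $r^{(p-1)/2}$ from the polynomial gradient to land exactly at $r^{p/2}$, and one needs to verify that $\iota^{p-1}$ from the Lipschitz constant does not worsen the stated $\iota^{p+2}$, $\iota^{p+3/2}$ and $\iota$ factors in the four error terms of the corollary. All other ingredients are direct consequences of previously established results.
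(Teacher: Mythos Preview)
Your proposal is correct and follows essentially the same approach as the paper: apply the mean value theorem along the segment between $\bp(\bx)$ and $\bB^{\star}\bh^{(1)}(\bx)$, bound $\norm{\nabla g}{2}$ via the tensor decomposition of Lemma~\ref{lemma::bound F norm tensor} (equation \eqref{equ::bound grad g}), control $\norm{\by_t}{2}$ by $\widetilde O(\sqrt r)$ using concentration for $\bp(\bx)$ plus the reconstruction error for $\bB^{\star}\bh^{(1)}(\bx)$, and then multiply by the bound from Proposition~\ref{prop::reconstructed feature}. Your concern about the extra $\iota^{p-1}$ factor from the Lipschitz constant is legitimate bookkeeping that the paper simply absorbs into the $\widetilde O$ convention (it writes the gradient bound as $\lesssim \norm{g}{L^2}r^{(p-1)/2}$ without displaying the polylog loss), so your version is if anything slightly more careful than the paper's own argument.
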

We provide the main proof of Proposition \ref{prop::reconstructed feature} in Appendix \ref{appendix::main proof reconstruct}, and defer the proof of Corollary \ref{lem:bound_fn_diff} and other supporting lemmas to Appendix \ref{sec::reconstruct proof}.
\subsubsection{Proof of Proposition \ref{prop::reconstructed feature}}\label{appendix::main proof reconstruct}
\begin{proof}
    Denote the target features by $\bp(\bv)=[\bv^{\top}\bA_1\bv,\cdots,\bv^{\top}\bA_r\bv]^{\top}\in \RR^{r}$ for any $\bv \in \RR^{d}$, and we further let $\bP=[\bp(\bv_1),\bp(\bv_2),\cdots,\bp(\bv_{m_2})]^{\top}\in \RR^{m_2\times r}$. Then for any $\bx'\in\cD_2$, we have the following decomposition
 \begin{align*}
        \frac{1}{m_2}\bP^{\top}\mathbf{h}^{(1)}(\bx') 
&=\frac{1}{m_2n}\sum_{i=1}^{n}\sum_{j=1}^{m_2}f^{\star}(\bx_i)K^{(0)}_{m_2}(\bx_i,\bx')\sigma_2\paren{\bv_j^{\top}\bx_i}\bp(\bv_j)\\
&=\underbrace{\frac{1}{n}\sum_{i=1}^{n}\frac{1}{m_2}\sum_{j=1}^{m_2}f^{\star}(\bx_i)\paren{K^{(0)}_{m_2}(\bx_i,\bx')-K^{(0)}(\bx_i,\bx')}\sigma_2\paren{\bv_j^{\top}\bx_i}\bp(\bv_j)}_{\bD_{1,1}}\\
    &\quad+ \underbrace{\frac{1}{n}\sum_{i=1}^{n}f^{\star}(\bx_i){K^{(0)}(\bx_i,\bx')}\paren{\frac{1}{m_2}{\sum_{j=1}^{m_2}\sigma_2\paren{\bv_j^{\top}\bx_i}\bp(\bv_j)}-\frac{c_2}{B(d,2)}\bp(\bx_i
    )}}_{\bD_{1,2}}\\
    &\quad +\underbrace{\frac{c_2}{nB(d,2)}\paren{\sum_{i=1}^{n}f^{\star}(\bx_i)K^{(0)}(\bx_i,\bx')\bp(\bx_i)-\EE_{\bx}\brac{f^{\star}(\bx)K^{(0)}(\bx,\bx')\bp(\bx_i)}}}_{\bD_2}\\
        &\quad +\underbrace{\frac{c_2}{B(d,2)}\EE_{\bx}\brac{f^{\star}(\bx)K^{(0)}(\bx,\bx')\bp(\bx)}}_{\bD_3}.
    \end{align*}

We will derive an upper bound on the concentration error terms $\bD_{1,1}$, $\bD_{1,2}$ and $\bD_2$, respectively. Moreover, leveraging the asymptotic analysis in Appendix \ref{appendix::asymptotic analysis inner feature reconstruct}, we  show that $\bD_3\approx d^{-6}\bH \bp(\bx')$ with high probability, 
\begin{lemma}[Bound $\bD_{1,1}$ and $\bD_{1,2}$]\label{lemma:: bound A1 recons}
    Under the same assumptions in Proposition \ref{prop::reconstructed feature}, with high probability on $\bV$, $\cD_1$ and $\cD_2$, we have 
    \begin{align*}
    &\norm{\bD_{1,1}}{\infty} \le\frac{9C_4^{1/4}\iota^{p+2}}{m_2d}~~~~\text{and}~~~~ \norm{\bD_{1,2}}{\infty} \le \frac{9\iota C_4^{1/4}C_2}{\sqrt{m_2}d^3}.
\end{align*}
\end{lemma}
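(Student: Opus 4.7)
The plan is to treat $\bD_{1,1}$ and $\bD_{1,2}$ as two concentration phenomena sharing a common preprocessing step, each at its core controlling a random-feature average of the form $M_{i,k} = \frac{1}{m_2}\sum_{j=1}^{m_2} \sigma_2(\bv_j^{\top}\bx_i)\,\bv_j^{\top}\bA_k\bv_j$. First I would truncate the target: by the hypercontractivity result in Lemma~\ref{lemma: sphere hypercontractivity} and a union bound over $\cD_1$, the event $\{|f^{\star}(\bx_i)| \leq R\}$ for all $i \in [n]$ holds with high probability once $R = \Theta(\iota^p)$. Next I would condition on the high-probability kernel-concentration event from Lemma~\ref{lemma::concentration initial kernel}, giving $|K^{(0)}_{m_2}(\bx_i,\bx') - K^{(0)}(\bx_i,\bx')| \lesssim \iota/(\sqrt{m_2}\,d^2)$ uniformly over $(\bx_i,\bx') \in \cD_1 \times \cD_2$, together with the a priori magnitude bound $|K^{(0)}(\bx_i,\bx')| \lesssim C_2/d^2$.

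For $\bD_{1,1}$, I would write coordinate-wise $[\bD_{1,1}]_k = \frac{1}{n}\sum_i f^{\star}(\bx_i)\,\Delta_i(\bx')\,M_{i,k}$, where $\Delta_i(\bx') = K^{(0)}_{m_2}(\bx_i,\bx') - K^{(0)}(\bx_i,\bx')$ supplies one factor of $1/\sqrt{m_2}$. The key step is that, conditional on $\cD_1$, the random-feature average $M_{i,k}$ concentrates around its $\bv$-expectation with a Bernstein-type fluctuation of order $\iota\,C_4^{1/4}/(\sqrt{m_2}\,d)$, because a single summand $\sigma_2(\bv^{\top}\bx_i)\,\bv^{\top}\bA_k\bv$ has variance at most $\EE[\sigma_2^4]^{1/2}\,\EE[(\bv^{\top}\bA_k\bv)^4]^{1/2} \lesssim C_4^{1/2}/d^2$ by Assumption~\ref{assump::activation}. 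Combining the truncated $|f^{\star}|$, the $\Delta_i$ bound, and the resulting control on $M_{i,k}$, and union-bounding over $k \in [r]$ and $\bx' \in \cD_2$, delivers the stated $1/(m_2\,d)$ scaling of $\|\bD_{1,1}\|_{\infty}$.

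For $\bD_{1,2}$, I would identify the mean of $M_{i,k}$ explicitly using the Gegenbauer expansion $\sigma_2 = \sum_{i\ge 2} c_i Q_i$ and the reproducing identity \eqref{eq:ProductGegenbauer}. Since $\tr(\bA_k) = 0$ by Assumption~\ref{assump::features}, the quadratic form $\bv^{\top}\bA_k\bv$ lies entirely in the degree-2 spherical-harmonic subspace, so orthogonality isolates the $c_2 Q_2$ component of $\sigma_2$ and yields $\EE_{\bv}[\sigma_2(\bv^{\top}\bx)(\bv^{\top}\bA_k\bv)] = (c_2/B(d,2))\,\bx^{\top}\bA_k\bx$. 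Therefore $M_{i,k} - (c_2/B(d,2))\,\bp(\bx_i)_k$ is a mean-zero average of $m_2$ conditionally i.i.d.\ terms with the same per-term variance $\lesssim C_4^{1/2}/d^2$, and Bernstein yields fluctuation $\iota\,C_4^{1/4}/(\sqrt{m_2}\,d)$. Multiplying by $|f^{\star}| \lesssim \iota^p$ and $|K^{(0)}| \lesssim C_2/d^2$, averaging over $i$, and union-bounding over $k$ and $\bx' \in \cD_2$ produces the claimed bound on $\|\bD_{1,2}\|_{\infty}$.

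The main obstacle will be handling the entangled randomness: $K^{(0)}_{m_2}$, $\bP$, and $\bh^{(1)}$ all share the same random draws $\{\bv_j\}_{j\in[m_2]}$, so the $i$-sum and $j$-sum are not truly independent. I plan to circumvent this by a layered conditioning: first on the good event of $\bV$ (which gives the kernel deviation uniformly over $\cD_1 \times \cD_2$), then on $\cD_1$ (so that for each fixed $\bx_i$ the $\bv_j$-sums become i.i.d.\ averages amenable to Bernstein), and finally union-bounding over $i \in [n]$, $k \in [r]$, and $\bx' \in \cD_2$. Under the assumption $n, m_2 = \widetilde{\Omega}(d^4)$ this keeps every failure probability at the required $1 - \poly(d,n,m_1,m_2)\,e^{-\iota}$ level.
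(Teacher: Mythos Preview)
Your proposal is correct and follows essentially the same route as the paper. The paper packages your Bernstein concentration of $M_{i,k}$ together with the identification of its mean $\EE_\bv[\sigma_2(\bv^\top\bx)(\bv^\top\bA_k\bv)]=\frac{c_2}{B(d,2)}\bx^\top\bA_k\bx$ into a standalone auxiliary lemma (Lemma~\ref{lemma:: concentration vtAvsigma2}), then multiplies by the same three factors---the truncation $|f^\star|\le\iota^p$, the kernel deviation from Lemma~\ref{lemma::concentration initial kernel}, and the a~priori bound $|K^{(0)}|\le C_2/d^2$---exactly as you outline. One small simplification: the ``entangled randomness'' you flag does not actually require layered conditioning, since both the kernel deviation and the $M_{i,k}$ concentration are high-probability events over the \emph{same} randomness $\bV$ (with $\cD_1,\cD_2$ held fixed), so one simply intersects them; there is no sequential dependence to unwind.
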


\begin{lemma}[Bound $\bD_{2}$]\label{lemma:: bound D2 recons}
    Under the same assumptions in Proposition \ref{prop::reconstructed feature}, with high probability on $\cD_1$ and $\cD_2$, we have 
    \begin{align*}
    \norm{\bD_2}{\infty}\lesssim \frac{\iota^{p+3/2}}{\sqrt{n}d^5}.
\end{align*}
\end{lemma}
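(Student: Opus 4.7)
} The plan is to establish the bound coordinatewise via a Bernstein-type concentration inequality, then union bound over $k\in[r]$ and $\bx'\in\cD_2$, closely following the template of the $A_2$ analysis in Lemma~\ref{lemma:: bound A2}. Fix $k\in[r]$ and $\bx'\in\cD_2$, and write
\begin{align*}
[\bD_2]_k \;=\; \frac{c_2}{nB(d,2)}\sum_{i=1}^n \paren{Y_i-\EE Y_i},
\qquad Y_i := f^{\star}(\bx_i)\,K^{(0)}(\bx_i,\bx')\,p_k(\bx_i),
\end{align*}
where $Y_1,\dots,Y_n$ are i.i.d.\ in $\bx_i\sim\operatorname{Unif}(\SS^{d-1}(\sqrt d))$ with $\bx'$ fixed. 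Since $c_2/B(d,2)\lesssim 1/d^2$, it suffices to show $\bigl|\frac1n\sum_i(Y_i-\EE Y_i)\bigr|\lesssim \iota^{p+3/2}/(\sqrt n\,d^3)$ with high probability; the union bound over $(k,\bx')$ costs only a logarithmic factor absorbed into $\iota$.

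The two inputs to Bernstein are an envelope and a variance estimate for a truncated version of $Y_i$. Cauchy--Schwarz together with the moment bound on $\sigma_2$ in Assumption~\ref{assump::activation} yields the \emph{deterministic} estimate
\begin{align*}
|K^{(0)}(\bx,\bx')| \;\le\; \sqrt{\EE_\bv\brac{\sigma_2(\bv^\top\bx)^2}\,\EE_\bv\brac{\sigma_2(\bv^\top\bx')^2}}\;\le\;\frac{C_2}{d^2}.
\end{align*}
Applying spherical hypercontractivity (Lemma~\ref{lemma::poly concentration sphere}) to the degree-$2p$ polynomial $f^\star$ and the degree-$2$ feature $p_k$, combined with a union bound over $\cD_1$, gives $|f^\star(\bx_i)|\lesssim\iota^p$ and $|p_k(\bx_i)|\lesssim\iota$ for all $i\in[n]$ with high probability; truncating $Y_i$ to $\widetilde Y_i$ on this event produces the envelope $|\widetilde Y_i|\lesssim \iota^{p+1}/d^2$, and the truncation bias is negligible by the same $\cL_0/\cL_2$ argument as in Lemma~\ref{lemma:: bound A2}. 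For the variance, the sharp estimate rests on Gegenbauer orthogonality: since $Q_i(d)=1$, expanding $K^{(0)}(\bx,\bx')=\sum_{i\ge 2}(c_i^2/B(d,i))\,Q_i(\langle\bx,\bx'\rangle)$ and using $\EE_\bx\brac{Q_i(\langle\bx,\bx'\rangle)Q_j(\langle\bx,\bx'\rangle)}=\delta_{ij}/B(d,i)$ gives
\begin{align*}
\EE_\bx\brac{K^{(0)}(\bx,\bx')^2} \;=\; \sum_{i\ge 2}\frac{c_i^4}{B(d,i)^3}\;\lesssim\;\frac{1}{d^6}.
\end{align*}
Upgrading this to a fourth-moment control $\EE_\bx\brac{K^{(0)}(\bx,\bx')^4}\lesssim 1/d^{12}$ (via polynomial hypercontractivity applied to the dominant low-degree part of $K^{(0)}$), and combining via Cauchy--Schwarz with the hypercontractive bound $\EE[(f^\star p_k)^4]\lesssim 1$, we obtain $\operatorname{Var}(\widetilde Y_i)\lesssim \iota^{2p+1}/d^6$.

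Feeding $\sigma^2\lesssim \iota^{2p+1}/d^6$ and $M\lesssim\iota^{p+1}/d^2$ into Bernstein's inequality and setting the deviation level to $\iota^{p+3/2}/(\sqrt n\,d^3)$, the sub-Gaussian term dominates whenever $n\gtrsim \iota d$ (amply satisfied by $n\gtrsim d^4$); multiplying by the $1/d^2$ prefactor yields the claimed $\iota^{p+3/2}/(\sqrt n\, d^5)$ bound coordinatewise, after which the union bound over $k$ and $\bx'$ completes the $\ell_\infty$ statement. The principal technical obstacle is the sharp fourth-moment control of $K^{(0)}$: because $K^{(0)}$ is an \emph{infinite} Gegenbauer series rather than a fixed-degree polynomial, one cannot invoke hypercontractivity directly on the whole kernel, but must either (i) truncate at a slowly growing degree and argue that the tail contributes lower-order terms using the moment assumption on $\sigma_2$, or (ii) interpolate between the pointwise bound $|K^{(0)}|\lesssim 1/d^2$ and the $L^2$ bound $\|K^{(0)}\|_{L^2}^2\lesssim 1/d^6$ via $\EE[X^4]\le \|X\|_\infty^2\EE[X^2]$ and then extract the extra factor of $1/d^2$ from the low-degree hypercontractivity. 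This mirrors the moment decomposition already carried out for the term $h(\bv,\bx')$ in Lemma~\ref{lemma::bound h}, and the same harmonic-analytic toolkit applies here without essential new ingredients.
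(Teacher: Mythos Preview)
Your overall architecture---fix $k$ and $\bx'$, truncate, apply Bernstein, union bound---matches the paper exactly, and your envelope $|\widetilde Y_i|\lesssim\iota^{p+1}/d^2$ and the $L^2$ computation $\EE_\bx[K^{(0)}(\bx,\bx')^2]=\sum_{i\ge 2}c_i^4/B(d,i)^3\lesssim d^{-6}$ are both correct and are what the paper uses.

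The one place you overcomplicate things is the variance bound: there is no need for any fourth-moment control of $K^{(0)}$. You have already truncated both $f^\star$ and $p_k$, so on the truncated variable $|f^\star(\bx_i)\,p_k(\bx_i)|\le C\iota^{p+1}$ \emph{deterministically}, and hence
\[
\operatorname{Var}(\widetilde Y_i)\;\le\;\EE\bigl[\widetilde Y_i^2\bigr]\;\le\; C^2\iota^{2p+2}\,\EE_\bx\bigl[K^{(0)}(\bx,\bx')^2\bigr]\;\lesssim\;\frac{\iota^{2p+2}}{d^6}.
\]
This is exactly the paper's route (the paper truncates only $f^\star$ and handles $p_k$ by a conditioning step inside the variance bound, which is the same idea). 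Your detour through $\EE[K^{(0)}(\cdot,\bx')^4]$ via Cauchy--Schwarz creates a fake obstacle: the interpolation $\EE[X^4]\le\|X\|_\infty^2\EE[X^2]$ only gives $d^{-10}$, and the ``extra factor of $1/d^2$ from low-degree hypercontractivity'' is not available because $K^{(0)}$ is a genuinely infinite Gegenbauer series. Drop the fourth-moment step entirely; the second moment you already computed is all that is needed, and the rest of your Bernstein calculation then goes through with the sub-Gaussian term dominant once $n\gtrsim d^2$ (well within the assumption $n\gtrsim d^4$).
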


\begin{lemma}[Compute $\bD_3$]\label{lemma:: bound D3 recons}
    Under the same assumptions in Proposition \ref{prop::reconstructed feature}, with high probability on $\cD_2$, for any $\bx'\in\cD_2$, we have 
  \begin{align*}
    \norm{\bD_3-\frac{c_2^2}{B(d,2)^2 d(d-1)}\cdot \bH \bp(\bx')}{\infty}
   \lesssim \frac{\iota L r^2\normA\log^2 d}{d^{6+1/6}} .
\end{align*}
\end{lemma}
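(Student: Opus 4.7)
The plan is to view $\bD_3$, modulo the prefactor $c_2/B(d,2)$, as the kernel integral operator $T_K[h](\bx')=\EE_{\bx}[K^{(0)}(\bx,\bx')h(\bx)]$ applied to $h=f^\star\bp$, and to exploit the spectral decomposition of $K^{(0)}$ on the spherical harmonic subspaces. Using the Gegenbauer expansion of $K^{(0)}$ together with the reproducing identity \eqref{eq:GegenbauerHarmonics}, one verifies that $T_K$ acts as multiplication by $c_j^2/B(d,j)^2$ on $V_{d,j}$ (and as zero on $V_{d,0},V_{d,1}$ since $c_0=c_1=0$ by Assumption~\ref{assump::activation}). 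Hence
\begin{align*}
\bD_3 \;=\; \frac{c_2}{B(d,2)}\sum_{j\ge 2}\frac{c_j^2}{B(d,j)^2}\,\cP_j(f^\star\bp)(\bx').
\end{align*}
I would isolate the dominant $j=2$ term, which produces the claimed main quantity, and absorb the $j\ge 3$ tail into the error.

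For the $j=2$ term, using the identities $Q_2(\bx^\top\bx')=\langle \bx\bx^\top-\bI,\bx'\bx'^\top-\bI\rangle/(d(d-1))$ and $p_k(\bx)=\langle \bA_k,\bx\bx^\top-\bI\rangle$ (recall $\tr(\bA_k)=0$), I would rewrite
\begin{align*}
\cP_2(f^\star p_k)(\bx') \;=\; \frac{B(d,2)}{d(d-1)}\,\langle T(\bA_k),\,\bx'\bx'^\top-\bI\rangle,
\end{align*}
where $T$ is the operator of Proposition~\ref{prop::approximate stein lemma quad}. That proposition yields the splitting $T(\bA_k)=T^\star(\bA_k)+\bM_k$ with $\|\bM_k\|_{\rm F}\lesssim d^{-1/6}Lr^2\normA\log^2 d$, while $\langle T^\star(\bA_k),\bx'\bx'^\top-\bI\rangle=\sum_j \bH_{k,j}p_j(\bx')=[\bH\bp(\bx')]_k$ reproduces the main term stated in the lemma (up to the outer prefactor $c_2/B(d,2)$).

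The main obstacle is producing a sufficiently sharp estimate for $|\langle \bM_k,\bx'\bx'^\top-\bI\rangle|$. A crude Cauchy--Schwarz in Frobenius norm would give $\|\bM_k\|_{\rm F}\cdot \|\bx'\bx'^\top-\bI\|_{\rm F}\lesssim d\cdot d^{-1/6}Lr^2\normA\log^2 d$, which after the $\Theta(d^{-6})$ prefactor is a full factor of $d$ looser than the claimed $d^{-6-1/6}$ rate. The fix is to exploit that $\bM_k$ is traceless (both $T(\bA_k)$ and $T^\star(\bA_k)$ are): setting $\bu=\bx'/\sqrt d$ and noting $\bx'\bx'^\top-\bI=d\bu\bu^\top-\bI$, we get $\langle \bM_k,\bx'\bx'^\top-\bI\rangle=d\,\bu^\top\bM_k\bu$. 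The quadratic form $\bu^\top \bM_k \bu$ is a mean-zero degree-two polynomial in $\bx'$ on $\SS^{d-1}(\sqrt d)$ with variance bounded by $2\|\bM_k\|_{\rm F}^2/(d(d+2))$, so Lemma~\ref{lemma::poly concentration sphere} yields $|\bu^\top\bM_k\bu|\lesssim \iota\|\bM_k\|_{\rm F}/d$ with high probability. Hence $|\langle \bM_k,\bx'\bx'^\top-\bI\rangle|\lesssim \iota d^{-1/6}Lr^2\normA\log^2 d$, and multiplying by the $\Theta(d^{-6})$ prefactor delivers the claimed bound, uniformly over $\bx'\in\cD_2$ after a union bound of size $n_2$.

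For the tail $j\ge 3$, I would apply Lemma~\ref{lemma::poly concentration sphere} to the degree-$j$ polynomial $\cP_j(f^\star p_k)$: since $\|f^\star p_k\|_{L^2}=\widetilde O(1)$ by hypercontractivity of $f^\star$ and $p_k$ together with Cauchy--Schwarz, with high probability $|\cP_j(f^\star p_k)(\bx')|\lesssim \iota^{j/2}$. Combining with $c_j^2/B(d,j)^2=O(d^{-2j})$ and the outer $c_2/B(d,2)=O(d^{-2})$, the total $j\ge 3$ contribution to $\bD_3$ is $O(\iota^{3/2}/d^{8})$, which is comfortably dominated by the target error $d^{-6-1/6}$.
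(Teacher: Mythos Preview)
Your proposal is correct and follows essentially the same route as the paper: expand $K^{(0)}$ in the Gegenbauer basis, identify the $j=2$ component with $\langle T(\bA_k),\bx'\bx'^\top-\bI\rangle$, invoke Proposition~\ref{prop::approximate stein lemma quad} to extract $\bH\bp(\bx')$ plus a traceless residual $\bM_k$ with $\|\bM_k\|_{\rm F}\lesssim d^{-1/6}Lr^2\normA\log^2 d$, and use polynomial concentration on the quadratic form to get $|\langle \bM_k,\bx'\bx'^\top-\bI\rangle|\lesssim \iota\|\bM_k\|_{\rm F}$. The only difference is in the $j\ge 3$ tail: the paper bounds $|\EE_\bx[f^\star(\bx)Q_i(\bx^\top\bx')p_k(\bx)]|\le B(d,i)^{-1/2}\sqrt{\EE_\bx[f^\star(\bx)^2 p_k(\bx)^2]}$ deterministically via Cauchy--Schwarz (yielding $O(\iota/d^{13/2})$ after prefactors), whereas you apply concentration to each $\cP_j(f^\star p_k)$ separately (yielding $O(\iota^{3/2}/d^8)$); both are absorbed into the stated error, so the distinction is cosmetic.
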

We defer the detailed proof of the three lemmas to Appendix  \ref{sec::reconstruct proof}.
Combining all the results above and choosing  $$\bB^{\star}=\frac{B(d,2)^2 d(d-1)}{c_2^2}\cdot \frac{1}{m_2}\bH^{-1}\bP^{\top},$$ we have with high probability on $\bV$, $\cD_1$ and $\cD_2$,
\begin{align*}
    &\quad \norm{\bB^{\star}\mathbf{h}^{(1)}(\bx')-\bp(\bx')}{2} \\&\le \frac{B(d,2)^2 d(d-1)}{c_2^2}\cdot\norm{\bH^{-1}\left(\bD_{1,1}+\bD_{1,2}+\bD_{2}+\bD_{3}-\frac{c_2^2}{B(d,2)d(d-1)}\cdot \bH \bp(\bx')\right)}{2}\\
    &\lesssim \frac{d^6\sqrt{r}}{\lambda_{\min}(\bH)}\cdot \Bigg(\norm{\bD_{1,1}}{\infty}+\norm{\bD_{1,2}}{\infty}+\norm{\bD_{2}}{\infty}
    \\&\quad\hspace{6em}+\norm{\bD_{3}-\frac{c_2^2}{B(d,2)d(d-1)}\cdot \bH \bp(\bx')}{\infty}\Bigg)\\
    &\lesssim \frac{\sqrt{r}}{\lambda_{\min}(\bH)}\cdot \Bigg( \frac{\iota^{p+2}d^5}{m_2} +\frac{\iota d^3}{\sqrt{m_2}}+ \frac{\iota ^{p+3/2}d}{\sqrt{n}} +\frac{\iota Lr^2\normA \log^2 d}{d^{1/6}} \Bigg),
\end{align*}
To bound $\norm{\bB^{\star}}{\rm op}$, note that 
\begin{align*}
    \norm{\bB^{\star}}{\rm op}^2&=\norm{\bB^{\star}\bB^{\star \top }}{\rm op}\\
    &\lesssim \frac{d^{12}}{m_2^2 \lambda_{\min}^2(\bH)} \norm{\bP\bP^{\top}}{\rm op}\\
    &=\frac{d^{12}}{m_2 \lambda_{\min}^2(\bH)} \norm{\frac{1}{m_2}\sum_{j=1}^{m_2}\bp(\bv_j)\bp(\bv_j)^{\top}}{\rm op}.
\end{align*}
Moreover, for any $j\in[m_2]$, we have 
\begin{align*}
    \norm{\bp(\bv_j)\bp(\bv_j)^{\top}}{\rm op}=\norm{\bp(\bv_j)}{2}^2 &=\sum_{k=1}^{r}(\bv_j^{\top}\bA_k\bv_j)^2 \lesssim rd^2,
\end{align*} 
and we have 
\begin{align*}    \norm{\EE_{\bv}\brac{\paren{\bp(\bv)\bp(\bv)^{\top}}^2}}{\rm op}&=\norm{\EE_{\bv}\brac{ \sum_{k=1}^{r}(\bv^{\top}\bA_k\bv)^2{\bp(\bv)\bp(\bv)^{\top}}}}{\rm op}\\
    &\le\sum_{k=1}^{r}\norm{\EE_{\bv}\brac{ (\bv^{\top}\bA_k\bv)^2{\bp(\bv)\bp(\bv)^{\top}}}}{\rm op}\\
    &\le \sum_{k=1}^{r}d^2\norm{\EE_{\bv}\brac{ {\bp(\bv)\bp(\bv)^{\top}}}}{\rm op}\\
    &=rd^2.
\end{align*}
The second inequality holds because $\bp(\bv)\bp(\bv)^{\top}$ is positive semi-definite. By Matrix Bernstein Inequality, we have 
\begin{align*}
    \Pr\brac{ \norm{\frac{1}{m_2}\sum_{j=1}^{m_2}\bp(\bv_j)\bp(\bv_j)^{\top} -\bI}{\rm op} \geq 1 + \frac{\sqrt{r}d\iota}{\sqrt{m_2}}}&\le \exp\paren{-\frac{\frac{rd^2\iota^2}{2m_2}}{\frac{rd^2}{m_2}+\frac{rd^2}{3m_2}\cdot \frac{\sqrt{r}d\iota}{\sqrt{m_2}}}}\\
    &=\exp\paren{-\frac{\frac{\iota^2}{2}}{1+\frac{\sqrt{r}d\iota}{3\sqrt{m_2}}}}.
\end{align*}
Thus, when $m_2 \geq d^4$, we know with high probability on $\bV$,
\begin{align*}
   \norm{\frac{1}{m_2}\sum_{j=1}^{m_2}\bp(\bv_j)\bp(\bv_j)^{\top}}{\rm op}\le 1+\frac{\sqrt{r}d\iota}{\sqrt{m_2}} \lesssim 1.
\end{align*}
Thus, we have $\norm{\bB^{\star}}{\rm op}\lesssim \frac{d^6}{\lambda_{\min}(\bH)}\sqrt{\frac{1}{m_2}}$. 
The proof is complete.
\end{proof}
\subsubsection{Omitted Proofs in Appendices \ref{sec::reconstruction appendix} and \ref{appendix::main proof reconstruct}}\label{sec::reconstruct proof}
\begin{proof}[Proof of Lemma \ref{lemma:: bound A1 recons}]
Let's first bound $\bD_{1,1}$. We can rewrite $\bD_{1,1}$ as
\begin{align*}
    {\bD_{1,1}} =\frac{1}{n}\sum_{i=1}^{n}f^{\star}(\bx_i)\paren{K^{(0)}_{m_2}(\bx_i,\bx')-K^{(0)}(\bx_i,\bx')}\frac{1}{m_2}\sum_{j=1}^{m_2}\sigma_2\paren{\bv_j^{\top}\bx_i}\bp(\bv_j)
\end{align*}
By Lemma \ref{lemma:: concentration vtAvsigma2}, for any $k\in[r]$, we have with high probability on $\bV$
\begin{align*}
    \abs{\frac{1}{m_2}\sum_{j=1}^{m_2}(\bv_j^{\top}\bA_k\bv_j)\sigma_2(\bv^{\top}_j\bx_i)-\frac{c_2}{B(d,2)}\bx_i^{\top}\bA_k\bx_i}\le \frac{9\iota d^{-1}C_4^{1/4}}{\sqrt{m_2}}.
\end{align*}
Thus, by enumerating $\bA_k$ over $\{\bA_1,\bA_2,\dots,\bA_r\}$, we have with high probability on $\bV$,
\begin{align*}
    \norm{\frac{1}{m_2}\sum_{j=1}^{m_2}\bp(\bv_j)\sigma_2(\bv^{\top}_j\bx_i)-\frac{c_2}{B(d,2)}\bp(\bx_i)}{\infty}\le \frac{9\iota d^{-1}C_4^{1/4}}{\sqrt{m_2}}.
\end{align*}

On the other hand, by Lemma \ref{lemma::concentration initial kernel}, with high probability on $\bV$, we have for any $\bx_i \in\cD_1,\bx'\in \cD_2$,
\begin{align}
   \abs{ K^{(0)}_{m_2}(\bx_i,\bx')-K^{(0)}(\bx_i,\bx')} \le \frac{\iota}{\sqrt{m_2}d^2}. \label{equ:: bound A1 2}
\end{align}
Moreover, under the event $E_3$ (defined in \eqref{event::E3}),  with high probability on the dataset $\cD_1$, we have $\abs{f(\bx)}\le \iota^p$ for any $\bx \in \cD_1$. Thus, altogther we have 
\begin{align*}
    \norm{\bD_{1,1}}{\infty} \le \iota^p \cdot \frac{\iota}{\sqrt{m_2}d^2} \cdot \frac{9\iota d^{-1}C_4^{1/4}}{\sqrt{m_2}}=\frac{9C_4^{1/4}\iota^{p+2}}{m_2d}.
\end{align*}
with high probability. To bound $\bD_{1,2}$, from the proof above, we know with high probability,
\begin{align*}
    \norm{\frac{1}{m_2}\sum_{j=1}^{m_2}\bp(\bv_j)\sigma_2(\bv^{\top}_j\bx_i)-\frac{c_2}{B(d,2)}\bp(\bx_i)}{\infty}\le \frac{9\iota d^{-1}C_4^{1/4}}{\sqrt{m_2}}.
\end{align*}
Moreover, for any $\bx\in \cD_1$ and $ \bx' \in\cD_2$,
\begin{align}\label{equ:: bound Kernel}
    K^{(0)}(\bx,\bx')=\EE_{\bv}\brac{\sigma_2(\bv^{\top}\bx)\sigma_2(\bv^{\top}\bx')}\le \sqrt{\EE_{\bv}\brac{\sigma_2(\bv^{\top}\bx)^2}\EE_{\bv}\brac{\sigma_2(\bv^{\top}\bx')^2}}\le \frac{C_2}{d^2}.
\end{align}
Thus, we can bound $\bD_{1,2}$ with high probability by 
\begin{align*}
    \norm{\bD_{1,2}}{\infty}\le \frac{C_2}{d^2} \cdot \frac{9\iota d^{-1}C_4^{1/4}}{\sqrt{m_2}}=\frac{9\iota C_4^{1/4}C_2}{\sqrt{m_2}d^3}.
\end{align*}

The proof is complete.
\end{proof}

\begin{proof}[Proof of Lemma \ref{lemma:: bound D2 recons}]

Thus, let's focus on the concentration of a single element
\begin{align*}
    Y_k(\bx)=f^{\star}(\bx)K^{(0)}(\bx,\bx')\bx^{\top}\bA_{k}\bx,~~k=1,2,\dots,r.
\end{align*}
Similar to the proof of Lemma \ref{lemma:: bound A1}, we denote a truncated version of $Y_k$ by $$\widetilde{Y}_k(\bx)=f^{\star}(\bx)\mathbf{1}\set{f^{\star}(\bx)\le R}K^{(0)}(\bx,\bx')\bx^{\top}\bA_{k}\bx,~~k=1,2,\dots,r.$$
Here, $R=(C\eta)^p$ for some large constant $C$. Now, we decompose the concentration error as
\begin{align*}
    \frac{1}{n}\sum_{i=1}^{n} Y_k(\bx_i)-\EE_{\bx}\brac{Y_k(\bx)}&=\underbrace{\frac{1}{n}\sum_{i=1}^{n} \paren{Y_k(\bx_i)-\widetilde Y_k(\bx_i)}}_{\cL_0} +\underbrace{\frac{1}{n}\sum_{i=1}^{n} \paren{\widetilde Y_k(\bx_i)- \EE_{x_i}\brac{\widetilde Y_k(\bx_i)}}}_{\cL_1}\\
    &\quad+\underbrace{\frac{1}{n}\sum_{i=1}^{n} \paren{ \EE_{x_i}\brac{\widetilde Y_k(\bx_i)}-\EE_{x_i}\brac{ Y_k(\bx_i)}}}_{\cL_2}.
\end{align*}
By \eqref{event::E3}, we know with probability at least $1- 2ne^{-2\eta}$, $\cL_0=0$.

\paragraph{Bounding $\cL_1$.}  First we derive a uniform upper bound of $\widetilde Y_k(\bx)$, which is 
\begin{align*}
    \abs{Y_k(\bx)}&\le R\abs{K^{(0)}(\bx,\bx')\bx^{\top}\bA_{k}\bx}\\&\le R\abs{K^{(0)}(\bx,\bx')}\abs{\bx^{\top}\bA_k\bx}\\&\le R \cdot \frac{C_2}{d^2}d\norm{\bA_k}{op}\\
    &=\frac{RC_2\norm{\bA_k}{\rm op}}{d}.
\end{align*}
Then, we bound the second moments of $\widetilde Y_k(\bx)- \EE_{\bx}\brac{\widetilde Y_k(\bx)}$. Again by Lemma \ref{lemma::poly concentration sphere}, we know that there exists a sufficient large constant $C>0$ s.t. $\Pr\brac{\abs{\bx^{\top}\bA_k\bx}\geq C\iota }\le 2\exp(-\iota)$. By taking $\iota \geq 2\log d$, we have

\begin{align*}
    \Var\brac{\widetilde Y_k(\bx)}&\le \EE_{\bx}\brac{\widetilde Y^2_k(\bx)}\\
    &= \EE_{\bx}\brac{\widetilde Y^2_k(\bx)\mathbf{1}\left\{\abs{\bx^{\top}\bA_k\bx}\le C\iota\right\}}+\EE_{\bx}\brac{\widetilde Y^2_k(\bx)\mathbf{1}\left\{\abs{\bx^{\top}\bA_k\bx}>C\iota\right\}}\\
    & \le C^2r^2\normA\iota^2\EE_{\bx}\brac{\paren{K^{(0)}(\bx,\bx')}^2}+\frac{r^2\normA C_2^2}{d^4}\cdot \EE_{\bx}\brac{\mathbf{1}\left\{\abs{\bx^{\top}\bA_k\bx}>C\iota\right\}}\\
    &\le C^2r^2\normA\iota^2\cdot \sum_{i=2}^{\infty}\frac{c_i^4}{B(d,i)^3}+\frac{2r^2\normA C_2^2\exp(-\iota)}{d^4}\\
    &\lesssim   \frac{C'r^2\normA\iota^2}{d^{6}}.
\end{align*}
Here $C'$ is a sufficiently large constant independent of $d$. We invoke \eqref{equ:: bound Kernel} in the second inequality. Thus, by Bernstein's inequality, we have
\begin{align*}
    \Pr\brac{\abs{\cL_{1}}\geq \frac{R\iota}{d^3}\sqrt{\frac{C'\iota}{n}}} &\le 2\exp\paren{\frac{-\frac{\iota^3 C'r^2\normA}{2nd^6}}{\frac{C'r^2\normA\iota^2}{nd^6} +\frac{RC_2\norm{\bA_k}{\rm op}}{3nd}\sqrt{\frac{r^2\normA\iota}{nd^6}}}}\\
    &=2\exp\paren{\frac{-\frac{\iota}{2}}{1 +\frac{C_2\norm{\bA_k}{\rm op}}{3C'}\sqrt{\frac{d^4}{n\iota^3}}}}.
\end{align*}
Thus, when $n \geq C_2^2\norm{\bA_k}{\rm op}^2d^4$ , we have with high probability on the training dataset $\cD_1$,
\begin{align*}
    \abs{\cL_{1}}\le \frac{R\iota}{d^3}\sqrt{\frac{C'\iota}{n}}\lesssim \frac{R\iota^{3/2}}{\sqrt{n}d^3}.
\end{align*}

\paragraph{Bounding $\cL_2$.} It suffices to bound
\begin{align*}
    \abs{\EE_{\bx}\brac{\widetilde Y_k(\bx)}-\EE_{\bx}\brac{ Y_k(\bx)}}&\le \EE_{\bx}\brac{\abs{f^{\star}(\bx)}\mathbf{1}\set{f^{\star}(\bx)>R} \abs{K^{(0)}(x,x^{\prime})\bx^{\top}\bA_k\bx} }\\
    &\le \EE_{\bx}\brac{(f^{\star}(\bx))^2}^{\frac{1}{2}}\Pr\brac{f^{\star}(\bx)>R}^{\frac{1}{4}}\EE_{\bx}\brac{K^{(0)}(\bx,\bx')^8}^{\frac{1}{8}}\EE_{\bx}\brac{(\bx^{\top}\bA_k\bx)^8}^{\frac{1}{8}}\\
    &\le 1 \cdot \exp(-\eta/2)\cdot \frac{C_2}{d^2}\cdot (8-1)\\
    &= \frac{7C_2\exp(-\eta/2)}{d^2}.
\end{align*}
Here we invoke \eqref{equ:: bound Kernel} and Lemma \ref{lemma::poly concentration sphere} in the last inequality.
By taking $\eta = \iota \geq 2\log n + 8\log d$,  we can ensure that with high probability, we have
\begin{align*}
    \abs{\frac{1}{n}\sum_{i=1}^{n} Y_k(\bx_i)-\EE_{\bx}\brac{Y_k(\bx)}} \le \abs{\cL_1}+\abs{\cL_2} \lesssim \frac{R\iota^{3/2}}{\sqrt{n}d^3}.
\end{align*}
Thus, by taking $k$ over $[r]$, we have with high probability over the training set $\cD_1$, we have
\begin{align*}
    \norm{\bD_2}{\infty}\le \frac{\abs{c_2}}{B(d,2)}\cdot \frac{R\iota^{3/2}}{\sqrt{n}d^3} \lesssim \frac{R\iota^{3/2}}{\sqrt{n}d^5} \lesssim\frac{\iota^{p+3/2}}{\sqrt{n}d^5}.
\end{align*}
The proof is complete.

\end{proof}

\begin{proof}[Proof of Lemma \ref{lemma:: bound D3 recons}]

Note that for any $k\in[r]$, we have
\begin{align*}
    &\quad \EE_{\bx}\brac{f^{\star}(\bx)K^{(0)}(\bx,\bx')\bx^{\top}\bA_k\bx}\\
    &=\EE_{\bx}\brac{f^{\star}(\bx)\sum_{i=2}^{\infty}\frac{c_i^2Q_i(\bx^{\top}\bx')}{B(d,i)} \cdot \bx^{\top}\bA_k\bx}\\
    &=\sum_{i=2}^{\infty}\frac{c_i^2}{B(d,i)}\cdot \EE_{\bx}\brac{f^{\star}(\bx){Q_i(\bx^{\top}\bx')} \bx^{\top}\bA_k\bx}\\
    &=\frac{c_2^2}{B(d,2)d(d-1)}\cdot \left\langle\EE_{\bx}\brac{f^{\star}(\bx) (\bx^{\top}\bA_k\bx){(\bx\bx^{\top}-\bI)}},\bx'\bx'^{\top}-\bI\right\rangle\\
    &\quad + \sum_{i=3}^{\infty}\frac{c_2^2}{B(d,i)}\cdot \EE_{\bx}\brac{f^{\star}(\bx){Q_i(\bx^{\top}\bx')} \bx^{\top}\bA_k\bx}\\
    &=\frac{c_i^2}{B(d,2)d(d-1)}\cdot \left\langle T(\bA_k),\bx'\bx'^{\top}-\bI\right\rangle+ \sum_{i=3}^{\infty}\frac{c_i^2}{B(d,i)}\cdot \EE_{\bx}\brac{f^{\star}(\bx){Q_i(\bx^{\top}\bx')} \bx^{\top}\bA_k\bx}.
\end{align*}

Here $T$ is the linear operator defined in \eqref{equ::definition T}. Recall by Proposition \ref{prop::approximate stein lemma quad}, we have
\begin{align*}
    \norm{T(\bA_k)-\sum_{j=1}^{r}\bH_{k,j}\bA_j}{\rm F}\lesssim d^{-1/6}Lr^{2}\normA\log^2 d.
\end{align*}
Let's denote $\bR_k=T(\bA_k)-\sum_{j=1}^{r}\bH_{k,j}\bA_j$ so that $\norm{\bR_k}{\rm F}\lesssim d^{-1/6}Lr^{2}\normA\log^2 d$. Since $\langle \bR_k, \bx'\bx'^{\top}-\bI\rangle$ is a quadratic function of $\bx'$, and $\EE_{\bx'}\brac{\langle \bR_k, \bx'\bx'^{\top}-\bI\rangle^2}=\frac{2d}{d+2}\norm{\bR_k}{\rm F}^2$. By Lemma \ref{lemma::poly concentration sphere}, there exists a constant $C>0$ such that 
\begin{align*}
    \Pr\brac{\abs{\langle \bR_k, \bx'\bx'^{\top}-\bI\rangle}\geq C\iota\sqrt{\EE_{\bx'}\brac{\langle \bR_k, \bx'\bx'^{\top}-\bI\rangle^2}} }\le 2\exp(-\iota).
\end{align*}
Thus, by enumerating $k\in[r]$ and $\bx'\in\cD_2$, we obtain that with high probability $(1-nr\exp(-\iota))$ on $\cD_2$, for any $k\in[r]$, we have
\begin{align*}
    \abs{\left\langle T(\bA_k),\bx'\bx'^{\top}-\bI\right\rangle-\sum_{j=1}^{r}H_{k,j}\bx'^{\top}\bA_j\bx'}\lesssim \frac{\iota Lr^{2}\normA\log^2 d}{d^{1/6}}.
\end{align*}

Moreover, we have for any $\bx'$
\begin{align*}
    &\quad\abs{\sum_{i=3}^{\infty}\frac{c_i^2}{B(d,i)}\cdot \EE_{\bx}\brac{f^{\star}(\bx){Q_i(\bx^{\top}\bx')} \bx^{\top}\bA_k\bx}}\\
    &\le \sum_{i=3}^{\infty}\frac{c_i^2}{B(d,i)}\cdot \abs{\EE_{\bx}\brac{f^{\star}(\bx){Q_i(\bx^{\top}\bx')} \bx^{\top}\bA_k\bx}}\\
    &\le \sum_{i=3}^{\infty}\frac{c_i^2}{B(d,i)}\cdot \sqrt{\EE_{\bx}\brac{Q_i(\bx^{\top}\bx')^2}\EE_{\bx}\brac{f^{\star}(\bx)^2(\bx^{\top}\bA_k\bx)^2}}\\
    &\le \sum_{i=3}^{\infty} \frac{c_i^2}{B(d,i)^{3/2}}\cdot \sqrt{\EE_{\bx}\brac{f^{\star}(\bx)^2(\bx^{\top}\bA_k\bx)^2}}.
\end{align*}
Again by Lemma \ref{lemma::poly concentration sphere}, we know that there exists a sufficient large constant $C>0$ s.t. $\Pr\brac{\abs{\bx^{\top}\bA_k\bx}\geq C\iota }\le 2\exp(-\iota)$. By taking $\iota \geq (2p+2)\log d$, we have

\begin{align*}
    \EE_{\bx}\brac{f^{\star}(\bx)^2(\bx^{\top}\bA_k\bx)^2}
    &=\EE_{\bx}\brac{f^{\star}(\bx)^2(\bx^{\top}\bA_k\bx)^2\mathbf{1}\{\abs{\bx^{\top}\bA_k\bx}\le C\iota\}}\\&\quad+\EE_{\bx}\brac{f^{\star}(\bx)^2(\bx^{\top}\bA_k\bx)^2\mathbf{1}\{\abs{\bx^{\top}\bA_k\bx}> C\iota\}}\\
    &\lesssim C^2\iota^2+2{d^{2p+2}}{\exp(-\iota)}\\
    &\lesssim C^2\iota^2.
\end{align*}

Altogether, with high probability on $\cD_2$, for any $k\in[r]$, we have 
\begin{align*}
   &\quad \abs{\EE_{\bx}\brac{f^{\star}(\bx)K^{(0)}(\bx,\bx')\bx^{\top}\bA_k\bx}-\frac{c_2^2}{B(d,2)d(d-1)}\cdot \sum_{j=1}^{r}H_{k,j}\bx'^{\top}\bA_j\bx'}\\
   &\le \frac{\iota L r^2\normA\log^2 d}{B(d,2)d^{7/6}(d-1)} + \sum_{i=3}^{\infty}\frac{C\iota c_i^2}{B(d,i)^{3/2}}.
\end{align*}
Thus, by paralleling the $r$ entries together, we have with high probability on $\cD_2$
\begin{align*}
    \norm{\bD_3-\frac{c_2^2}{B(d,2)^2 d(d-1)}\cdot \bH \bp(\bx')}{\infty}
    &\le \frac{\iota L r^2\normA\log^2 d}{B(d,2)^2d^{7/6}(d-1)} + \sum_{i=3}^{\infty}\frac{C\iota c_i^2}{B(d,2)B(d,i)^{3/2}}\\
    &\lesssim \frac{\iota L r^2\normA\log^2 d}{d^{6+1/6}} .
\end{align*}
The proof is complete.
\end{proof}

\begin{proof}[Proof of Lemma \ref{lem:bound_fn_diff}]
By the mean value theorem, we have
\begin{align*}
\abs{g(\bB^{\star}\bh^{(1)}(\bx)) - g(\bp(\bx))} \lesssim \sup_{\lambda \in [0, 1]}\norm{\nabla g(\lambda \bB^{\star}\bh^{(1)}(\bx) + (1 - \lambda)\bp(\bx))}{2}\norm{\bB^{\star}\bh^{(1)}(\bx) - \bp(\bx)}{2}.
\end{align*}
Recall by \eqref{equ::bound grad g}, we have $\norm{\nabla g(\bz)}{2}\lesssim \norm{g}{L^2} \sum_{k=1}^{p}r^{\frac{p-k}{4}}\norm{\bz}{2}^{k-1}$. Note that with high probability, $\sup_{\bx \in \mathcal{D}_2}\norm{\bp(\bx)}{} \le \widetilde O(\sqrt{r})$. Therefore
\begin{align*}
\sup_{\bx \in \mathcal{D}_2}\sup_{\lambda \in [0, 1]}\norm{\nabla g(\lambda \bB^{\star}\bh^{(1)}(\bx) + (1 - \lambda)\bp(\bx))}{} \lesssim \norm{g}{L^2}r^{\frac{p-1}{2}}.
\end{align*}
Altogether, by Proposition \ref{prop::reconstructed feature},
\begin{align*}
&\quad\sup_{\bx \in \cD_2} \abs{g(\bB^{\star}\bh^{(1)}(\bx)) - g(\bp(\bx))} \\
&\lesssim \norm{g}{L^2}r^{\frac{p-1}{2}}\norm{\bB^{\star}{}\bh^{(1)}(\bx) - \bp(\bx)}{2}\\
&\le \norm{g}{L^2}\cdot \frac{r^{p/2}}{\lambda_{\min}(\bH)}\cdot \Bigg( \frac{\iota^{p+2}d^5}{m_2} +\frac{\iota d^3}{\sqrt{m_2}}+ \frac{\iota ^{p+3/2}d}{\sqrt{n}} +\frac{\iota Lr^2\normA \log^2 d}{d^{1/6}} \Bigg)
\end{align*}
The proof is complete.
\end{proof}

\subsection{Proof of Other Supporting Lemmas}
We first present the concentration of the initial kernel $K^{(0)}_{m_2}(\bx,\bx')$.
\begin{lemma}\label{lemma::concentration initial kernel}
    Let $K^{(0)}_{m_2}(\bx,\bx')=\frac{1}{m_2}\langle \sigma_2\paren{\bV\bx},\sigma_2\paren{\bV\bx'}\rangle$ be the initial kernel with inner width being $m_2$, and $K^{(0)}(\bx,\bx')=\EE_{\bv \sim \text{Unif-} \mathbb{S}^{d-1}(\sqrt{d})}\brac{\sigma_2(\bv^{\top}\bx)\sigma_2(\bv^{\top}\bx')}$ be the infinite-width kernel. Then there exists a constant $C$ s.t. when $m_2\geq Cd^{4}$, with high probability probability on $\bw$, $\bV$ and the training dataset $\cD$, for any $\bx\in \cD_1$ and $\bx' \in\cD_2$,  we have 
    \begin{align*}
        \abs{K^{(0)}_{m_2}(\bx,\bx')-K^{(0)}(\bx,\bx')} \le \frac{\iota}{\sqrt{m_2}d^2}.
    \end{align*}
\end{lemma}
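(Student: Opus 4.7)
The plan is to apply Bernstein's inequality pointwise in $(\bx,\bx')$ and then union-bound over the (polynomially many) pairs in $\cD_1 \times \cD_2$. Fix $\bx \in \cD_1$ and $\bx' \in \cD_2$. Conditional on these, $K^{(0)}_{m_2}(\bx,\bx') = \frac{1}{m_2}\sum_{j=1}^{m_2} Z_j$ where the $Z_j := \sigma_2(\bv_j^\top \bx)\sigma_2(\bv_j^\top \bx')$ are i.i.d.\ over the random inner weights $\bv_j \sim \mathrm{Unif}(\mathbb{S}^{d-1}(\sqrt d))$, with mean $K^{(0)}(\bx,\bx')$.

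First I would collect the two ingredients needed for Bernstein:
\begin{itemize}
\item \textbf{Range.} For any $\bv$ on the sphere of radius $\sqrt d$ and any $\bx$ on the same sphere, $|\bv^\top \bx| \le d$, so by Assumption~\ref{assump::activation}, $|\sigma_2(\bv^\top \bx)| \le C_\sigma$. Hence $|Z_j|\le C_\sigma^2$ almost surely.
\item \textbf{Variance.} By Cauchy--Schwarz and rotation invariance of $\mathrm{Unif}(\mathbb{S}^{d-1}(\sqrt d))$, the moment assumption $\EE_{\bx}[\sigma_2^4(\bx^\top \mathbf{1}_d)] \le d^{-4}C_4$ becomes $\EE_{\bv}[\sigma_2^4(\bv^\top \bx)] \le d^{-4}C_4$ for any unit-normalized $\bx$, so
\[
\Var(Z_j) \le \EE[Z_j^2] \le \sqrt{\EE[\sigma_2^4(\bv^\top \bx)]\,\EE[\sigma_2^4(\bv^\top \bx')]} \le \frac{C_4}{d^{4}}.
\]
\end{itemize}

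Next I would plug these into Bernstein's inequality: for any $t>0$,
\[
\Pr\!\left[\bigl|K^{(0)}_{m_2}(\bx,\bx')-K^{(0)}(\bx,\bx')\bigr| \ge t\right] \;\le\; 2\exp\!\left(-\frac{m_2 t^2/2}{C_4/d^4 + C_\sigma^2 t/3}\right).
\]
Setting $t = \iota/(\sqrt{m_2}\,d^2)$, the numerator is $\iota^2/(2d^4)$, and the two denominator terms are $C_4/d^4$ and $C_\sigma^2\iota/(3\sqrt{m_2}\,d^2)$. Using the hypothesis $m_2 \ge C d^4$ with $C$ sufficiently large (depending only on $C_\sigma, C_4$), the second denominator term is dominated by the first, so the bound collapses to $2\exp(-c\iota^2)$ for an absolute constant $c>0$, which is the high-probability regime.

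Finally I would take a union bound over the at most $|\cD_1|\cdot|\cD_2| = \poly(d,n_1,n_2,m_1,m_2)$ pairs $(\bx,\bx')$ to conclude the uniform bound. There is no real obstacle here; the only subtlety is verifying that the $m_2 \ge C d^4$ hypothesis is exactly strong enough to push us into the sub-Gaussian regime of Bernstein (so that the variance term, not the range term, controls the tail), which is precisely why the bound $\iota/(\sqrt{m_2}\,d^2)$ and the threshold $d^4$ appear together.
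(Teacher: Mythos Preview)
Your proposal is correct and follows essentially the same route as the paper: bound the summands by $C_\sigma^2$ and their second moment by $C_4/d^4$ via Cauchy--Schwarz, apply Bernstein's inequality at level $t=\iota/(\sqrt{m_2}\,d^2)$, then union-bound over the $n_1 n_2$ pairs. One small imprecision: with only $m_2\ge Cd^4$ for a \emph{fixed} constant $C$, the range term $C_\sigma^2 t/3$ in the Bernstein denominator is not literally dominated by the variance term (it scales like $\iota/d^4$ versus $1/d^4$), so the tail is $\exp(-c\iota)$ rather than $\exp(-c\iota^2)$; this is still high probability and is exactly what the paper obtains.
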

\begin{proof}[Proof of Lemma \ref{lemma::concentration initial kernel}]
    By Assumption \ref{assump::activation}, for any $\bx,\bx'\in \cD$ and $\bv \in \mathbb{S}^{d-1}(\sqrt{d})$, we have
    \begin{align*}
        \abs{\sigma_2\paren{\bv^{\top}\bx}\sigma_2\paren{\bv^{\top}\bx'}}\le C_{\sigma}^{2}
    \end{align*}
    and
    \begin{align*}
        \EE_{\bv}\brac{\sigma_2\paren{\bv^{\top}\bx}^2\sigma_2\paren{\bv^{\top}\bx'}^2}\le \sqrt{\EE_{\bv}\brac{\sigma_2\paren{\bv^{\top}\bx}^4}\EE_{\bv}\brac{\sigma_2\paren{\bv^{\top}\bx'}^4}}\le \frac{C_4}{d^4}.
    \end{align*}
    Thus, by Bernstein inequality, we have 
    \begin{align}
        \Pr\brac{\abs{K^{(0)}_{m_2}(\bx,\bx')-K^{(0)}(\bx,\bx')} \geq \sqrt{\frac{t}{m_2}}}&\le 2 \exp \paren{\frac{-\frac{t}{2m_2}}{\frac{C_4}{m_2d^4}+\frac{C_{\sigma}^2}{3m_2}\sqrt{\frac{t}{m_2}}}} \nonumber\\
        &=\exp\paren{\frac{-{t}/2}{\frac{C_4}{d^4}+\frac{C_{\sigma}^2}{3}\sqrt{\frac{t}{m_2}}}}. \label{equ::initial kernel concentration}
    \end{align}
    By enumerating $\bx,\bx'$ over $\cD$, we have 
    \begin{align*}
        \Pr\brac{\max_{\bx,\bx'\in \cD}\abs{K^{(0)}_{m_2}(\bx,\bx')-K^{(0)}(\bx,\bx')} \geq \sqrt{\frac{t}{m_2}}}&\le n^2\exp\paren{\frac{-{t}/2}{\frac{C_4}{d^4}+\frac{C_{\sigma}^2}{3}\sqrt{\frac{t}{m_2}}}}.
    \end{align*}
    Thus, when $m_2\geq d^4$, we can take $t=\iota^2/d^4$ to bound the probability by $poly(d, n, m_2)e^{-\iota}$, which concludes our proof.
\end{proof}
Then we present the concentration of the reconstructed features.
\begin{lemma}\label{lemma:: concentration vtAvsigma2}
    Suppose $m_2\geq C_{\sigma}^2C_4^{-1/2}d^4\norm{\bA}{\rm op}^2$. Given any $\bA$ such that $\bv^{\top}\bA\bv$ is a quadratic spherical harmonic, with high probability on $\bV$, for any $\bx \in \cD$, we have
\begin{align*}
    \abs{\frac{1}{m_2}\sum_{i=1}^{m_2}(\bv_i^{\top}\bA\bv_i)\sigma_2(\bv^{\top}_i\bx)-\frac{c_2}{B(d,2)}\bx^{\top}\bA\bx}\le \frac{9\iota d^{-1}C_4^{1/4}}{\sqrt{m_2}}.
\end{align*}
\end{lemma}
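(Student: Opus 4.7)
\textbf{Proof proposal for Lemma \ref{lemma:: concentration vtAvsigma2}.} The plan is to apply Bernstein's inequality to the i.i.d. random variables $X_i := (\bv_i^{\top}\bA\bv_i)\sigma_2(\bv_i^{\top}\bx)$. The proof decomposes into (i) computing the mean $\EE[X_i]$ exactly, (ii) establishing uniform and variance bounds on $X_i$, and (iii) invoking Bernstein and union-bounding over $\bx \in \cD$.

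\textbf{Step 1: compute the mean.} Using the Gegenbauer expansion $\sigma_2(t) = \sum_{i\geq 2} c_i Q_i(t)$ and the identity \eqref{eq:GegenbauerHarmonics}, we may write
\begin{align*}
\EE_{\bv}\!\left[(\bv^{\top}\bA\bv)\sigma_2(\bv^{\top}\bx)\right]
= \sum_{i\geq 2}\frac{c_i}{B(d,i)}\sum_{j=1}^{B(d,i)} Y_{i,j}(\bx)\, \EE_{\bv}\!\left[(\bv^{\top}\bA\bv)Y_{i,j}(\bv)\right].
\end{align*}
Because $\bv^{\top}\bA\bv \in V_{d,2}$, the orthogonality of $V_{d,i}$ and $V_{d,2}$ kills every term with $i\neq 2$; for $i=2$ the inner expectations are the coefficients in the spherical-harmonic expansion of $\bv^{\top}\bA\bv$, so the sum collapses to $\tfrac{c_2}{B(d,2)}\bx^{\top}\bA\bx$. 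This is exactly the centering term in the lemma.

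\textbf{Step 2: bound $|X_i|$ and $\EE[X_i^2]$.} For $\bv$ on the sphere of radius $\sqrt d$, $|\bv^{\top}\bA\bv|\leq d\norm{\bA}{\rm op}$, and by Assumption \ref{assump::activation}, $|\sigma_2(\bv^{\top}\bx)|\leq C_\sigma$ (since $|\bv^{\top}\bx|\leq d$), giving $|X_i|\leq d C_\sigma\norm{\bA}{\rm op}$. For the variance, Cauchy--Schwarz and spherical hypercontractivity (Lemma \ref{lemma: sphere hypercontractivity}) applied to the degree-$2$ polynomial $\bv^{\top}\bA\bv$ (normalized so that $\EE[(\bv^{\top}\bA\bv)^2]\lesssim 1$ under the implicit assumption on $\bA$) yield $\EE[(\bv^{\top}\bA\bv)^4]\lesssim 1$, while Assumption \ref{assump::activation} gives $\EE[\sigma_2(\bv^{\top}\bx)^4]\leq d^{-4}C_4$ by rotational invariance. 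Thus $\EE[X_i^2]\leq \sqrt{\EE[(\bv^{\top}\bA\bv)^4]\cdot d^{-4}C_4}\lesssim d^{-2}C_4^{1/2}$.

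\textbf{Step 3: Bernstein and union bound.} Applying Bernstein's inequality to $\frac{1}{m_2}\sum_i X_i - \EE[X_i]$ with the bounds from Step 2, we get for any $t>0$,
\begin{align*}
\Pr\!\left[\left|\tfrac{1}{m_2}\!\sum_i X_i - \tfrac{c_2}{B(d,2)}\bx^{\top}\bA\bx\right|\geq t\right]
\leq 2\exp\!\left(-\,\frac{m_2 t^2/2}{C d^{-2}C_4^{1/2} + \tfrac{1}{3}dC_\sigma\norm{\bA}{\rm op}\, t}\right).
\end{align*}
Setting $t = 9\iota d^{-1}C_4^{1/4}/\sqrt{m_2}$, the numerator is $\Theta(\iota^2 C_4^{1/2}/d^2)$ while the second term in the denominator is $\Theta(C_\sigma\norm{\bA}{\rm op}\iota C_4^{1/4}/\sqrt{m_2})$, which is $\lesssim d^{-2}C_4^{1/2}$ precisely when $m_2 \gtrsim C_\sigma^2 C_4^{-1/2}d^4\norm{\bA}{\rm op}^2$ (absorbing $\iota^2\leq d^2$). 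The resulting probability is $\leq 2e^{-c\iota}$; a union bound over the $n=\poly$-sized set $\cD$ preserves high probability.

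\textbf{Main obstacle.} Step 1 is clean once one recognizes the Gegenbauer/harmonic orthogonality; Step 3 is routine. The only subtle point is Step 2: the hypercontractive bound on $\EE[(\bv^{\top}\bA\bv)^4]$ must be expressed in terms of $\EE[(\bv^{\top}\bA\bv)^2]$, which the lemma implicitly normalizes (as used in the applications with $\bA=\bA_k$, Assumption \ref{assump::features}). One must also verify carefully that the condition $m_2\geq C_\sigma^2 C_4^{-1/2}d^4\norm{\bA}{\rm op}^2$ is exactly what is needed to absorb the Bernstein cubic term, and that $\iota\leq C^{-1}d$ suffices to convert the required $d^2\iota^2$ to $d^4$ in the hypothesis.
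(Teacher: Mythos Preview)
Your proposal is correct and matches the paper's proof essentially line for line: the paper computes the mean as $\tfrac{c_2}{B(d,2)}\bx^{\top}\bA\bx$ (stated directly, relying on the same orthogonality you spell out), bounds the second moment via Cauchy--Schwarz and spherical hypercontractivity to get $\EE[X_i^2]\le 81\,d^{-2}C_4^{1/2}$, uses the uniform bound $|X_i|\le dC_\sigma\norm{\bA}{\rm op}$, and then applies Bernstein followed by a union bound over $\bx\in\cD$. Your ``main obstacle'' observations are accurate but not gaps: the paper implicitly uses the normalization $\EE[(\bv^{\top}\bA\bv)^2]=1$, and the stated hypothesis on $m_2$ makes the Bernstein linear term contribute only a factor $(1+\iota/27)$ in the denominator, which still yields an $e^{-c\iota}$ tail without needing $\iota^2\le d^2$.
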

\begin{proof}[Proof of Lemma \ref{lemma:: concentration vtAvsigma2}]
    Given any fixed $\bx\in \cD$ and $\bA$ such that $\bv^{\top}\bA\bv$ is a quadratic spherical harmonic, we have 
    \begin{align*}
        \EE_{\bv}\brac{(\bv^{\top}\bA\bv)^2\sigma^2_2(\bv^{\top}\bx)}
        &\le \sqrt{\EE_{\bv}\brac{(\bv^{\top}\bA\bv)^4}\EE_{\bv}\brac{\sigma^4_2(\bv^{\top}\bx)}}\\
        &\le (4-1)^{2* 2}\EE\brac{(\bv_i^{\top}\bA\bv_i)^2}d^{-2}C_4^{1/2}\\
        &=81d^{-2}C_4^{1/2}
    \end{align*}
and
\begin{align*}
    \abs{(\bv^{\top}\bA\bv)\sigma_2(\bv^{\top}\bx)}\le d\norm{\bA}{\rm op}\cdot C_{\sigma}=dC_{\sigma}\norm{\bA}{\rm op}.
\end{align*}
Since $\EE_{\bv}\brac{(\bv^{\top}\bA\bv)\sigma_2(\bv^{\top}\bx)}=\frac{c_2}{B(d,2)}\bx^{\top}\bA\bx$, by Bernstein Inequality, we have
\begin{align*}
    &\quad\Pr\brac{\abs{\frac{1}{m_2}\sum_{i=1}^{m_2}(\bv_i^{\top}\bA\bv_i)\sigma_2(\bv^{\top}_i\bx)-\frac{c_2}{B(d,2)}\bx^{\top}\bA\bx}\geq \frac{9\iota d^{-1}C_4^{1/4}}{\sqrt{m_2}}}\\
    &\le 2\exp\paren{-\frac{\frac{81d^{-2}C_4^{1/2}\iota^2}{2m_2}}{\frac{81d^{-2}C_4^{1/2}}{m_2}+\frac{1}{3m_2}\cdot d(C_{\sigma}\norm{\bA}{\rm op} \cdot \frac{9\iota d^{-1}C_4^{1/4}}{\sqrt{m_2}}  }}\\
    &= 2\exp \paren{-\frac{\iota^2/2}{1+\frac{C_{\sigma}d^2\norm{\bA}{\rm op}}{27C_4^{1/4}\sqrt{m_2}}\cdot \iota}}
\end{align*}
Thus, when $m_2\geq C_{\sigma}^2C_4^{-1/2}d^4\norm{\bA}{\rm op}^2$, by enumerating $\bx\in \cD$, we obtain that with high probability on $\bV$, for any $\bx \in \cD$, we have
\begin{align*}
    \abs{\frac{1}{m_2}\sum_{i=1}^{m_2}(\bv_i^{\top}\bA\bv_i)\sigma_2(\bv^{\top}_i\bx)-\frac{c_2}{B(d,2)}\bx^{\top}\bA\bx}\le \frac{9\iota d^{-1}C_4^{1/4}}{\sqrt{m_2}}.
\end{align*}
The proof is complete.
\end{proof}

\section{Approximation Theory of the Outer Layer}

\subsection{Proof of Proposition \ref{prop::construct random feature main}}\label{appendix::proof outer rf main}
Since we mainly focus on the first training stage throughout this section, we may sometimes denote $n=n_1$ for notation simplicity, and let the training set be $\cD_1=\{\bx_1,\bx_2,\dots,\bx_n\}$. Let's consider a formal version of Proposition \ref{prop::construct random feature main}.
\begin{proposition}
\label{prop::construct random feature}
Suppose g is a degree $p$ polynomial. By setting $\eta=C\iota^{-5}\normP^{-1} m_2^{-1/2}d^6$ for some constant $C>0$, with high probability over $\cD_1$, $\cD_2$, $\{\bw_i\}_{i=1}^{m_1}$ and $\bV$, there exists $\ba^{\star}\in\RR^{m_1}$ such that the parameter $\theta^{\star}=(\ba^{\star},\bW^{(1)},\bb^{(1)},\bV)$ gives rise to
\begin{align*}
\cL_2(\theta^{\star}):&=\frac{1}{n_2}\sum_{\bx \in \cD_2} \paren{f(\bx;\theta^{\star})-g(\bp(\bx))}^2\\
&\lesssim \norm{g}{L^2}^2\cdot \frac{r^{p}}{\lambda_{\min}(\bH)}\cdot \Bigg( \frac{\iota^{p+2}d^5}{m_2} +\frac{\iota d^3}{\sqrt{m_2}}+ \frac{\iota ^{p+3/2}d}{\sqrt{n}} +\frac{\iota Lr^2 \log^2 d}{d^{1/6}} \Bigg)^2\\
&\quad+{\frac{\iota^{p+1}\norm{g}{L^2}^2}{m_1}}\cdot\paren{\sum_{k = 0}^p \eta^{-k}\norm{\bB^{\star}}{\rm op}^kr^{\frac{p - k}{4}}}^2.
\end{align*}
Here $\ba^{\star}$ satisfies
\begin{align*}
    \frac{\norm{\ba^{\star}}{2}^2 }{m_1}\lesssim \iota^{p}{\norm{g}{L^2}^2}\cdot\paren{\sum_{k = 0}^p \eta^{-k}\norm{\bB^{\star}}{\rm op}^kr^{\frac{p - k}{4}}}^2=\widetilde \Omega(\normP^{2p} r^p).
\end{align*}
\end{proposition}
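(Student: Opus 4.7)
The plan is to construct $\ba^\star$ explicitly so that the random-feature model exactly represents $g(\bB^\star \bh^{(1)}(\bx))$ in expectation over the random biases and weights, then control the gap to the true target $g(\bp(\bx))$ via the feature reconstruction guarantee. First, using the trivial identity $f(\bx;\theta^\star)-g(\bp(\bx)) = \bigl(f(\bx;\theta^\star)-g(\bB^\star \bh^{(1)}(\bx))\bigr) + \bigl(g(\bB^\star \bh^{(1)}(\bx)) - g(\bp(\bx))\bigr)$, I would invoke Corollary \ref{lem:bound_fn_diff} to bound the second summand uniformly on $\cD_2$, which directly contributes the first term of the claimed error after averaging and squaring. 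This reduces the problem to approximating $g(\bB^\star \bh^{(1)}(\bx))$, a degree-$p$ polynomial in the $m_2$-dimensional learned representation $\bh^{(1)}(\bx)$.

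Second, I would decompose $g$ via Lemma \ref{lemma::bound F norm tensor} as $g(\bz) = \sum_{k=0}^{p} \langle \bT_k, \bz^{\otimes k} \rangle$ with $\norm{\bT_k}{\rm F} \lesssim \norm{g}{L^2} r^{(p-k)/4}$, so that after substituting $\bz = \bB^\star \bh^{(1)}(\bx)$, each term becomes $\langle \widetilde{\bT}_k, \bh^{(1)}(\bx)^{\otimes k}\rangle$ with composite tensor $\widetilde{\bT}_k = \bT_k \cdot (\bB^\star)^{\otimes k}$ satisfying $\norm{\widetilde{\bT}_k}{\rm F} \lesssim \norm{g}{L^2} r^{(p-k)/4} \norm{\bB^\star}{\rm op}^k$. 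To express each such monomial through the random neurons, two ingredients are needed: (i) Proposition \ref{prop::bounded feature} together with the chosen $\eta$ guarantees $|\eta \langle \bw_j, \bh^{(1)}(\bx)\rangle| \le 1$ uniformly on $\cD_2$ with high probability, placing $\sigma_1$ in its smooth quadratic regime; (ii) for each $k \le p$, there is a bounded coefficient function $\psi_k:[-3,3]\to\RR$ (obtained by inverting the kernel $\mathbb{E}_b[\psi(b)\sigma_1(t+b)]$ using integration by parts against $\sigma_1''$, which equals $2$ on $(-1,1)$) such that $\mathbb{E}_{b\sim\text{Unif}([-3,3])}[\psi_k(b)\sigma_1(t+b)] = t^k$ for $|t| \le 1$. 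Combining this with the Gaussian Hermite identity $\mathbb{E}_{\bw \sim \cN(\bzero,\bI_{m_2})}[\bw^{\otimes k}\operatorname{He}_k(\langle\bw,\bu\rangle)] \propto \bu^{\otimes k}$ yields an explicit formula of the form $a_j^\star = a_j^{(0)} \cdot \Psi(\bw_j, b_j^{(1)})$ whose population average reproduces $g(\bB^\star \bh^{(1)}(\bx))$.

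Third, the norm of $\ba^\star$ is controlled term-by-term: the $k$-th tensor contribution picks up a factor $\eta^{-k}$ from the rescaling $t\mapsto \eta t$ inside $\sigma_1$, a factor $\norm{\bB^\star}{\rm op}^k$ from the tensor composition, and a factor $r^{(p-k)/4}$ from $\norm{\bT_k}{\rm F}$, giving the stated bound $\norm{\ba^\star}{2}^2/m_1 \lesssim \iota^p \norm{g}{L^2}^2 \bigl(\sum_{k=0}^{p}\eta^{-k}\norm{\bB^\star}{\rm op}^k r^{(p-k)/4}\bigr)^2$. The finite-$m_1$ deviation between the empirical sum and the population expectation is then handled by a Bernstein/Hoeffding argument with variance proxy of order $\norm{\ba^\star}{2}^2/m_1^2$, combined with a union bound over $\bx \in \cD_2$; this yields the second (random-feature) summand in the claimed bound with an extra $\iota$ factor.

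The main obstacle I anticipate is Step 2 (construction of $\psi_k$ and the explicit $a_j^\star$) together with the simultaneous tracking of the $\eta^{-k}$ rescaling: the kernel identity $\mathbb{E}_b[\psi_k(b)\sigma_1(t+b)] = t^k$ must hold with a uniformly bounded $\psi_k$ on $[-3,3]$ so that $\norm{\ba^\star}{\infty}$ is controlled, and the conversion from $\bh^{(1)}(\bx)^{\otimes k}$ to a tractable Gaussian random-feature expansion must respect the non-isotropy of $\bh^{(1)}$ while avoiding spurious polynomial dependencies on $d$. The uniform bound $|\eta\langle\bw_j,\bh^{(1)}(\bx)\rangle|\le 1$ from Proposition \ref{prop::bounded feature} is exactly what allows the piecewise $\sigma_1$ to be used in its quadratic branch, which is the key geometric fact that eliminates the ambient-dimension dependence in $\norm{\ba^\star}{2}$.
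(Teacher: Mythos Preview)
Your proposal is correct and follows essentially the same route as the paper: the same decomposition into the feature-reconstruction error (handled by Corollary~\ref{lem:bound_fn_diff}) and the random-feature approximation of $g(\bB^\star\bh^{(1)})$, the same tensor factorization of $g$ via Lemma~\ref{lemma::bound F norm tensor}, the same activation-kernel identities $\mathbb{E}_{a,b}[v_k(a,b)\sigma_1(az+b)]=z^k$ (constructed by integrating against $\sigma_1''$) combined with a Gaussian tensor expectation in $\bw$, and the same Hoeffding-type concentration over $m_1$. The paper carries out the Gaussian step by inverting $\mathrm{Mat}(\mathbb{E}[\bw^{\otimes 2k}])$ rather than using Hermite polynomials, and its $v_k$ depends on the Rademacher sign $a$ as well as on $b$ (which is needed to recover odd powers $z^k$, so your uniform ansatz $a_j^\star=a_j^{(0)}\Psi(\bw_j,b_j^{(1)})$ would need a small parity-dependent tweak), but these are implementation details of the same construction.
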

To prove the proposition, let's introduce the infinite-outer-width model as a transition term between the finite-outer-width model and the target function. We define the infinite-outer-width model as
\begin{align*}
    f_{\infty,m_2}(\bx;v)=\EE_{a,b,\bw}\brac{v(a,b,\bw)\sigma_{1}\paren{a\eta \langle \bw, \mathbf{h}^{(1)}(\bx) \rangle+b}},
\end{align*}
where $\mathbf{h}^{(1)}(\bx')=\frac{1}{n}\sum_{i=1}^{n}f^{\star}(\bx_i)\cdot K_{m_2}^{(0)}(\bx_i,\bx')\cdot\sigma_2\paren{\bV^{\top}\bx_i}$.

We can decompose the $L^2$ loss of the truth model $f(\bx;\theta)$ as 
\begin{align*}
    \hat{\cL}(\theta^{\star})&=\frac{1}{n}\sum_{\bx\in\cD_2} \paren{f(\bx;\theta)-f^{\star}(\bx)}^2\\
    &=\frac{1}{n}\sum_{\bx\in\cD_2}\paren{f(\bx;\theta)-f_{\infty,m_2}(\bx')+f_{\infty,m_2}(\bx')-g(\bB^{\star}\bh^{(1)}(\bx))+g(\bB^{\star}\bh^{(1)}(\bx))-g(\bp(\bx))}^2\\
    &\lesssim \underbrace{\frac{1}{n}\sum_{\bx\in\cD_2} \paren{f(\bx;\theta)-f_{\infty,m_2}(\bx')}^2}_{L_1}\\
    &\quad +\underbrace{\frac{1}{n}\sum_{\bx\in\cD_2} \paren{f_{\infty,m_2}(\bx')-g(\bB^{\star}\bh^{(1)}(\bx))}^2}_{L_2}\\
    &\quad+\underbrace{\frac{1}{n}\sum_{\bx\in\cD_2} \paren{g(\bB^{\star}\bh^{(1)}(\bx))-g(\bp(\bx))}^2}_{L_3}.
\end{align*}

We have bounded $L_3$ in Corollary \ref{lem:bound_fn_diff}. We state Lemmas \ref{lem:infinite_width_express} and \ref{lem:empirical_loss} as follows to bound $L_1$ and $L_2$, respectively.

\begin{lemma}[Bound $L_2$]\label{lem:infinite_width_express}
    Given $\bB^{\star}\in \RR^{r\times m_2}$ and setting the learning rate $\eta=C\iota^{-5}\normP^{-1}m_2^{-1/2}d^6 $ for a constant $C>0$, there exists $v : \{\pm 1\} \times \mathbb{R} \times \mathbb{R}^{m_2} \rightarrow \mathbb{R}$ such that
    \begin{align*}
    \norm{v}{L^2} \lesssim \norm{g}{L^2}\sum_{k = 0}^p \eta^{-k}\norm{\bB^{\star}}{\rm op}^kr^{\frac{p - k}{4}},
    \end{align*}
    and, with high probability over $\mathcal{D}_1$, $\mathcal{D}_2$ and $\bV$, the infinite-width network  satisfies 
    \begin{align*}
        \frac{1}{n}\sum_{\bx \in \mathcal{D}_2}(f_{\infty,m_2}(\bx;v) - g(\bB^{\star}\bh^{(1)}(\bx)))^2 \lesssim o\paren{\frac{1}{d^2n_1^2n_2^2m_1^2m_2^2}}.
    \end{align*}
\end{lemma}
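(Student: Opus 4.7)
The strategy is to express $g(\bB^{\star}\bh^{(1)}(\bx))$ exactly (up to a negligible residual) through the infinite-width random-feature model, using a tensor decomposition of $g$ and a two-stage dual construction of $v$. First, I invoke Lemma \ref{lemma::bound F norm tensor} to write $g(\by)=\sum_{k=0}^{p}\langle \bT_k,\by^{\otimes k}\rangle$ with $\norm{\bT_k}{\mathrm F}\lesssim \norm{g}{L^2}\,r^{(p-k)/4}$. Substituting $\by=\bB^{\star}\bh^{(1)}(\bx)$ and pushing $\bB^{\star}$ through each Kronecker power yields
$$g(\bB^{\star}\bh^{(1)}(\bx))=\sum_{k=0}^{p}\langle\widetilde\bT_k,\bh^{(1)}(\bx)^{\otimes k}\rangle,\qquad \norm{\widetilde\bT_k}{\mathrm F}\le \norm{\bB^{\star}}{\mathrm{op}}^{k}\norm{\bT_k}{\mathrm F}.$$
It thus suffices, for each $k\in\{0,1,\dots,p\}$, to construct a function $v_k$ such that $\EE_{a,b,\bw}[v_k(a,b,\bw)\sigma_1(a\eta\langle\bw,\bh^{(1)}(\bx)\rangle+b)]$ reproduces $\langle\widetilde\bT_k,\bh^{(1)}(\bx)^{\otimes k}\rangle$ on $\cD_2$, and take $v=\sum_{k=0}^{p}v_k$.

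For the degree-$k$ piece, I use the separable ansatz $v_k(a,b,\bw)=\mu_k(a,b)\cdot\nu_k(\bw)$. The factor $\nu_k$ is chosen via Gaussian/Stein-type identities so that for a suitable normalized degree-$k$ Hermite-type polynomial $\phi_k$, one has $\EE_{\bw}[\nu_k(\bw)\,\phi_k(\eta\langle\bw,\bh^{(1)}(\bx)\rangle)]=\langle\widetilde\bT_k,\bh^{(1)}(\bx)^{\otimes k}\rangle$; concretely $\nu_k(\bw)\propto \eta^{-k}\langle \widetilde\bT_k,\bw^{\otimes k}\rangle$ projected onto the appropriate Hermite subspace, which gives $\EE[\nu_k(\bw)^2]\lesssim \eta^{-2k}\norm{\widetilde\bT_k}{\mathrm F}^{2}$ (the $\eta^{-k}$ arises because a $k$-tensor must be extracted from the $k$-th power of $z=\eta\langle\bw,\bh^{(1)}(\bx)\rangle$). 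The factor $\mu_k(a,b)$ is then chosen so that $\phi_k(z)=\EE_{a,b}[\mu_k(a,b)\sigma_1(az+b)]$ for every $z\in[-1,1]$: by Proposition \ref{prop::bounded feature} and the choice $\eta=C\iota^{-5}\normP^{-1}m_2^{-1/2}d^{6}$, the argument $z=\eta\langle \bw,\bh^{(1)}(\bx)\rangle$ lies in $[-1,1]$ with high probability for every $\bx\in\cD_2$ and almost every $\bw$ drawn from $\cN(0,\bI_{m_2})$, so only the values of $\sigma_1(az+b)$ on this bounded set matter.

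The main obstacle is thus the scalar representation problem: express every polynomial of degree $\le p$ on $[-1,1]$ as an $L^{2}(\{\pm 1\}\times\mathrm{Unif}[-3,3])$-bounded superposition of the dictionary $\{\sigma_1(az+b):a\in\{\pm 1\},\;b\in[-3,3]\}$. I will handle this by exploiting that $\sigma_1''(t)=2\,\mathbf 1_{\abs{t}<1}$ in the distributional sense, so differentiating $\sigma_1(z+b)$ twice in $b$ recovers the indicator $\mathbf 1_{\abs{z+b}<1}$; a standard moment-problem inversion (as in prior random-feature approximation arguments) then produces a bounded coefficient $\mu_k$ with $\norm{\mu_k}{L^2}\lesssim \mathrm{poly}(p,\iota)$. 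Combining the two $L^{2}$ bounds gives $\norm{v_k}{L^2}^{2}\lesssim \eta^{-2k}\norm{\widetilde\bT_k}{\mathrm F}^{2}$ up to logarithmic factors, hence
$$\norm{v}{L^2}\le \sum_{k=0}^{p}\norm{v_k}{L^2}\lesssim \norm{g}{L^2}\sum_{k=0}^{p}\eta^{-k}\norm{\bB^{\star}}{\mathrm{op}}^{k}r^{(p-k)/4},$$
as claimed. Finally, the only error between $f_{\infty,m_2}(\bx;v)$ and $g(\bB^{\star}\bh^{(1)}(\bx))$ comes from the event $\{\abs{\eta\langle\bw,\bh^{(1)}(\bx)\rangle}>1\}$; by a Gaussian-tail upgrade of Proposition \ref{prop::bounded feature} this probability is super-polynomially small in $d$, and truncating $v$ to this good event turns the discrepancy into the super-polynomially small $o(d^{-2}n_1^{-2}n_2^{-2}m_1^{-2}m_2^{-2})$ bound on the empirical loss over $\cD_2$.
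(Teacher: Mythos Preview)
Your proposal is correct and follows essentially the same route as the paper: both decompose $g$ via Lemma~\ref{lemma::bound F norm tensor}, use the separable ansatz $v_k(a,b,\bw)=\mu_k(a,b)\nu_k(\bw)$, construct $\nu_k$ from Gaussian moment identities so that the $k$-th power of $\eta\langle\bw,\bh^{(1)}(\bx)\rangle$ reproduces $\langle\bT_k,(\bB^\star\bh^{(1)}(\bx))^{\otimes k}\rangle$, construct $\mu_k$ by exploiting $\sigma_1''(t)=2\mathbf 1_{|t|<1}$ (the paper does this explicitly in Lemma~\ref{lem:transform_activation}), and control the residual on the bad event $\{|\eta\langle\bw,\bh^{(1)}(\bx)\rangle|>1\}$ via Cauchy--Schwarz and polynomial growth. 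The only cosmetic differences are that the paper works directly with monomials $z^k$ and inverts $\Mat(\EE[\bw^{\otimes 2k}])$ rather than projecting onto Hermite chaos (these are equivalent since your $\nu_k$, being in the $k$-th chaos, is orthogonal to all lower-degree monomials), and the paper obtains $\sup_{a,b}|\mu_k(a,b)|\lesssim\mathrm{poly}(p)$ with no $\iota$ factor.
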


\begin{lemma}[Bound $L_1$]\label{lem:empirical_loss}
Given the function $v:\{ \pm 1\}  \times\RR\times \RR^{m_2}\rightarrow \RR$ in Lemma \ref{lem:infinite_width_express}. With high probability over $\cD_1$, $\cD_2$, $\{\bw_i\}_{i=1}^{m_1}$ and $\bV$, it holds that for any $\bx \in \cD_2$,
\begin{align*}
& \abs{\frac{1}{m_1}\sum_{i=1}^{m_1}v(a_i, b_i, \bw_i)\sigma_1(\eta a_i \langle \bw_i, \bh^{(1)}(\bx) \rangle + b_i) - f_{\infty,m_2}(\bx;v)} \lesssim \sqrt{\frac{\iota^{p+1}\norm{v}{L^2}^2}{m_1}}, ~~\text{with}\\
        &\frac{1}{m_1}\sum_{i=1}^{m_1}v(a_i, b_i, \bw_i)^2 \lesssim\iota^{p} {\norm{v}{L^2}^2}.
\end{align*}
\end{lemma}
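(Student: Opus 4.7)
The plan is to prove this via standard concentration of measure applied to the random-feature sum, exploiting the paired structure of the symmetric initialization and the polynomial dependence of $v$ on $\bw$. First, I would exploit the antisymmetric initialization $a_j = -a_{m_1-j}$, $\tilde\bw_j = \tilde\bw_{m_1-j}$ (with $b_j, b_{m_1-j}$ independently resampled from $\mathrm{Unif}([-3,3])$) to group the $m_1$ neurons into $m_1/2$ i.i.d.\ pairs $Y_j := X_j + X_{m_1-j}$, each with mean $2 f_{\infty,m_2}(\bx; v)$. Then the average $\frac{1}{m_1}\sum_i X_i = \frac{1}{m_1}\sum_j Y_j$ becomes an empirical average of $m_1/2$ i.i.d.\ terms, to which standard concentration inequalities apply.

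Next, by Proposition \ref{prop::bounded feature} and the choice $\eta = C\iota^{-5}\normP^{-1} m_2^{-1/2} d^6$, with high probability jointly over $\bV, \cD_1, \cD_2$ and over $\{\tilde\bw_i\}$ we have $|\eta \langle \tilde\bw_i, \bh^{(1)}(\bx)\rangle| \le 1$ uniformly over $\bx \in \cD_2$ and $i \in [m_1]$. Combined with $|b_i^{(1)}| \le 3$, this places the argument of $\sigma_1$ inside $[-4, 4]$, so $|\sigma_1(\cdot)| \le 7$, and hence $|Y_j| \lesssim |v(a_j, b_j, \tilde\bw_j)| + |v(-a_j, b_{m_1-j}, \tilde\bw_j)|$, reducing the problem to controlling moments of $v$.

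The third step is to control the moments of $v(a, b, \bw)$. Because $v$ is constructed in Lemma \ref{lem:infinite_width_express} to express the degree-$p$ polynomial $g(\bB^{\star}\bh^{(1)}(\bx))$ via the random-feature expansion of $\sigma_1$, it is (up to bounded factors depending on $a$ and $b$) a polynomial of degree at most $p$ in $\bw$. Hence Gaussian hypercontractivity (Lemma \ref{lemma: gaussian hypercontractivity}) yields $\mathbb{E}[v(a,b,\bw)^{2q}]^{1/q} \lesssim (Cq)^p \norm{v}{L^2}^2$ for every integer $q \ge 1$. Applying Markov's inequality to the $q$-th moment of $\sum_j (Y_j - \mathbb{E} Y_j)$ with $q \asymp \iota$, followed by a union bound over $\bx \in \cD_2$, gives concentration at the claimed rate $\sqrt{\iota^{p+1}\norm{v}{L^2}^2/m_1}$ (the extra $\iota$ factor coming from the union bound and the truncation radius). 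The second claim, $\frac{1}{m_1}\sum v(a_i,b_i,\tilde\bw_i)^2 \lesssim \iota^p \norm{v}{L^2}^2$, follows by the same hypercontractivity estimate: each summand is a polynomial of degree $2p$ in $\tilde\bw_i$, so a high-probability union bound over $i \in [m_1]$ yields $v(a_i,b_i,\tilde\bw_i)^2 \lesssim \iota^p \norm{v}{L^2}^2$ uniformly, and averaging closes the proof.

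The main obstacle will be tracking the exact polynomial degree of $v$ in $\bw$ (needed to obtain the precise $\iota^{p+1}$ exponent rather than a worse one). This requires unpacking the explicit construction of $v$ from Lemma \ref{lem:infinite_width_express}, where $v$ is built from the monomials of $g(\bB^{\star}\bh^{(1)}(\bx))$ via random-feature expansions of $\sigma_1$; only the $\bw$-dependence (not the $a,b$ dependence, which is bounded) affects hypercontractivity, so the degree count must be done carefully. Handling the union bound over $\cD_2$ cleanly inside the $\iota^{p+1}$ factor, rather than inflating it by an extra logarithmic power, is another subtlety one should verify.
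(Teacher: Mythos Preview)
Your proposal is correct and follows essentially the same logic as the paper: both exploit (i) Proposition~\ref{prop::bounded feature} to confine the argument of $\sigma_1$ to $[-4,4]$ so that $|\sigma_1|\lesssim 1$, and (ii) the fact that $v(a,b,\bw)$ is a degree-$p$ polynomial in $\bw$ (with bounded coefficients in $a,b$) to get sub-Weibull control, then combine these into a concentration bound with a union bound over $\cD_2$.

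Two implementation differences are worth noting. First, you are more careful than the paper about independence: the paper applies Hoeffding directly to the $m_1$ summands, glossing over the fact that $\bw_j=\bw_{m_1-j}$ by the symmetric initialization; your pairing into $m_1/2$ i.i.d.\ blocks is the clean way to make this rigorous (and changes nothing in the rate). Second, the paper uses a truncation-plus-Hoeffding argument: it defines an event $E_x=\{\sup_{a,b}|v(a,b,\bw)|\le R,\ |\eta\langle\bw,\bh^{(1)}(\bx)\rangle|\le 1\}$ with $R=C\iota^{p/2}\|v\|_{L^2}$, applies Hoeffding to the bounded variables $Z_i=\mathbf{1}_{\bw_i\in E_x}\,v(a_i,b_i,\bw_i)\sigma_1(\cdots)$, and shows the truncation bias is $o(1/m_1)$ via Cauchy--Schwarz and the polynomial growth of $v$ and $\sigma_1$. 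Your route through hypercontractivity and a $q$th-moment Markov bound is equivalent in strength; the paper's truncation just makes the $\iota^{p/2}\cdot\iota^{1/2}$ bookkeeping a bit more transparent, whereas your moment route requires a Rosenthal-type step you did not spell out. Either way the second claim follows exactly as you say: polynomial concentration (Lemma~\ref{lemma:polynomial concentration}) plus a union bound over $i\in[m_1]$ gives $v(a_i,b_i,\bw_i)^2\lesssim\iota^p\|v\|_{L^2}^2$ uniformly.
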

The proof of Lemmas \ref{lem:infinite_width_express} and \ref{lem:empirical_loss} is provided in Appendix \ref{appendix::proof outer rf}. Now we begin our proof of Proposition \ref{prop::construct random feature}.
\begin{proof}[Proof of Proposition \ref{prop::construct random feature}]
    By Corollary \ref{lem:bound_fn_diff}, Lemma \ref{lem:empirical_loss} and Lemma \ref{lem:infinite_width_express}, by defining the vector $\ba^{\star} \in \mathbb{R}^{m_1}$ by $a^{\star}_i = {v(a^{(0)}_i, b^{(1)}_i, \bw^{(1)}_i)}$ and letting $\theta^* = (\ba^*, \bW^{(1)}, \bb^{(1)}, \bV)$, we have with high probability that
\begin{align*}
\hat{\cL}_2(\theta^*) &\lesssim L_1+L_2+L_3\\
&\lesssim \norm{g}{L^2}^2\cdot \frac{r^{p}}{\lambda^2_{\min}(\bH)}\cdot \Bigg( \frac{\iota^{p+2}d^5}{m_2} +\frac{\iota d^3}{\sqrt{m_2}}+ \frac{\iota ^{p+3/2}d}{\sqrt{n}} +\frac{\iota Lr^2 \log^2 d}{d^{1/6}} \Bigg)^2\\
&\quad +\frac{ \iota^{p+1} \norm{g}{L^2}^2}{m_1}\cdot \left(\sum_{k = 0}^p \eta^{-k}\norm{\bB^{\star}}{\rm op}^kr^{\frac{p - k}{4}}\right)^2\\
&\quad + o\paren{\frac{1}{d^2n_1^2n_2^2m_1^2m_2^2}}\\
&\lesssim \norm{g}{L^2}^2\cdot \frac{r^{p}}{\lambda^2_{\min}(\bH)}\cdot \Bigg( \frac{\iota^{p+2}d^5}{m_2} +\frac{\iota d^3}{\sqrt{m_2}}+ \frac{\iota ^{p+3/2}d}{\sqrt{n}} +\frac{\iota Lr^2 \log^2 d}{d^{1/6}} \Bigg)^2\\
&\quad +\frac{\iota^{p+1} \norm{g}{L^2}^2}{m_1}\cdot \left(\sum_{k = 0}^p \eta^{-k}\norm{\bB^{\star}}{\rm op}^kr^{\frac{p - k}{4}}\right)^2.
\end{align*}
Here $\ba^{\star}$ satisfies
\begin{align*}
    \norm{\ba^{\star}}{2}^2&\le \sum_{i=1}^{m_1}v(a_i,b_i,\bw_i)^2\\&\lesssim m_1\iota^{p}\norm{v}{L^2}^2\\&\lesssim m_1{\iota^{p}\norm{g}{L^2}^2\paren{\sum_{k = 0}^p \eta^{-k}\norm{\bB^{\star}}{\rm op}^kr^{\frac{p - k}{4}}}^2}.
\end{align*}
 The proof is complete.
\end{proof}

\subsection{Omitted Proofs in Appendix \ref{appendix::proof outer rf main}}\label{appendix::proof outer rf}
\subsubsection{Random Feature Construction of Univariate Polynomials}

In this section, before proving Lemmas \ref{lem:infinite_width_express} and \ref{lem:empirical_loss}, we first construct univariate polynomials using the outer activation function $\sigma_1$ and the random features $a$ and $b$ progressively.

\begin{lemma}
    There exists $v_0(a, b)$, supported on $\{\pm 1\} \times [2, 3]$, such that for any $\abs{z} \le 1$
    \begin{align*}
        \E_{a, b}[v_0(a, b)\sigma(az + b)] = 1 ,~~ \sup_{a, b}\abs{v(a, b)} \lesssim 1.
    \end{align*}
\end{lemma}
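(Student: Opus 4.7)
The plan is to exhibit an explicit $v_0$ that is independent of $a$ and supported on $b \in [2,3]$, so that the averaging over $a \in \{\pm 1\}$ kills the $z$-dependent part of the outer activation, leaving only a $z$-free quantity that we normalize to $1$.

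The key observation is that for $b \in [2,3]$ and $|z| \le 1$, one always has $az+b \in [1,4]$ regardless of the sign of $a$. Hence $\sigma_1(az+b)$ falls in the ``outer'' branch of the activation \eqref{equ::activation}, so $\sigma_1(az+b) = 2(az+b)-1$ is affine in $z$ on this region. This is what makes the construction transparent.

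Concretely, I would take $v_0(a,b) = c\cdot \mathbf{1}\{b \in [2,3]\}$ for a constant $c$ to be fixed, which is obviously supported on $\{\pm 1\}\times[2,3]$ with $\sup_{a,b}|v_0(a,b)| = c \lesssim 1$. Using $a\sim\mathrm{Unif}(\{\pm 1\})$ and $b \sim \mathrm{Unif}([-3,3])$ (per the re-initialization of $b$ in Algorithm~\ref{alg:: training algo}), I would compute
\begin{align*}
\mathbb{E}_{a,b}\bigl[v_0(a,b)\sigma_1(az+b)\bigr]
&= \frac{c}{12}\int_{2}^{3}\bigl[\sigma_1(z+b)+\sigma_1(-z+b)\bigr]\,db \\
&= \frac{c}{12}\int_{2}^{3}\bigl[(2(z+b)-1)+(2(-z+b)-1)\bigr]\,db \\
&= \frac{c}{12}\int_{2}^{3}(4b-2)\,db = \frac{2c}{3},
\end{align*}
where the first line uses the support of $v_0$ and evaluates the discrete average over $a$, and the second line uses the affine form of $\sigma_1$ valid on $[1,4]$. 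Setting $c = 3/2$ yields $\mathbb{E}_{a,b}[v_0(a,b)\sigma_1(az+b)] = 1$ uniformly in $z \in [-1,1]$, and $\sup|v_0| = 3/2 \lesssim 1$.

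There is no real obstacle here: the lemma is the base case of a bootstrapping scheme in which $\sigma_1$'s outer branch $2|t|-1$ acts like an affine function on the relevant range, and the symmetric support of $a$ is used to cancel terms of odd degree in $z$. The only things to double-check are (i) that the re-initialization distribution of $b$ is indeed $\mathrm{Unif}([-3,3])$ so the density factor $1/6$ is correct, and (ii) that $[2,3]\subset[-3,3]$ (which is trivial). Presumably the subsequent lemmas in this section build higher-degree polynomial reactions to $z$ by either shifting the support in $b$, breaking the symmetry in $a$, or using differences of such $v_0$-type functions at different $b$-windows to synthesize powers $z^k$.
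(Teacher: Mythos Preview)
Your proof is correct but uses a genuinely different mechanism from the paper. The paper takes $v_0(a,b)=12\cdot\mathbf{1}_{a=1}(b-\tfrac52)\cdot\frac{\mathbf{1}_{b\in[2,3]}}{\mu(b)}$, i.e.\ it restricts to $a=1$ only and kills the linear-in-$z$ term by choosing a $b$-weight $(b-\tfrac52)$ that is odd about the midpoint of $[2,3]$ and hence integrates to zero against the affine term $2z$. You instead keep $v_0$ constant in $b$ and independent of $a$, and exploit the symmetry $\sigma_1(z+b)+\sigma_1(-z+b)=4b-2$ under the average over $a\in\{\pm1\}$ to cancel $z$. Both routes rely on the same underlying fact that $\sigma_1$ is affine on $[1,4]$; yours is arguably cleaner for this base case and even yields a smaller $\sup|v_0|$ (namely $3/2$ versus the paper's $36$). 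The paper's choice of working only on $a=1$ is more in line with its construction of $v_1$ in the next lemma, where symmetry in $a$ would kill the desired linear term as well, so its approach is more uniform across the sequence of lemmas; but for this particular statement your argument is perfectly valid.
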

\begin{proof}
    Let $v_0(a, b) = 12\cdot\mathbf{1}_{a = 1}(b - \frac52)\cdot \frac{\mathbf{1}_{b \in [2, 3]}}{\mu(b)}$. Then, since $z + b \geq 1$,
    \begin{align*}
        \E_{a, b}[v_0(a, b)\sigma(az + b)] &= 6\int_2^3(b - \frac52)\sigma(z + b)db\\
        &= 6\int_2^3(b - \frac52)(2z + 2b - 1)db\\
        &= z \cdot 6\int_2^3(b - \frac52)db + 6\int_2^3(b - \frac52)(2b - 1)db\\
        &= 1.
    \end{align*}
    The proof is complete.
\end{proof}

\begin{lemma}
    There exists $v_1(a, b)$, supported on $\{\pm 1\} \times [2, 3]$, such that for any $\abs{z} \le 1$
    \begin{align*}
        \E_{a, b}[v_1(a, b)\sigma(az + b)] = z ,~~ \sup_{a, b}\abs{v(a, b)} \lesssim 1.
    \end{align*}
\end{lemma}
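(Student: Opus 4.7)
The plan is to exploit the same range argument used in the construction of $v_0$, but to symmetry-break in the $a$-variable rather than the $b$-variable. Since $a \in \{\pm 1\}$ and $b \in [2,3]$, for any $|z|\le 1$ we have $az+b \in [1,4] \subset [1,\infty)$, so by \eqref{equ::activation} the outer activation satisfies $\sigma_1(az+b) = 2(az+b)-1$ on the entire support of $(a,b,z)$ of interest. This linearity is what makes explicit finite-formula constructions possible.

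I would take $v_1(a,b) = a\cdot w(b)$ for some bounded function $w$ supported on $[2,3]$, i.e., use a purely antisymmetric profile in $a$ (in contrast to the $\mathbf{1}_{a=1}(b-5/2)$ ansatz of $v_0$, which is antisymmetric in $b$ about its midpoint). Then, using $\EE_a[a\,\mathbf{1}_{a=\pm1}] = \pm\tfrac{1}{2}$, the linearity of $\sigma_1$ on $[1,4]$ gives
\begin{align*}
\EE_{a,b}[v_1(a,b)\sigma_1(az+b)]
&= \EE_b\!\left[w(b)\cdot\tfrac{1}{2}\bigl(\sigma_1(z+b)-\sigma_1(-z+b)\bigr)\right] \\
&= \EE_b\!\left[w(b)\cdot\tfrac{1}{2}\bigl((2z+2b-1)-(-2z+2b-1)\bigr)\right] \\
&= 2z\,\EE_b[w(b)].
\end{align*}
Thus the requirement reduces to the single scalar condition $\EE_b[w(b)] = 1/2$, which is trivially solved by setting $w(b) = \tfrac12\,\mathbf{1}_{b\in[2,3]}/\mu(b)$, where $\mu$ is the density of $b$ (recall $b\sim \mathrm{Unif}([-3,3])$ in Algorithm \ref{alg:: training algo}, so $\mu(b)=1/6$). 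This yields $v_1(a,b) = 3\,a\,\mathbf{1}_{b\in[2,3]}$, supported on $\{\pm 1\}\times[2,3]$ as required.

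There is essentially no obstacle: the identity $\EE_{a,b}[v_1(a,b)\sigma_1(az+b)]=z$ follows by the one-line computation above, and $\sup_{a,b}|v_1(a,b)|=3 \lesssim 1$ is immediate. Conceptually, the lemma on $v_0$ (constant $1$) exploited antisymmetry in $b$ about $b=5/2$ to extract a linear-in-$z$ coefficient of $z\cdot 0 + \text{const}$; here, by flipping which variable carries the antisymmetry, we instead kill the constant term and extract the linear-in-$z$ coefficient, which is exactly what is needed to represent the monomial $z$.
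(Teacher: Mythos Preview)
Your construction is correct and, in fact, cleaner than the paper's. The paper takes $v_1(a,b)=\mathbf{1}_{a=1}(-24b+61)\cdot\frac{\mathbf{1}_{b\in[2,3]}}{\mu(b)}$, supported only on $a=1$, and then must choose the linear-in-$b$ profile $(-24b+61)$ so that the two conditions $\int_2^3(-24b+61)\,db=1$ and $\int_2^3(-24b+61)(2b-1)\,db=0$ hold simultaneously (to fix the $z$-coefficient and kill the constant). You instead place the antisymmetry on the $a$-variable via $v_1(a,b)=a\,w(b)$; the difference $\sigma_1(z+b)-\sigma_1(-z+b)=4z$ automatically annihilates the constant term, so only the single normalization $\EE_b[w(b)]=\tfrac12$ remains, solved by a constant $w$. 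Both routes exploit the same key observation that $\sigma_1$ is affine on $[1,4]$, but your symmetry argument is more economical (one scalar condition versus two) and yields a simpler bounded function ($\sup|v_1|=3$ versus $\sup|v_1|\asymp 13\cdot 6$ in the paper's construction).
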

\begin{proof}
Let $v_0(a, b) = \mathbf{1}_{a = 1}(-24b + 61)\cdot \frac{\mathbf{1}_{b \in [2, 3]}}{\mu(b)}$. Then, since $z + b \geq 1$,
    \begin{align*}
        \E_{a, b}[v_1(a, b)\sigma(az + b)] &= \frac12\int_2^3(-24b + 61)\sigma(z + b)db\\
        &= \frac12\int_2^3(-24b + 61)(2z + 2b - 1)db\\
        &= z \int_2^3(-24b + 61)db + \frac12\int_2^3(-24b + 61)(2b - 1)db\\
        &= z.
    \end{align*}
    The proof is complete.
\end{proof}

\begin{lemma}
    There exists $v_2(a, b)$, supported on $\{\pm 1\} \times [-2, 3]$, such that for any $\abs{z} \le 1$
    \begin{align*}
        \E_{a, b}[v_2(a, b)\sigma(az + b)] = z^2 ,~~ \sup_{a, b}\abs{v(a, b)} \lesssim 1.
    \end{align*}
\end{lemma}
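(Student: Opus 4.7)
The plan is to adapt the constructions of $v_0$ and $v_1$ to target the quadratic regime of $\sigma$. Since $\sigma(t)=t^2$ on $|t|<1$ and is affine outside this window, a natural candidate is to take $v_2$ constant in $b$ on $\{a=1\}\times[-2,2]$: for any $|z|\le 1$, the change of variables $u=z+b$ turns $\int_{-2}^{2}\sigma(z+b)\,db$ into $\int_{z-2}^{z+2}\sigma(u)\,du$, which splits cleanly over $[z-2,-1]$, $[-1,1]$, and $[1,z+2]$, falling into the two linear regions and the single quadratic region of $\sigma$ respectively (this is exactly why the endpoints $\pm 2$ are chosen).

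First I would set $\tilde v(a,b):=6\,\mathbf{1}_{a=1}\mathbf{1}_{b\in[-2,2]}$, so that $\mathbb{E}_{a,b}[\tilde v(a,b)\sigma(az+b)]=\tfrac12\int_{-2}^{2}\sigma(z+b)\,db$. A direct case analysis of the three integrals above shows this equals $z^2+c_0$ for an explicit constant $c_0$: the quadratic middle contributes $\int_{-1}^{1}u^2\,du$ while the two linear tails each contribute $z^2\pm\alpha z+\beta$, with the linear-in-$z$ pieces cancelling by symmetry of the interval. Conceptually, the underlying reason is that $\sigma''(t)=2\,\mathbf{1}_{|t|<1}$ distributionally, so for $K(b)=\tfrac12\mathbf{1}_{[-2,2]}(b)$ and any $|z|\le 1$ one has $\tfrac{d^2}{dz^2}\int K(b)\sigma(z+b)\,db = 2\int K(b)\mathbf{1}_{|z+b|<1}\,db = 2$, forcing the map $z\mapsto \int K(b)\sigma(z+b)\,db$ to agree with $z^2+\alpha z+\beta$ on $[-1,1]$; the coefficient $\alpha$ vanishes by symmetry of $K$ around $0$ together with the oddness of $\sigma'(b)$.

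Second, I would cancel the residual constant using the unit function already provided by the first lemma: define
\[
v_2(a,b)\;:=\;\tilde v(a,b)\;-\;c_0\,v_0(a,b).
\]
Since $\mathbb{E}_{a,b}[v_0(a,b)\sigma(az+b)]=1$ for $|z|\le 1$, this yields $\mathbb{E}_{a,b}[v_2(a,b)\sigma(az+b)]=z^2$. The support of $\tilde v$ is $\{a=1\}\times[-2,2]$ and that of $v_0$ is $\{a=1\}\times[2,3]$, so $v_2$ is supported on $\{\pm1\}\times[-2,3]$ as required, and $\sup_{a,b}|v_2(a,b)|\le \sup|\tilde v|+|c_0|\sup|v_0|\lesssim 1$ since both constituent functions are bounded by absolute constants.

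There is no substantial obstacle: the only computational step is the explicit evaluation of $\int_{z-2}^{z+2}\sigma(u)\,du$, and the affine-in-$z^2$ structure on $|z|\le 1$ is automatic once one observes that the endpoints $z\pm 2$ always lie in the linear regime while the quadratic window $[-1,1]$ is always contained in $[z-2,z+2]$. The rest (support and sup bound) is immediate.
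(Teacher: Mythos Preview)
Your proposal is correct and follows essentially the same approach as the paper: the paper defines $v_2(a,b)=\mathbf{1}_{a=1}\frac{\mathbf{1}_{b\in[-2,2]}}{\mu(b)}-\tfrac{7}{3}v_0(a,b)$ (your $\tilde v - c_0 v_0$ with the explicit value $c_0=7/3$), computes $\int_{-2}^{2}\sigma(z+b)\,db=2z^2+\tfrac{14}{3}$ via the same three-piece splitting you describe, and concludes identically.
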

\begin{proof}
First, see that
\begin{align*}
    \int_{-2}^2\sigma(z + b)db &= \int_{-2 + z}^{2 + z}\sigma(b)db\\
    &= \int_{-2 + z}^{-1}(-2b - 1)db + \int_{-1}^{1}b^2db + \int_{1}^{2 + z}(2b - 1)db\\
    &= [-b^2 - b]_{-2 + z}^{-1} + \frac23 + [b^2 - b]_1^{2 + z}\\
    &= (z - 2)^2 + (z - 2) + \frac23 + (z + 2)^2 - (z + 2)\\
    &= 2z^2 + \frac{14}{3}.
\end{align*}
Let $v_2(a, b) = \mathbf{1}_{a = 1}\frac{\mathbf{1}_{b \in [-2, 2]}}{\mu(b)} - \frac73v_0(a, b)$ Then
\begin{align*}
    \E_{a, b}[v_2(a, b)\sigma(az + b)] &= \frac12\int_{-2}^2\sigma(z + b)db - \frac73\\
    &= z^2 + \frac73 - \frac73\\
    &= z^2.
\end{align*}
The proof is complete.
\end{proof}

\begin{lemma}
    Let $v(b) = -\frac12 k(k-1)(k-2)(1 - b)^{k-3}\cdot \frac{\mathbf{1}_{b \in [0, 1]}}{\mu(b)}$. Then
    \begin{align*}
        \E_b[v_k(b)\sigma(z + b)] = z^k\cdot\mathbf{1}_{z > 0} - \frac{k(k-1)}{2}z^2 - kz - 1.
    \end{align*}
\end{lemma}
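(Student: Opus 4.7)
The plan is to reduce the integral $\E_b[v_k(b)\sigma(z+b)] = -\tfrac{k(k-1)(k-2)}{2}\int_0^1 (1-b)^{k-3}\sigma_1(z+b)\,db$ to two elementary pieces by exploiting a clean piecewise decomposition of $\sigma_1$. The key observation is that, restricted to the relevant range $z\in[-1,1]$ and $b\in[0,1]$ (so that $z+b\ge -1$), one has the identity
\[
\sigma_1(z+b) \;=\; (z+b)^2 \;-\; (z+b-1)_+^2.
\]
Indeed, on the region where $z+b\le 1$ the second term vanishes and $\sigma_1(z+b)=(z+b)^2$ by definition; on the region where $z+b>1$ one checks directly that $(z+b)^2-(z+b-1)^2 = 2(z+b)-1$, matching the linear regime of $\sigma_1$.

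With this decomposition, I would split the integral into two standard computations. For the first piece, expand $(z+b)^2 = z^2+2zb+b^2$ and apply the Beta-function identities
\[
\int_0^1 (1-b)^{k-3} b^j\,db \;=\; \frac{j!(k-3)!}{(j+k-2)!}, \qquad j=0,1,2,
\]
which give $\tfrac{1}{k-2}$, $\tfrac{1}{(k-1)(k-2)}$, and $\tfrac{2}{k(k-1)(k-2)}$ respectively. After multiplying by $-\tfrac{k(k-1)(k-2)}{2}$ the three terms collapse exactly into $-\tfrac{k(k-1)}{2}z^2 - kz - 1$.

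For the second piece, note that for $z\le 0$ we have $(z+b-1)_+ = 0$ throughout $[0,1]$, so the integral contributes nothing, producing the factor $\mathbf{1}_{z>0}$. For $z>0$, change variables $u = z+b-1$ to rewrite the integral as $\int_0^z (z-u)^{k-3} u^2\,du$, and then scale $u=zs$ to obtain $z^k\int_0^1 (1-s)^{k-3} s^2\,ds = \tfrac{2z^k}{k(k-1)(k-2)}$. Multiplying by the overall coefficient $-\tfrac{k(k-1)(k-2)}{2}$ and accounting for the minus sign in the decomposition yields the contribution $+z^k\mathbf{1}_{z>0}$. Summing the two pieces gives the claimed formula.

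The only real subtlety is verifying the piecewise identity for $\sigma_1$ on the three sub-regimes of $z+b$ (which is a case check of three short equalities) and ensuring the scaling/substitution yields a clean Beta integral; everything else is just arithmetic with $\tfrac{1}{(k-2)}, \tfrac{1}{(k-1)(k-2)}, \tfrac{1}{k(k-1)(k-2)}$. I would expect no genuine obstacle here, but the cleanest presentation will depend on explicitly noting the range restriction $|z|\le 1$ so that the decomposition $\sigma_1(z+b)=(z+b)^2-(z+b-1)_+^2$ is valid (for $z>1$ it would fail because the left-side regime at $-1$ would also become active, and the lemma is indeed only invoked in the small-$z$ regime of the outer random-feature construction).
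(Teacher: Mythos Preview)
Your proof is correct and proceeds by a genuinely different route than the paper's. The paper performs integration by parts twice against the weight $(1-b)^{k-3}$ to reduce the integrand to $\sigma_1''(z+b)=2\cdot\mathbf{1}_{|z+b|\le 1}$, and then evaluates the resulting elementary integral case-by-case in $z$. Your approach instead exploits the explicit algebraic identity $\sigma_1(t)=t^2-(t-1)_+^2$ on $t\ge -1$, which turns the problem into two clean Beta integrals and avoids any integration by parts. Your method is more direct and arguably cleaner for this specific $\sigma_1$, since the polynomial pieces fit exactly; the paper's method is more mechanical and would generalize verbatim to any $C^2$ activation with compactly supported second derivative. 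Both rely on the restriction $|z|\le 1$ in the same way, and your remark that the decomposition would break for $|z|>1$ correctly identifies why the lemma is only stated in this regime.
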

\begin{proof}
    Plugging in $v_k(b)$ and applying integration by parts yields 
    \begin{align*}
        \E_b[v_k(b)\sigma(z + b)] &= \int_0^1 -\frac12 k(k-1)(k-2)(1 - b)^{k-3}\sigma(z + b)db\\
        &= [\frac12k(k-1)(1 - b)^{k-2}\sigma(z + b)]_0^1 - \int_0^1 \frac12k(k-1)(1 - b)^{k-2}\sigma'(z + b)db\\
        &= -\frac12k(k-1)\sigma(z) + [\frac12k(1-b)^{k-1}\sigma'(z + b)]_0^1 - \int_0^1\frac12k(1-b)^{k-1}\sigma''(z + b)db\\
        &= -\frac12k(k-1)\sigma(z) -\frac12k\sigma'(z) - \int_0^1k(1-b)^{k-1}\mathbf{1}_{\abs{z + b} \le 1}db\\
    \end{align*}
    When $1 \geq z > 0$, we have
    \begin{align*}
        - \int_0^1k(1-b)^{k-1}\mathbf{1}_{\abs{z + b} \le 1}db = -\int_0^{1-z} k(1-b)^{k-1}db = [(1 - b)^{k}]_0^{1-z} = z^k - 1.
    \end{align*}
    When $-1 \le z \le 0$, we have
    \begin{align*}
        - \int_0^1k(1-b)^{k-1}\mathbf{1}_{\abs{z + b} \le 1}db = -\int_0^1k(1-b)^{k-1}db = -1.
    \end{align*}
    Since $z \in [-1, 1]$, we have that $\sigma(z) = z^2$ and $\sigma'(z) = 2z$. Therefore for $z \in [-1, 1]$
    \begin{align*}
        \E_b[v_k(b)\sigma(z + b)] = z^k\cdot\mathbf{1}_{z > 0} - \frac{k(k-1)}{2}z^2 - kz - 1.
    \end{align*}
    The proof is complete.
\end{proof}

\begin{lemma}\label{lem:transform_activation}
    There exists $v_k(a, b)$, supported on $\{\pm 1\} \times [-2, 3]$, such that for any $\abs{z} \le 1$
    \begin{align*}
        \E_{a, b}[v_k(a, b)\sigma(az + b)] = z^k ,~~ \sup_{a, b}\abs{v_k(a, b)} \lesssim \poly(k).
    \end{align*}
\end{lemma}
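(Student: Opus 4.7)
The plan is to leverage the explicit formula from the immediately preceding lemma, which exhibits a density $v(b)$ on $[0,1]$ satisfying $\mathbb{E}_b[v(b)\sigma(z+b)] = z^k\mathbf{1}_{z>0} - \tfrac{k(k-1)}{2}z^2 - kz - 1$ for $|z|\le 1$, then symmetrize over the sign $a\in\{\pm 1\}$ to convert the truncated power $z^k\mathbf{1}_{z>0}$ into the full monomial $z^k$, and finally cancel the residual degree-$\le 2$ remainder using $v_0, v_1, v_2$ from the three preceding lemmas. The cases $k\in\{0,1,2\}$ are already handled by those lemmas, so we focus on $k\ge 3$.

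Concretely, set $v_+(b) := -\tfrac{1}{2}k(k-1)(k-2)(1-b)^{k-3}\mathbf{1}_{b\in[0,1]}/\mu(b)$ and observe that the substitution $z\mapsto -z$ in the previous lemma gives
\[
\mathbb{E}_b[v_+(b)\sigma(-z+b)] = (-1)^k z^k\mathbf{1}_{z<0} - \tfrac{k(k-1)}{2}z^2 + kz - 1.
\]
For even $k$, define $\tilde v(a,b) := 2 v_+(b)$, independent of $a$; since $\mathbf{1}_{z>0}+\mathbf{1}_{z<0} = 1$ away from the measure-zero point $z=0$, averaging over $a$ uniform on $\{\pm 1\}$ yields $\mathbb{E}_{a,b}[\tilde v(a,b)\sigma(az+b)] = z^k - k(k-1)z^2 - 2$. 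For odd $k$, I set instead $\tilde v(a,b) := 2a\cdot v_+(b)$; the antisymmetry across $a$ cancels the constant and quadratic terms but doubles the linear one, producing $\mathbb{E}_{a,b}[\tilde v(a,b)\sigma(az+b)] = z^k - 2kz$.

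Finally, I define $v_k$ by adding to $\tilde v$ the unique linear combination of $v_0, v_1, v_2$ required to cancel the residual low-order polynomial: $v_k := \tilde v + k(k-1)\,v_2 + 2\,v_0$ for even $k$, and $v_k := \tilde v + 2k\,v_1$ for odd $k$. By construction $\mathbb{E}_{a,b}[v_k(a,b)\sigma(az+b)] = z^k$ on $|z|\le 1$, and $v_k$ is supported on $\{\pm 1\}\times[-2,3]$, the union of $\mathrm{supp}(v_+) \subseteq [0,1]$ with the supports of $v_0, v_1, v_2$. The sup-norm bound $\sup_{a,b}|v_k(a,b)|\lesssim k^3$ follows since $|(1-b)^{k-3}|\le 1$ on $[0,1]$ and $1/\mu(b) = 6$ give $\|v_+\|_\infty \lesssim k^3$, while the prefactors $k(k-1)$ and $2k$ multiplying the $\mathcal{O}(1)$-bounded $v_0,v_1,v_2$ contribute only $\mathcal{O}(k^2)$. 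No serious obstacle arises: the argument is really just a parity-based case split combined with the linearity of the preceding constructions; the only subtlety is to check that equality on $|z|\le 1 \setminus\{0\}$ extends to $z=0$, which is immediate since both sides are polynomials in $z$ (after multiplying through, the indicators simply partition the interval).
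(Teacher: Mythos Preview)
Your proposal is correct and follows essentially the same approach as the paper: symmetrize the one-sided identity $\E_b[v(b)\sigma(z+b)]=z^k\mathbf{1}_{z>0}-\tfrac{k(k-1)}{2}z^2-kz-1$ over $a\in\{\pm1\}$ (with or without the factor $a$ according to the parity of $k$), then cancel the residual low-degree polynomial with the appropriate combination of $v_0,v_1,v_2$. Your treatment is in fact slightly cleaner, since you explicitly address the $z=0$ endpoint and give the precise $\|v_+\|_\infty\lesssim k^3$ bound, whereas the paper leaves these implicit.
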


\begin{proof}
    We focus on $k \geq 3$. We have that
    \begin{align*}
        \E_b[v_k(b)\sigma(z + b)] &= z^k\cdot\mathbf{1}_{z > 0} - \frac{k(k-1)}{2}z^2 - kz - 1.\\
        \E_b[v_k(b)\sigma(-z + b)] &= (-z)^k\cdot\mathbf{1}_{z < 0} - \frac{k(k-1)}{2}z^2 + kz - 1.
    \end{align*}
    Therefore if $k$ is even
    \begin{align*}
        \E_b[v(b)\sigma(z + b) + v(b)\sigma(-z + b)] = z^k - k(k-1)z^2 - 2.
    \end{align*}
    Let $v_k(a, b) = 2v_k(b) + k(k-1)v_2(a, b) + 2$. Then
    \begin{align*}
        \E_{a, b}[v_k(a, b)\sigma(az + b)] = \E_b[v_k(b)\sigma(z + b) + v_k(b)\sigma(z - b)] + k(k-1)z^2 + 2 = z^k.
    \end{align*}
    If $k$ is odd,
    \begin{align*}
        \E_b[v(b)\sigma(z + b) - v(b)\sigma(-z + b)] = z^k - 2kz.
    \end{align*}
    Let $v_k(a, b) = 2av_k(b) + 2kv_1(a, b)$. Then
    \begin{align*}
        \E_{a, b}[v_k(a, b)\sigma(az + b)] = \E_b[v_k(b)\sigma(z + b) - v_k(b)\sigma(z - b)] + 2kz = z^k.
    \end{align*}
    The proof is complete.
\end{proof}

\subsubsection{Proof of Supporting Lemmas in Appendix \ref{appendix::proof outer rf main}}

\begin{proof}[Proof of Lemma \ref{lem:infinite_width_express}]
Let's consider a general version of Lemma \ref{lem:infinite_width_express}.
    \begin{lemma}\label{lemma::infinite outer width construct poly}
Let $g : \mathbb{R}^r \rightarrow \mathbb{R}$ be a degree $p$ polynomial, and let $\bB^{\star} \in \mathbb{R}^{r \times m_2}$. Given a set of vectors $\cD=\{\bz_1,\bz_2,\dots,\bz_n\}\subseteq \RR^{m_2}$ that satisfies $\eta\langle \bw,\bz\rangle\le 1$ for any $\bz \in \cD$ with probability at least $1-2(n_1+n_2)\exp(-\iota^2/2)$ over $\bw\sim \cN(\mathbf{0}_{m_2},\bI_{m_2})$ (uniformly over $\cD$). Then, there exists $v: \{\pm 1\}\times \mathbb{R} \times \mathbb{R}^m \rightarrow \mathbb{R}$ so that for all $\bz \in \cD$,
\begin{align*}
&\mathbb{E}_{a,  b, \bw}[v(a, b, \bw)\sigma_1(\eta a \langle \bw, \bz \rangle + b)] = g(\bB^{\star}\bz) + o\paren{\frac{1}{dn_1n_2m_1m_2}}  ,~~~\text{and}\\
&\norm{v}{L^2} \lesssim \norm{g}{L^2}\sum_{k = 0}^p \eta^{-k}\norm{\bB^{\star}}{\rm op}^{k} r^{\frac{p - k}{4}}.
\end{align*}
\end{lemma}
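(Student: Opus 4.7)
The plan is to construct $v(a,b,\bw)$ as $\sum_{k=0}^{p} v_k(a,b)\,u_k(\bw)$, where the univariate features $v_k(a,b)$ from Lemma~\ref{lem:transform_activation} convert the bounded-argument activation $\sigma_1$ into the monomial $z^{k}$ via $\EE_{a,b}[v_k(a,b)\sigma_1(az+b)]=z^{k}$ for $|z|\le 1$, while $u_k(\bw)$ will encode the degree-$k$ tensor content of the target $g(\bB^{\star}\bz)$. By Lemma~\ref{lemma::bound F norm tensor}, I expand $g(\bu)=\sum_{k=0}^{p}\langle \bT_k,\bu^{\otimes k}\rangle$ with $\|\bT_k\|_{\rm F}\lesssim \|g\|_{L^2}\,r^{(p-k)/4}$, and by pulling $\bB^{\star}$ into each of the $k$ tensor modes I rewrite
\[
g(\bB^{\star}\bz)=\sum_{k=0}^{p}\langle \widetilde{\bT}_k,\bz^{\otimes k}\rangle,\qquad \|\widetilde{\bT}_k\|_{\rm F}\le \|\bT_k\|_{\rm F}\|\bB^{\star}\|_{\op}^{k}\lesssim \|g\|_{L^2}\,r^{(p-k)/4}\|\bB^{\star}\|_{\op}^{k}.
\]
Without loss of generality each $\bT_k$ (hence $\widetilde{\bT}_k$) is symmetric, since symmetrization preserves $\langle \cdot,\bu^{\otimes k}\rangle$ without increasing the Frobenius norm.

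For each $k\le p$, let $\He_k(\bw)\in(\RR^{m_2})^{\otimes k}$ denote the vector-valued Hermite tensor of degree $k$, characterized by the Stein-type identity $\EE_{\bw\sim\cN(\bzero,\bI_{m_2})}[\langle \bS,\He_k(\bw)\rangle\langle \bz,\bw\rangle^{k}]=k!\,\langle \bS,\bz^{\otimes k}\rangle$ for every symmetric $\bS$ and the orthogonality relation $\EE[\He_k(\bw)\otimes \He_k(\bw)]=k!\,\Pi_{\mathrm{sym}}$. I set
\[
u_k(\bw):=\frac{1}{k!\,\eta^{k}}\langle \widetilde{\bT}_k,\He_k(\bw)\rangle,
\]
so that $\EE_{\bw}[u_k(\bw)(\eta\langle \bw,\bz\rangle)^{k}]=\langle \widetilde{\bT}_k,\bz^{\otimes k}\rangle$ and $\|u_k\|_{L^2}^{2}=\|\widetilde{\bT}_k\|_{\rm F}^{2}/(k!\,\eta^{2k})$. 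Defining $v(a,b,\bw):=\sum_{k=0}^{p}v_k(a,b)u_k(\bw)$, Minkowski together with $\|v_k\|_{\infty}\lesssim \poly(k)$ from Lemma~\ref{lem:transform_activation} gives the desired norm bound $\|v\|_{L^2}\lesssim \|g\|_{L^2}\sum_{k=0}^{p}\eta^{-k}\|\bB^{\star}\|_{\op}^{k}r^{(p-k)/4}$.

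Let $\cG_{\bz}:=\{\bw:|\eta\langle \bw,\bz\rangle|\le 1\}$. Conditional on $\bw\in\cG_{\bz}$, Lemma~\ref{lem:transform_activation} gives $\EE_{a,b}[v_k(a,b)\sigma_1(\eta a\langle \bw,\bz\rangle+b)\mid\bw]=(\eta\langle \bw,\bz\rangle)^{k}$, so by Fubini
\[
\EE_{a,b,\bw}\bigl[v\,\sigma_1(\eta a\langle \bw,\bz\rangle+b)\mathbf{1}_{\cG_{\bz}}\bigr]=g(\bB^{\star}\bz)-\EE_{\bw}\!\left[\mathbf{1}_{\cG_{\bz}^{c}}\sum_{k=0}^{p}u_k(\bw)(\eta\langle \bw,\bz\rangle)^{k}\right].
\]
The residual on $\cG_{\bz}^{c}$, together with the off-$\cG$ piece $\EE_{\bw}[\mathbf{1}_{\cG_{\bz}^{c}}\EE_{a,b}[v\sigma_1(\cdot)\mid\bw]]$, is controlled by Cauchy--Schwarz against $\sqrt{\Pr(\cG_{\bz}^{c})}\le \sqrt{2(n_1+n_2)}\exp(-\iota^{2}/4)$ times second moments polynomially bounded in $m_2,n_1,n_2,\eta^{-1},\|g\|_{L^2}$; taking $\iota=C\log(dn_1n_2m_1m_2)$ makes this error $o(1/(dn_1n_2m_1m_2))$ uniformly over $\bz\in\cD$. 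The main obstacle is the exact matching in the Hermite identity: it is the orthogonality of $\He_k(\bw)$ to every polynomial of $\bw$ of degree strictly less than $k$ that prevents cross-degree pollution in the sum $\sum_k u_k(\bw)(\eta\langle \bw,\bz\rangle)^{k}$; had I used monomial tensors $\langle \bS_k,\bw^{\otimes k}\rangle$ instead, contaminations $\EE[\langle \bS_k,\bw^{\otimes k}\rangle\langle \bw,\bz\rangle^{j}]$ for $j<k$ of the same parity would force an inductive triangular inversion. A secondary subtlety is that $u_k$ is an unbounded polynomial in $\bw$, so the tail analysis on $\cG_{\bz}^{c}$ must combine Gaussian hypercontractivity (Lemma~\ref{lemma: gaussian hypercontractivity}) with the sub-Gaussian decay of $\langle \bw,\bz\rangle$ to yield the $o(1/(dn_1n_2m_1m_2))$ bound uniformly over $\bz\in\cD$.
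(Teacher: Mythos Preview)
Your proof is correct and mirrors the paper's structure: expand $g$ via Lemma~\ref{lemma::bound F norm tensor}, pair each degree $k$ with the activation transform $v_k(a,b)$ from Lemma~\ref{lem:transform_activation}, build a Gaussian weight $u_k(\bw)$ realizing the tensor contraction, and control the tail on $\cG_{\bz}^{c}$ by Cauchy--Schwarz against the polynomial growth of the integrand. The only difference is in how $u_k(\bw)$ is built: the paper (Lemma~\ref{lemma::infinite outer width tensor construction}) uses the moment-matrix solution $v_k(\bw)=\eta^{-k}\vec(\bw^{\otimes k})^{\top}\Mat(\E[\bw^{\otimes 2k}])^{-1}\vec(\bB^{\star\otimes k}\bT_k)$, whereas you use Hermite tensors; both satisfy the same identity $\E_\bw[u_k(\bw)(\eta\langle\bw,\bz\rangle)^{k}]=\langle\widetilde\bT_k,\bz^{\otimes k}\rangle$ and the same $L^{2}$ bound, so the distinction is cosmetic. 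Your remark about ``cross-degree pollution'' is, however, a misreading of the structure: there is no such pollution in either approach, because $\E_{a,b}[v_k(a,b)\sigma_1(az+b)]=z^{k}$ already isolates degrees exactly for $|z|\le 1$, so after the $(a,b)$-integration each $u_k(\bw)$ is paired \emph{only} with $(\eta\langle\bw,\bz\rangle)^{k}$, never with a different power. The paper's monomial construction therefore needs no triangular inversion---it solves one moment equation per $k$ independently---and your Hermite choice is a clean alternative rather than a remedy for a nonexistent obstacle.
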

Thus, according to Proposition \ref{prop::bounded feature}, we could set the learning rate $\eta=C\iota^{-5}\normP^{-1}m_2^{-1/2}d^6 $ for a constant $C>0$ to ensure $\abs{\eta \langle \bw, \bh^{(1)}(\bx') \rangle}\le 1$ for any $\bx' \in \cD_2$ with high probability on $\bV$, $\cD_1$, and probability at least $1-2(n_1+n_2)\exp(-\iota^2/2)$ on $\bw$. Thus, taking $\cD=\{\bh(\bx)\}_{\bx \in\cD_2}$ concludes our proof.
\end{proof}

To prove Lemma \ref{lemma::infinite outer width construct poly}, we first decompose $g$ into sum of polynomials of different degrees and construct a function $v$ to express these polynomials accordingly.
\begin{lemma}\label{lemma::infinite outer width tensor construction}
    Given $\bz \in \RR^{m_2}$. Let $\bB^{\star} \in \RR^{r \times m_2}$ and $\bT_{k} \in (\RR^{r})^{\otimes k}$. Then, there exists $v_k: \RR^{m_2} \rightarrow \RR$ such that
    \begin{align*}
\EE_{\bw}\brac{v_k(\bw)(\eta \langle \bw ,\bz \rangle)^{k}}=\bT_{k}\paren{(\bB^{\star}\bz)^{\otimes k}} .
    \end{align*}
    Here $v_k$ satisfies
    \begin{align}\label{equ::range of v}
        \norm{v_k}{L^2}\lesssim \eta^{-k}\norm{\bB^{\star}}{op}^k \norm{\bT_k}{F}~~\text{and}~~\sup_w\abs{v_k(\bw)} &\lesssim m_2^{k/2}\eta^{-k}\norm{\bB^{\star}}{\rm op}^k\norm{\bT_k}{\rm F}.
    \end{align}
\end{lemma}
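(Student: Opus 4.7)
The plan is to construct $v_k$ as a properly normalized Hermite-tensor polynomial in $\bw$. First, I would contract $\bT_k$ against $(\bB^{\star})^{\top}$ on each of its $k$ slots to produce a tensor $\widetilde{\bT}_k \in (\RR^{m_2})^{\otimes k}$ characterized by $\widetilde{\bT}_k(\bu^{\otimes k}) = \bT_k((\bB^{\star}\bu)^{\otimes k})$ for all $\bu \in \RR^{m_2}$; submultiplicativity of the operator norm along each contraction gives the basic bound $\norm{\widetilde{\bT}_k}{\rm F} \le \norm{\bB^{\star}}{op}^{k}\norm{\bT_k}{\rm F}$.

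Next, recall the probabilist Hermite tensors $\He_k(\bw)$ for $\bw \sim \cN(\bzero, \bI_{m_2})$, defined by $\He_0=1$, $\He_1(\bw)=\bw$, and $\He_k(\bw)=\bw^{\otimes k} - \binom{k}{2}\operatorname{Sym}(\bw^{\otimes(k-2)}\otimes\bI_{m_2})+\cdots$ with the leading term being $\bw^{\otimes k}$ and subsequent lower-order terms canceling the Gaussian traces. The key identity, provable directly via Wick's formula or by iterated Gaussian integration by parts, is
\[
\EE_{\bw}\brac{\langle\bS,\He_k(\bw)\rangle\,\langle\bw,\bz\rangle^{k}} = k!\cdot \operatorname{Sym}(\bS)(\bz^{\otimes k}), \qquad \forall\,\bS\in(\RR^{m_2})^{\otimes k},
\]
together with the orthogonality $\EE[\langle\bU,\He_k(\bw)\rangle\langle\bS,\He_\ell(\bw)\rangle]= k!\,\delta_{k\ell}\,\langle\operatorname{Sym}(\bU),\operatorname{Sym}(\bS)\rangle$.

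With these ingredients in hand, set $v_k(\bw) := \frac{1}{k!\,\eta^{k}}\,\langle\widetilde{\bT}_k,\He_k(\bw)\rangle$. Plugging into the expectation gives
\[
\EE_{\bw}\brac{v_k(\bw)(\eta\langle\bw,\bz\rangle)^{k}} = \tfrac{1}{k!}\EE\brac{\langle\widetilde{\bT}_k,\He_k(\bw)\rangle\langle\bw,\bz\rangle^{k}} = \widetilde{\bT}_k(\bz^{\otimes k}) = \bT_k((\bB^{\star}\bz)^{\otimes k}),
\]
as required (if $\bT_k$ is not symmetric we harmlessly replace $\widetilde{\bT}_k$ by its symmetrization, which preserves both $\widetilde{\bT}_k(\bz^{\otimes k})$ and the Frobenius bound). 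The $L^2$ estimate then follows from the orthogonality: $\norm{v_k}{L^2}^{2} = \frac{1}{(k!)^2\eta^{2k}}\cdot k!\,\norm{\widetilde{\bT}_k}{\rm F}^{2}\leq \frac{\norm{\bB^{\star}}{op}^{2k}\norm{\bT_k}{\rm F}^{2}}{k!\,\eta^{2k}}$, matching the claim up to constants.

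For the pointwise bound, the expansion of $\He_k(\bw)$ into monomial tensors of degrees $k,k-2,k-4,\dots$ shows that $|\langle\widetilde{\bT}_k,\He_k(\bw)\rangle|\lesssim \poly(k)\,\norm{\widetilde{\bT}_k}{\rm F}\,(1+\norm{\bw}{2})^{k}$, so on the typical Gaussian region $\norm{\bw}{2}\lesssim \sqrt{m_2}$ (which is the only region that will matter later in Lemma~\ref{lem:empirical_loss}) we obtain $\sup_{\bw}|v_k(\bw)|\lesssim m_2^{k/2}\eta^{-k}\norm{\bB^{\star}}{op}^{k}\norm{\bT_k}{\rm F}$. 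The one delicate point I anticipate is bookkeeping with symmetrization: the identity $\EE[\langle\bS,\He_k(\bw)\rangle\langle\bw,\bz\rangle^{k}]=k!\operatorname{Sym}(\bS)(\bz^{\otimes k})$ must be stated with the correct combinatorial constant, and the symmetry of $\widetilde{\bT}_k$ must be explicitly invoked (or imposed) so that $\operatorname{Sym}(\widetilde{\bT}_k)(\bz^{\otimes k})=\widetilde{\bT}_k(\bz^{\otimes k})$. Once this is done cleanly, the three claimed statements follow with only bookkeeping-level algebra.
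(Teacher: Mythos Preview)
Your construction via Hermite tensors is correct and yields the same bounds, but it is a genuinely different route from the paper's. The paper instead takes the least-squares solution directly: it sets
\[
v_k(\bw)\;=\;\eta^{-k}\,\vec(\bw^{\otimes k})^{\top}\,\Mat\bigl(\EE[\bw^{\otimes 2k}]\bigr)^{-1}\,\vec\bigl((\bB^{\star})^{\otimes k}(\bT_k)\bigr),
\]
i.e.\ a \emph{homogeneous} degree-$k$ polynomial in $\bw$, and then reads off the $L^2$ bound from the spectral inequality $\Mat(\EE[\bw^{\otimes 2k}])\succeq k!\,\Pi_{\sym^k(\RR^{m_2})}$. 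Your $v_k(\bw)=\tfrac{1}{k!\eta^k}\langle\widetilde{\bT}_k,\He_k(\bw)\rangle$ is instead the minimum-$L^2$ solution among \emph{all} $L^2$ functions (it lives entirely in the $k$-th Wiener chaos), so the orthogonality of Hermite tensors gives the $L^2$ bound in one line without ever inverting a moment matrix. Both constructions are degree-$k$ polynomials in $\bw$, which is what is actually used downstream in Lemma~\ref{lem:empirical_loss} via polynomial concentration; your approach makes this structure more transparent, while the paper's is slightly quicker to write down if one is willing to quote the spectral lower bound on $\EE[\bw^{\otimes 2k}]$.

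Your caveat about the pointwise bound is well placed: since $v_k$ is a nonconstant polynomial, $\sup_{\bw\in\RR^{m_2}}|v_k(\bw)|$ is literally infinite, and the paper's own one-line derivation of the $\sup$ bound tacitly uses $\norm{\bw}{2}\lesssim\sqrt{m_2}$ as well. Restricting to the high-probability region $\norm{\bw}{2}\lesssim\sqrt{m_2}$, as you do, is exactly what is needed for the later truncation argument, so there is no gap here.
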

\begin{proof}[Proof of Lemma \ref{lemma::infinite outer width tensor construction}]
It suffices to solve
\begin{align*}
\mathbb{E}_\bw[v(\bw)\bw^{\otimes k}] = \eta^{-k}{\bB^{\star}}^{\otimes k}(\bT_k),
\end{align*}
where ${\bB^{\star}}^{\otimes k}(\bT_k) \in (\mathbb{R}^{m_2})^{\otimes k}$.
This is achieved by setting
\begin{align*}
v(\bw) := \eta^{-k}\vec(\bw^{\otimes k})^T\Mat(\mathbb{E}[\bw^{\otimes 2k}])^{-1}\vec({\bB^{\star}}^{\otimes k}(\bT_k)).
\end{align*}
Then,
\begin{align*}
\norm{v}{L^2}^2 = \eta^{-2k}\vec({\bB^{\star}}^{\otimes k}(\bT_k))^T\Mat(\mathbb{E}[\bw^{\otimes 2k}])^{-1}\vec({\bB^{\star}}^{\otimes k}(\bT_k)).
\end{align*}
Since
\begin{align*}
\Mat(\mathbb{E}[\bw^{\otimes 2k}]) \succeq k!\Pi_{\sym^k(\mathbb{R}^{m_2})},
\end{align*}
we have
\begin{align*}
\norm{v}{L^2}^2 \lesssim \eta^{-2k}\norm{{\bB^{\star}}^{\otimes k}(\bT_k)}{\rm F}^2 \le \eta^{-2k}\norm{\bB^{\star}}{\rm op}^{2k}\norm{\bT_k}{\rm F}^2.
\end{align*}
Finally,
\begin{align*}
\sup_w\abs{v(\bw)} &= \eta^{-k}\sup_w\abs{\vec(\bw^{\otimes k})^T\Mat(\mathbb{E}[\bw^{\otimes 2k}])^{-1}\vec({\bB^{\star}}^{\otimes k}(\bT_k))}\\
&\le \norm{\bw^{\otimes k}}{\rm F}\norm{{\bB^{\star}}^{\otimes k}(\bT_k)}{\rm F}\\
&\lesssim m_2^{k/2}\eta^{-k}\norm{\bB^{\star}}{\rm op}^k\norm{\bT_k}{\rm F}.
\end{align*}
The proof is complete.
\end{proof}
Then we begin our proof of Lemma \ref{lemma::infinite outer width construct poly}.
\begin{proof}[Proof of Lemma \ref{lemma::infinite outer width construct poly}]
We can write
\begin{align*}
g(\bz) = \sum_{k = 0}^p \langle \bT_k, \bz^{\otimes k}\rangle.
\end{align*}
By Lemma \ref{lemma::bound F norm tensor}, we have $\norm{\bT_k}{\rm F} \lesssim r^{\frac{p-k}{4}}\norm{g}{L^2}$.

Define $v_k(a, b)$ to be the function so that $\mathbb{E}_{a, b}[v_k(a, b)\sigma_1(az + b)] = z^k$, and let $v_k(\bw)$ be the function where $\mathbb{E}_\bw[v(\bw)(\eta\langle \bw, \bz\rangle)^k] = \langle \bT_k, (\bB^{\star}\bz)^{\otimes k} \rangle$. Next, define
\begin{align*}
v(a, b, \bw) = \sum_{k=0}^p v_k(a, b)v_k(\bw).
\end{align*}
Here $ v_k(a, b)$ is defined in Lemma \ref{lemma::infinite outer width tensor construction}. Then we have that
\begin{align*}
\norm{v}{L^2} \lesssim \sum_{k=0}^p(\mathbb{E}[v_k(\bw)^2])^{1/2} \le \norm{g}{L^2}\sum_{k = 0}^p \eta^{-k}\norm{\bB^{\star}}{\rm op}^kr^{\frac{p - k}{4}}.
\end{align*}
Note that $\norm{v}{L^2}=\cO \paren{{\rm poly}(m_2,d)}$ and $\abs{\sigma_1(\eta a \langle \bw, \bz \rangle + b)}\le \eta a \langle \bw, \bz \rangle + b)$ has polynomial growth. Since we have taken $\iota= C \log(dn_1n_2m_1m_2)$ for some sufficiently large $C>0$, we know by Cauchy inequality, 
\begin{align*}
    \abs {\mathbb{E}_{a,b,\bw} \brac{v(\bw)\sigma_1(\eta\langle \bw, \bz\rangle+b)\mathbf{1}\{\eta\langle \bw, \bz\rangle > 1\}}}\le o\paren{\frac{1}{dn_1n_2m_1m_2}}.
\end{align*}
Thus, we then have that
\begin{align*}
&\quad\mathbb{E}_{a,  b, \bw}[v(a, b, \bw)\sigma_1(\eta a \langle \bw, \bz \rangle + b)] \\&= \mathbb{E}_{a,  b, \bw}[v(a, b, \bw)\sigma_1(\eta a \langle \bw, \bz \rangle + b)\cdot \mathbf{1}_{\abs{\eta\langle \bw, \bz \rangle} \le 1 }] + o\paren{\frac{1}{dn_1n_2m_1m_2}} \\
&= \sum_{k = 0}^p\mathbb{E}_{a,  b, \bw}[v_k(a, b)v_k(\bw)\sigma_1(\eta a \langle \bw, \bz \rangle + b)\cdot \mathbf{1}_{\abs{\eta\langle \bw, \bz \rangle} \le 1 }] + o\paren{\frac{1}{dn_1n_2m_1m_2}} \\
&= \sum_{k = 0}^p\mathbb{E}_{\bw}[v_k(\bw)(\eta\langle \bw, \bz \rangle)^k\cdot \mathbf{1}_{\abs{\eta\langle \bw, \bz \rangle} \le 1 }]+ o\paren{\frac{1}{dn_1n_2m_1m_2}} \\
&= \sum_{k = 0}^p\mathbb{E}_{\bw}[v_k(\bw)(\eta\langle \bw, \bz \rangle)^k]+ o\paren{\frac{1}{dn_1n_2m_1m_2}} \\
&= \sum_{k = 0}^p\langle \bT_k, (\bB^{\star}\bz)^{\otimes k} \rangle+ o\paren{\frac{1}{dn_1n_2m_1m_2}} \\
&= g(\bB^{\star}\bz)+ o\paren{\frac{1}{dn_1n_2m_1m_2}} .
\end{align*}
The proof is complete.
\end{proof}

\begin{proof}[Proof of Lemma \ref{lem:empirical_loss}]

Fix $\bx\in\cD_2$. For notation simplicity, we denote $f^\infty_v(\bx)=f_{\infty,m_2}(\bx;v)$. Consider a truncation radius $R >0$ to be chosen later and let $E_x$ be the set of $\bw$ such that 
\begin{align*}
    \sup_{a, b}\abs{v(a, b, \bw)} \le R~~ \text{and} ~~\eta \langle \bw, \bh^{(1)}(\bx) \rangle \le 1.
\end{align*}
By the construction of $v(a,b,\bw)$ in the proof of Lemma \ref{lemma::infinite outer width construct poly}, we know it can be seen as a degree-$p$ polynomial of $\bw$. Thus, by Lemma \ref{lemma:polynomial concentration}, by taking $R=C\iota^{p/2}\norm{v}{L^2}$ for some sufficiently large $C>0$, we can ensure that $$\Pr[\sup_{a, b}\abs{v(a, b, \bw)} \le R] \ge 1- \exp(-\iota).$$ 
Moreover, by Proposition \ref{prop::bounded feature}, conditional on a high probability event on $\bV$, $\cD_1$ and $\cD_2$, by taking $\eta=C\iota^{-5}d^6m_2^{-1}$, we have $\Pr[\eta \langle \bw, \bh^{(1)}(\bx) \rangle \le 1] \ge 1- 4\exp(-\iota^2/2)$ for a single $\bx$. Now consider the random variables 
\begin{align*}
    Z_i := \mathbf{1}\{\bw_i \in E_x\}v(a_i, b_i, \bw_i)\sigma_1(\eta a_i \langle \bw_i, \bh^{(1)}(\bx) \rangle + b_i), ~~i=1,2,\dots,m_1.
\end{align*}
 We directly have that $\abs{Z_i} \lesssim \iota^{p/2}\norm{v}{L^2}$, and with high probability, 
 \begin{align*}
     \frac{1}{m_1}\sum_{i=1}^{m_1}v(a_i, b_i, \bw_i)^2 \lesssim \iota^{p}{\norm{v}{L^2}^2}.
 \end{align*}
Therefore by Hoeffding inequality, with probability at least $1-2\exp(-\iota)$, we have
\begin{align*}
&\Bigg|\frac{1}{m_1}\sum_{i=1}^{m_1}\mathbf{1}_{\bw_i \in E_x}v(a_i, b_i, \bw_i)\sigma_1(\eta a_i \langle \bw_i, \bh^{(1)}(\bx) \rangle + b_i)\\&\quad \hspace{5em}- \mathbb{E}[\mathbf{1}_{w \in E_x}v(a, b, \bw)\sigma_1(\eta a \langle \bw, \bh^{(1)}(\bx)\rangle + b)]\Bigg|\lesssim \sqrt{\frac{\iota^{p+1}\norm{v}{L^2}^2}{m_1}}.
\end{align*}
Similar to the proof of Lemma \ref{lemma::infinite outer width construct poly}, note that both $v(a,b,\bw)$ and $\abs{\sigma_1(\eta a \langle \bw, \bz \rangle + b)}$ has polynomial growth. Since we have taken $\iota= C \log(dn_1n_2m_1m_2)$ for some sufficiently large $C>0$, we know by Cauchy inequality, 
\begin{align*}
&\quad\abs{\mathbb{E}[\mathbf{1}_{w \in E_x}v(a, b, \bw)\sigma_1(\eta a \langle \bw, \bh^{(1)}(\bx)\rangle + b)] - f^\infty_v(\bx)} \\&= \abs{\mathbb{E}[\mathbf{1}_{w \not\in E_x}v(a, b, \bw)\sigma_1(\eta a \langle \bw, \bh^{(1)}(\bx)\rangle + b)]}\\
&\le \mathbb{P}(\bw \not\in E_x)(\mathbb{E}[v(a, b, \bw)^2\sigma_1(\eta a \langle \bw, \bh^{(1)}(\bx)\rangle + b)^2])^{1/2}\\
&\lesssim \exp(-C\log(dm_1m_2n_1n_2))\widetilde O(\norm{v}{L^2})\\
&\lesssim \frac{1}{m_1}.
\end{align*}
Finally, union bounding over $\bx\in \cD_2$, we see that
\begin{align*}
\sup_{\bx\in\cD_2}\abs{\frac{1}{m_1}\sum_{i=1}^{m_1}v(a_i, b_i, \bw_i)\sigma_1(\eta a_i \langle \bw_i, \bh^{(1)}(\bx) \rangle + b_i) - f^\infty_v(\bx)} &\lesssim \sqrt{\frac{\iota^{p+1}\norm{v}{L^2}^2}{m_1}} + \frac{1}{m_1} \\&\lesssim \sqrt{\frac{\iota^{p+1}\norm{v}{L^2}^2}{m_1}}.
\end{align*}
The proof is complete.
\end{proof}

\section{Generalization Theory}
\subsection{Formal Proof of Theorem \ref{thm:main_thm}}\label{appendix::generalization}
The proof is divided into two parts. The first part of proof formalizes the proof we present in Section \ref{sec::proof sketch}. The second part presents the generalization theory after we construct  $\ba^{\star}$ that gives small $L^2$ error by Proposition \ref{prop::construct random feature main}, with the formal version presented in Proposition \ref{prop::construct random feature}.
\subsubsection{Part1: Analysis Before Feature Reconstruction}
Denote $\bw_j=\epsilon^{-1}\bw_j^{(0)} \sim \cN(0,\bI_{m_2})$. Note that for any $\bx\in \cD_1$ and $j\in[m_1]$, we have
\begin{align*}
  \langle\bw^{(0)}_j, \bh^{(0)}(\bx) \rangle=  \langle \epsilon\bw_j, \bh^{(0)}(\bx) \rangle \sim \cN\paren{0,\epsilon^2\norm{\bh^{(0)}(\bx)}{2}^2}.
\end{align*}
Since $\norm{\bh^{(0)}(\bx)}{2}^2 =\sum_{k=1}^{m_2}\sigma^2_2(\bv_k^{\top}\bx) \le m_2C^2_{\sigma}$. By setting $\epsilon^{-1} = {C_{\sigma}\sqrt{2\iota m_2}}$, we know $ \langle\bw^{(0)}_j, \bh^{(0)}(\bx) \rangle \le 1$ with probability at least $1-2\exp(-\iota)$. Thus, uniformly bounding over $\bx \in \cD_1$ and $j\in[m_1]$, we know with high probability over $\bW$,  we have
\begin{align*}
    \langle\bw^{(0)}_j, \bh^{(0)}(\bx) \rangle \le 1~~\text{for any}~~\bx \in \cD_1,~~j\in[m_1].
\end{align*}
Then, according Algorithm \ref{alg:: training algo}, after one-step gradient descent on $\bW$, we know with high probability, for each $j\in[m_1]$,
\begin{align*}
\eta_1\nabla_{\bw_j^{(0)}} \cL(\theta^{(0)}) &= -\eta_1 \frac{a^{(0)}_j}{m_1}\cdot \frac{1}{n_1}\sum_{\bx\in\cD_1}f^*(\bx)\bh^{(0)}(\bx)\sigma_1'\left(\langle \epsilon\bw_j, \bh^{(0)}(\bx)\rangle\right)\\
&= -\frac{2\epsilon\eta_1}{m_1} {a^{(0)}_j} \cdot {\frac{1}{n_1}\sum_{i=1}^n f^*(\bx)\bh^{(0)}(\bx)\bh^{(0)}(\bx)^{\top}}\bw_j,
\end{align*}
which is a linear transformation on $\bw_j$. By taking $\eta_1=\frac{m_1}{2\epsilon m_2}\cdot \eta$ for some $\eta>0$ to be chosen later and $\lambda_1=\eta_1^{-1}$, we have
\begin{align*}
\bw^{(1)}_j &= \bw^{(0)}_j -\eta_1 \left[\nabla_{\bw_j^{(0)}} \cL(\theta^{(0)}) + \lambda_1 \bw^{(0)}_j\right]\\
&= -\eta_1\nabla_{\bw^{(0)}_j} \cL(\theta^{(0)})\\
&= \frac{\eta a^{(0)}_j}{m_2}\cdot\frac{1}{n_1}\sum_{i=1}^n f^*(\bx)\bh^{(0)}(\bx)\bh^{(0)}(\bx)^{\top}\bw_j .
\end{align*}
Then for any second-stage training sample $\bx'\in\cD_2$, the inner-layer neuron becomes
\begin{align*}
    \left\langle \bw_j^{(1)}, \sigma_2(\bV \bx') \right\rangle&=\frac{\eta a^{(0)}_j}{m_2}\left\langle \frac{1}{n_1}\sum_{i=1}^n f^*(\bx)\bh^{(0)}(\bx)\bh^{(0)}(\bx)^{\top}\bw_j,\bh^{(0)}(\bx')  \right\rangle\\
    &=\eta a^{(0)}_j\cdot  \left\langle\bw_j,{\frac{1}{n_1}\sum_{i=1}^{n}K^{(0)}_{m_2}(\bx,\bx')\bh^{(0)}(\bx)}\right\rangle\\
    &= \eta a^{(0)}_j\cdot\langle \bw_j, \mathbf{h}^{(1)}(\bx') \rangle.
\end{align*}
Thus, after the first training stage and reinitialization on $\bb=\bb^{(1)}$, the model becomes the following random-feature model in the second stage:
\begin{align*}
    f(\bx';\theta)=\frac{1}{m_1}\sum_{j=1}^{m_1}a_j\sigma_1\paren{\eta a^{(0)}_j \langle \bw_j, \mathbf{h}^{(1)}(\bx') \rangle+b^{(1)}_j}.
\end{align*}
By Proposition \ref{prop::construct random feature}, we know there exists $\ba^{\star}\in\RR^{m_1}$ such that with high probability over $\cD_1$, $\cD_2$, $\{\bw_i\}_{i=1}^{m_1}$ and $\bV$, by taking the parameter $\theta^{\star}=(\ba^{\star},\bW^{(1)},\bb^{(1)},\bV)$, it holds that
\begin{align*}
\hat{\cL}_2(\theta^{\star})&\lesssim \norm{g}{L^2}^2\cdot \frac{r^{p}}{\lambda_{\min}(\bH)}\cdot \Bigg( \frac{\iota^{p+2}d^5}{m_2} +\frac{\iota d^3}{\sqrt{m_2}}+ \frac{\iota ^{p+3/2}d}{\sqrt{n_1}} +\frac{\iota Lr^2\normA \log^2 d}{d^{1/6}} \Bigg)^2\\
&\quad+{\frac{\iota^{p+1}\norm{g}{L^2}^2}{m_1}}\cdot\paren{\sum_{k = 0}^p \eta^{-k}\norm{\bB^{\star}}{\rm op}^kr^{\frac{p - k}{4}}}^2.
\end{align*}
Here $\ba^{\star}$ satisfies
\begin{align*}
    \frac{\norm{\ba^{\star}}{2}^2 }{m_1}\lesssim \iota^{p}{\norm{g}{L^2}^2}\cdot\paren{\sum_{k = 0}^p \eta^{-k}\norm{\bB^{\star}}{\rm op}^kr^{\frac{p - k}{4}}}^2.
\end{align*}
The first part of the proof is complete.
\subsubsection{Part2: Generalization Theory}

Denote the population absolute loss as $\cL_1(f,g)=\EE_{\bx}\brac{\abs{f(\bx)-g(\bx)}}$. Moreover, we consider a truncated loss function as 
\begin{align*}
    \ell_{\tau}(z)=\min(\abs{z},\tau) ~~\text{and}~~ \cL_{1,\tau}(f,g)=\EE_{\bx}\brac{\ell_{\tau}\paren{f(\bx)-g(\bx)}},
\end{align*}
 where $\tau>0$ is the truncation radius. Moreover, we denote the empirical truncated absolute loss as $$\hat{\cL}_{1,\tau}(f,g)=\frac{1}{n_2}\sum_{\bx \in\cD_2}\ell_{\tau}\paren{f(\bx)-g(\bx)}. $$ Suppose Algorithm \ref{alg:: training algo} gives rise to  a set of parameters $\hat{\theta}=(\hat\ba,\bW^{(1)}, \bb^{(1)}, \bV)$, and we have constructed $\theta^{\star}=(\ba^{\star},\bW^{(1)}, \bb^{(1)}, \bV)$ that leads to small empirical loss, we decompose the population absolute loss as
\begin{align*}
    \cL_1(f(\cdot;\hat\theta),f^{\star})&= \underbrace{\hat{\cL}_{1,\tau}(f(\cdot;\hat\theta),f^{\star})}_{L_1} +\underbrace{\cL_{1,\tau}(f(\cdot;\hat\theta),f^{\star})-\hat{\cL}_{1,\tau}(f(\cdot;\hat\theta),f^{\star})}_{L_2}\\
    &\quad + \underbrace{\cL_1(f(\cdot;\hat\theta),f^{\star})-\cL_{1,\tau}(f(\cdot;\hat\theta),f^{\star})}_{L_3}.
\end{align*}
Here with a little abuse of notation, we consider $f^{\star}=g^{\star}(\bp)$ for learning the original target function and denote $f^{\star}=g(\bp)$ with $g$ being any degree $p$ polynomial for the transfer learning setting.  Next, we bound $L_1,L_2$ and $L_3$ respectively.
\paragraph{Bound $L_1$}

With a little abuse of notation, we denote $\hat\cL_2(\ba)=\hat\cL_2(\theta)$ for $\theta=(\ba,\bW^{(1)},\bb^{(1)},\bV)$ since we only optimize $\ba$ in the second stage. By Proposition \ref{prop::construct random feature}, we know with high probability, the empirical $L^2$ loss of $\theta^{\star}$ is bounded by
\begin{align*}
     \hat{\cL}_{2}(\ba^{\star})&=\frac{1}{n_2}\sum_{\bx \in \cD_2}\paren{f(\bx;\theta^{\star})-f^{\star}(\bx)}^2\\ &\lesssim \norm{g}{L^2}^2\cdot \frac{r^{p}}{\lambda^2_{\min}(\bH)}\cdot \Bigg( \frac{\iota^{p+2}d^5}{m_2} +\frac{\iota d^3}{\sqrt{m_2}}+ \frac{\iota ^{p+3/2}d}{\sqrt{n_1}} +\frac{\iota Lr^2\normA \log^2 d}{d^{1/6}} \Bigg)^2\\
&\quad+{\frac{\iota^{p+1}\norm{g}{L^2}^2}{m_1}}\cdot\paren{\sum_{k = 0}^p \eta^{-k}\norm{\bB^{\star}}{\rm op}^kr^{\frac{p - k}{4}}}^2.
\end{align*}
Here $\ba^{\star}$ satisfies
\begin{align*}
    \frac{\norm{\ba^{\star}}{2}^2 }{m_1}\lesssim \iota^{p}{\norm{g}{L^2}^2}\cdot\paren{\sum_{k = 0}^p \eta^{-k}\norm{\bB^{\star}}{\rm op}^kr^{\frac{p - k}{4}}}^2.
\end{align*}
In the second training stage, let's set the weight decay in the second training stage as \begin{align*}
   \lambda_2=\lambda&=\norm{\ba^{\star}}{2}^{-2}\norm{g}{L^2}^2\cdot \Bigg( \frac{r^{p}}{\lambda_{\min}^2(\bH)}\cdot \Bigg( \frac{\iota^{p+2}d^5}{m_2} +\frac{\iota d^3}{\sqrt{m_2}}+ \frac{\iota ^{p+3/2}d}{\sqrt{n_1}} +\frac{\iota Lr^2\normA \log^2 d}{d^{1/6}} \Bigg)^2\\
&\quad +\frac{ \iota^{p+1}}{m_1}\cdot \left(\sum_{k = 0}^p \eta^{-k}\norm{\bB^{\star}}{\rm op}^kr^{\frac{p - k}{4}}\right)^2\Bigg)
\end{align*}
so that the empirical $L^2$ loss is directly bounded by
\begin{align*}
     \hat{\cL}_{2}(\ba^{\star}):=\frac{1}{n_2}\sum_{\bx \in \cD_2}\paren{f(\bx;\theta^{\star})-f^{\star}(\bx)}^2 \lesssim\lambda\norm{\ba^{\star}}{2}^2.
\end{align*}
We further consider the regularized second-stage training loss to be 
\begin{align*}
    \hat{\cL}_{2,\lambda}(\ba)= \frac{1}{n_2}\sum_{\bx \in \cD_2}\paren{f(\bx;(\ba,\bW^{(1)},\bb^{(1)},\bV))-f^{\star}(\bx)}^2 + \frac{\lambda}{2}\norm{\ba}{2}^2.
\end{align*}
Note that this loss is strongly convex, so it has a global minimum $\ba^{(\infty)}=\argmin{
\hat{\cL}}_{2,\lambda}(\ba)$. Thus, we have
\begin{align*}
\cL_{2,\lambda}(\ba^{(\infty)}) \le \cL_{2,\lambda}(\ba^{\star})   \lesssim {\lambda}\norm{\ba^{\star}}{2}^2.
\end{align*}
Since ${\cL}_{2,\lambda}(\ba)$ is $\lambda$- strongly convex, and we can write $f(\bx;(\ba,\bW^{(1)},\bb^{(1)},\bV))=\ba^{\top}\Psi(\bx)$, where $\Psi(
\bx)=\Vec\paren{{m_1^{-1}}\sigma_1\left(\eta a^{(0)}_i \langle \bw_i, \bh^{(1)}(\bx) \rangle +b^{(1)}_i\right)}$. Therefore, by Lemma \ref{prop::bounded feature} and our choice of $\eta$ to ensure $\eta \langle \bw_i, \bh^{(1)}(\bx) \rangle \le 1$ with high probability, we know with high probability,
\begin{align*}
    \lambda_{\max}\paren{\nabla^2_{\ba}\hat{\cL}_{2,\lambda}}\le \frac{2}{n_2}\sum_{\bx\in \cD_2}\norm{\Psi(
\bx)}{2}\lesssim \frac{1}{m_1}.
\end{align*}
Thus, ${\cL}_{2,\lambda}(\ba)$ is $\lambda+\cO(\frac{1}{m_1})$- smooth. By choosing the second-stage learning rate $\eta_2=\Omega(m_1)$, after $T=\widetilde\cO({\lambda}^{-1})={\rm poly}(d,n,m_1,m_2,\norm{g}{L^2})$ steps, we can reach an iterate $\hat\ba=\ba^{(T)}$  so that
\begin{align*}
    \hat\cL_2(\hat \ba)\lesssim \hat\cL_2(\ba^{\star})~~~\text{and}~~~\norm{\hat\ba}{2}\lesssim \norm{\ba^{\star}}{2}.
\end{align*}
Denoting $\hat \theta=(\hat \ba,\bW^{(1)},\bb^{(1)},\bV)$,  it holds that 
\begin{align*}
\hat{\cL}_{1,\tau}(f(\cdot;\hat\theta),f^{\star})& \le \frac{1}{n_2}\sum_{\bx \in \mathcal{D}_2}\abs{f(\bx; \hat\theta) - f^{\star}(\bx)}  \le\sqrt{\cL_2\paren{\hat
\ba}}\le\sqrt{\cL_2\paren{
\ba^{\star}}}.
\end{align*}
Thus, we have
\begin{align*}
L_1=\hat{\cL}_{1,\tau}(f(\cdot;\hat\theta),f^{\star})&\le \sqrt{\frac{1}{n_2}\sum_{\bx \in \cD_2}\paren{f(\bx;\theta^{\star})-f^{\star}(\bx)}^2}\\
&\lesssim \norm{g}{L^2}\cdot \frac{r^{p/2}}{\lambda_{\min}(\bH)}\cdot \Bigg( \frac{\iota^{p+2}d^5}{m_2} +\frac{\iota d^3}{\sqrt{m_2}}+ \frac{\iota ^{p+3/2}d}{\sqrt{n}} +\frac{\iota Lr^2\normA \log^2 d}{d^{1/6}} \Bigg)\\
&\quad+\sqrt{\frac{\iota^{p+1}\norm{g}{L^2}}{{m_1}}}\cdot \left(\sum_{k = 0}^p \eta^{-k}\norm{\bB^{\star}}{\rm op}^kr^{\frac{p - k}{4}}\right).
\end{align*}
Here $\hat\ba$ satisfies
\begin{align*}
     \frac{\norm{\hat\ba}{2}^2 }{m_1}\lesssim\frac{\norm{\ba^{\star}}{2}^2 }{m_1}\lesssim {\iota^{p}{\norm{g}{L^2}^2}\cdot\paren{\sum_{k = 0}^p \eta^{-k}\norm{\bB^{\star}}{\rm op}^kr^{\frac{p - k}{4}}}^2}.
\end{align*}
We assume $\norm{\ba}{2}^2\le m_1B_a^2$, where $B_a$ satisfies
\begin{align*}
    B_a^2\lesssim {\iota^{p}{\norm{g}{L^2}^2}\cdot\paren{\sum_{k = 0}^p \eta^{-k}\norm{\bB^{\star}}{\rm op}^kr^{\frac{p - k}{4}}}^2}.
\end{align*}

\paragraph{Bound $L_2$}
To bound $L_2$, we rely on standard Rademacher complixity analysis. The following lemma provides an upper bound on the Rademacher complixity of the random feature model. 

\begin{lemma}\label{lemma::rademacher} Let $\mathcal{F} = \{f_\theta : \theta = (\ba, \bW^{(1)}, \bb^{(1)}, \bV), \norm{\ba}{2} \le \sqrt{m_1}B_a\}$. Recall the empirical Rademacher complexity of $\cF$ as 
\begin{align*}
\mathcal{R}_n(\mathcal{F}) = \EE_{\sigma \in \{\pm 1\}^n}\brac{\sup_{f\in \cF}\frac{1}{n_2}\sum_{i=1}^{n}\sigma_i f(\bx_i)},
\end{align*}
Here the dataset $\{\bx_1,\bx_2,\dots,\bx_{n_2}\}=\cD_2$. Then with high probability, we have
\begin{align*}
\mathcal{R}_n(\mathcal{F}) \lesssim \frac{B_a}{\sqrt{n_2}}.
\end{align*}
\end{lemma}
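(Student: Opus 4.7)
The plan is to recognize that $\mathcal{F}$ is a linear class in the only trainable parameter $\ba$, since $\bW^{(1)}$, $\bb^{(1)}$, and $\bV$ are frozen in the second stage. Concretely, I would write $f(\bx;\theta)=\langle\ba,\Psi(\bx)\rangle$, where $\Psi(\bx)\in\RR^{m_1}$ has $j$-th coordinate $\Psi_j(\bx)=\frac{1}{m_1}\sigma_1(\eta a_j^{(0)}\langle\bw_j,\bh^{(1)}(\bx)\rangle+b_j^{(1)})$. Once the problem is in this form, the calculation is the textbook bound on Rademacher complexity of a Euclidean ball in a fixed feature space.

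The chain of inequalities I would run is: by Cauchy--Schwarz on the sup over $\ba$,
\begin{align*}
\mathcal{R}_n(\mathcal{F})
=\frac{1}{n_2}\,\EE_\sigma\!\!\sup_{\norm{\ba}{2}\le\sqrt{m_1}B_a}\!\Big\langle\ba,\sum_{i=1}^{n_2}\sigma_i\Psi(\bx_i)\Big\rangle
\le\frac{\sqrt{m_1}B_a}{n_2}\EE_\sigma\Big\|\!\sum_{i=1}^{n_2}\sigma_i\Psi(\bx_i)\Big\|_2,
\end{align*}
followed by Jensen and independence of the Rademacher signs to turn the inner expectation into $\bigl(\sum_i\|\Psi(\bx_i)\|_2^2\bigr)^{1/2}$. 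It then suffices to establish a deterministic (high-probability) uniform bound $\|\Psi(\bx)\|_2^2\lesssim 1/m_1$ for every $\bx\in\cD_2$, which plugged in yields $\mathcal{R}_n(\mathcal{F})\le\sqrt{m_1}B_a\cdot\sqrt{n_2/m_1}/n_2\lesssim B_a/\sqrt{n_2}$.

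For the uniform bound on $\|\Psi(\bx)\|_2$, I would invoke Proposition \ref{prop::bounded feature} and the choice $\eta=C\iota^{-5}\normP^{-1}m_2^{-1/2}d^6$ to guarantee $|\eta\langle\bw_j,\bh^{(1)}(\bx)\rangle|\le 1$ simultaneously for all $j\in[m_1]$ and $\bx\in\cD_2$, with high probability. Combined with $a_j^{(0)}\in\{\pm 1\}$ and $b_j^{(1)}\in[-3,3]$, the argument of $\sigma_1$ is bounded in absolute value by $4$; since $|\sigma_1(t)|\le 2|t|-1\le 7$ on $|t|\le 4$ (using the two-piece definition in \eqref{equ::activation}), we obtain $|\Psi_j(\bx)|\le 7/m_1$ and thus $\|\Psi(\bx)\|_2^2\le 49/m_1$, as required.

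The only non-routine ingredient is the uniform feature-boundedness step, and that is already packaged by Proposition \ref{prop::bounded feature}; everything else is the standard Euclidean-ball Rademacher bound. So I do not anticipate a real obstacle, only the bookkeeping of union-bounding the event from Proposition \ref{prop::bounded feature} over $\bx\in\cD_2$ and $j\in[m_1]$, which costs only logarithmic factors absorbed in $\iota$.
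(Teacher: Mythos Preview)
Your proposal is correct and is essentially the same proof as the paper's: write $f_\theta(\bx)=\ba^\top\Psi(\bx)$, apply the standard Euclidean-ball Rademacher bound $\mathcal{R}_n(\mathcal{F})\lesssim \sqrt{m_1}B_a\cdot n_2^{-1}\bigl(\sum_{\bx\in\cD_2}\|\Psi(\bx)\|_2^2\bigr)^{1/2}$, and use Proposition~\ref{prop::bounded feature} with the chosen $\eta$ to get $\|\Psi(\bx)\|_2^2\lesssim 1/m_1$ uniformly over $\cD_2$. Your explicit computation of the constant via $|\sigma_1(t)|\le 7$ on $|t|\le 4$ is just a bit more detailed than the paper's $\lesssim 1$.
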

The proof is provided in Appendix \ref{appendix::omit proof generalization}. Since the $\ell_{\tau}$ is $1$-Lipschitz, by standard Rademacher complexity analysis, we have that with high probability that
\begin{align*}
L_2&=\mathbb{E}_{\bx}\ell_{\tau}\paren{f(\bx;\hat\theta) - f^*(\bx)} -\frac{1}{n_2}\sum_{\bx \in \mathcal{D}_2}\ell_{\tau}\paren{f(\bx; \hat\theta) - f^{\star}(\bx)}\\ &\lesssim\mathcal{R}_n(\mathcal{F}) +\tau\sqrt{\frac{\iota}{n_2}}\\
&\lesssim  \sqrt{\frac{B_a^2}{n_2}} +\tau\sqrt{\frac{\iota}{n_2}}.
\end{align*}
\paragraph{Bound $L_3$}
Finally, we relate the truncated loss $\ell_{\tau}$ to the $L_1$ population loss.
\begin{lemma}\label{lemma::truncated loss error}
By letting $\tau=\Omega(\max(\iota^{p},B_a))$, with high probability over $\hat{\theta}$, we have 
\begin{align*}
   L_3= \EE_{\bx}\brac{\abs{f(\bx;\hat\theta) - f^*(\bx)}}-\mathbb{E}_{\bx}\brac{\ell_{\tau}\paren{f(\bx;\hat\theta) - f^*(\bx)}}\le o\paren{\frac{1}{n_1n_2m_1m_2d}}.
\end{align*}
\end{lemma}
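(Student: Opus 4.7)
My plan is to apply Cauchy--Schwarz to decouple the magnitude of the residual from the probability that it is large:
\begin{align*}
L_3 = \EE_\bx\brac{(\abs{f(\bx;\hat\theta) - f^\star(\bx)} - \tau)_+} \leq \sqrt{\EE_\bx\brac{(f(\bx;\hat\theta) - f^\star(\bx))^2}} \cdot \sqrt{\Pr_\bx\brac{\abs{f(\bx;\hat\theta) - f^\star(\bx)} > \tau}}.
\end{align*}
The first factor will be bounded by a crude polynomial estimate, while we arrange for the second factor to decay super-polynomially in $\iota = C\log(dn_1n_2m_1m_2)$, which can easily overpower a factor of $1/(dn_1n_2m_1m_2)$.

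For the second-moment factor, the uniform bound $\abs{\sigma_1(t)} \leq 2\abs{t}+1$ together with Assumption \ref{assump::activation} (which gives $|\bh^{(0)}_j(\bx)| \leq C_\sigma$ for every $\bx \in \SS^{d-1}(\sqrt{d})$) yields a uniform bound $|\bh^{(1)}(\bx)| \lesssim \text{poly}(d,n_1,m_2,\iota)$ on the data-dependent feature, and hence a uniform $\bx$-bound $|f(\bx;\hat\theta)|, |f^\star(\bx)| \leq \text{poly}(d,n_1,m_1,m_2,B_a,\iota)$. Thus $\EE_\bx[(f - f^\star)^2]$ is polynomially bounded.

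The tail is split as $\Pr_\bx[|f-f^\star| > \tau] \leq \Pr_\bx[|f^\star|>\tau/2] + \Pr_\bx[|f|>\tau/2]$. Since $f^\star = g^\star\circ\bp$ is a degree-$2p$ polynomial with $\norm{f^\star}{L^2}^2 = \Theta(1)$, Lemma \ref{lemma::poly concentration sphere} immediately gives $\Pr_\bx[|f^\star(\bx)| > \tau/2] \leq 2\exp(-C_p(\tau/2)^{1/p})$, which is $o((dn_1n_2m_1m_2)^{-C})$ once $\tau \gtrsim \iota^{p}$. For the network term, Cauchy--Schwarz on the outer sum yields
\begin{align*}
\abs{f(\bx;\hat\theta)}^2 \leq \frac{\norm{\hat\ba}{2}^2}{m_1^2}\sum_{i=1}^{m_1}\sigma_1\paren{\eta a_i^{(0)}\langle\bw_i,\bh^{(1)}(\bx)\rangle + b_i^{(1)}}^2 \lesssim B_a^2 \cdot \frac{1}{m_1}\sum_{i=1}^{m_1}\brac{\eta^2\langle\bw_i,\bh^{(1)}(\bx)\rangle^2 + 1},
\end{align*}
and since $\bh^{(1)}(\bx)$ is analytic in $\bx$ and can be approximated to negligible error by a finite-degree polynomial of $\bx$ (by truncating the Gegenbauer expansion of $\sigma_2$ at some degree $\leq \iota$), the right-hand side is (up to exponentially small error) a spherical polynomial of bounded degree. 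Applying Lemma \ref{lemma::poly concentration sphere} once more, with the moment bounds on $\bh^{(1)}$ obtained as in the proof of Proposition \ref{prop::bounded feature}, gives $\Pr_\bx[|f(\bx;\hat\theta)| > \tau/2] = o((dn_1n_2m_1m_2)^{-C})$ provided $\tau \gtrsim B_a \cdot \text{poly}(\iota)$. Combining both estimates and choosing $\tau$ as in the lemma statement, the tail factor is $o((dn_1n_2m_1m_2)^{-2})$, which multiplied with the polynomial second-moment factor gives the desired $L_3 = o((dn_1n_2m_1m_2d)^{-1})$.

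\textbf{Main obstacle.} The step that requires care is the concentration of $|f(\cdot;\hat\theta)|$, since $f$ is not a polynomial: both $\sigma_1$ and the Gegenbauer series defining $\sigma_2$ are infinite in principle. The resolution is to use Cauchy--Schwarz on the outer sum to reduce $|f|^2$ to a polynomial-type quantity in the arguments $\langle \bw_i, \bh^{(1)}(\bx)\rangle$, and to truncate the Gegenbauer expansion of $\sigma_2$ at a finite degree growing like $\iota$; the truncation error is super-polynomially small by Assumption \ref{assump::activation}, and the resulting polynomial concentration, together with a slightly larger choice of $\tau$ (absorbed into the $\Omega$-notation), completes the bound.
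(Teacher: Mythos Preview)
Your Cauchy--Schwarz decoupling, the split $\Pr[|f-f^\star|>\tau]\le\Pr[|f^\star|>\tau/2]+\Pr[|f|>\tau/2]$, and the treatment of $f^\star$ via Lemma~\ref{lemma::poly concentration sphere} all match the paper. The gap is in the network tail $\Pr_{\bx}[|f(\bx;\hat\theta)|>\tau/2]$.

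Your plan is to truncate the Gegenbauer expansion of $\sigma_2$ at degree $\ell\lesssim\iota$, so that $\langle\bw_i,\bh^{(1)}(\bx)\rangle^2$ becomes a degree-$2\ell$ polynomial in $\bx$, and then apply Lemma~\ref{lemma::poly concentration sphere}. But that lemma for a degree-$p$ polynomial only gives $\Pr[|g-\EE g|\ge\delta\sqrt{\Var(g)}]\le 2\exp(-C_p\delta^{2/p})$; with $p\sim\iota$ you would need $\delta^{2/\iota}\gtrsim\iota$, i.e.\ $\delta\gtrsim\iota^{\Theta(\iota)}$, to reach high probability. That forces $\tau$ to be super-polynomial in $dn_1n_2m_1m_2$, which is incompatible with the hypothesis $\tau=\Omega(\max(\iota^p,B_a))$ and would ruin the $\tau\sqrt{\iota/n_2}$ term in the Rademacher step. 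Truncating instead at a constant degree fixes the concentration exponent but then the claim that ``the truncation error is super-polynomially small by Assumption~\ref{assump::activation}'' is not justified: the assumption only says $|\sigma_2|\le C_\sigma$, controls the second and fourth moments, and that the $c_i$ are constants --- it gives no decay rate on the tail $\sum_{i>\ell}|c_i|$.

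The paper sidesteps this trade-off entirely. Rather than approximating $f$ by a polynomial, it observes that Proposition~\ref{prop::bounded feature} already proves, for a \emph{single} random $\bx$ (not just for $\bx\in\cD_2$), that conditional on high-probability events over $\bV,\bW,\cD_1$ one has $|\eta\langle\bw_i,\bh^{(1)}(\bx)\rangle|\le 1$ with high probability over $\bx$. On that $\bx$-event, Cauchy--Schwarz on the outer layer gives $|f(\bx;\hat\theta)|\lesssim\|\hat\ba\|_2/\sqrt{m_1}\le B_a$ directly, so $\Pr_{\bx}[|f|>\tau/2]$ is bounded by the complement of a high-probability event once $\tau\gtrsim B_a$. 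The crude uniform bound $|f(\bx;\hat\theta)|\lesssim\sqrt{m_2}\,d^6\|\hat\ba\|_2$ (obtained from $|\sigma_2|\le C_\sigma$ and $\|\bw_i\|\lesssim\sqrt{m_2\iota}$) handles the second-moment factor, and the product is $o((dn_1n_2m_1m_2)^{-1})$ as claimed. Since you already cite Proposition~\ref{prop::bounded feature}, the fix is simply to use its conclusion rather than trying to re-derive a tail bound through polynomial approximation.
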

Here we recall that $n=n_1+n_2$. The proof is provided in Appendix \ref{appendix::omit proof generalization}.
\paragraph{Put the loss together}
By invoking the upper bound of $L_1$, $L_2$ and $L_3$ and plugging the values of $\tau$, $\eta$, $\norm{\bB^{\star}}{\rm op}$, $\lambda_{\min}(\bH)$, $B_a$, $L$ and $\norm{g}{L^2}$, we have
\begin{align*}
    \EE_{\bx}\brac{\abs{f(\bx;\hat\theta) - f^*(\bx)}}&=L_1+L_2+L_3\\
    &\lesssim \frac{1}{n_2}\sum_{\bx \in \mathcal{D}_2}\ell_{\tau}\paren{f(\bx; \theta^{\star}) - f^{\star}(\bx)} +  \sqrt{\frac{B_a^2}{n_2}} +\tau\sqrt{\frac{\iota}{n_2}} +o\paren{\frac{1}{n_1n_2m_1m_2d}}\\
    &\lesssim \norm{g}{L^2}\cdot \frac{r^{p/2}}{\lambda_{\min}(\bH)}\cdot \Bigg( \frac{\iota^{p+2}d^5}{m_2} +\frac{\iota d^3}{\sqrt{m_2}}+ \frac{\iota ^{p+3/2}d}{\sqrt{n_1}} +\frac{\iota Lr^2\normA \log^2 d}{d^{1/6}} \Bigg)\\
&\quad +\sqrt{\frac{\iota^{p+1}\norm{g}{L^2}}{{m_1}}}\cdot \left(\sum_{k = 0}^p \eta^{-k}\norm{\bB^{\star}}{\rm op}^kr^{\frac{p - k}{4}}\right)+  \sqrt{\frac{B_a^2}{n_2}} +\tau\sqrt{\frac{\iota}{n_2}}. \\
&\lesssim \frac{r^{p/2}}{\lambda_{\min}(\bH)}\cdot \Bigg( \frac{\iota^{p+2}d^5}{m_2} +\frac{\iota d^3}{\sqrt{m_2}}+ \frac{\iota ^{p+3/2}d}{\sqrt{n_1}} +\frac{\iota Lr^2\normA \log^2 d}{d^{1/6}} \Bigg)\\
&\quad+\sqrt{\frac{\iota^{6p+1}r^{p/2} \normP^{2p} (r^{1/4} \vee \lambda^{-1}_{\min}(\bH))^{p}}{m_1}}\\
&\quad+\sqrt{\frac{\iota^{6p+1}r^{p/2}  \normP^{2p} (r^{1/4} \vee \lambda^{-1}_{\min}(\bH))^{p}}{n_2}}\\
&=\widetilde \cO \paren{ \sqrt{\frac{r^{p} \normP^{2p}}{{\min(n_2,m_1)}} }+ \sqrt{\frac{d^6r^{p+1}}{m_2}}+ \sqrt{\frac{d^2r^{p+1}}{n}} +\frac{ r^{p+2}\normA }{d^{1/6}} }.
\end{align*}
The proof is complete.

\subsection{Omitted Proofs in Appendix \ref{appendix::generalization}}\label{appendix::omit proof generalization}
\begin{proof}[Proof of Lemma \ref{lemma::rademacher}]
    Given $\theta=(\ba,\bW^{(1)},\bb^{(1)},\bV)$, since we can write 
    \begin{align*}
        f_\theta(\bx)=\ba^{\top}\Psi(\bx),~~\text{where}~~\Psi(
\bx)=\Vec\paren{{m_1^{-1}}\sigma_1\left(\eta a^{(0)}_i \langle \bw_i, \bh^{(1)}(\bx) \rangle +b^{(1)}_i\right)}.
    \end{align*}
By Proposition \ref{prop::bounded feature} and our choice of $\eta$ to ensure $\abs{\eta \langle \bw_i, \bh^{(1)}(\bx) \rangle} \le 1$ with high probability for any $\bx\in\cD_2$, we obtain that for any $i\in[m_1]$ and $\bx \in\cD_2$,
\begin{align*}
    \abs{\eta a^{(0)}_i \langle \bw_i, \bh^{(1)}(\bx) \rangle +b^{(1)}_i}\le a_i^{(0)}\abs{\eta \langle \bw_i, \bh^{(1)}(\bx) \rangle} +b_i^{(1)} \lesssim 1.
\end{align*}
Thus, $\norm{\Psi(\bx)}{2}^2\le m_1^{-1}.$ by the standard linear Rademacher bound, with high probability, the empirical Rademacher complexity is upper bounded by
\begin{align*}
    \mathcal{R}_n(\mathcal{F}) \lesssim \frac{\sqrt{m_1}B_a}{n_2}\sqrt{\sum_{\bx\in\cD_2}\norm{\Psi(\bx)}{2}^2} \le \frac{B_a}{\sqrt{n_2}}.
\end{align*}
The proof is complete.
\end{proof}
\begin{proof}[Proof of Lemma \ref{lemma::truncated loss error}]
    We can bound the difference between $\ell_\tau$ and $L_1$ loss by
    \begin{align}
         &\quad\EE_{\bx}\brac{\abs{f(\bx;\hat\theta) - f^*(\bx)}}-\mathbb{E}_{\bx}\brac{\ell_{\tau}\paren{f(\bx;\hat\theta) - f^*(\bx)}}\nonumber\\
         &\le \EE_{\bx}\brac{\abs{f(\bx;\hat\theta) - f^*(\bx)} \mathbf{1}\left\{\abs{f(\bx;\hat\theta) - f^*(\bx)}\ge \tau\right \}}\nonumber\\
         &\le \sqrt{ \EE_{\bx}\brac{{\abs{f(\bx;\hat\theta) - f^*(\bx)}^2}}\Pr\brac{\abs{f(\bx;\hat\theta) - f^*(\bx)}\ge \tau}}\nonumber\\
         &\lesssim  \sqrt{ \EE_{\bx}\brac{{\paren{f(\bx;\hat\theta)^2 + f^{\star}(\bx)^2}}} \brac{\Pr\brac{\abs{f(\bx;\hat\theta)}\ge \tau/2  } +\Pr\brac{\abs{f^{\star}(\bx)}\ge \tau/2}}}\label{equ::bound trunctaed loss}
    \end{align}
Recall that we can write 
    \begin{align*}
        f(\bx;\hat\theta)=\hat\ba^{\top}\Psi(\bx),~~\text{where}~~\Psi(
\bx)=\Vec\paren{{m_1^{-1}}\sigma_1\left(\eta a^{(0)}_i \langle \bw_i, \bh^{(1)}(\bx) \rangle +b^{(1)}_i\right)}.
    \end{align*}
 By following the proof of Lemma \ref{lemma::rademacher} and applying Proposition \ref{prop::bounded feature} for one single sample point $\bx$ (instead of the whole set $\cD_2$), we know with high probability over $\bV$, $\bw$ and $\cD_1$ (we denote this event by $E_1$), we have for any $i\in[m_1]$,
\begin{align*}
\abs{\eta \langle \bw_i, \bh^{(1)}(\bx) \rangle}\le 1
\end{align*}
holds with high probability on $\bx$. Also, since for any $i\in[m_1]$, $\bw_i \sim \cN(\mathbf{0}_{m_2},\bI_{m_2})$, we know $\norm{\bw_i}{2}\lesssim \sqrt{m_2\iota}$ for any $i \in[m_1]$ with high probability. We denote this joint event on $\bw_1,\bw_2,\dots,\bw_{m_1}$ by $E_2$. Thus, conditional on events $E_1$ and $E_2$, we have 
\begin{align*}
    \abs{f(\bx;\hat\theta)}&\lesssim \frac{\norm{\hat \ba}{2}}{\sqrt{m_1}}~~~\text{with high probability on $\bx$.}
\end{align*}
We  denote this conditional event by $E_{x,1}$. Moreover, since $\eta=C\iota^{-5}m_2^{-1/2}d^6$, we have
\begin{align*}
      \abs{f(\bx;\hat\theta)}&\le \frac{\norm{\hat\ba}{2}}{m_1} \sum_{j=1}^{m_1} \paren{\eta\norm{\bw_j}{2}\norm{{\frac{1}{n_2}\sum_{i=1}^{n}K^{(0)}_{m_2}(\bx_i,\bx')\bh^{(0)}(\bx_i)}}{2} +3}\\
    &\le \frac{\sqrt{m_2\iota}\norm{\hat\ba}{2}}{m_1} \sum_{j=1}^{m_1}  \frac{\eta}{n}\sum_{i=1}^{n} C^2_{\sigma}\cdot \sqrt{m_2}C_{\sigma} +3\\
    &\lesssim \sqrt{m_2}d^6\norm{\hat\ba}{2}
\end{align*}
holds for any $\bx$. Moreover, since $f^{\star}(\bx)$ is a degree-$2p$ polynomial of $\bx$, we know by Lemma \ref{lemma::poly concentration sphere}, with probability at least $1-\exp(-\iota)$, we have $\abs{f^{\star}}\le C_f\iota^{p}$ for sufficiently large $C_f>0$. Besides, we have $\EE_{\bx}\brac{f^{\star}(\bx)^2}\lesssim 1$. Altogether, conditional on $E_1$ and $E_2$, by choosing $\tau =C' \max(\iota^p,m_1^{-1/2}\norm{\hat{\ba}}{2})=\Omega(\max(\iota^{p},B_a)$ for some sufficiently large $C$, we have  
\begin{align*}
    &\quad\EE_{\bx}\brac{{\paren{f(\bx;\hat\theta)^2 + f^{\star}(\bx)^2}}} \brac{\Pr\brac{\abs{f(\bx;\hat\theta)}\ge \tau/2  } +\Pr\brac{\abs{f^{\star}(\bx)}\ge \tau/2}}\\
    &\lesssim \paren{m_2d^{12}\norm{\hat\ba}{2}^2+1} \paren{\Pr[\bx \not\in E_{x,1}]+\Pr[\bx \not\in E_{x,2}]}\\
    &\lesssim o\paren{\frac{1}{d^2m_1^2m_2^2n_1^2n_2^2}}.
\end{align*}
The last inequality holds because of the definition of high probability events and the choice of $\iota$ with $\iota=C\log(dm_1m_2n_1n_2)$ for sufficiently large $C$. Plugging the result into \eqref{equ::bound trunctaed loss} concludes our proof.
\end{proof}

\end{document}